\begin{document}

\title{Towards Optimal Adversarial Robust Reinforcement Learning with Infinity Measurement Error\thanks{Preprint. Under review.}\thanks{A preliminary version of this paper was accepted for an oral presentation at ICML 2024~\citep{li2024towards}.}}

\author{\name Haoran Li \email lihaoran21@mails.ucas.ac.cn 
       \AND
       \name Zicheng Zhang \email zhangzicheng19@mails.ucas.ac.cn 
       \AND
       \name Wang Luo \email luowang21@mails.ucas.ac.cn 
       \AND 
       \name Congying Han\thanks{Corresponding author} \email hancy@ucas.ac.cn 
       \AND
       \name Jiayu Lv \email lvjiayu24@mails.ucas.ac.cn 
       \AND
       \name Tiande Guo \email tdguo@ucas.ac.cn 
       \AND 
       \name Yudong Hu \email huyudong201@mails.ucas.ac.cn
       \\
       \addr School of Mathematical Sciences\\
       University of Chinese Academy of Sciences\\
       Beijing 100049, P. R. China}
\editor{My editor}

\maketitle

\begin{abstract}%   <- trailing '%' for backward compatibility of .sty file
% Developing robust policies is essential for the trustworthy deployment of deep reinforcement learning~(DRL) agents against attacks or disturbances.
% We first define intrinsic state neighborhoods, where optimal actions within a Markov Decision Process~(MDP) remain consistent despite minor perturbations. 
% Building upon this, we propose the Intrinsic State-adversarial MDP~(ISA-MDP), a novel formulation supported by empirical and theoretical evidence, characterizing decision-making under state-adversarial perturbations.
Ensuring the robustness of deep reinforcement learning~(DRL) agents against adversarial attacks is critical for their trustworthy deployment. 
Recent research highlights the challenges of achieving state-adversarial robustness and suggests that an optimal robust policy~(ORP) does not always exist, complicating the enforcement of strict robustness constraints. 
In this paper, we further explore the concept of ORP. 
We first introduce the Intrinsic State-adversarial Markov Decision Process~(ISA-MDP), a novel formulation where adversaries cannot fundamentally alter the intrinsic nature of state observations. ISA-MDP, supported by empirical and theoretical evidence, universally characterizes decision-making under state-adversarial paradigms.
We rigorously prove that within ISA-MDP, a deterministic and stationary ORP exists, aligning with the Bellman optimal policy. 
Our findings theoretically reveal that improving DRL robustness does not necessarily compromise performance in natural environments.
Furthermore, we demonstrate the necessity of infinity measurement error~(IME) in both $Q$-function and probability spaces to achieve ORP, unveiling vulnerabilities of previous DRL algorithms that rely on $1$-measurement errors.
Motivated by these insights, we develop the Consistent Adversarial Robust Reinforcement Learning~(CAR-RL) framework, which optimizes surrogates of IME. We apply CAR-RL to both value-based and policy-based DRL algorithms, achieving superior performance and validating our theoretical analysis.
% leading to CAR-DQN and CAR-PPO.
% The superior performance of CAR-RL substantiates its practical effectiveness and validates the soundness of our theoretical analysis.
\end{abstract}

\begin{keywords}
reinforcement learning, adversarial robustness, optimal robust policy, Q-learning, policy optimization
\end{keywords}

\section{Introduction}

Deep reinforcement learning (DRL) has achieved remarkable success in addressing complex problems~\citep{mnih2015human, lillicrap2015continuous, silver2016mastering}, showcasing its potential in various practical domains, such as robots~\citep{ibarz2021train}, autonomous driving~\citep{kiran2021deep}, healthcare~\citep{yu2021reinforcement} and news recommendation~\citep{zheng2018drn}. 
Reinforcement learning~(RL) algorithms generally fall into value-based and policy-based categories. Value-based methods are well-suited for small action spaces, while policy-based methods excel in large and continuous action spaces~\citep{sutton2018reinforcement}. Notable examples include the Deep Q-network~(DQN)~\citep{mnih2015human} and the Proximal Policy Optimization~(PPO)~\citep{schulman2017proximal}, which are the crown jewels and considered as baselines in their respective domains.
Despite these achievements, DRL agents remain vulnerable to subtle perturbations in state observations, which can significantly impair their performance~\citep{huang2017adversarial, behzadan2017vulnerability, lin2017tactics, weng2019toward, ilahi2021challenges}. This vulnerability restricts their reliable deployment in real-world scenarios and underscores the critical need to develop robust DRL algorithms for withstanding adversarial attacks.

Pioneering work by~\cite{zhang2020robust} introduced the state-adversarial paradigm in DRL by formulating a modified Markov Decision Process (MDP), called SA-MDP. In this framework, the underlying true state remains invariant while the observed state is subjected to disturbances. They also pointed out the uncertain existence of an optimal robust policy~(ORP) within SA-MDP, suggesting a potential conflict between robustness and optimality of policies. Consequently, current methods based on SA-MDP often seek a balance between robust and optimal policies using various regularizations~\citep{zhang2020robust, oikarinen2021robust, liang2022efficient} or alternating training with learned adversaries~\citep{zhang2021robust, sun2021strongest}.  While these approaches enhance robustness, they lack theoretical guarantees and completely neglect the study of ORP.

In this paper, we focus on investigating the existence of ORP. We identify the states lacking an ORP as a subset of two special sets and find that only a few exceptional states fall into this category. For theoretical clarity, excluding these states from our analysis is essential. Therefore, our investigation begins with the concept of the intrinsic state neighborhood, which depicts the set of states where optimal actions within the MDP remain consistent despite adversarial disturbances. Based on this, we introduce the Intrinsic State-adversarial Markov Decision Process~(ISA-MDP), a novel formulation within which adversaries cannot fundamentally change the intrinsic nature of state observations. Although this formulation may seem idealistic, we demonstrate through theoretical analysis that its difference from SA-MDP is negligible. Moreover, we also validate its rationality by empirical experiments against strong adversarial attacks such as FGSM~\citep{goodfellow2014explaining} and PGD~\citep{madry2017towards}. Both theoretical and empirical supports showcase that the ISA-MDP can be universally applicable to describe state-adversarial decision scenarios.
% the set of states violating ISA-MDP is almost empty. 

Within the ISA-MDP, we demonstrate that a stationary and deterministic adversarial ORP always exists and coincides with the Bellman optimal policy derived from the Bellman optimality equations. This objective has been widely employed in previous value-based and policy-based DRL algorithms~\citep{silver2014deterministic, schulman2015trust, wang2016dueling, mnih2016asynchronous, schulman2017proximal} to maximize natural returns, despite lacking robust capabilities~\citep{huang2017adversarial, behzadan2017vulnerability}. Remarkably, our findings reveal that \textit{the Bellman optimal policy also serves as the ORP}. This indicates that enhancing the robustness of DRL agents does not necessitate sacrificing their performance in natural environments, aligning with prior experiment results~\citep{shen2020deep, zhang2020robust, oikarinen2021robust, liang2022efficient}. This insight is vital for deploying DRL agents in real-world scenarios where strong adversarial attacks are relatively rare.

In pursuit of the ORP, we further explore \textit{why conventional DRL algorithms, which target the Bellman optimal policy, fail to ensure adversarial robustness.} We address this challenge by examining the measurements used in action-value function spaces for value-based DRL methods and in probability spaces for policy-based DRL algorithms. 
For value-based DRL agents trained according to the Bellman optimality equations, we analyze the theoretical properties of the distance $\| Q_\theta - Q^* \|_p$ and the Bellman error $\| \mathcal{T}_{B} Q_{\theta} - Q_{\theta} \|_p$ across various Banach spaces, where $1\le p \le \infty$. We identify the significant impact of the parameter $p$ on adversarial robustness. Specifically, achieving an ORP corresponds to minimizing the Bellman Infinity-error (i.e., $ p = \infty $), whereas conventional algorithms typically relate to $p=1$. 
For policy-based DRL agents trained with policy gradient methods, we introduce the concept of measurements $\mathcal{D}_{k, f}$ in probability spaces, where $1\le k \le \infty$ and $f$ represents an $f$-divergence. We also confirm that optimizing $\mathcal{D}_{\infty, f}$ is necessary for adversarial robustness, while previous methods are vulnerable due to their focus on $\mathcal{D}_{1, f}$.

Motivated by our theoretical findings, we develop the Consistent Adversarial Robust Reinforcement Learning~(CAR-RL) framework, taking the infinity measurement error as the optimization objective to attain an ORP. 
To address the computational challenges associated with the $L^{\infty}$-norm, we propose the Consistent Adversarial Robust Deep Q-network~(CAR-DQN), which utilizes a surrogate objective of the Bellman Infinity-error for robust policy learning. 
Additionally, we develop the Consistent Adversarial Robust Proximal Policy Optimization~(CAR-PPO) to approximate the gradient of $\mathcal{D}_{\infty, f}$. CAR-PPO updates the policy based on the gradient of an infinity measurement error surrogate objective for enhanced robustness. We validate the natural and robust performance of our methods across discrete video games in Atari and continuous control tasks in Mujoco.

\paragraph{Contributions}
To summarize, our paper makes the following key contributions:
\begin{itemize}
    \item We propose a universal ISA-MDP formulation for state-adversarial decision, confirm the existence of a deterministic and stationary ORP, and demonstrate its strict alignment with the Bellman optimal policy. This theoretically indicates that improving the robustness of DRL agents need not sacrifice their performance in natural environments, offering a significant advancement over previous research.
    \item We emphasize the necessity of utilizing the infinity measurement error in both action-value function and probability spaces as the minimization objective for achieving theoretical ORP. This contrasts with conventional DRL algorithms, which suffer from a lack of robustness due to their reliance on a $1 $-measurement error.
    \item We develop the CAR-RL framework, which employs a surrogate objective based on the infinity measurement error to learn both natural returns and robustness. We further apply this framework to both value-based and policy-based DRL algorithms, resulting in CAR-DQN and CAR-PPO. We conduct extensive comparative and ablation evaluations across various benchmarks, substantiating the practical effectiveness of CAR-RL and validating the theoretical foundation of our approaches.
\end{itemize}

Some preliminary results of this paper have been accepted for an oral presentation at ICML 2024~\citep{li2024towards}. Compared to the conference version, this paper replaces the assumption of consistency~(CAP) with the ISA-MDP formulation for a more general mathematical description. Additionally, we expand our analysis to include measurements in probability spaces, introduce the CAR-PPO framework, and conduct experiments on continuous control tasks in Mujoco.

\section{Related Work}

\paragraph{Adversarial Attacks on DRL Agents}
The vulnerability of DRL agents to adversarial attacks was first highlighted by \cite{huang2017adversarial}, who demonstrated the susceptibility of DRL policies to Fast Gradient Sign Method (FGSM) attacks~\citep{goodfellow2014explaining} in Atari games. This foundational work sparked further research into various attack methods and robust policies.
Following this, \cite{lin2017tactics, kos2017delving} introduced limited-step attacks to deceive DRL policies,
while \cite{pattanaik2017robust} further explored these vulnerabilities by employing a critic action-value function and gradient descent to undermine DRL performance.
Additionally, \cite{behzadan2017vulnerability} proposed black-box attacks on DQN and verified the transferability of adversarial examples across different models.
\cite{inkawhich2019snooping} showed that even adversaries with restricted access to only action and reward signals could execute highly effective and damaging attacks. 
For continuous control agents, \cite{weng2019toward} developed a two-step attack algorithm based on learned model dynamics.
\cite{zhang2021robust, sun2021strongest} developed learned adversaries by training attackers as RL agents, resulting in SA-RL and PA-AD attacks. 

Research by~\cite{kiourti2020trojdrl, wang2021backdoorl, bharti2022provable, guo2023policycleanse} further explored backdoor attacks in reinforcement learning, uncovering significant vulnerabilities.
In a novel approach, \cite{lu2023adversarial} introduced an adversarial cheap talk setting and trained an adversary through meta-learning. 
\cite{korkmaz2023adversarial} analyzed adversarial directions in the Arcade Learning Environment and found that even state-of-the-art robust agents~\citep{zhang2020robust, oikarinen2021robust} remain vulnerable to policy-independent sensitivity directions.
\cite{franzmeyerillusory} used dual ascent to learn an illusory attack end-to-end.
\cite{gleave2019adversarial} further studied the impact of adversarial policies in multi-agent scenarios.
Lastly, \cite{liang2023game} proposed a temporally-coupled attack, further degrading the performance of robust agents.
This body of work underscores the ongoing challenge of enhancing the adversarial robustness of DRL agents and highlights the need for continued research in this critical area.

\paragraph{Adversarial Robust Policy for DRL Agents}
Earlier studies by~\cite{kos2017delving, behzadan2017whatever} incorporated adversarial states into the replay buffer during training in Atari environments, resulting in limited robustness. 
\cite{fischer2019online} proposed separating the DQN architecture into a Q-network and a policy network, robustly training the policy network with generated adversarial states and provably robust bounds.
\cite{zhang2020robust} characterized state-adversarial RL as SA-MDP and revealed the potential non-existence of the ORP. They addressed this challenge by balancing robustness and natural returns through a KL-based regularization.
\cite{oikarinen2021robust} leveraged robustness certification bounds to design the adversarial loss and combined it with the vanilla training loss.
\cite{liang2022efficient} improved training efficiency by estimating the worst-case value estimation and combining it with classic Temporal Difference~(TD)-target~\citep{sutton1988learning} or Generalized Advantage Estimation~(GAE)~\citep{schulman2015high}.
\cite{nie2023improve} built the DRL architecture for discrete action spaces upon SortNet~\citep{zhang2022rethinking}, enabling global Lipschitz continuity and reducing the need for training extra attackers or finding adversaries. 
Recently, \cite{sun2024belief} proposed learning a pessimistic discrete policy combined with belief state inference and diffusion-based purification.
Prior methods often constrained local smoothness or invariance heuristically to achieve commendable robustness, sometimes compromising natural performance. In contrast, our approach seeks optimal robust policies with strict theoretical guarantees, simultaneously improving both natural and robust performance.

\cite{shen2020deep} found that smooth regularization can enhance both natural performance and robustness for TRPO~\citep{schulman2015trust} and DDPG~\citep{silver2014deterministic}.
\cite{wu2021crop, kumar2021policy} used Randomized Smoothing~(RS) to enable certifiable robustness.
The latest work by \cite{sun2024breaking} introduced a novel smoothing strategy to address the overestimation of robustness.
Moreover, \cite{liu2024beyond} proposed an adaptive defense based on a family of non-dominated policies during the testing phase. 
In multi-agent settings, \cite{he2023robust} analyzed state adversaries in a Markov Game and proposed robust multi-agent Q-learning and actor-critic methods to solve the robust equilibrium.
\cite{bukharin2024robust} extended robustness regularization~\citep{shen2020deep, zhang2020robust} to multi-agent environments by considering a sub-optimal Lipschitz policy in smooth environments. 
\cite{liu2023rethinking} proposed adversarial training with two timescales for effective convergence to a robust policy.
Another line of research focuses on alternated training for agents with learned adversaries~\citep{zhang2021robust, sun2021strongest}, further developed in a game-theoretic framework by~\citep{liang2023game}.
This body of work underscores the importance of developing robust DRL policies and highlights the progress and challenges in enhancing the adversarial robustness of DRL agents.

\section{Preliminaries}

\paragraph{Markov Decision Process (MDP)}
A Markov Decision Process (MDP) is characterized by a tuple $\left( \mathcal{S}, \mathcal{A}, r, \mathbb{P}, \gamma, \mu_0 \right)$, where $\mathcal{S}$ represents the state space, $\mathcal{A}$ denotes the action space, $r: \mathcal{S} \times \mathcal{A} \rightarrow \mathbb{R}$ is the reward function, and $\mathbb{P}: \mathcal{S} \times \mathcal{A} \rightarrow \Delta\left(\mathcal{S}\right)$ describes the transition dynamics with $\Delta\left(\mathcal{S}\right)$ being the probability space over $\mathcal{S}$. The discount factor $\gamma \in [0,1)$ determines the present value of future rewards, and $\mu_0 \in \Delta\left( \mathcal{S} \right)$ specifies the initial state distribution. 
In the following theoretical analysis, we consider MDPs with a continuous state space $\mathcal{S} \subset \mathbb{R}^d$ that is a compact set, and a finite action space $\mathcal{A}$.
Given an MDP, the state value function is defined as $V^\pi(s) = \mathbb{E}_{\pi,\mathbb{P}}\left[ \sum_{t=0}^{\infty} \gamma^t r(s_t,a_t) | s_0=s\right]$, and the action-value function, or \(Q\)-function, is $Q^\pi(s,a) = \mathbb{E}_{\pi,\mathbb{P}}\left[ \sum_{t=0}^{\infty} \gamma^t r(s_t,a_t) | s_0=s, a_0=a\right]$ for any policy $\pi$. 
Denote the function family $\Pi$ as the set of all non-stationary and randomized policies.
A key property of MDPs is the existence of a stationary, deterministic policy that maximizes both $V^\pi(s)$ and $Q^\pi(s, a)$ for all states $s\in\mathcal{S}$ and actions $a\in\mathcal{A}$. Additionally, the optimal $Q$-function, $Q^*(s,a)=\sup_{\pi\in\Pi}Q^\pi(s,a)$, satisfies the Bellman optimality equations:
\begin{equation} \label{eq: bellman optimality equation} \notag
    Q^*(s, a)=r(s, a)+\gamma \mathbb{E}_{s^{\prime} \sim \mathbb{P}(\cdot \mid s, a)}\left[\max _{a^{\prime} \in \mathcal{A}} Q^*\left(s^{\prime}, a^{\prime}\right)\right] \quad \forall s\in\mathcal{S},\ a\in\mathcal{A}.
\end{equation}

\paragraph{State-adversarial Markov Decision Process (SA-MDP)}
A State-adversarial Markov Decision Process (SA-MDP) is defined by the tuple $\left( \mathcal{S}, \mathcal{A}, r, \mathbb{P}, \gamma, \mu_0, B, \nu \right)$, which extends the standard MDP by introducing the definitions of adversaries and perturbation neighborhoods. 
The set $B: \mathcal{S} \rightarrow 2^{\mathcal{S}}$ specifies the allowable perturbation for each state, where the power set $2^{\mathcal{S}}$ represents the set of all subsets of the state space $\mathcal{S}$.
In this framework, an adversary $\nu: \mathcal{S}\rightarrow \mathcal{S}$ can perturb the observed state $s$ to a state $s_\nu:= \nu(s) \in B(s)$. The policy under perturbations is denoted by $\pi \circ \nu$.
The adversarial value function is given by
$V^{\pi\circ \nu}(s) = \mathbb{E}_{\pi\circ \nu,\mathbb{P}}\left[ \sum_{t=0}^{\infty} \gamma^t r(s_t,a_t) | s_0=s\right]$,
and the adversarial action-value function (Q-function) is
$Q^{\pi\circ \nu}(s,a) = \mathbb{E}_{\pi\circ \nu,\mathbb{P}}\left[ \sum_{t=0}^{\infty} \gamma^t r(s_t,a_t) | s_0=s, a_0=a\right]$.
For any policy $\pi$, there exists the strongest adversary $\nu^*(\pi)$ that minimizes the value function for all states, defined as $\nu^*(\pi) = \mathop{\arg\min}_\nu V^{\pi\circ \nu}$.
An optimal robust policy (ORP) $\pi^*$ should maximize the value function under the strongest adversary for all states, satisfying $V^{\pi^*\circ \nu^*(\pi^*)}(s) = \max_\pi V^{\pi\circ \nu^*(\pi)}(s)$ for all $s\in \mathcal{S}$. 
This framework emphasizes the interaction between the policy and adversary, highlighting the importance of developing robust policies in adversarial environments.

\paragraph{Deep Q-network (DQN)}
DQN leverages a neural network $Q_{\theta}$ to parameterize the action-value function. The policy $\pi_{\theta}$ is deterministic, selecting actions based on the highest $Q$-value. Following the baseline work~\citep{zhang2020robust, oikarinen2021robust, liang2022efficient}, we consider Double DQN~\citep{van2016deep} and Dueling DQN~\citep{wang2016dueling} variations. Double DQN uses two Q-networks to alleviate overestimation of the target value $Q_{\Bar{\theta}}$. Dueling DQN enhances learning efficiency by splitting the Q-network output into two separate heads: one representing the state value and the other representing the advantage function.
DQN optimizes the Q-network by minimizing the Bellman error derived from Bellman optimality equations. The Bellman error can be formulated by:
\begin{equation}\label{eq: dqn loss} \notag
    \mathcal{L}_{\text{DQN}}(\theta) = \mathbb{E}_{(s,a,s^\prime,r)}  \left| r + \gamma \max_{a^\prime} Q_{\Bar{\theta}}(s^\prime, a^\prime) - Q_\theta(s,a) \right| .
\end{equation}

\paragraph{Proximal Policy Optimization (PPO)}
PPO is an actor-critic method that employs a policy gradient approach. It consists of a policy network $\pi_\theta$ as the actor and a state value network $V_{\theta_v}$ as the critic. To estimate the advantage function, PPO employs Generalized Advantage Estimation~(GAE)~\citep{schulman2015high}, defined as $A_t(s_t, a_t) = \sum_{i=0}^{k-1} \gamma^i r_{t+i} + \gamma^k V_{\theta_v}(s_{t+k}) - V_{\theta_v}(s_t)$, with $k$ as a hyperparameter.
PPO, through the clipping function, ensures that the new policy does not deviate significantly from the old one. The critic network can typically be trained by regression on the mean-square error. The policy loss for training the actor is defined as the following formulation:
\begin{equation} \notag
    \mathcal{L}_{\text{PPO}}(\theta) = \mathbb{E}_{(s_t,a_t,r_t)} \left[ - \min \left( \frac{\pi_\theta(a_t|s_t)}{\pi_{\theta_{\text{old}}}(a_t|s_t)}A_t, \operatorname{clip} \left( \frac{\pi_\theta(a_t|s_t)}{\pi_{\theta_{\text{old}}}(a_t|s_t)}, 1-\eta, 1+\eta \right) A_t \right) \right],
\end{equation}
where $\eta$ is the clipping hyperparameter. 
Additionally, the PPO loss typically includes an entropy penalty to encourage further exploration.

\section{Optimal Adversarial Robustness}

In this section, we explore the concept of Optimal Robust Policy~(ORP). While \citet{zhang2020robust} pointed out that ORP does not universally exist in all adversarial scenarios, our findings indicate that ORP is absent in only a few states. Importantly, the measure of these exceptional states is nearly zero in complex tasks.
To address this, we introduce the concept of intrinsic states to eliminate the impact of these exceptional states, leading to the formulation of a new framework, Intrinsic State-adversarial MDP~(ISA-MDP).
We further demonstrate that the difference between ISA-MDP and SA-MDP in complex environments is negligible, meaning that ISA-MDP can be applied to any scenarios where SA-MDP is applicable. 
Subsequently, we propose a novel Consistent Adversarial Robust~(CAR) Operator~$\mathcal{T}_{car}$ for computing the adversarial $Q$-function. Within the ISA-MDP framework, we identify that the fixed point of the CAR operator corresponds exactly to the optimal $Q$-function $Q^*$, thereby proving the existence of a deterministic and stationary ORP.

\subsection{Intrinsic State-adversarial Markov Decision Process (ISA-MDP)}

\begin{figure}
    \centering
\includegraphics[width=0.85\columnwidth]{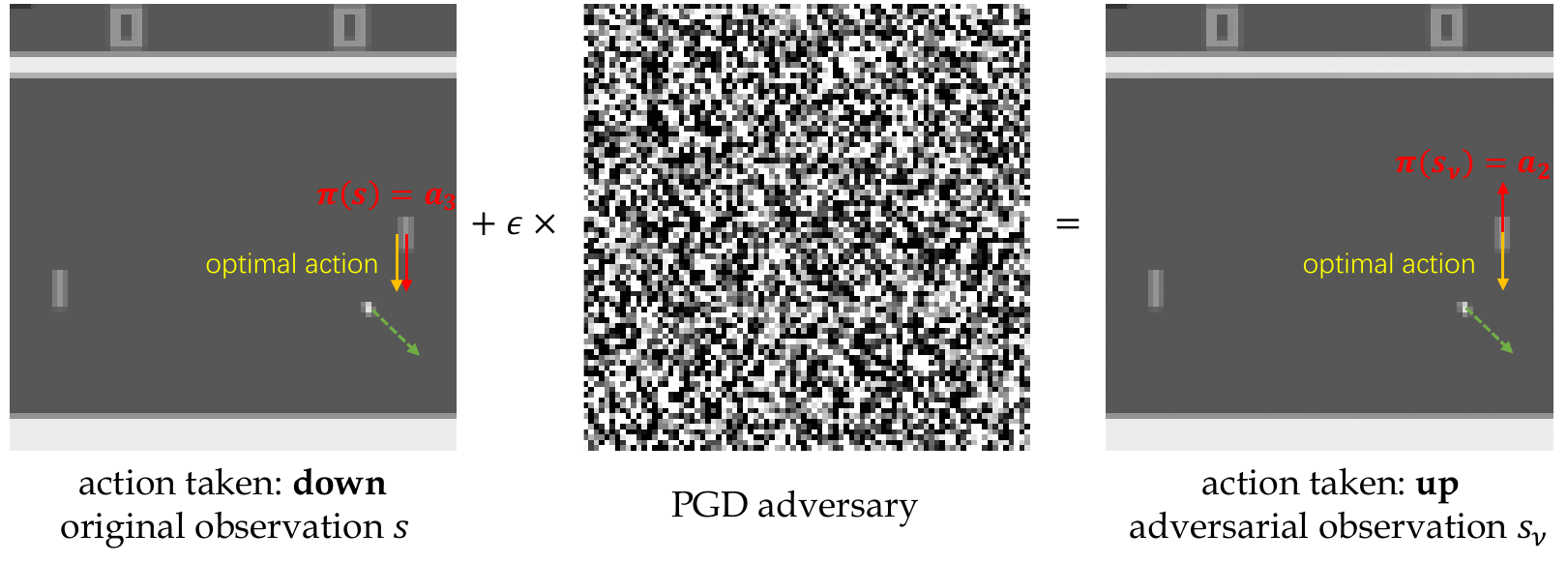} 
    \vspace{-1em}
    \caption{An example of state adversary in DQN. While the adversary disrupts the policy executed by DQN, it does not affect the optimal action prescribed by the Bellman optimal policy. This observation leads us to examine two critical issues: whether the Bellman optimal policy serves as the ORP, and why vanilla DQN trained with Bellman error fails to achieve robustness.
    }
    \label{fig:intrisic state}
\end{figure}

Given a general adversary, we observe that the true state $s$ and the perturbed observation~$s_\nu$ typically share the same optimal action in practice. This observation, illustrated in Figure \ref{fig:intrisic state} and Appendix \ref{app: instrinsic state}, suggests that the Bellman optimal policy is inherently robust. We explore this by theoretically analyzing the effects of adversarial perturbations on the optimal action.
We then define the intrinsic state neighborhood as the set of states for which the optimal action remains consistent.
\begin{definition}[Intrinsic State Neighborhood]
    \label{defn: intrinsic state neighborhood}
    Given an SA-MDP, let $Q^*$ denote the Bellman optimal $Q$ function derived from the Bellman optimality equations. The intrinsic state neighborhood for any state $s$ is defined as the following:
\begin{equation}\notag
    \begin{aligned}
        B^*(s) := \left\{ s^{\prime} \in \mathcal{S} | s^\prime \in B(s),\ \mathop{\arg\max}_a Q^*(s^{\prime},a) =   \mathop{\arg\max}_a Q^*(s,a) \right\}.
    \end{aligned}
\end{equation} 
\end{definition}
Without loss of generality, we assume that the adversary perturbation set is an $\epsilon$-neighborhood, defined as $B_\epsilon(s) = \left\{ \|s^\prime - s\| \le \epsilon \right\}$ for convenience. Note that our following theorems and proofs can be extended to general cases.

Furthermore, we characterize the states where the standard state neighborhood differs from the intrinsic neighborhood and show that such states are rare in real environments. This finding underpins the ISA-MDP formulation that we present later.
\begin{theorem}[Sparse Difference Between Intrinsic and Standard Neighborhood] \label{thm: consistency assumption reasonable}
    For any MDP $\mathcal{M}$, let $\mathcal{S}_{nu}$ denote the state set where the optimal action is not unique, i.e., $\mathcal{S}_{nu} = \left\{ s\in\mathcal{S} | \mathop{\arg\max}_a Q^*(s,a) \text{ is not a singleton} \right\}$. Given $\epsilon > 0$, let $\mathcal{S}_{nin}$ denote the set of states where \textit{the intrinsic state $\epsilon$-neighborhood} is not the same as \textit{the $\epsilon$-neighborhood}, i.e., $\mathcal{S}_{nin} = \left\{ s\in\mathcal{S} |  B_\epsilon(s) \neq B^*_\epsilon(s) \right\}$. Then, we have that
    $$\mathcal{S}_{nin} \subseteq \mathcal{S}_{nu} \cup \mathcal{S}_{0} + B_\epsilon ,$$
    where $\mathcal{S}_{0}$ is the set of discontinuous points that cause the optimal action to change, i.e.,   
    $\mathcal{S}_{0}=\{s \in S|\forall\  \epsilon_1 > 0, \exists \  s' \in B_{\epsilon_1}(s), \text{s.t. } \arg\max_a Q^*(s', a) \neq \arg\max_a Q^*(s, a)\} \cap \{s \in S|\exists\  a \in \mathcal{A}, \text{s.t. } Q^*(s, a)\  \text{is not continuous at}\  s\}$.
\end{theorem}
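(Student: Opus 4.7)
The plan is to prove the contrapositive formulation: for $s \in \mathcal{S}_{nin}$ with $s \notin \mathcal{S}_{nu}$, I aim to exhibit a point $p \in (\mathcal{S}_{nu} \cup \mathcal{S}_0) \cap B_\epsilon(s)$, which yields $s \in (\mathcal{S}_{nu} \cup \mathcal{S}_0) + B_\epsilon$. Since $s \in \mathcal{S}_{nin}$, there is a witness $s' \in B_\epsilon(s)$ whose argmax set differs from $\arg\max_a Q^*(s,a) = \{a^*(s)\}$ (the uniqueness comes from $s \notin \mathcal{S}_{nu}$). The central tool is a continuity argument along the line segment $\gamma(t) = (1-t)s + ts'$, $t \in [0,1]$, which lies entirely in $B_\epsilon(s)$ by convexity of the Euclidean ball; if $\mathcal{S}$ itself is not convex, one substitutes any continuous path in $B_\epsilon(s) \cap \mathcal{S}$ connecting $s$ to $s'$.

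The engine of the proof is a key lemma: whenever the value functions $Q^*(\cdot, a)$ are all continuous at a point $p$ and $\arg\max_a Q^*(p, a)$ is a singleton $\{a^*\}$, the argmax map is locally constant around $p$. This follows immediately from the strict gap $Q^*(p, a^*) - \max_{a \neq a^*} Q^*(p, a) > 0$ persisting under small continuous perturbations. Contrapositively, if the argmax map is discontinuous at $p$ in the sense used in $\mathcal{S}_0$, then either $p \in \mathcal{S}_{nu}$ or some $Q^*(\cdot, a)$ is discontinuous at $p$. I then define the transition time $t^* := \inf\{t \in [0,1] : f(t) \neq \{a^*(s)\}\}$, where $f(t) := \arg\max_a Q^*(\gamma(t), a)$; this infimum exists since $f(1) \neq \{a^*(s)\}$, and I set $p := \gamma(t^*) \in B_\epsilon(s)$. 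A short case split on whether $f(p) = \{a^*(s)\}$ or not produces a sequence of segment points $\gamma(t_n) \to p$ with $f(\gamma(t_n)) \neq f(p)$ — approaching from the right in the first case (since $t^*$ is an infimum not attained), and from the left in the second (since $f = \{a^*(s)\}$ below $t^*$ by construction). This certifies the $\forall\, \epsilon_1 > 0$ discontinuity condition required by the definition of $\mathcal{S}_0$, and applying the key lemma forces $p \in \mathcal{S}_{nu} \cup \mathcal{S}_0$.

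The primary obstacle I foresee is careful bookkeeping at $t^*$: one must verify that the sided approach actually produces witnesses matching the strong quantifier in the definition of $\mathcal{S}_0$, and handle the degenerate case $t^* = 0$. In that case the sequence collapses onto $s$ itself, but then the key lemma directly forces $s \in \mathcal{S}_0$ because $s \notin \mathcal{S}_{nu}$ by hypothesis, so $s \in \mathcal{S}_0 \subseteq \mathcal{S}_0 + B_\epsilon$ and the conclusion still holds. A secondary technicality is that $\mathcal{S}$ is only compact rather than convex, but replacing the segment with any continuous path inside $B_\epsilon(s) \cap \mathcal{S}$ patches this immediately, as the infimum construction uses only path continuity and the one-variable intermediate-value-style reasoning, never convexity itself.
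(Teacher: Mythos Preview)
Your proposal is correct and follows essentially the same strategy as the paper's proof: both locate a ``transition point'' inside $B_\epsilon(s)$ where the argmax changes, then use the key continuity lemma (singleton argmax plus continuity of all $Q^*(\cdot,a)$ at $p$ implies local constancy of the argmax) to force that point into $\mathcal{S}_{nu} \cup \mathcal{S}_0$. The only cosmetic differences are that the paper case-splits on whether $B_\epsilon(s)$ meets $\mathcal{S}_{nu}$ at all (rather than just whether $s\in\mathcal{S}_{nu}$) and then takes ``the closest point to $s$ with a different optimal action'' directly, whereas you parametrize a segment and take the infimum $t^*$---your version is arguably cleaner since $t^*$ always exists as a real number while the paper's closest point need not be attained.
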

The proof of Theorem~\ref{thm: consistency assumption reasonable} is provided in Appendix~\ref{app: reasonable of C assumption}. In practice, the set $\mathcal{S}_{nu}$ is nearly empty in most complex environments, and $\mathcal{S}_{0}$ consists of rare and special discontinuous points of $Q^*$. Theorem~\ref{thm: consistency assumption reasonable} essentially shows that for intricate tasks, $\mathcal{S}_{nin}$ is a quite small set, with the measure $m(\mathcal{S}_{nin})$ being approximately $O(\epsilon^d)$, where $m(A)$ represents the measure of set $A$ and $d$ is the dimension of the state space. 
For example, in certain natural environments, the reward function and transition dynamics are smooth, especially in continuous control tasks where the transition dynamics come from some physical laws. In these scenarios, the value and action-value functions are continuous, making $\mathcal{S}_0$ an empty set.
Beyond smooth environments, many tasks can be modeled with sparse rewards. In these cases, the value and action-value functions are almost everywhere continuous, indicating that $\mathcal{S}_0$ is a set of zero measures.
Further insights and stronger conclusions are provided in Appendix~\ref{app: reasonable of C assumption} under more stringent conditions.

These findings and analysis motivate the development of the ISA-MDP formulation. Firstly, we introduce the concept of intrinsic adversaries.
\begin{definition}[Intrinsic Adversary] \label{ass: intrinsic adversary assumption}
    The intrinsic adversary $\nu_I: \mathcal{S} \rightarrow \mathcal{S},\ s\mapsto \nu_I(s)\in B^*(s)$ can perturb any state $s$ to a state $s_\nu:=\nu_I(s)$ within its intrinsic state neighborhood.
\end{definition}
Building upon this definition, we formulate the intrinsic state-adversarial MDP.
\begin{definition}[Intrinsic State-adversarial Markov Decision Process~(ISA-MDP)] \label{def: isamdp}
    An intrinsic state-adversarial MDP is defined by the tuple $\left( \mathcal{S}, \mathcal{A}, r, \mathbb{P}, \gamma, \mu_0, B^*, \nu_I \right)$, where the allowable perturbation set $B^*: \mathcal{S} \rightarrow 2^{\mathcal{S}}$ is the intrinsic state neighborhood as characterized in Definition~\ref{defn: intrinsic state neighborhood}, and $\nu_I$ is the intrinsic adversary defined in Definition~\ref{ass: intrinsic adversary assumption}.
\end{definition}
% \begin{definition}[Intrinsic State-adversarial Markov Decision Process~(ISA-MDP)] \label{def: isamdp}
%     An intrinsic state-adversarial MDP is defined by the tuple $\left( \mathcal{S}, \mathcal{A}, r, \mathbb{P}, \gamma, \mu_0, B^*, \nu_I \right)$, where the allowable perturbation set $B^*: \mathcal{S} \rightarrow 2^{\mathcal{S}}$ is the intrinsic state neighborhood as characterized in Definition~\ref{defn: intrinsic state neighborhood}, and the adversary $\nu_I: \mathcal{S} \rightarrow \mathcal{S},\ s\mapsto \nu_I(s)\in B^*(s)$ can perturb any state $s$ to a state $s_\nu:=\nu_I(s)$ within its intrinsic state neighborhood $B^*(s)$.
% \end{definition}
While ISA-MDP can be seen as a more special characterization of SA-MDP with the intrinsic adversary, Theorem~\ref{thm: consistency assumption reasonable} inherently showcases that the difference between ISA-MDP and SA-MDP is negligible in complex environments, making ISA-MDP applicable in any scenario where SA-MDP is used. 
% \begin{assumption}[Consistency Assumption of Policy] 
% \label{ass: consistency assumption}
%     For all $s\in\mathcal{S}$, its adversary $\epsilon$-perturbation set is the same as the intrinsic state $\epsilon$-neighbourhood, \textit{i.e.},
%     $B_\epsilon(s) = B^*_\epsilon(s)$.
% \end{assumption}

% \subsection{Consistent Adversarial Robust Operator}
\subsection{Consistent Optimal Robust Policy}

To establish the relation between the optimal $Q$-function before and after the perturbation, we propose a consistent adversarial robust (CAR) operator.
\begin{definition}[Consistent Adversarial Robust~(CAR) Operator $\mathcal{T}_{car}$]\label{thm: equivalence}
    Given an SA-MDP, the CAR operator is   $\mathcal{T}_{car}: L^p\left( \mathcal{S}\times\mathcal{A} \right) \rightarrow L^p\left( \mathcal{S}\times\mathcal{A} \right)$,
    \begin{equation} \notag
        \begin{aligned}
            \left( \mathcal{T}_{car} Q \right) (s,a) = r(s,a) + \gamma \mathbb{E}_{ s^\prime \sim \mathbb{P}(\cdot|s,a)} \left[ \min _{s^\prime_\nu \in B(s^\prime)} Q \left(s^\prime,\mathop{\arg\max}_{a_{s^\prime_\nu}} Q\left(s^\prime_\nu, a_{s^\prime_\nu}\right)\right) \right].
        \end{aligned}
    \end{equation}
\end{definition}

However, $\mathcal{T}_{car}$ is not contractive (shown in Appendix \ref{app: not a contraction}), so we cannot directly ensure the existence of a fixed point through contraction mapping. Fortunately, Theorem~\ref{thm: fixed point} demonstrates that within the ISA-MDP, $\mathcal{T}_{car}$ has a fixed point, which corresponds to the optimal adversarial action-value function $Q^{\pi^*\circ \nu^*(\pi^*)}$. 
\begin{theorem}[Relation between $Q^*$ and $Q^{\pi^*\circ \nu^*(\pi^*)}$]\label{thm: fixed point} \
        \begin{itemize}
            \item If the optimal adversarial action-value function $Q^{\pi^*\circ \nu^*(\pi^*)}$ under the strongest adversary exists for all $s\in\mathcal{S}$ and $a\in\mathcal{A}$, then it is the fixed point of CAR operator.
            \item Within the ISA-MDP, $Q^*$ is the fixed point of CAR operator $\mathcal{T}_{car}$. Furthermore, $Q^*$ is the optimal adversarial action-value function under the strongest adversary, \textit{i.e.}, $Q^*(s,a) = Q^{\pi^*\circ \nu^*(\pi^*)}(s,a)$, for all $s\in\mathcal{S}$ and $a\in\mathcal{A}$.
        \end{itemize}
\end{theorem}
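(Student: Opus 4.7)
The plan is to prove the two bullets by combining a one-step Bellman expansion with the defining interaction between the strongest adversary and the robust policy, and then to exploit the intrinsic neighborhood property of ISA-MDP to collapse the inner minimisation inside $\mathcal{T}_{car}$. For the first bullet, I would start from the Bellman expectation identity
\begin{equation*}
Q^{\pi^*\circ \nu^*(\pi^*)}(s,a) = r(s,a) + \gamma\, \mathbb{E}_{s'\sim\mathbb{P}(\cdot\mid s,a)}\bigl[V^{\pi^*\circ\nu^*(\pi^*)}(s')\bigr],
\end{equation*}
and rewrite the value at $s'$ using the definition of the strongest adversary: $\nu^*(\pi^*)$ is pointwise minimising, so $V^{\pi^*\circ\nu^*(\pi^*)}(s') = \min_{s'_\nu\in B(s')} Q^{\pi^*\circ\nu^*(\pi^*)}(s', \pi^*(s'_\nu))$. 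A one-step policy improvement argument, applied inside the deterministic-stationary class, then forces $\pi^*(s'_\nu) = \arg\max_{a} Q^{\pi^*\circ\nu^*(\pi^*)}(s'_\nu, a)$ at every perceived state $s'_\nu$; otherwise one could locally modify $\pi^*$ to strictly improve its worst-case value, contradicting optimality. Substituting this greedy form into the minimum reproduces exactly the integrand of $\mathcal{T}_{car}$ applied to $Q^{\pi^*\circ\nu^*(\pi^*)}$, establishing the fixed-point identity.

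For the second bullet, the plan is to directly evaluate $(\mathcal{T}_{car}Q^*)(s,a)$ under the ISA-MDP restriction $s'_\nu \in B^*(s')$. By Definition~\ref{defn: intrinsic state neighborhood}, every $s'_\nu\in B^*(s')$ satisfies $\arg\max_a Q^*(s'_\nu,a) = \arg\max_a Q^*(s',a)$, so the inner $\arg\max$ is independent of $s'_\nu$ and the inner $\min$ collapses. The bracketed term therefore reduces to $\max_a Q^*(s',a)$, and applying the Bellman optimality equation gives
\begin{equation*}
(\mathcal{T}_{car}Q^*)(s,a) = r(s,a) + \gamma\,\mathbb{E}_{s'\sim\mathbb{P}(\cdot\mid s,a)}\bigl[\max_a Q^*(s',a)\bigr] = Q^*(s,a).
\end{equation*}
To then identify $Q^*$ with $Q^{\pi^*\circ\nu^*(\pi^*)}$, I would take $\pi^*$ to be any deterministic greedy policy with respect to $Q^*$ and note that in ISA-MDP the intrinsic adversary leaves its action unchanged, i.e.\ $\pi^*(\nu_I(s)) = \pi^*(s)$ for all $s$. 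Hence $V^{\pi^*\circ \nu^*(\pi^*)} = V^{\pi^*}$, the right-hand side equals the natural optimal value $\sup_\pi V^\pi$, and since $V^{\pi\circ\nu^*(\pi)} \le V^\pi$ for every $\pi$, $\pi^*$ attains the robust optimum. The action-value equality $Q^* = Q^{\pi^*\circ\nu^*(\pi^*)}$ then follows.

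I expect the main obstacle to lie in the policy-improvement step used in the first bullet. Justifying that the robust policy must be greedy with respect to its own adversarial Q-function at every perceived state requires some care, because in a general SA-MDP the ORP need not even exist and need not be deterministic (as highlighted by \citet{zhang2020robust}). The cleanest route is to assume the hypothesis of the bullet ensures that the maximin is attained within the deterministic stationary class, then argue by contradiction: any deviation from the greedy action at a single perceived state can be shifted back to yield a strictly larger worst-case value, which contradicts optimality. Once this structural property is in hand, the remaining algebra is routine, and Part 2 then follows essentially mechanically from the defining property of $B^*$.
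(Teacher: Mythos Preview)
For the first bullet your route coincides with the paper's: both expand $Q_0(s,a)=r(s,a)+\gamma\,\mathbb{E}_{s'}[V_0(s')]$ and then invoke the Bellman equation for the strongest adversary to expose the minimisation over $s'_\nu\in B(s')$. The crux---showing that the optimal robust policy selects $\arg\max_a Q_0(s'_\nu,a)$ at the perceived state---is handled somewhat informally in both accounts. The paper writes this step as a $\min_{s'_\nu}\max_\pi$ over $\mathbb{E}_{a\sim\pi(\cdot\mid s'_\nu)}[Q_0(s',a)]$ (citing the strongest-adversary lemma) and passes directly to the greedy form, while you argue by local policy improvement; your flagging of this as the main obstacle is accurate.

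For the second bullet your argument is genuinely different and more elementary than the paper's. After verifying that $Q^*$ is a fixed point of $\mathcal{T}_{car}$ (where you and the paper agree), the paper introduces the linear operators $\mathcal{L}^{\pi\circ\nu}$, proves $(I-\gamma\mathcal{L}^{\pi\circ\nu})$ is invertible with nonnegative inverse, identifies $Q^*=(I-\gamma\mathcal{L}^{\pi\circ\nu})^{-1}r=Q^{\pi\circ\nu}$ for the greedy $\pi$ and corresponding worst-case $\nu$, and finally shows $Q^*-Q^{\pi'\circ\nu^*(\pi')}\ge 0$ for every competitor $\pi'$ by applying the positive inverse to a pointwise-nonnegative residual. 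Your route bypasses all of this: the intrinsic-neighborhood property forces the greedy policy's action at the true state to be unaffected by any admissible adversary, so $V^{\pi^*\circ\nu^*(\pi^*)}=V^{\pi^*}$, and the sandwich $V^{\pi\circ\nu^*(\pi)}\le V^\pi\le V^{\pi^*}$ (using that the identity perturbation lies in $B^*(s)$) finishes. One small refinement: if $\arg\max_a Q^*(s,\cdot)$ is not a singleton, $\pi^*(\nu_I(s))$ need not literally equal $\pi^*(s)$; what you actually get is $\pi^*(\nu_I(s))\in\arg\max_a Q^*(s,a)$, which still suffices for $V^{\pi^*\circ\nu_I}=V^*$. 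The paper's operator machinery is heavier but yields the comparison $Q^*\ge Q^{\pi'\circ\nu^*(\pi')}$ directly at the $Q$-level, which feeds into their corollary; your argument gives the $V$-level inequality first and lifts to $Q$ via one more Bellman step.
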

\begin{remark}
    Note that the outer minimization and inner maximization operations in Definition~\ref{thm: equivalence} do not constitute a standard minimax problem because their objectives differ, resulting in a bilevel optimization problem. Generally, the minimization and maximization operations cannot be swapped. However, they can be swapped if $\mathop{\arg\max}_{a_{s^\prime_\nu}} Q\left(s^\prime_\nu, a_{s^\prime_\nu}\right)$ is a singleton for all $s^\prime_\nu \in B(s^\prime)$, which is a mild condition in our training. Additionally, we verify that $Q^*$ is still the fixed point of the operator after swapping.
\end{remark}
On this basis, it can be derived from Theorem~\ref{thm: fixed point} that within the ISA-MDP, the greedy policy $\pi^*(s):=\mathop{\arg\max}_a Q^*(s,a)$, for all $s\in \mathcal{S}$, is exactly the ORP.
\begin{corollary}[Existence of ORP]\label{cor: existence of ORP}
    Within the ISA-MDP, there exists a deterministic and stationary policy $\pi^*$ which satisfies $V^{\pi^*\circ \nu^*(\pi^*)}(s) \ge V^{\pi\circ \nu^*(\pi)}(s)$ and $Q^{\pi^*\circ \nu^*(\pi^*)}(s, a) \ge Q^{\pi\circ \nu^*(\pi)}(s, a)$, for all $\pi\in\Pi$, $s\in\mathcal{S}$ and $a\in\mathcal{A}$.
\end{corollary}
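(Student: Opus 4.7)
The plan is to take the Bellman-greedy policy $\pi^*(s) := \mathop{\arg\max}_a Q^*(s,a)$ (with a fixed tie-breaker when the argmax is not unique) as the candidate ORP, and to derive the two claimed inequalities from Theorem~\ref{thm: fixed point} together with two elementary observations: the identity map is always a feasible intrinsic adversary, and no policy exceeds $V^*$ in the underlying MDP. By construction $\pi^*$ is deterministic and stationary, so only the value- and action-value-function inequalities remain to be checked.

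First, I would establish the baseline identity $V^{\pi^*\circ \nu^*(\pi^*)}(s) = V^*(s)$ for every $s \in \mathcal{S}$. Theorem~\ref{thm: fixed point} already gives $Q^{\pi^*\circ \nu^*(\pi^*)} \equiv Q^*$ pointwise within the ISA-MDP. The defining property of the intrinsic neighborhood (Definition~\ref{defn: intrinsic state neighborhood}) ensures $\pi^*(s') = \pi^*(s)$ for every $s' \in B^*(s)$, and applying this at $s' = \nu^*(\pi^*)(s)$ yields
\begin{equation*}
V^{\pi^*\circ \nu^*(\pi^*)}(s) = Q^{\pi^*\circ \nu^*(\pi^*)}\bigl(s, \pi^*(\nu^*(\pi^*)(s))\bigr) = Q^*\bigl(s, \pi^*(s)\bigr) = V^*(s).
\end{equation*}
The corresponding $Q$-level identity $Q^{\pi^*\circ \nu^*(\pi^*)}(s,a) = Q^*(s,a)$ is immediate from Theorem~\ref{thm: fixed point}.

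Next, for any competing $\pi \in \Pi$, the identity map $\nu_{\mathrm{id}}(s) = s$ is a valid intrinsic adversary since $s \in B^*(s)$ trivially. By the minimality of $\nu^*(\pi)$ over intrinsic adversaries, this forces $V^{\pi \circ \nu^*(\pi)}(s) \le V^{\pi \circ \nu_{\mathrm{id}}}(s) = V^\pi(s)$. Chaining with the classical MDP optimality $V^\pi(s) \le V^*(s)$ and the identity established above gives
\begin{equation*}
V^{\pi\circ \nu^*(\pi)}(s) \le V^\pi(s) \le V^*(s) = V^{\pi^*\circ \nu^*(\pi^*)}(s),
\end{equation*}
and an identical chain at the $Q$-level closes the corollary. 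The main obstacle I anticipate is bookkeeping rather than analytical: carefully unpacking the simultaneous-minimization definition of $\nu^*(\pi)$ inherited from the SA-MDP setup (so that pointwise comparisons against the identity adversary are legitimate) and ensuring the chain rule that produces $V$ from $Q$ under perturbed observations is applied correctly. With these points handled, the corollary follows as a short consequence of Theorem~\ref{thm: fixed point}.
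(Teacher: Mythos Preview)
Your argument is correct and is a genuinely different route from the paper's. The paper (in the appendix proof of the corollary, together with the proof of Theorem~\ref{thm: fixed point}) establishes the $Q$-inequality $Q^{\pi^*\circ\nu^*(\pi^*)} \ge Q^{\pi'\circ\nu^*(\pi')}$ by an operator decomposition: writing $Q^* - Q^{\pi'\circ\nu^*(\pi')} = \gamma (I-\gamma\mathcal{L}^{\pi'\circ\nu^*(\pi')})^{-1}(\mathcal{L}^{\pi^*\circ\nu^*} - \mathcal{L}^{\pi'\circ\nu^*(\pi')})Q^*$ and checking positivity of each factor; the $V$-inequality is then derived from the $Q$-inequality via $V^{\pi^*\circ\nu^*(\pi^*)}(s) = \max_a Q^{\pi^*\circ\nu^*(\pi^*)}(s,a) \ge \mathbb{E}_{a\sim\pi(\cdot|\nu^*(s;\pi))} Q^{\pi\circ\nu^*(\pi)}(s,a)$. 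Your approach instead observes that the identity map $\nu_{\mathrm{id}}$ is a feasible intrinsic adversary (since $s\in B^*(s)$ trivially), so $V^{\pi\circ\nu^*(\pi)} \le V^\pi \le V^*$, and then closes with the identity $V^{\pi^*\circ\nu^*(\pi^*)} = V^*$ already supplied by Theorem~\ref{thm: fixed point}. This is more elementary and sidesteps the operator algebra entirely, at the cost of invoking classical MDP optimality as a black box; the paper's argument is more self-contained and would survive in variants where $s\notin B(s)$. One small remark: your ``identical chain at the $Q$-level'' is not quite parallel---the step $Q^{\pi\circ\nu^*(\pi)}(s,a) \le Q^\pi(s,a)$ is not a direct feasibility comparison but follows from the already-established $V$-level bound via $Q^{\pi\circ\nu}(s,a) = r(s,a) + \gamma\,\mathbb{E}_{s'}V^{\pi\circ\nu}(s')$; this is straightforward but worth making explicit.
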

Theorem~\ref{thm: fixed point} and Corollary~\ref{cor: existence of ORP}, whose proofs are provided in Appendix~\ref{app: fixed point}, demonstrate that within the ISA-MDP, the ORP against the strongest adversary is equivalent to Bellman optimal policy derived from Bellman optimality equations. This finding indicates that the objectives in both natural and adversarial environments are aligned, as further supported by our experiment results in Figure~\ref{fig:consitency}, whose detailed illustrations are provided in Section~\ref{sec: Comparison}. 

\begin{figure}[t]
    \centering
    \begin{subfigure}
        \centering
\includegraphics[width=0.43\textwidth]{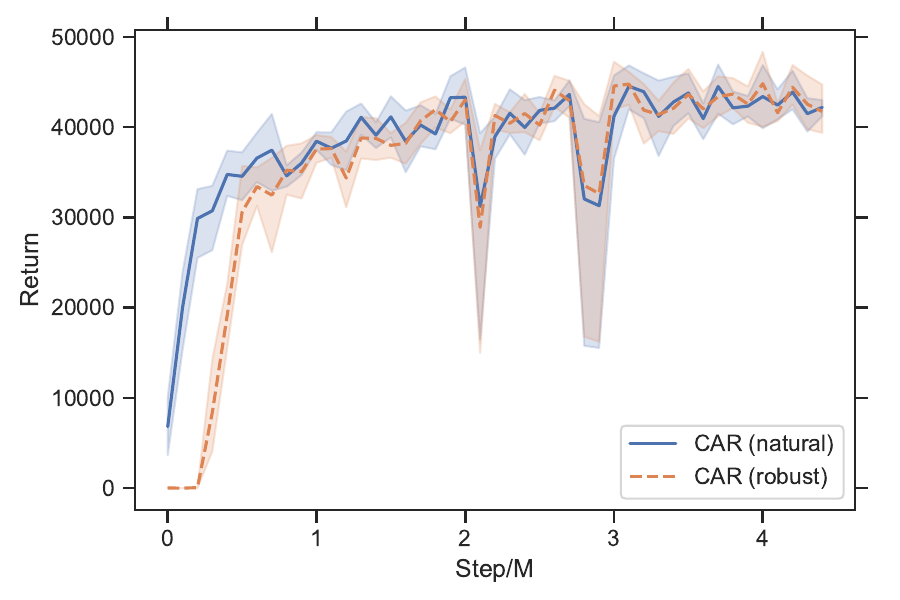} 
    \end{subfigure}
    \begin{subfigure}
        \centering
\includegraphics[width=0.43\textwidth]{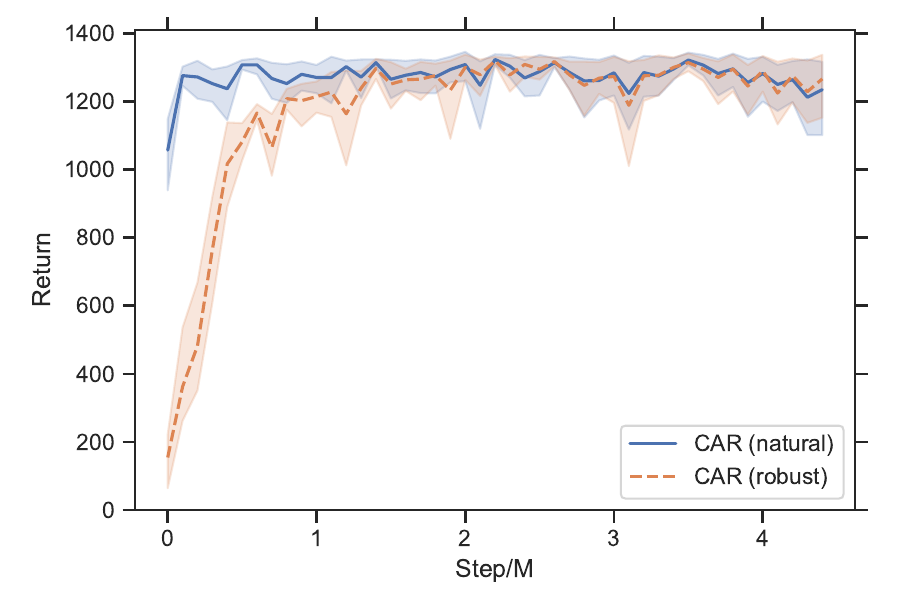}
    \end{subfigure}
    \\
    \vskip -0.1in
    \begin{minipage}{0.43\textwidth}
        \centering
        \quad \scriptsize{RoadRunner} 
    \end{minipage}
    \begin{minipage}{0.43\textwidth}
        \centering
        \quad\quad \scriptsize{BankHeist}
    \end{minipage}
    \\
    \vskip 0.05in
    \begin{subfigure}
        \centering
\includegraphics[width=0.43\textwidth]{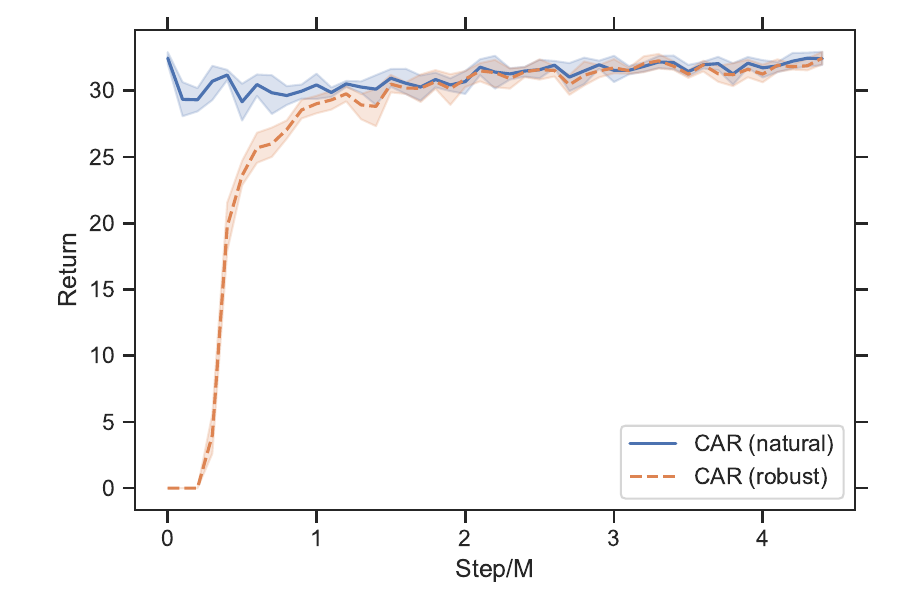}
    \end{subfigure}
    \begin{subfigure}
        \centering
\includegraphics[width=0.43\textwidth]{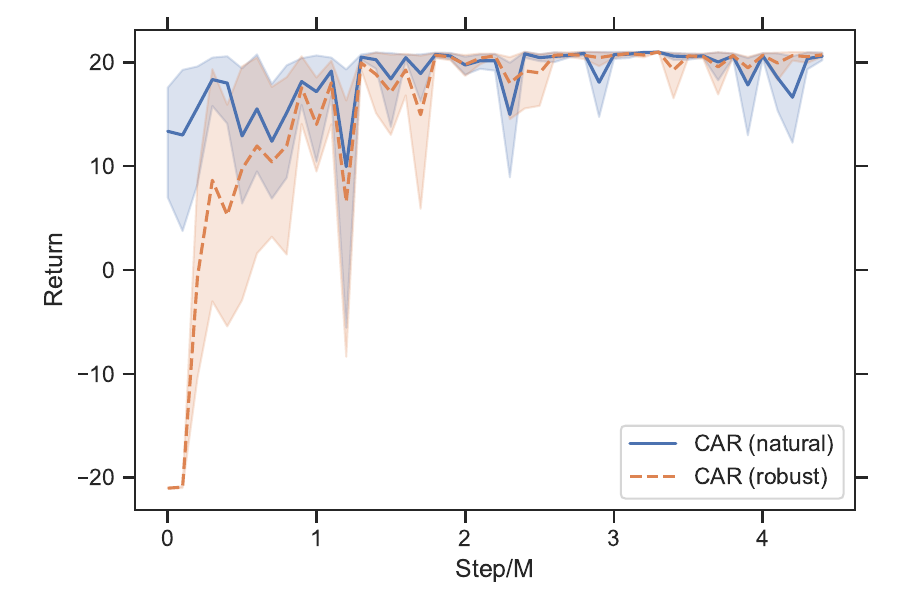}
    \end{subfigure}
    \\
    \vskip -0.1in
    \begin{minipage}{0.43\textwidth}
        \centering
        \quad \scriptsize{Freeway}
    \end{minipage}
    \begin{minipage}{0.43\textwidth}
        \centering
        \setlength{\parindent}{0.5em}
        \quad\quad \scriptsize{Pong}
    \end{minipage}
    \vskip -0.05in
    \caption{Episode rewards of CAR-DQN agents with and without 10-step PGD attacks on 4 Atari games and 5 random seeds. As evidenced by the overlap of the two curves,  CAR-DQN achieves the consistency between Bellman optimal policy and ORP. }
    \label{fig:consitency}
    \vskip -0.1in
\end{figure}

Furthermore, we investigate the convergence of $\mathcal{T}_{car}$ in a smooth environment. We find that within the ISA-MDP, the fixed point iterations of $\mathcal{T}_{car}$ at least converge to a sub-optimal solution that is close to $Q^*$. 
The detailed proof of Theorem~\ref{thm:convergence} and the formal definition of $\left(L_r, L_{\mathbb{P}}\right)$-smooth environments are presented in Appendix~\ref{app:convergence}. 
\begin{theorem}[Convergence of CAR Operator $\mathcal{T}_{car}$]\label{thm:convergence}
    Suppose the environment is $\left(L_r, L_{\mathbb{P}}\right)$-smooth and suppose $Q_0$ and $r$ are uniformly bounded, i.e. $\exists\ M_{Q_0},M_r >0$ such that $\left|Q_0(s,a)\right| \le M_{Q_0},\ \left|r(s,a)\right| \le M_r,\ \forall s\in\mathcal{S}, a\in\mathcal{A}$. Let $Q^*$ denote the Bellman optimal Q-function and $Q_{k+1} = \mathcal{T}_{car} Q_{k} = \mathcal{T}_{car}^{k+1} Q_0$ for all $k\in\mathbb{N}$. Within the ISA-MDP, we have
    \begin{equation} \notag
        \|Q_{k+1} - Q^*\|_\infty \le  \gamma^{k+1} \| Q_0 - Q^*\|_\infty + \gamma^{k+1} D_{Q_0} + \frac{2 \gamma \epsilon }{1-\gamma}L_{\mathcal{T}_{car}},
    \end{equation}
    where $D_{Q_0} = 2 \max_{s,a} \max_{s_\nu \in B_\epsilon(s)} \max_{a} \left| Q_0 \left(s,a\right) - Q_0 \left(s_\nu,a\right) \right|$ is a constant relating to the local continuity of initial $Q_0$, $L_{\mathcal{T}_{car}} =  L_r + \gamma C_{Q_0} L_{\mathbb{P}}$ and $C_{Q_0} = \max\left\{ M_{Q_0}, \frac{M_r}{1-\gamma} \right\}$. 
\end{theorem}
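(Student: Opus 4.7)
The plan is to compare the CAR iteration with the Bellman optimality operator $\mathcal{T}_B$, which is a $\gamma$-contraction in $\|\cdot\|_\infty$. Within the ISA-MDP, Theorem~\ref{thm: fixed point} gives $Q^* = \mathcal{T}_B Q^* = \mathcal{T}_{car} Q^*$, so I would start from the triangle inequality
\begin{equation*}
\|Q_{k+1} - Q^*\|_\infty \le \|\mathcal{T}_B Q_k - \mathcal{T}_B Q^*\|_\infty + \|\mathcal{T}_{car} Q_k - \mathcal{T}_B Q_k\|_\infty \le \gamma \|Q_k - Q^*\|_\infty + e_k,
\end{equation*}
where $e_k := \|\mathcal{T}_{car} Q_k - \mathcal{T}_B Q_k\|_\infty$. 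Unrolling this recursion yields
$\|Q_{k+1}-Q^*\|_\infty \le \gamma^{k+1}\|Q_0-Q^*\|_\infty + \sum_{i=0}^{k}\gamma^{k-i} e_i$,
so the theorem reduces to controlling $e_0$ by $\gamma D_{Q_0}$ and $e_i$ by $2\gamma L_{\mathcal{T}_{car}}\epsilon$ for $i\ge 1$.

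The first key step is the pointwise bound $e_k \le 2\gamma V(Q_k)$, where $V(Q):=\sup_{s,a,s_\nu\in B_\epsilon(s)}|Q(s,a)-Q(s_\nu,a)|$ measures local variation in the state argument. Writing $a^*_{s'} = \arg\max_a Q_k(s',a)$ and $a^*_{s'_\nu} = \arg\max_a Q_k(s'_\nu,a)$, I would telescope
\begin{equation*}
Q_k(s',a^*_{s'}) - Q_k(s',a^*_{s'_\nu}) = \bigl[Q_k(s',a^*_{s'}) - Q_k(s'_\nu,a^*_{s'})\bigr] + \bigl[Q_k(s'_\nu,a^*_{s'}) - Q_k(s'_\nu,a^*_{s'_\nu})\bigr] + \bigl[Q_k(s'_\nu,a^*_{s'_\nu}) - Q_k(s',a^*_{s'_\nu})\bigr],
\end{equation*}
observing that the middle bracket is non-positive by definition of $a^*_{s'_\nu}$, while the outer two are bounded in absolute value by $V(Q_k)$. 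Taking expectation over $s'\sim\mathbb{P}(\cdot|s,a)$ and the supremum over $(s,a)$ delivers the claim. Non-uniqueness of the argmax is handled by a standard measurable selection.

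The second key step is bounding $V(Q_k)$ via the smoothness of the environment. A simple induction on $k$ using $\|\mathcal{T}_{car} Q\|_\infty \le M_r + \gamma\|Q\|_\infty$ and the definition of $C_{Q_0}$ shows $\|Q_k\|_\infty \le C_{Q_0}$ for all $k$. Then for $k\ge 1$, writing $Q_k = \mathcal{T}_{car} Q_{k-1}$ and comparing $(\mathcal{T}_{car} Q_{k-1})(s,a)$ with $(\mathcal{T}_{car} Q_{k-1})(s_\nu,a)$, the reward Lipschitz constant contributes $L_r\epsilon$ and the transition Lipschitz constant applied to the integrand (bounded by $C_{Q_0}$) contributes $\gamma C_{Q_0} L_{\mathbb{P}}\epsilon$, giving $V(Q_k)\le L_{\mathcal{T}_{car}}\epsilon$. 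Crucially, the inner $\min_{s'_\nu}$ and $\arg\max$ operations need not be Lipschitz themselves because they act identically in both terms; only the outer $r$ and $\mathbb{P}$ vary with $s$ versus $s_\nu$. For $k=0$ we directly have $V(Q_0)\le D_{Q_0}/2$ from the definition of $D_{Q_0}$.

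Combining these pieces gives $e_0 \le \gamma D_{Q_0}$ and $e_i \le 2\gamma L_{\mathcal{T}_{car}}\epsilon$ for $i\ge 1$; summing the geometric tail $\sum_{i=1}^{k}\gamma^{k-i}\cdot 2\gamma L_{\mathcal{T}_{car}}\epsilon \le \tfrac{2\gamma\epsilon}{1-\gamma} L_{\mathcal{T}_{car}}$ and absorbing the $e_0$ contribution into the $\gamma^{k+1} D_{Q_0}$ term yields the stated bound. I expect the main obstacle to be the telescoping argument for $e_k\le 2\gamma V(Q_k)$, since the argmax operation is discontinuous in $Q$ and the bilevel structure of $\mathcal{T}_{car}$ (outer $\min_{s'_\nu}$, inner $\arg\max$) must be unwound carefully; once that is in hand, propagating smoothness through $\mathcal{T}_{car}$ and running the contraction telescope are essentially mechanical.
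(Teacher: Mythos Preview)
Your proposal is correct and yields exactly the stated bound, but it reaches the key recursion by a different decomposition than the paper. The paper does not introduce $\mathcal{T}_B$ as an intermediary; instead it proves a ``near-contraction'' lemma directly for $\mathcal{T}_{car}$ against its fixed point,
\[
\|\mathcal{T}_{car} Q - \mathcal{T}_{car} Q^*\|_\infty \le \gamma\bigl(\|Q-Q^*\|_\infty + 2V(Q)\bigr),
\]
via a case split on the sign of $\mathcal{T}_{car} Q - \mathcal{T}_{car} Q^*$, exploiting in the positive case that within $B^*(s')$ the argmax of $Q^*$ is constant (this is where ISA-MDP enters). Your route instead bounds $\|\mathcal{T}_{car} Q_k - \mathcal{T}_B Q_k\|_\infty\le 2\gamma V(Q_k)$ by the clean telescoping argument and then invokes the standard $\gamma$-contraction of $\mathcal{T}_B$. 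The two arrive at the identical one-step recursion and then share the same Lipschitz/uniform-bound lemmas to control $V(Q_k)$ for $k\ge 1$. What your approach buys is that the $e_k\le 2\gamma V(Q_k)$ step is sign-free and does not use the ISA-MDP structure of $B^*$ at all (it works for any perturbation set $B$), whereas the paper's near-contraction lemma genuinely relies on $B^*$ to collapse $\min_{s'_\nu} Q^*(s',a^*_{s'_\nu,Q^*})=\max_a Q^*(s',a)$. The paper's route, on the other hand, keeps everything phrased in terms of $\mathcal{T}_{car}$ alone and makes explicit that $\mathcal{T}_{car}$ is ``almost'' a contraction near $Q^*$.
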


Given that the Bellman optimal policy doubles as the ORP, we further investigate the reasons behind the vulnerability of DRL agents. 
Despite aiming for the Bellman optimal policy, why do traditional reinforcement learning methods exhibit poor robustness? 
In the following section, we address this issue by examining the natural performance and robustness of policies across various measurement errors in action-value function and probability spaces.

\section{Robustness of Policy with Tiny k-measurement Error}
In this section, we examine the adversarial robustness of value-based and policy-based reinforcement learning methods with different measurements, identifying the significance of minimizing infinity measurement error to achieve the optimal robust policy.

\subsection{Policy Robustness under Bellman p-error in Action-value Function Space}

We investigate the stability of policies across various Banach spaces for value-based reinforcement learning methods and highlight the necessity of optimizing Bellman infinity-error to achieve the optimal robust policy. Firstly, in Subsection~\ref{sec: necessity of infinity norm for value function space}, we demonstrate the necessity of the $L^\infty$-norm for measuring $\| Q_\theta - Q^*\|$. Next, in Subsection~\ref{sec: stability of Bellman optimality equations}, we identify the critical role of Bellman infinity-error in ensuring the stability of Bellman optimality equations. Finally, in Subsection~\ref{sec: stability of deep q learning}, we extend these theoretical results to deep Q-learning, accounting for the practical sampling process.

\subsubsection{Necessity of Infinity Norm in Action-value Function Space for Adversarial Robustness}
\label{sec: necessity of infinity norm for value function space}

\begin{figure}[t]
\centering
\includegraphics[width=0.6\columnwidth]{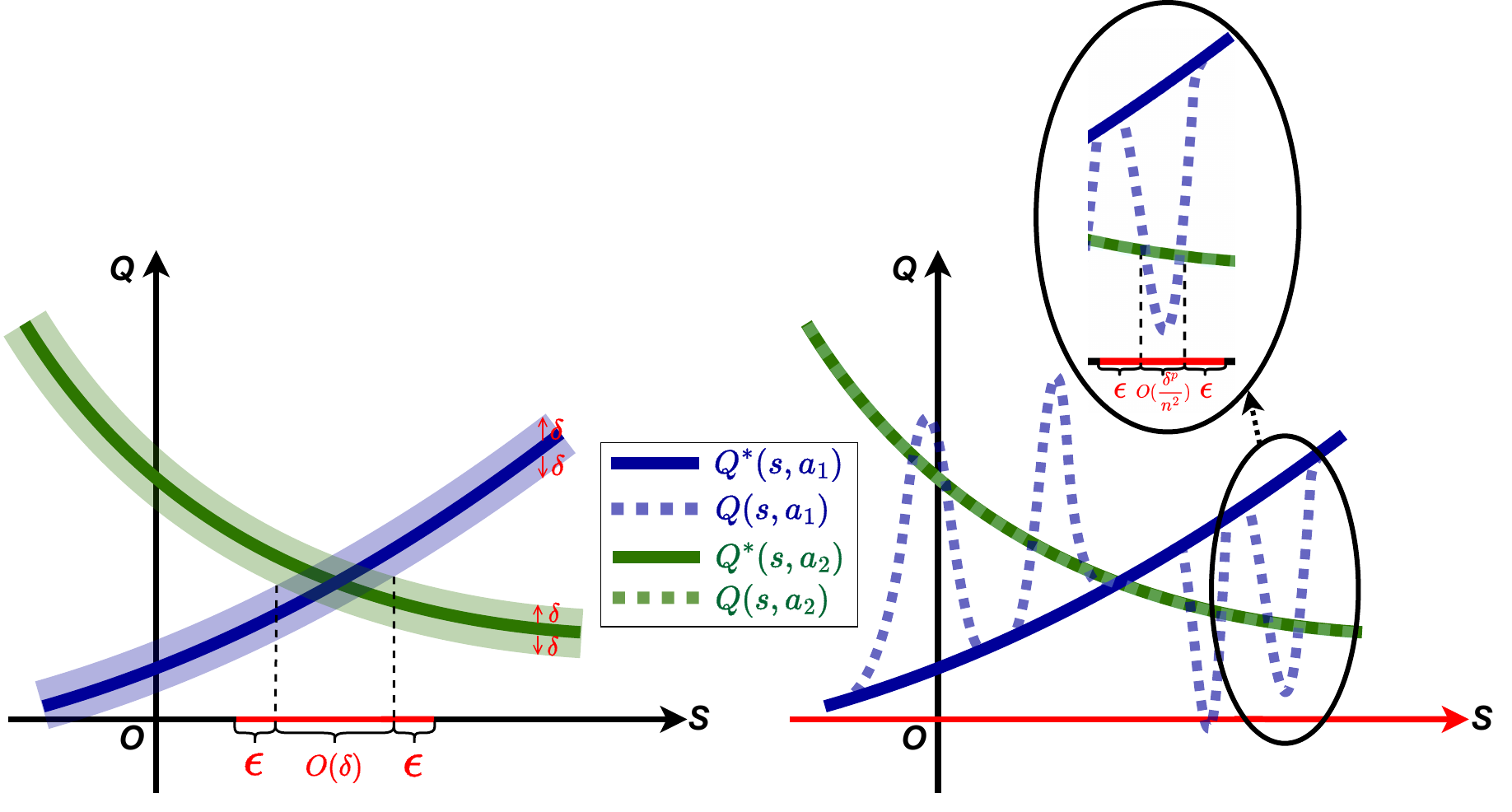}
% \vspace{-1em}
\caption{Examples of adversarial robustness for $Q$ satisfying $\|Q-Q^*\|_p\le\delta$.
Given a perturbation radius $\epsilon$, the red line represents the set $\mathcal{S}_{adv}^Q$, in which states have adversarial states. The left panel depicts the case of $p=\infty$. In this scenario, all such $Q$ functions are distributed within the shadow area, with the measure of $\mathcal{S}_{adv}^Q$ being a small value, approximately $2 \epsilon + O\left( \delta \right)$, indicating good robustness. 
In contrast, the right panel shows that for $1\le p<\infty$, there always exists a $Q$ function such that $\mathcal{S}_{adv}^Q = \mathcal{S}$, indicating poor robustness.}
\label{fig:example}
\end{figure}

Let $Q_\theta$ denote a parameterized $Q$-function. Theoretically, value-based RL training requires minimizing $\left\|Q_\theta - Q^*\right\|_{\mathcal{B}}$, where $\mathcal{B}$ is a Banach space. 
When $\| Q_\theta - Q^*\|_{\mathcal{B}}$ equals zero, $Q_\theta$ perfectly matches $Q^*$. 
However, in practice, achieving this perfect match is challenging due to limitations such as the representation capacity of neural networks and the convergence issues in optimization algorithms. 
Given these practical constraints, we examine the properties of $Q_\theta$ when the $\left\|Q_\theta - Q^*\right\|_{\mathcal{B}}$ is a small positive value across different Banach spaces $\mathcal{B}$. 
This analysis underscores the importance of selecting appropriate norm spaces in ensuring adversarial robustness for value-based reinforcement learning methods.

We study the adversarial robustness of the greedy policy derived from $Q$ when $0 < \left\|Q - Q^*\right\|_{p}=\delta \ll 1$ for different $L^p$ spaces.
Given a function $Q$, let $\mathcal{S}^Q_{sub}$ denote the set of states where the greedy policy according to $Q$ is suboptimal, \textit{i.e.},
\begin{equation}\notag
    \mathcal{S}^Q_{sub} = \{ s | Q^*(s,\mathop{\arg\max}_a Q(s, a)) < \max_a Q^*(s, a) \}.
\end{equation}
Note that the smaller set $\mathcal{S}^Q_{sub}$ indicates better performance in natural environments without adversarial attacks.
For a given perturbation budget $\epsilon$, let $\mathcal{S}^{Q,\epsilon}_{adv}$ denote the set of states within whose $\epsilon$-neighborhood there exists the adversarial state, \textit{i.e.},
\begin{equation}\notag
    \begin{aligned}
        \mathcal{S}^{Q,\epsilon}_{adv} = \{ s | \exists s_\nu \in B_\epsilon(s),\text{s.t. } Q^*(s,\mathop{\arg\max}_a Q(s_\nu,a)) < \max_a Q^*(s,a) \}.        
    \end{aligned}
\end{equation}
The smaller the set $\mathcal{S}^{Q,\epsilon}_{adv}$, the stronger the adversarial robustness in the worst case.
In addition, note that $\mathcal{S}_{sub}^Q = \mathcal{S}_{adv}^{Q,0}$. Furthermore, our investigation can be understood as a mathematical study of when it holds that $\lim_{\epsilon \rightarrow 0} \mathcal{S}_{adv}^{Q,\epsilon} = \mathcal{S}_{adv}^{Q,0}$, whose establishment indicates that the natural performance is aligned with the adversarial robustness in the worst case.

\begin{theorem}[Necessity of the $L^\infty$ Space for Adversarial Robustness]\label{thm:necessity of infty norm}
    There exists an MDP instance such that the following statements hold. 
        \begin{itemize}
            \item[\textbf{(1).}] For any $1\le p<\infty$ and $\delta>0$, there exists a function $Q\in L^p\left( \mathcal{S}\times\mathcal{A} \right)$ satisfying $\|Q-Q^*\|_{p} \leq \delta$ such that $m\left(\mathcal{S}^Q_{sub}\right) = O(\delta)$ yet $m\left( \mathcal{S}^{Q,\epsilon}_{adv} \right) =m \left(\mathcal{S}\right)$.
            \item[\textbf{(2).}] There exists a $\bar{\delta}>0$ such that for any $0< \delta \le \bar{\delta}$, and any function $Q\in L^\infty\left( \mathcal{S}\times\mathcal{A} \right)$ satisfying $\|Q-Q^*\|_{\infty} \leq  \delta$, we have that $m\left(\mathcal{S}^Q_{sub}\right) = O(\delta)$ and $m\left( \mathcal{S}^{Q,\epsilon}_{adv} \right) = 2\epsilon + O\left( \delta \right)$.
        \end{itemize}
\end{theorem}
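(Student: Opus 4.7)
The plan is to produce a single MDP instance for which both parts (1) and (2) hold. I would take $\mathcal{S} = [0,1]$, $\mathcal{A} = \{a_1, a_2\}$, and choose transition dynamics and rewards so that the Bellman-optimal $Q^*$ is piecewise smooth with a single arg-max crossing at $s_0 = 1/2$, and with $|Q^*(s, a_1) - Q^*(s, a_2)|$ Lipschitz in $s$ and growing at least linearly away from $s_0$ up to a positive asymptotic gap. A concrete realization is a two-action setup with deterministic/absorbing transitions whose rewards $r(s, a_1) = f(s)$, $r(s, a_2) = g(s)$ satisfy $f - g$ crossing zero transversally at $s_0$, so that $Q^*$ inherits this structure directly from the Bellman optimality equation.

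For part (1), I would construct $Q = Q^* + \Delta$, where $\Delta$ is a sum of narrow spikes placed on a $2\epsilon$-dense grid $\{s_i\}_{i=1}^N \subset [0,1]$ with $N = \lceil 1/(2\epsilon)\rceil$, chosen to avoid a small neighborhood of $s_0$. On each spike support $[s_i - w/2, s_i + w/2]$ of width $w$, I set $\Delta(\cdot, a^*(s_i)) = -M$ and $\Delta(\cdot, a^{\text{alt}}(s_i)) = +M$ with $M$ larger than the maximum $Q^*$-gap, so the greedy action of $Q$ is flipped on each spike. Choosing $w$ so that $2 N w M^p \le \delta^p$, i.e., $w = \Theta(\epsilon\,\delta^p)$, gives $\|\Delta\|_p \le \delta$ and $m(\mathcal{S}^Q_{sub}) = N w = O(\delta^p) = O(\delta)$. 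By construction every state lies within $\epsilon$ of some $s_i$, hence within $\epsilon$ of a flipped region, so $\mathcal{S}^{Q,\epsilon}_{adv} = \mathcal{S}$.

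For part (2), I would apply the standard half-gap argument: whenever $\|Q - Q^*\|_\infty \le \delta$ and the action-value gap at $s$ exceeds $2\delta$, the greedy actions of $Q$ and $Q^*$ at $s$ coincide. Under the Lipschitz gap structure of $Q^*$, this can fail only on a neighborhood $\{|s - s_0| \le C\delta\}$ for some constant $C$ determined by the local slope of $f - g$, giving $m(\mathcal{S}^Q_{sub}) = O(\delta)$. For the adversarial set, a state $s$ belongs to $\mathcal{S}^{Q,\epsilon}_{adv}$ essentially when there is an $s_\nu \in B_\epsilon(s)$ on the opposite side of $s_0$ with $|s_\nu - s_0| > C\delta$, so that the greedy action of $Q$ at $s_\nu$ — which equals the greedy action of $Q^*$ at $s_\nu$ — is wrong at $s$. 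This yields matching inner and outer bounds by intervals around $s_0$ of half-width $\epsilon \pm C\delta$, hence $m(\mathcal{S}^{Q,\epsilon}_{adv}) = 2\epsilon + O(\delta)$. The threshold $\bar\delta$ is any positive value below half the asymptotic $Q^*$-gap.

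The main obstacle I expect is twofold. First, pinning down a concrete, Bellman-consistent MDP whose $Q^*$ has exactly the prescribed transversal crossing and Lipschitz-gap behavior needs some care, though any simple two-action toy model suffices. Second, and more delicately, the spike construction in part (1) must simultaneously control $\|\Delta\|_p$, the natural-suboptimality measure $m(\mathcal{S}^Q_{sub})$, and the adversarial reach — all with constants that blow up precisely as $p \to \infty$. This "width times height to the $p$" trade-off is the crux of the statement: the counterexample degenerates exactly at $p = \infty$, which is why only the $L^\infty$ norm rescues worst-case robustness.
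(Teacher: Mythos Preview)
Your proposal is correct and follows essentially the same route as the paper: a one-dimensional two-action MDP with a single transversal crossing of $Q^*(\cdot,a_1)$ and $Q^*(\cdot,a_2)$, a spike construction on an $\epsilon$-dense grid for part~(1), and the standard half-gap argument for part~(2). Your choice of absorbing transitions is in fact cleaner than the paper's walking dynamic (which requires explicitly verifying the optimal policy and computing $Q^*$), and your fixed-height spikes are marginally simpler than the paper's height-$n^{1/p}$ spikes, but the mechanism is identical in both cases.
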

The proof of Theorem~\ref{thm:necessity of infty norm} is provided in Appendix~\ref{app: infinity is necessary in value function space}. 
The first statement indicates that when $\|Q-Q^*\|_p$ is small for $1\le p<\infty$, there always exist adversarial examples near almost all states, resulting in quite poor robustness, even though the policy might exhibit excellent performance in a natural environment without adversarial attacks.
This observation sheds light on the vulnerability of DRL agents, consistent with findings from various studies~\citep{huang2017adversarial, ilahi2021challenges}.
Importantly, the second statement points out that minimizing $\|Q-Q^*\|$ in the $L^\infty$-norm space can mitigate this vulnerability, enabling the policy to achieve both natural performance and robustness. This insight motivates the optimization of $\|Q_{\theta}-Q^*\|_{\infty}$ in DRL algorithms. Intuitive examples of Theorem~\ref{thm:necessity of infty norm} are illustrated in Figure~\ref{fig:example}.

\subsubsection{Stability of Bellman Optimality Equations}
\label{sec: stability of Bellman optimality equations}

Unfortunately, it is infeasible to directly measure $\|Q_{\theta}-Q^*\|$ in practical DRL procedures due to the unknown nature of $Q^*$. Most value-based methods train $Q_\theta$ by optimizing the Bellman error $\| \mathcal{T}_B Q_\theta - Q_\theta\|_{\mathcal{B}'}$, where $\mathcal{T}_B$ is the Bellman optimal operator:
$$\left(\mathcal{T}_B Q\right) (s, a)=r(s, a)+\gamma \mathbb{E}_{s^{\prime} \sim \mathbb{P}(\cdot \mid s, a)}\left[\max _{a^{\prime} \in \mathcal{A}} Q\left(s^{\prime}, a^{\prime}\right)\right] .$$

Similar to Theorem~\ref{thm:necessity of infty norm}, we need to determine which Banach space $\mathcal{B}^\prime$ is the best for training DRL agents to minimize adversarial states. 
When $\| \mathcal{T}_B Q_\theta - Q_\theta\|_{\mathcal{B}^\prime}$ is zero, $Q_\theta$ is precisely $Q^*$. 
However, in practice, we can only optimize $\| \mathcal{T}_B Q_\theta - Q_\theta\|_{\mathcal{B}^\prime}$ to a small nonzero quantity. In this scenario, we discuss the conditions under which $\left\|Q_\theta - Q^*\right\|_{\mathcal{B}}$ can be controlled. 
To analyze this issue, we introduce the concept of functional equations stability, drawing on relevant research about physics-informed neural networks for partial differential equations~\citep{wang20222}.
\begin{definition}[Stability of Functional Equations]\label{def: stability}
    Given two Banach spaces $\mathcal{B}_1$ and $\mathcal{B}_2$, if there exist $\delta>0$ and $C>0$ such that for all $Q\in \mathcal{B}_1 \cap \mathcal{B}_2$ satisfying $\|\mathcal{T}Q - Q\|_{\mathcal{B}_1} < \delta$, we have that $\|Q - Q^*\|_{\mathcal{B}_2} < C \|\mathcal{T}Q - Q\|_{\mathcal{B}_1}$, where $Q^*$ is the exact solution of this functional equation, then we say a nonlinear functional equation $\mathcal{T}Q = Q$ is $\left( \mathcal{B}_1, \mathcal{B}_2 \right)$-stable. For simplicity, we call that functional $\mathcal{T}$ is $\left( \mathcal{B}_1, \mathcal{B}_2 \right)$-stable.
\end{definition}
\begin{remark}
    This definition indicates that if $\mathcal{T}Q = Q$ is $\left( \mathcal{B}_1, \mathcal{B}_2 \right)$-stable, then $\|Q - Q^*\|_{\mathcal{B}_2} = O\left(  \|\mathcal{T}Q - Q\|_{\mathcal{B}_1} \right)$, as $ \|\mathcal{T}Q - Q\|_{\mathcal{B}_1} \longrightarrow 0$, $\forall\ Q\in \mathcal{B}_1 \cap \mathcal{B}_2$.    
\end{remark}

The above Definition implies that if there exists a space $\mathcal{B}^\prime$ such that  $\mathcal{T}_{B}$ is $\left(  \mathcal{B}^\prime,  L^\infty\left( \mathcal{S}\times\mathcal{A} \right) \right)$-stable,
then $\left\|Q_\theta - Q^*\right\|_{\infty}$ will be controlled when minimizing the Bellman error in $\mathcal{B}^\prime$ space. This will make DRL agents robust according to Theorem~\ref{thm:necessity of infty norm}.(2).  
The following theorems illustrate the conditions that affect the stability and instability of $\mathcal{T}_{B}$ and provide guidance for selecting a suitable $\mathcal{B}^\prime$.
% Thus, our problem converts into for what $\mathcal{B}^\prime$, Bellman Optimality Equations (BOE) $\mathcal{T}_B Q = Q$ is $\left(  \mathcal{B}^\prime,  L^\infty\left( \mathcal{S}\times\mathcal{A} \right) \right)$-stable.
\begin{theorem}[Stable and Unstable Properties of $\mathcal{T}_{B}$ in $L^p$ Spaces]\label{thm: stability} \
\begin{itemize}
    \item[\textbf{(1).}] For any MDP $\mathcal{M}$, let $C_{\mathbb{P},p}:= \sup_{(s,a)\in\mathcal{S}\times \mathcal{A}} \left\| \mathbb{P}(\cdot \mid s, a) \right\|_{L^{\frac{p}{p-1}}\left( \mathcal{S} \right)}$. Assume $p$ and $q$ satisfy the following conditions:
    \begin{equation}\label{eq: stability condition} \notag
        C_{\mathbb{P},p}< \frac{1}{\gamma};\quad
        p \ge \max\left\{1, \frac{\log \left( \left| \mathcal{A}\right|\right) + \log \left( \mu\left( \mathcal{S} \right) \right)}{\log \frac{1}{\gamma C_{\mathbb{P},p}} } \right\}; \quad p \le q \le \infty.
    \end{equation}
    Then, Bellman optimality equation $\mathcal{T}_B Q = Q$ is $\left(  L^q\left( \mathcal{S}\times\mathcal{A} \right),  L^p\left( \mathcal{S}\times\mathcal{A} \right) \right)$-stable.
    \item[\textbf{(2).}] There exists an MDP such that for all $1 \le q < p\le \infty$, the Bellman optimality equation $\mathcal{T}_B Q = Q$ is not $\left(  L^q\left( \mathcal{S}\times\mathcal{A} \right),  L^p \left( \mathcal{S}\times\mathcal{A} \right) \right)$-stable.
\end{itemize}
\end{theorem}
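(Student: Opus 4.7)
My strategy is to first establish that $\mathcal{T}_B$ is a contraction on $L^p(\mathcal{S}\times\mathcal{A})$ under the stated conditions, then derive stability by the triangle inequality, and finally lift the residual side to $L^q$ via a finite-measure embedding. Starting from the pointwise estimate
\[
|\mathcal{T}_B Q(s,a) - \mathcal{T}_B Q'(s,a)| \le \gamma \int \mathbb{P}(s'\mid s,a)\,\max_{a'}|Q(s',a') - Q'(s',a')|\,ds',
\]
I would apply H\"older's inequality with conjugate exponents $p/(p-1)$ and $p$ to pull out the kernel norm $C_{\mathbb{P},p}$, and bound the action-max by the $\ell^p$-sum $\sum_{a'}|\cdot|^p$ to arrive at the uniform estimate $\sup_{s,a}|\mathcal{T}_B Q(s,a) - \mathcal{T}_B Q'(s,a)| \le \gamma C_{\mathbb{P},p} \|Q - Q'\|_p$. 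Raising to the $p$-th power and integrating over $\mathcal{S}\times\mathcal{A}$ then yields $\|\mathcal{T}_B Q - \mathcal{T}_B Q'\|_p \le \alpha \|Q - Q'\|_p$ with $\alpha := (|\mathcal{A}|\mu(\mathcal{S}))^{1/p}\gamma C_{\mathbb{P},p}$. Taking logarithms shows that the two hypotheses on $p$ in the theorem are exactly those that force $\alpha < 1$. Combining $\mathcal{T}_B Q^* = Q^*$ with the triangle inequality gives $\|Q - Q^*\|_p \le (1-\alpha)^{-1}\|\mathcal{T}_B Q - Q\|_p$, and the finite-measure embedding $\|\cdot\|_p \le \mu(\mathcal{S}\times\mathcal{A})^{1/p - 1/q}\|\cdot\|_q$, valid since $p \le q$, converts the right-hand side to the $L^q$-norm of the Bellman residual, establishing $(L^q, L^p)$-stability.

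\paragraph{Proof Plan for Part (2).} I would exhibit one explicit MDP in which the Bellman residual in $L^q$ decays strictly faster than the $L^p$-distance to $Q^*$ along a spike sequence, whenever $q<p$. Take $\mathcal{S}=[0,1]$, $|\mathcal{A}|=1$, the uniform transition $\mathbb{P}(\cdot\mid s)\equiv 1$, and any bounded reward $r$, so that $\mathcal{T}_B V(s)=r(s)+\gamma\int_0^1 V$ admits the fixed point $V^*(s)=r(s)+\gamma\bar r/(1-\gamma)$. Define the spike $V_n := V^* + \mathbf{1}_{[0,1/n]}$. A direct computation yields $\mathcal{T}_B V_n - V_n = \gamma/n - \mathbf{1}_{[0,1/n]}$, whose $L^q$-norm is of order $n^{-1/q}$, while $\|V_n - V^*\|_p = n^{-1/p}$ (interpreted as $1$ for $p=\infty$). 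For every $1\le q<p\le\infty$ the ratio $\|V_n - V^*\|_p / \|\mathcal{T}_B V_n - V_n\|_q \sim n^{1/q - 1/p}$ diverges, so no constant $C$ in Definition~\ref{def: stability} can exist.

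\paragraph{Anticipated Main Obstacle.} The most delicate point in Part (1) is threading the numerical constants so that the contraction threshold matches the exact logarithmic lower bound on $p$ in the statement: the crude inequality $\max_{a'}|\cdot|^p \le \sum_{a'}|\cdot|^p$ is essential and accounts for the $|\mathcal{A}|^{1/p}$ factor that ends up inside the logarithm, and the endpoint $p=\infty$ must be handled by continuity (where the argument reduces to the textbook $\gamma$-contraction in $L^\infty$). In Part (2), the subtle check is that one unit-amplitude spike construction handles both the $p<\infty$ regime, where the $L^p$-distance itself vanishes, and the boundary $p=\infty$, where it does not; both are unified by the single divergent factor $n^{1/q-1/p}$, with $1/\infty$ read as $0$, which is strictly positive exactly in the claimed range $q<p$.
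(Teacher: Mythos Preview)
Your Part (1) argument is correct and coincides with the paper's: both pass through the pointwise H\"older bound $|\mathcal{T}_B Q(s,a)-\mathcal{T}_B Q'(s,a)| \le \gamma C_{\mathbb{P},p}\|Q-Q'\|_p$ (using $\max_{a'}|\cdot|^p\le\sum_{a'}|\cdot|^p$ over the finite action set), integrate over $\mathcal{S}\times\mathcal{A}$ to acquire the $(|\mathcal{A}|\mu(\mathcal{S}))^{1/p}$ factor, and finish via the triangle inequality and the finite-measure embedding $\|\cdot\|_p\le\mu(\mathcal{S}\times\mathcal{A})^{1/p-1/q}\|\cdot\|_q$. The paper phrases this through $w=Q-Q^*$ and $f=\mathcal{T}_BQ-Q$ rather than as an explicit $L^p$-contraction of $\mathcal{T}_B$, but the algebra is identical.

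For Part (2) your construction is correct and genuinely simpler than the paper's. The paper uses a two-action MDP on $[-1,1]$ with deterministic $\pm 0.1$ shift dynamics and perturbs $Q^*$ by a continuous tent-shaped bump of height $h$ and width $\epsilon$; this forces a multi-case analysis of $\mathcal{T}_BQ-Q$ across adjacent subintervals before the scaling $\|\mathcal{T}_BQ-Q\|_q\lesssim h\epsilon^{1/q}$ versus $\|Q-Q^*\|_p\gtrsim h\epsilon^{1/p}$ emerges. Your one-action MDP with the uniform transition kernel reduces $\mathcal{T}_B$ to an averaging operator, so the Bellman residual of an indicator spike is a one-line computation and the divergent ratio $n^{1/q-1/p}$ (with $1/\infty$ read as $0$) drops out immediately for every $1\le q<p\le\infty$. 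The paper's example has the incidental advantage of a nontrivial action space and deterministic dynamics, aligning with its other counterexamples, but for the bare statement of Part (2) your construction is cleaner and fully adequate.
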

\begin{remark}
    Note that we have proved a stronger conclusion than stability because the stable property holds for all $Q$ rather than for $Q$ satisfying $ \|\mathcal{T}Q - Q\|_{\mathcal{B}_1} \longrightarrow 0$.
\end{remark}
The proofs of Theorem~\ref{thm: stability} are shown in Appendices~\ref{app: stability of BOE} and~\ref{app: instability of BOE}.
We note that $\lim_{p\rightarrow\infty} C_{\mathbb{P},p} =1 < \frac{1}{\gamma}$, so the first condition holds when $p$ is sufficiently large. 
The second condition indicates that $p$ is relevant to the size of the state and action spaces, and the third condition reveals that stability is achievable when $q$ is larger than $p$.  
Consequently, we have that $\mathcal{T}_{B}$ is $\left(L^\infty\left( \mathcal{S}\times\mathcal{A} \right),  L^\infty\left( \mathcal{S}\times\mathcal{A} \right) \right)$-stable. Therefore, to achieve adversarial robustness, we can optimize DRL agents in space $\mathcal{B}^\prime = L^\infty\left( \mathcal{S}\times\mathcal{A} \right)$. Moreover, $\mathcal{B}^\prime$ cannot be $L^q\left( \mathcal{S}\times\mathcal{A} \right)$ for any $1\le q < \infty$, as stated in Theorem~\ref{thm: stability}.(2).

\subsubsection{Stability of Deep Q-learning}
\label{sec: stability of deep q learning}

Our theoretical analysis has shown that training a deep Q-network (DQN) by minimizing the Bellman error in $L^{\infty}$ space is feasible for achieving the ORP. In this subsection, we further examine the stability of DQN by considering the practical sampling process.

Given an MDP $\mathcal{M}$, define the state-action visitation distribution $d_{\mu_0}^\pi$ as the following:
$$d_{\mu_0}^\pi(s,a)=\mathbb{E}_{s_0 \sim \mu_0(\cdot), s_{t} \sim \mathbb{P}(\cdot|s_{t-1}, a_{t-1}), a_{t} \sim \pi(\cdot|s_t)} \left[ (1-\gamma)\sum_{t=0}^\infty \gamma^t \mathbb{I}(s_t = s, a_t = a) \right].$$
Deep Q-learning algorithms, such as DQN, utilize the following objective $\mathcal{L}(Q_{\theta};\pi_{\theta})$ based on interactions with the environment:
\begin{equation}\notag
    \begin{aligned}
    \mathcal{L}(Q_{\theta};\pi_{\theta})=\mathbb{E}_{(s,a)\sim d_{\mu_0}^{\pi_{\theta}}} \left| \mathcal{T}_B Q_{\theta}(s,a) - Q_{\theta}(s,a)  \right|. 
\end{aligned}
\end{equation}
The theoretical analysis of functional equations stability can be extended to $\mathcal{L}(Q_{\theta};\pi_{\theta})$ by incorporating the sampling probability into a seminorm. 
% \begin{equation}\notag
%     \mathcal{L}(\theta)=\frac{1}{\left| \mathcal{B} \right|} \sum_{(s,a,r,s^\prime)\in \mathcal{B}} \left|r + \gamma \max_{a^\prime} Q(s^\prime,a^\prime; \bar{\theta}) - Q(s,a; \theta)  \right|,
% \end{equation}
% where $\mathcal{B}$ represents a batch of transition pairs sampled from the replay buffer and $\bar{\theta}$ is the parameter of the target network.

\begin{definition}[$(p,d_{\mu_0}^\pi)$-seminorm]
    Given a policy $\pi$, a function $f:\mathcal{S}\times\mathcal{A}\rightarrow \mathbb{R}$ and $1\le p\le\infty$, if $d_{\mu_0}^\pi$ is a probability density function, we define the $(p,d_{\mu_0}^\pi)$-seminorm as the following, which satisfies the absolute homogeneity and triangle inequality:
    \begin{equation}\notag
        \begin{aligned}
            \left\| f \right\|_{p,d_{\mu_0}^\pi} := \left\| d_{\mu_0}^\pi f \right\|_{p} = \left(\int_{(s,a)\in\mathcal{S}\times\mathcal{A}} \left| d_{\mu_0}^\pi(s,a)  f(s,a) \right|^p d \mu(s,a) \right)^{\frac{1}{p}}.
        \end{aligned}
    \end{equation}
\end{definition}
We further analysis the properties of this seminorm in Appendix~\ref{app:seminorm}. 
Note that the $(p,d_{\mu_0}^\pi)$-seminorm becomes a norm if $d_{\mu_0}^\pi(s,a)>0$ almost everywhere for $(s,a)\in\mathcal{S}\times\mathcal{A}$. Based on this definition, the deep Q-learning objective $\mathcal{L}(Q_{\theta};\pi_{\theta})$ can be expressed as 
$$\mathcal{L}(Q_{\theta};\pi_{\theta}) = \left\|\mathcal{T}_B Q_{\theta} - Q_{\theta} \right\|_{1,d_{\mu_0}^{\pi_{\theta}}}.$$ 
Similar to Theorem~\ref{thm: stability}, we prove that the objective $\mathcal{L}(Q_{\theta};\pi_{\theta})$ cannot ensure robustness, while $(\infty,d_{\mu_0}^\pi)$-seminorm is necessary for adversarial robustness. 
\begin{theorem}[Stable and Unstable Properties of $\mathcal{T}_{B}$ in $(p,d_{\mu_0}^\pi)$-seminorm Spaces] \label{thm: stable of seminorm} \
\begin{itemize}
    \item For any MDP $\mathcal{M}$ and fixed policy $\pi$, assume $C_{d_{\mu_0}^\pi}:=\inf_{(s,a)\in\mathcal{S}\times\mathcal{A}} d_{\mu_0}^\pi(s,a)  >0$ and assume $p$ and $q$ satisfy the following conditions:
    \begin{equation}\notag
        C_{\mathbb{P},p}< \frac{1}{\gamma};\quad
        p \ge \max\left\{1, \frac{\log \left( \left| \mathcal{A}\right|\right) + \log \left( \mu\left( \mathcal{S} \right) \right)}{\log \frac{1}{\gamma C_{\mathbb{P},p}} } \right\}; \quad p \le q \le \infty,
    \end{equation}
    where $C_{\mathbb{P},p}:= \sup_{(s,a)\in\mathcal{S}\times \mathcal{A}} \left\| \mathbb{P}(\cdot \mid s, a) \right\|_{L^{\frac{p}{p-1}}\left( \mathcal{S} \right)}$.
    Then, Bellman optimality equation $\mathcal{T}_B Q = Q$ is $\left(  L^{q,d_{\mu_0}^\pi}\left( \mathcal{S}\times\mathcal{A} \right),  L^p\left( \mathcal{S}\times\mathcal{A} \right) \right)$-stable.
    \item There exists an MDP $\mathcal{M}$ such that for all $\pi$ satisfying $M_{d_{\mu_0}^\pi}:=\sup_{(s,a)\in\mathcal{S}\times\mathcal{A}} d_{\mu_0}^\pi(s,a) <\infty$, Bellman optimality equation $\mathcal{T}_B Q = Q$ is not $\left(  L^{q,d_{\mu_0}^\pi}\left( \mathcal{S}\times\mathcal{A} \right),  L^p \left( \mathcal{S}\times\mathcal{A} \right) \right)$-stable, for all $1 \le q < p\le \infty$.
\end{itemize}
\end{theorem}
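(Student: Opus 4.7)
The plan is to derive Theorem~\ref{thm: stable of seminorm} as a corollary of Theorem~\ref{thm: stability} by sandwiching the $(p,d_{\mu_0}^\pi)$-seminorm between constant multiples of the standard $L^p$ norm, using the assumed pointwise bounds on the visitation density. The key observation is that for any measurable $f$ on $\mathcal{S}\times\mathcal{A}$ and any $1\le p\le\infty$,
\begin{equation} \notag
C_{d_{\mu_0}^\pi}\,\|f\|_p \;\le\; \|f\|_{p,d_{\mu_0}^\pi} \;=\; \|d_{\mu_0}^\pi f\|_p \;\le\; M_{d_{\mu_0}^\pi}\,\|f\|_p,
\end{equation}
whenever the corresponding constants are finite/positive. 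This reduces everything to the pure $L^p$ stability theory already established.

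For the first (stability) bullet, I would start from the lower bound above applied with exponent $q$, yielding
\begin{equation} \notag
\|\mathcal{T}_B Q - Q\|_q \;\le\; \tfrac{1}{C_{d_{\mu_0}^\pi}}\,\|\mathcal{T}_B Q - Q\|_{q,d_{\mu_0}^\pi}.
\end{equation}
Since the hypothesized conditions on $p$ and $q$ (namely $C_{\mathbb{P},p}<1/\gamma$, the size condition on $p$, and $p\le q\le\infty$) are exactly those of Theorem~\ref{thm: stability}.(1), I can invoke that theorem to produce a constant $C$ with $\|Q-Q^*\|_p \le C\,\|\mathcal{T}_B Q - Q\|_q$. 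Chaining these two inequalities gives $\|Q-Q^*\|_p \le (C/C_{d_{\mu_0}^\pi})\,\|\mathcal{T}_B Q - Q\|_{q,d_{\mu_0}^\pi}$, which is precisely the $\bigl(L^{q,d_{\mu_0}^\pi}(\mathcal{S}\times\mathcal{A}),\,L^p(\mathcal{S}\times\mathcal{A})\bigr)$-stability required.

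For the second (instability) bullet, I would argue by contradiction and use the upper bound $\|\cdot\|_{q,d_{\mu_0}^\pi}\le M_{d_{\mu_0}^\pi}\|\cdot\|_q$ to transport instability back from Theorem~\ref{thm: stability}.(2). Concretely, pick the same witnessing MDP $\mathcal{M}$ from Theorem~\ref{thm: stability}.(2); if for some $\pi$ with $M_{d_{\mu_0}^\pi}<\infty$ the seminorm version of stability held with constant $C'$ and threshold $\delta'$, then for any $Q$ with $\|\mathcal{T}_B Q - Q\|_q$ small enough to force $\|\mathcal{T}_B Q - Q\|_{q,d_{\mu_0}^\pi} \le M_{d_{\mu_0}^\pi}\|\mathcal{T}_B Q - Q\|_q < \delta'$, one would obtain $\|Q-Q^*\|_p \le C' M_{d_{\mu_0}^\pi}\|\mathcal{T}_B Q - Q\|_q$, contradicting the pure $L^q\!\to\! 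L^p$ instability for $q<p$ guaranteed by Theorem~\ref{thm: stability}.(2). Hence no such constants exist.

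The main obstacle I anticipate is not the chaining of inequalities (which is essentially bookkeeping) but verifying that the MDP construction used in Theorem~\ref{thm: stability}.(2) is rich enough to admit policies whose induced state-action visitation density is bounded above, so that the quantifier ``for all $\pi$ with $M_{d_{\mu_0}^\pi}<\infty$'' is non-vacuous and the above transport argument applies simultaneously to a whole class of policies. Since typical constructions on compact $\mathcal{S}$ with smooth transition kernels and non-degenerate initial distributions already yield bounded $d_{\mu_0}^\pi$, I expect this verification to reduce to a short check about the specific instance constructed in Appendix~\ref{app: instability of BOE}, perhaps after an auxiliary smoothing of the behavior policy. Once that is in place, both bullets follow cleanly from the norm-comparison sandwich and Theorem~\ref{thm: stability}.
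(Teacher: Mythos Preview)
Your proposal is correct and matches the paper's approach essentially line-for-line: the paper also sandwiches the seminorm via $C_{d_{\mu_0}^\pi}\|f\|_q \le \|f\|_{q,d_{\mu_0}^\pi} \le M_{d_{\mu_0}^\pi}\|f\|_q$ (recorded as Lemma~\ref{lem: norm property}) and then invokes the two parts of Theorem~\ref{thm: stability} exactly as you outline. Your worry about non-vacuity of the quantifier over $\pi$ is unnecessary for correctness, since a universal claim over an empty class holds trivially; the paper does not address it either.
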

\begin{remark}
    Note that in a practical Q-learning scheme, we take the $\epsilon$-greedy policy for exploration. As a result, for any state-action pair $(s, a)$, we can visit it with positive probability, and thus the condition $C_{d_{\mu_0}^\pi}>0$ holds.
\end{remark}
The proofs of Theorem~\ref{thm: stable of seminorm} are provided in Appendix~\ref{app: Stability of DQN: the Good}, where a theorem with tighter bounds yet stringent conditions is also shown.
We also investigate the stability when $d_{\mu_0}^\pi$ is a probability mass function in Appendices~\ref{app:seminorm} and~\ref{app:Stability of DQN: the Bad}.

\subsection{Policy Robustness under k-measurement Error in Probability Space}

We examine the adversarial robustness of policy-based reinforcement learning methods across different measurements in probability space, emphasizing the importance of optimizing infinity measurement error to attain the optimal robust policy. In Subsection~\ref{sec: probability measurement formulation for policy-based rl}, we reformulate the optimization objectives of REINFORCE and PPO as $1$-measurement errors, drawing on the operator view of policy gradient methods presented by~\cite{ghosh2020operator}. In Subsection~\ref{sec: necessity of infinity measurement in probability space}, we demonstrate the vulnerability of non-infinity measurement errors and establish robustness guarantee under infinity measurement error.

\subsubsection{Probability Measurement Formulation for Policy-based Reinforcement Learning}
\label{sec: probability measurement formulation for policy-based rl}

We reformulate the optimization objectives of two key policy-based reinforcement learning methods, REINFORCE~\citep{williams1992simple} and PPO~\citep{schulman2017proximal}, as the discrepancy between two probability distributions under $1$-measurement. 

Firstly, we define a $k$-measurement in probability space to describe the discrepancy between two probability distributions.
\begin{definition}[$k$-measurement in Probability Space]
    Given any probability distributions $\omega(\cdot) \in \Delta(\mathcal{S})$ and $p(\cdot|s),q(\cdot|s) \in \Delta(\mathcal{A})$ for all $s \in \mathcal{S}$, the $k$-measurement with f-divergence between $p$ and $q$ under $\omega$ is defined as the following:
    \begin{equation} \notag
        \mathcal{D}_{k, f}^{\omega} \left( p \| q \right) := \left\| \omega \mathcal{D}_f \left( p \| q \right) \right\|_{k} = \left( \int_{s \in \mathcal{S}} \left| \omega(s) \mathcal{D}_f \left( p(\cdot|s) \| q(\cdot|s) \right) \right|^k d \mu(s) \right)^{\frac{1}{k}},
    \end{equation}
    where $\mathcal{D}_f$ represents the f-divergence in policy space.
\end{definition}
\begin{remark}
    Note that $\mathcal{D}_{k, f}^{\omega}$ is not a distance due to the asymmetrical nature of f-divergence.
\end{remark}

Given an MDP $\mathcal{M}$, define the state visitation distribution $d_{\mu_0}^\pi$ as the following:
$$d_{\mu_0}^\pi(s)=\mathbb{E}_{s_0 \sim \mu_0(\cdot), a_{t-1} \sim \pi(\cdot|s_{t-1}), s_{t} \sim \mathbb{P}(\cdot|s_{t-1}, a_{t-1})} \left[ (1-\gamma)\sum_{t=0}^\infty \gamma^t \mathbb{I}(s_t = s) \right].$$
In the following text, if there is no ambiguity, we will abbreviate $d_{\mu_0}^\pi$ to $d^\pi$.

\paragraph{Probability Measurement Formulation of REINFORCE.}
REINFORCE is one of the most fundamental policy-based methods and updates according to the policy gradient theorem. Its update can be formulated as the following:
\begin{equation} \label{eq: reinforce update step} \notag
    \theta_{t+1} = \theta_t + \eta_t \mathbb{E}_{s \sim d^{\pi_t}(\cdot), a \sim \pi_t(\cdot | s)} \left[ Q^{\pi_t}(s,a) \nabla_\theta \log \pi_\theta(a|s) \Big|_{\theta=\theta_t} \right],
\end{equation}
where $\pi_{\theta_t}$ is abbreviated as $\pi_t$.
If $Q^\pi(s,a)$ are positive for all $(s,a)\in \mathcal{S}\times\mathcal{A}$, the above equation can be interpreted as performing a gradient step for the optimization problem:
\begin{equation} \notag
    \min_{\pi_\theta \in \Pi} 
    % \mathbb{E}_{s \sim \mu_t(\cdot)} \left[ \operatorname{KL}\left( \varphi_t(\cdot|s) \| \pi_\theta(\cdot|s) \right) \right],
    \mathcal{D}_{1, \operatorname{KL}}^{\mu_t} \left( \varphi_t \| \pi_\theta \right),
\end{equation}
where probability distributions $\varphi_t$ and $\mu_t$ are as follows, respectively:
\begin{align} \notag
    \varphi_t(a|s) = \frac{1}{V^{\pi_t}(s)} Q^{\pi_t}(s,a) \pi_t(a|s),\quad \mu_t(s) = \frac{1}{\mathbb{E}_{s \sim d^{\pi_t}(\cdot)} \left[ V^{\pi_t}(s) \right]} d^{\pi_t}(s) V^{\pi_t}(s).
\end{align}
It can be explained as iteratively approximating an improved policy $\varphi_t$ based on value functions by optimizing a $1$-measurement with the forward KL-divergence.

\paragraph{Probability Measurement Formulation of PPO.}
PPO is one of the most widely used policy-based methods and often incorporates an entropy bonus with a coefficient $\beta > 0$. PPO can be viewed as performing a gradient step for the following optimization problem:
\begin{equation} \notag
    \min_{\pi_\theta \in \Pi} 
    % \mathbb{E}_{s \sim \mu_t(s)} \left[ \operatorname{KL}\left( \operatorname{clip}\left( \pi_\theta(\cdot|s) \right) \| \varphi_t(\cdot|s) \right) \right],
    \mathcal{D}_{1, \operatorname{KL}}^{\mu_t} \left( \operatorname{clip}(\pi_\theta) \| \varphi_t \right),
\end{equation}
where probability distributions $\varphi_t$ and $\mu_t$ are as follows, respectively:
\begin{align} \notag
    \varphi_t(a|s) = \frac{\exp \left( \beta A^{\pi_t}(s,a) \right)}{\sum_{a^\prime} \exp \left( \beta A^{\pi_t}(s,a^\prime) \right)}, \quad \mu_t(s) = d^{\pi_t}(s).
\end{align}
Similarly, it can be understood as iteratively approximating an improved policy $\varphi_t$ based on advantage functions by minimizing a $1$-measurement with the reversed KL-divergence.
This formulation results in the update:
\begin{equation} \notag
    \pi_{t+1} = \mathop{\arg \max}_{\pi_\theta} \mathbb{E}_{s \sim d^{\pi_t}(\cdot)} \left[ \sum_a \pi_\theta(a|s) A^{\pi_t}(s,a) - \frac{1}{\beta} \sum_a \pi_\theta(a|s) \log \pi_\theta(a|s) \right],
\end{equation}
where the clipping operator is omitted for readability.

\subsubsection{Necessity of Infinity Measurement in Probability Space for Adversarial Robustness}
\label{sec: necessity of infinity measurement in probability space}

When the value of $\mathcal{D}_{k, \operatorname{KL}}^{\mu_t}$ vanishes, $\pi_\theta$ is exactly the improved policy $\varphi_t$. However, this is not achievable due to practical constraints. Therefore, we investigate the approximation properties when the $\mathcal{D}_{k, \operatorname{KL}}^{\mu_t}$ is a small but nonzero value for different $k$. This examination highlights the significance of utilizing suitable measurements for guaranteeing adversarial robustness in policy-based reinforcement learning methods.

\begin{figure}[t]
    \centering
    \includegraphics[width=0.6\columnwidth]{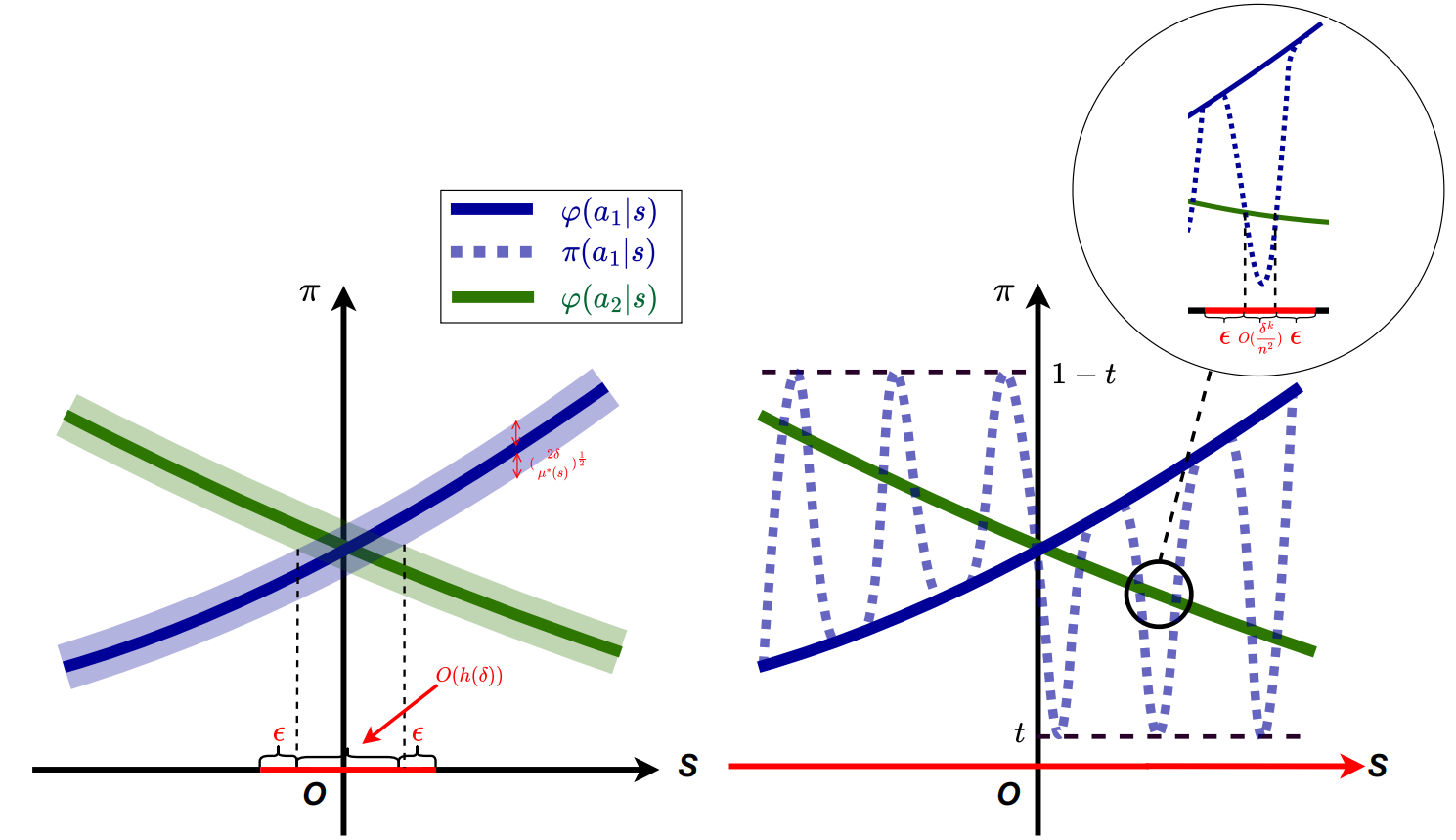}
    \caption{Examples of adversarial robustness for the policy $\pi$ satisfying $\mathcal{D}_{k, \operatorname{KL}}^{\mu} \left( \varphi \| \pi \right) \le \delta$. Given a perturbation radius $\epsilon$, the red line represents the set $\mathcal{S}_{adv}^{\pi,\epsilon}$, which consists of states with adversarial states. In the left panel, we depict the case of $k = \infty$. In this scenario, all such policies $\pi$ are distributed within the shadow area, with the measure of $\mathcal{S}_{adv}^{\pi,\epsilon}$ being a small value, approximately $2\epsilon + O\left(h(\delta)\right)$, indicating good robustness. In contrast, the right panel illustrates that for $1 \leq k < \infty$, there always exists $\pi$ such that $\mathcal{S}_{adv}^{\pi,\epsilon} = \mathcal{S}$ in the worst case, indicating poor robustness.}
    \label{fig: measurement in probability space}
\end{figure}

Given a policy $\pi$ and a perturbation budget $\epsilon$, let $\mathcal{S}_{sub}^\pi$ represent the set of states where the greedy policy according to $\pi$ is suboptimal:
$$\mathcal{S}_{sub}^\pi = \left\{ s \left| Q^*(s, \mathop{\arg\max}_a \pi(a|s) ) < \max_a Q^*(s,a) \right. \right\}.$$
The smaller set $\mathcal{S}_{sub}^\pi$ signifies better natural performance.
Given a perturbation budget $\epsilon$, $\mathcal{S}_{adv}^{\pi,\epsilon}$ denotes the state set within whose $\epsilon$-neighborhood there exists the adversarial state:
$$
\mathcal{S}_{adv}^{\pi,\epsilon} = \left\{ s \left| \exists\ s_\nu \in B_\epsilon(s), \text{s.t. } Q^*(s, \mathop{\arg\max}_a \pi(a|s_\nu) ) < \max_a Q^*(s,a) \right. \right\}.
$$
The smaller set $\mathcal{S}_{adv}^{\pi,\epsilon}$ means stronger worst-case adversarial robustness.
In addition, we note that $\mathcal{S}_{sub}^\pi = \mathcal{S}_{adv}^{\pi,0}$. From this perspective, our analysis can be understood as a mathematical study of when it holds that $\lim_{\epsilon \rightarrow 0} \mathcal{S}_{adv}^{\pi,\epsilon} = \mathcal{S}_{adv}^{\pi,0}$, showcasing that the natural performance and the adversarial robustness in the worst case are consistent.
Denote that $\pi^*$ as the greedy policy according to the Bellman optimal $Q$-function and $\mu^*(s):=d^{\pi^*}(s),\ \forall\ s\in\mathcal{S}$.

We first demonstrate that small $k$-measurement errors for any $1\le k < \infty$ may ensure the natural performance but can lead to complete vulnerability in the worst case.
\begin{theorem}[Vulnerability of Non-infinity Measurement Errors] \label{thm: Vulnerability of Non-infinity Measurement Errors}
    There exists an MDP such that the following statement holds. Let $\varphi$ be a policy from the policy family $\mathcal{F} = \left\{ \varphi \left| \mathop{\arg\max}_a \varphi(a|s) = \mathop{\arg\max}_a \pi^*(a|s) \right. \right\}$. For any $\epsilon>0$, $1 \le k < \infty$, $\delta > 0$ and any state distribution $\mu$, there exists a policy $\pi$ satisfying $\mathcal{D}_{k, \operatorname{KL}}^{\mu} \left( \varphi \| \pi \right) \le \delta$ and $\mathcal{D}_{k, \operatorname{KL}}^{\mu} \left( \pi \| \varphi \right) \le \delta$, such that $m\left( \mathcal{S}_{sub}^\pi \right) = O(\delta)$ but $m\left( \mathcal{S}_{adv}^{\pi,\epsilon} \right) = m(\mathcal{S})$.
\end{theorem}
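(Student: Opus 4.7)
I would build a policy-space analogue of the counterexample used in Theorem~\ref{thm:necessity of infty norm}.(1). The idea is to fix a simple one-dimensional MDP with $\mathcal{S} = [0,1]$ and $\mathcal{A} = \{a_0, a_1\}$, whose reward and transition structure guarantee $Q^*(s, a_0) - Q^*(s, a_1) \geq c$ for some universal gap $c > 0$ at every $s$. This pins $\pi^*(s) = a_0$ for all $s$ and forces every $\varphi \in \mathcal{F}$ to satisfy $\varphi(a_0|s) > \varphi(a_1|s)$ pointwise. Given $\varphi, \epsilon, k, \delta, \mu$, I construct $\pi$ by a local surgery on a small, $\epsilon$-dense set: partition $\mathcal{S}$ into $N \asymp 1/\epsilon$ intervals of length $2\epsilon$, place a centered ``flip'' sub-interval $B_i$ of width $w$ inside each, and set $B = \bigcup_i B_i$. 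Off $B$ I leave $\pi(\cdot|s) = \varphi(\cdot|s)$; on $B$ I transfer a mass slightly exceeding the gap $\varphi(a_0|s) - \varphi(a_1|s)$ from $a_0$ to $a_1$ so that the two top probabilities swap their ranking at every $s \in B$.

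\textbf{Verifying the three required properties.} First, $B$ is $\epsilon$-dense by construction, so for every $s \in \mathcal{S}$ there is a point $s_\nu \in B_\epsilon(s) \cap B$ at which $\pi$'s greedy action is $a_1$, strictly suboptimal by the gap $c$; thus $\mathcal{S}_{adv}^{\pi,\epsilon} = \mathcal{S}$. Second, $\pi$ agrees with $\varphi \in \mathcal{F}$ off $B$, so $\mathcal{S}_{sub}^\pi \subseteq B$ and $m(\mathcal{S}_{sub}^\pi) \le m(B) = Nw \asymp w/\epsilon$. Third, both pointwise KL divergences vanish off $B$ and are uniformly bounded on $B$ by a constant $K_\varphi$ depending only on $\varphi$, so
\begin{equation*}
\mathcal{D}_{k, \operatorname{KL}}^{\mu}(\varphi \| \pi) \,\le\, K_\varphi \, \|\mu\|_{L^k(B)}, \qquad \mathcal{D}_{k, \operatorname{KL}}^{\mu}(\pi \| \varphi) \,\le\, K_\varphi \, \|\mu\|_{L^k(B)}.
\end{equation*}
Selecting $w$ small enough as a function of $(\delta, k, K_\varphi, \mu)$ — for instance $w \lesssim \epsilon \cdot (\delta/K_\varphi)^k / \|\mu\|_{L^\infty}^k$ when $\mu$ is bounded — drives both right-hand sides below $\delta$ and simultaneously forces $m(\mathcal{S}_{sub}^\pi) = O(\delta)$.

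\textbf{Main obstacle.} The delicate step is keeping $K_\varphi$ finite when $\varphi$ is nearly deterministic at its mode: writing $p = \varphi(a_0|s)$ and $q = \varphi(a_1|s)$, a minimal flip drives $\pi(a_0|s)$ just below $q$, and the reverse KL $\mathcal{D}_{\operatorname{KL}}(\pi \| \varphi)$ blows up as $q \downarrow 0$ (and the forward KL blows up symmetrically after the swap). I would address this either by restricting the statement to $\varphi$ with a uniform positive lower bound on its action probabilities — a mild restriction given that the $\arg\max$ defining $\mathcal{F}$ is insensitive to an arbitrarily small uniform smoothing of $\varphi$ — or by isolating the measure-vanishing degenerate stratum $\{s : q(s) < \tau\}$ and absorbing its contribution through a Hölder bound on $\|\mu\|_{L^k}$. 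Once the pointwise KL is uniformly bounded, the rest of the argument (verifying $\epsilon$-density, checking the greedy flip, and the $L^k$ integration) is routine. The crux of the contrast with $k = \infty$, and the reason this construction cannot be pushed further, is that $\|\mu\|_{L^\infty(B)}$ does not shrink with $m(B)$, so shrinking $w$ no longer reduces $\mathcal{D}_{\infty, \operatorname{KL}}^{\mu}$.
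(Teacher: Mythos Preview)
Your high-level strategy—plant $\epsilon$-dense ``flip'' intervals of shrinking width and let $k<\infty$ absorb the local KL spike—is exactly the mechanism the paper uses. Your MDP is in fact simpler than the paper's (they use a symmetric two-sided construction on $[-\tfrac12,\tfrac12]$ with the optimal action switching sign at $0$; a single globally optimal action as you propose works just as well and removes an irrelevant boundary case).

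The one substantive issue is your ``minimal flip''. Transferring just enough mass to nudge $\pi(a_0|s)$ below $\varphi(a_1|s)$ is the wrong move: it ties both pointwise KLs to $\varphi$'s margin and creates the blow-up you worry about. Instead, on each $B_i$ set $\pi(\cdot|s)$ to a \emph{fixed} distribution, say $(\tfrac12-\eta,\tfrac12+\eta)$. Then the forward KL on $B$ is uniformly bounded by $\log\frac{1}{1/2-\eta}$ regardless of how deterministic $\varphi$ is, and your ``main obstacle'' for $\mathcal{D}_{k,\mathrm{KL}}^\mu(\varphi\|\pi)$ disappears without any truncation or stratum argument. The paper takes a more elaborate detour here: it introduces a clamped policy $\tilde\varphi$ with all action probabilities in $[t,1-t]$, builds $\pi$ from $\tilde\varphi$, and controls $\mathcal{D}(\varphi\|\pi)$ by a triangle-type decomposition through $\tilde\varphi$, using $\ln\frac{1}{1-t}+\frac{t}{e}+2t\ln\frac{1-t}{t}\to 0$ as $t\to 0$. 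Your route (with the fixed flip) is shorter.

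For the reverse KL your option 2 cannot work: if $\varphi(a_1|s)=0$ on a set that meets every $B_i$, then any $\pi$ with $\pi(a_1|s)>\tfrac12$ on $B$ has $\mathcal{D}_{\mathrm{KL}}(\pi(\cdot|s)\|\varphi(\cdot|s))=+\infty$ pointwise there, and no H\"older bound on $\mu$ rescues a divergent integrand. Your option 1—assuming $\varphi$ has a uniform positive lower bound—is precisely what the paper does for the reverse direction (it explicitly remarks that $\varphi$ must be stochastic for $\mathcal{D}_{k,\mathrm{KL}}^\mu(\pi\|\varphi)$ and then takes $t<\min_{a,s}\varphi(a|s)$). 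So you are not weakening the theorem relative to the paper's own proof; this restriction is intrinsic to the reverse-KL half of the statement.
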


Fortunately, we find that policies with small infinity measurement errors can guarantee both natural and robust performance.
\begin{theorem}[Robustness Guarantee under Infinity Measurement Error] \label{thm: robustness of infinity measurement}
    For any MDP and state distribution $\mu>0$, let $S_{\delta} = \{s \mid \exists \ a,a' \in \mathcal{A},\ \text{s.t.}\ |\varphi(a|s)-\varphi(a'|s)| \leq 2\sqrt{\frac{2\delta}{\mu(s)}}  \}$ and $h(\delta) = \mu(S_{\delta})$. Then $h(\delta)$ is a monotonic function with $h(0) = 0$. Furthermore, let $\varphi$ be a policy from the policy class $\mathcal{F} = \left\{ \varphi \mid \mathop{\arg\max}_a \varphi(a|s) = \mathop{\arg\max}_a \pi^*(a|s)  \right\}$. If $S_\delta $ is the union of finite connected subsets, then for any $\epsilon,\delta > 0$ and any policy $\pi$ satisfying $\mathcal{D}_{\infty, \operatorname{KL}}^{\mu} \left( \varphi \| \pi \right) \le \delta$ or $\mathcal{D}_{\infty, \operatorname{KL}}^{\mu} \left( \pi \| \varphi \right) \le \delta$, we have that $m\left( \mathcal{S}_{sub}^\pi \right) = O(h(\delta))$ and $m\left( \mathcal{S}_{adv}^{\pi,\epsilon} \right) = 2\epsilon +  O(h(\delta))$.
\end{theorem}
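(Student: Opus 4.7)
The plan is to translate the infinity KL bound into uniform pointwise control on action probabilities via Pinsker's inequality, identify $S_\delta$ as the precise obstruction to preserving the greedy action, and then pass from the natural guarantee to the adversarial one by thickening $S_\delta$ by the adversary's $\epsilon$-ball. Monotonicity of $h$ is immediate: if $\delta_1\le\delta_2$, the threshold $2\sqrt{2\delta/\mu(s)}$ only grows, so $S_{\delta_1}\subseteq S_{\delta_2}$ and $h(\delta_1)\le h(\delta_2)$. For $h(0)=0$, the set $S_0$ reduces to the tie set of $\varphi$; since $\varphi\in\mathcal{F}$ has $\arg\max_a\varphi(a|s)=\arg\max_a\pi^*(a|s)$ and the greedy map is uniquely defined $\mu$-almost everywhere, the tie set has $\mu$-measure zero.

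For the natural bound, the hypothesis $\mathcal{D}_{\infty,\operatorname{KL}}^\mu(\varphi\|\pi)\le\delta$ (the reverse-KL case is identical, since Pinsker controls total variation from either direction) gives $D_{\operatorname{KL}}(\varphi(\cdot|s)\|\pi(\cdot|s))\le\delta/\mu(s)$ for $\mu$-a.e.\ $s$. Pinsker's inequality then yields $\|\varphi(\cdot|s)-\pi(\cdot|s)\|_1\le\sqrt{2\delta/\mu(s)}$, so $|\varphi(a|s)-\pi(a|s)|\le\sqrt{2\delta/\mu(s)}$ for every $a$. For any $s\notin S_\delta$, every relevant pair obeys $|\varphi(a|s)-\varphi(a'|s)|>2\sqrt{2\delta/\mu(s)}$; applying this with $a=\arg\max_a\varphi(a|s)$ against each other $a'$ together with the uniform closeness of $\pi$ and $\varphi$ shows $\arg\max_a\pi(a|s)=\arg\max_a\varphi(a|s)=\arg\max_a\pi^*(a|s)$. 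Hence $\mathcal{S}_{sub}^\pi\subseteq S_\delta$ and $m(\mathcal{S}_{sub}^\pi)=O(h(\delta))$.

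For the adversarial bound, a state $s\in\mathcal{S}_{adv}^{\pi,\epsilon}$ must admit some $s_\nu\in B_\epsilon(s)$ whose $\pi$-greedy action is suboptimal at $s$. By the previous paragraph this forces either (i) $s_\nu\in S_\delta$, so $\pi$ and $\pi^*$ may disagree at $s_\nu$, or (ii) $s_\nu\notin S_\delta$ with $\arg\max_a\pi^*(a|s_\nu)\neq\arg\max_a\pi^*(a|s)$, meaning $s$ lies within $\epsilon$ of a greedy-action discontinuity of $\pi^*$. Writing $\partial^*$ for that discontinuity locus, we obtain $\mathcal{S}_{adv}^{\pi,\epsilon}\subseteq(S_\delta+B_\epsilon)\cup(\partial^*+B_\epsilon)$. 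The $\epsilon$-thickening of $\partial^*$ contributes the explicit $2\epsilon$ term, mirroring the boundary treatment in Theorem~\ref{thm:necessity of infty norm}.

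The main technical hurdle is controlling $m(S_\delta+B_\epsilon)$ by $O(h(\delta))$ rather than by $h(\delta)+C(\delta)\epsilon$ with an uncontrolled $C(\delta)$. This is exactly where the finite-connected-components assumption on $S_\delta$ enters: each component admits a boundary term of order $\epsilon$ under $\epsilon$-thickening, so $m(S_\delta+B_\epsilon)\le m(S_\delta)+N_\delta\cdot O(\epsilon)$ with $N_\delta<\infty$. Monotonicity $S_{\delta_1}\subseteq S_{\delta_2}$ lets us bound $N_\delta$ uniformly for $\delta$ in a fixed bounded range, so the $O(\epsilon)$ increment can be absorbed into the $2\epsilon$ term. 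Assuming $\mu$ and $m$ are comparable on the support of $\mu$, so that $m(S_\delta)$ and $h(\delta)=\mu(S_\delta)$ are of the same order, combining the two contributions gives $m(\mathcal{S}_{adv}^{\pi,\epsilon})=2\epsilon+O(h(\delta))$, as required.
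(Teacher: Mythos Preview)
Your core strategy---Pinsker's inequality to convert the $L^\infty$ KL control into pointwise bounds $|\varphi(a|s)-\pi(a|s)|\le\sqrt{2\delta/\mu(s)}$, the inclusion $\mathcal{S}_{sub}^\pi\subseteq S_\delta$, and $\epsilon$-thickening for the adversarial set---is exactly the paper's approach. Your case-(ii) decomposition involving the discontinuity locus $\partial^*$ of the optimal-action map is in fact more careful than the paper, which simply asserts $\mathcal{S}_{adv}^{\pi,\epsilon}\subseteq S_\delta+B_\epsilon$ without addressing the possibility that $s_\nu\notin S_\delta$ while $\arg\max_a\pi^*(a|s_\nu)\neq\arg\max_a\pi^*(a|s)$; your remark that $m(S_\delta)$ and $h(\delta)=\mu(S_\delta)$ need comparability is likewise a subtlety the paper elides.

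One step does not go through as written: monotonicity $S_{\delta_1}\subseteq S_{\delta_2}$ does \emph{not} bound the component count $N_\delta$ uniformly in $\delta$---a single interval can contain arbitrarily many disjoint subintervals as $\delta$ shrinks, so inclusion of sets says nothing about the number of pieces. The paper does not attempt this uniformity; it invokes the finite-components hypothesis only at the fixed $\delta$ under consideration and bounds $m(S_\delta+B_\epsilon)\le\sum_{i=1}^n m(S_{\delta_i}+B_\epsilon)\le m(S_\delta)+n\cdot m(B_\epsilon)=h(\delta)+O(\epsilon)$, with the implicit constant in $O(\epsilon)$ depending on $n=N_\delta$. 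You should drop the uniformity claim and accept, as the paper does, a $\delta$-dependent constant on the $\epsilon$-term; the stated form $2\epsilon+O(h(\delta))$ is then best read as an informal description of the scaling rather than a sharp inequality with universal constants.
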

The proofs of Theorem~\ref{thm: Vulnerability of Non-infinity Measurement Errors} and~\ref{thm: robustness of infinity measurement} are presented in Appendix~\ref{app: infinity measurement is necessary in probability space}, with intuitional examples illustrated in Figure~\ref{fig: measurement in probability space}. These findings underscore the necessity of using $\mathcal{D}_{\infty, \operatorname{KL}}^{\mu_t}$ as the objective for ensuring adversarial robustness in the worst case.

\section{Consistent Adversarial Robust Reinforcement Learning}

Our theoretical analysis reveals the potential of employing the infinity measurement error as the optimization objective to achieve the optimal robust policy.
However, the exact computation of the infinity measurement is intractable due to the unknown environment dynamics and continuous state space. Therefore, we introduce the surrogate objective of the infinity measurement error and develop the Consistent Adversarial Robust Reinforcement Learning~(CAR-RL) framework. This framework enhances both the natural and robust performance of agents.
Furthermore, we apply CAR-RL to both the value-based DQN and the policy-based PPO algorithms, leading to CAR-DQN and CAR-PPO.

\subsection{Consistent Adversarial Robust Deep Q-network}

Inspired by Theorem~\ref{thm: stable of seminorm}, we propose the CAR-DQN to train robust DQN by minimizing the Bellman infinity-error $\| \mathcal{T}_B Q_{\theta} - Q_{\theta}\|_{\infty,d^{\pi_{\theta}}_{\mu_0}}$. This objective can be minimized using the following loss function (as derived in Appendix~\ref{app:derive car-dqn}):
$$\mathcal{L}_{car}(\theta) = \sup_{(s,a)\in\mathcal{S}\times\mathcal{A}} d_{\mu_0}^{\pi_\theta}(s,a) \max_{s_\nu \in B_\epsilon(s)}  \left| \mathcal{T}_{B} Q_\theta(s_\nu,a) - Q_\theta(s_\nu,a) \right|,$$
where $\pi_\theta$ is the behavior policy  associated with $Q_\theta$, typically an $\epsilon$-greedy policy derived from $Q_\theta$. Since interactions with the environment in an SA-MDP are based on the true state $s$ rather than the perturbed state $s_\nu$, it is not feasible to directly estimate $\mathcal{T}_{B}Q_\theta(s_\nu, a)$. To address this, we exploit $\mathcal{T}_{B}Q_\theta(s,a)$ as a substitute, leading to the training objective:
$$\mathcal{L}_{car}^{train}(\theta) = \sup_{(s,a)\in\mathcal{S}\times\mathcal{A}} d_{\mu_0}^{\pi_\theta}(s,a) \max_{s_\nu \in B_\epsilon(s)}  \left| \mathcal{T}_{B} Q_\theta(s,a) - Q_\theta(s_\nu,a) \right|.$$
As shown in Theorem~\ref{thm:bound car objective}, this surrogate objective $\mathcal{L}_{car}^{train}$ effectively bounds  $\mathcal{L}_{car}$, especially in smooth environments.
We also define $\mathcal{L}_{car}^{diff}(\theta)$ as the following:
$$\mathcal{L}_{car}^{diff}(\theta) = \sup_{(s,a)\in\mathcal{S}\times\mathcal{A}} d_{\mu_0}^{\pi_\theta}(s,a) \max_{s_\nu \in B_\epsilon(s)} \left| \mathcal{T}_{B} Q_\theta(s_\nu,a) - \mathcal{T}_{B}Q_\theta(s,a) \right|.$$
\begin{theorem}[Bounding $\mathcal{L}_{car}$ with $\mathcal{L}_{car}^{train}$]\label{thm:bound car objective} \
We have that
% \begin{itemize}[leftmargin=0em, itemindent=2em,topsep=-0.1em,parsep=-0.1em] 
% \item[(\textbf{1}).] 
    \begin{equation}\notag
        \left| \mathcal{L}_{car}^{train}(\theta) -  \mathcal{L}_{car}^{diff}(\theta)\right| \le \mathcal{L}_{car}(\theta) \le \mathcal{L}_{car}^{train}(\theta) + \mathcal{L}_{car}^{diff}(\theta).
    \end{equation}    
% \item[(\textbf{2}).] 
Further, suppose the environment is $\left(L_r, L_{\mathbb{P}}\right)$-smooth and suppose $Q_\theta$ and $r$ are uniformly bounded, i.e. $\exists\ M_Q,M_r >0$ such that $\left|Q_\theta(s,a)\right| \le M_Q,\ \left|r(s,a)\right| \le M_r\ \forall s\in\mathcal{S}, a\in\mathcal{A}$. If $M:=\sup_{\theta,(s,a)\in\mathcal{S}\times\mathcal{A}} d_{\mu_0}^{\pi_\theta}(s,a) <\infty$, then we have that
    \begin{equation}\notag
        \mathcal{L}_{car}^{diff}(\theta) \le C_{\mathcal{T}_{B}} \epsilon,
    \end{equation}
    where $C_{\mathcal{T}_{B}}=L_{\mathcal{T}_{B}} M$, $L_{\mathcal{T}_{B}} =  L_r + \gamma C_Q L_{\mathbb{P}}$ and $C_Q = \max\left\{ M_Q, \frac{M_r}{1-\gamma} \right\}$. The definition of $\left(L_r, L_{\mathbb{P}}\right)$-smooth environment is shown in Appendix \ref{app:convergence}.
% \end{itemize} 
\end{theorem}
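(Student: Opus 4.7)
The two displayed inequalities decouple neatly: the first is a pure triangle-inequality manipulation, while the second quantifies how much $\mathcal{T}_B Q_\theta$ can vary under an $\epsilon$-perturbation of the state using the smoothness of $r$ and $\mathbb{P}$. I will handle them in order.

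\textbf{Step 1 (Triangle sandwich).} Fix $(s,a)\in\mathcal{S}\times\mathcal{A}$ and $s_\nu\in B_\epsilon(s)$. Write
\[
\mathcal{T}_B Q_\theta(s_\nu,a)-Q_\theta(s_\nu,a)=\bigl[\mathcal{T}_B Q_\theta(s,a)-Q_\theta(s_\nu,a)\bigr]+\bigl[\mathcal{T}_B Q_\theta(s_\nu,a)-\mathcal{T}_B Q_\theta(s,a)\bigr],
\]
so the ordinary triangle inequality gives $f\le g+h$, $g\le f+h$, $h\le f+g$, where $f,g,h$ denote the three absolute values in the displayed identity. Multiplying by $d_{\mu_0}^{\pi_\theta}(s,a)\ge 0$, taking $\max_{s_\nu\in B_\epsilon(s)}$ on both sides, and noting that $\max_{s_\nu}(u+v)\le\max_{s_\nu}u+\max_{s_\nu}v$, I obtain the pointwise-in-$(s,a)$ triangle inequalities between the three integrands of $\mathcal{L}_{car}$, $\mathcal{L}_{car}^{train}$ and $\mathcal{L}_{car}^{diff}$. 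Taking $\sup_{(s,a)}$ and again using subadditivity of $\sup$ yields $\mathcal{L}_{car}\le\mathcal{L}_{car}^{train}+\mathcal{L}_{car}^{diff}$, $\mathcal{L}_{car}^{train}\le\mathcal{L}_{car}+\mathcal{L}_{car}^{diff}$ and $\mathcal{L}_{car}^{diff}\le\mathcal{L}_{car}+\mathcal{L}_{car}^{train}$, which combine into the double-sided bound.

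\textbf{Step 2 (Smoothness estimate for $\mathcal{L}_{car}^{diff}$).} By definition of $\mathcal{T}_B$,
\[
\mathcal{T}_B Q_\theta(s_\nu,a)-\mathcal{T}_B Q_\theta(s,a)=\bigl[r(s_\nu,a)-r(s,a)\bigr]+\gamma\!\int\!\max_{a'}Q_\theta(s',a')\bigl[\mathbb{P}(ds'|s_\nu,a)-\mathbb{P}(ds'|s,a)\bigr].
\]
The first bracket is at most $L_r\|s_\nu-s\|\le L_r\epsilon$ by the reward-Lipschitz half of $(L_r,L_{\mathbb{P}})$-smoothness. For the second, $\max_{a'}Q_\theta(s',a')$ is uniformly bounded by $C_Q=\max\{M_Q,M_r/(1-\gamma)\}$, so pairing the bounded test function against the $L_{\mathbb{P}}$-Lipschitz kernel (in whichever metric on measures the appendix uses, e.g.\ total variation or Kantorovich) gives a bound of $C_Q L_{\mathbb{P}}\epsilon$. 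Summing and attaching the $\gamma$ yields $|\mathcal{T}_B Q_\theta(s_\nu,a)-\mathcal{T}_B Q_\theta(s,a)|\le(L_r+\gamma C_Q L_{\mathbb{P}})\epsilon=L_{\mathcal{T}_B}\epsilon$, uniformly in $(s,a,s_\nu)$. Multiplying by $d_{\mu_0}^{\pi_\theta}(s,a)\le M$ and taking the appropriate sup/max yields $\mathcal{L}_{car}^{diff}(\theta)\le M L_{\mathcal{T}_B}\epsilon=C_{\mathcal{T}_B}\epsilon$.

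\textbf{Anticipated obstacle.} Step 1 is straightforward. The only subtlety in Step 2 is ensuring that the integral against $\mathbb{P}(\cdot|s_\nu,a)-\mathbb{P}(\cdot|s,a)$ is controlled uniformly; this is exactly what $(L_r,L_{\mathbb{P}})$-smoothness is designed to give, so the main thing to verify is that $\max_{a'}Q_\theta$ is an admissible test function for the metric in that definition (boundedness suffices for total variation; Lipschitzness would be needed for Wasserstein, but $\max_{a'}Q_\theta$ inherits a bound rather than a Lipschitz constant here, so I expect the appendix's metric to be of the total-variation type). Once that convention is fixed, both conclusions follow directly.
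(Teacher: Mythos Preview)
Your proposal is correct and follows essentially the same route as the paper: Step~1 is the paper's triangle-inequality argument (the paper phrases the lower bound via the reverse triangle inequality together with $|\sup u-\sup v|\le\sup|u-v|$, but this is equivalent to your symmetric three-inequality version), and Step~2 is exactly the paper's Lipschitz estimate for $\mathcal{T}_B Q_\theta$, with the smoothness definition indeed using the $L^1(\mathcal{S})$ (total-variation-type) distance on $\mathbb{P}(\cdot\mid s,a)$, so boundedness of $\max_{a'}Q_\theta$ is all that is required.
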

The proofs of Theorem~\ref{thm:bound car objective} are provided in Appendix~\ref{app: surrogate objective of car-dqn}, which confirm that $\mathcal{L}_{car}^{train}(\theta)$ is a valid surrogate objective from the optimization perspective. This theorem also highlights a potential source of instability during robust training:
if $\mathcal{L}_{car}^{train}(\theta)$ is minimized to a small value but remains less than $\mathcal{L}_{car}^{diff}(\theta)$, then the primary objective $\mathcal{L}_{car}(\theta)$ may tend to increase, indicating a potential training overfitting.

To make better use of batch samples and improve training efficiency, we introduce a soft version $\mathcal{L}_{car}^{soft}(\theta)$ of the CAR objective, denoted as $\mathcal{L}_{car}^{soft}(\theta)$ (derivation in Appendix~\ref{app: soft objective of car-dqn}):
\begin{align}\notag
    \mathcal{L}_{car}^{soft}(\theta) = \sum_{i\in \mathcal{\left|B\right|}} \alpha_i \max_{s_\nu \in B_\epsilon(s_i)} \left|r_i + \gamma \max_{a^\prime} Q_{\bar{\theta}}(s_i^\prime,a^\prime) - Q_{\theta}(s_\nu,a_i)  \right|, 
\end{align}
where the weighting $\alpha_i$ is defined by the following probability distribution over a batch:
$$\alpha_i = \frac{e^{\frac{1}{\lambda} \max_{s_\nu } \left|r_i + \gamma \max_{a^\prime} Q_{\bar{\theta}}(s_i^\prime,a^\prime) - Q_{\theta}(s_\nu,a_i)  \right|}}{\sum_{i\in \mathcal{\left|B\right|}} e^{\frac{1}{\lambda} \max_{s_\nu } \left|r_i + \gamma \max_{a^\prime} Q_{\bar{\theta}}(s_i^\prime,a^\prime) - Q_{\theta}(s_\nu,a_i)  \right|}}. $$
Here, $\mathcal{B}$ represents a batch of transition pairs sampled from the replay buffer, $\bar{\theta}$ is the parameter of the target network, and $\lambda$ is the coefficient to control the level of softness.

\subsection{Consistent Adversarial Robust Proximal Policy Optimization}

Motivated by Theorems~\ref{thm: Vulnerability of Non-infinity Measurement Errors} and~\ref{thm: robustness of infinity measurement}, we develop the CAR-PPO method to enhance the robustness of PPO. The objective of CAR-PPO is to optimize the infinity measurement error $\mathcal{D}_{\infty, \operatorname{KL}}^{\mu_t} \left( \operatorname{clip}(\pi_\theta) \| \varphi_t \right)$. This is equivalent to minimizing the following loss function (as derived in Appendix~\ref{app: derivation of car-ppo}):
$$ 
\mathcal{L}_{car}^{train}(\theta) =  \sup_{s\in\mathcal{S}}\ d^{\pi_t}(s) \left( -\frac{1}{\beta} \mathcal{H}\left( \pi_\theta(\cdot | s) \right) - \min_{s_\nu \in B(s)} \mathbb{E}_{a \sim \pi_t(\cdot | s)}\left[ g\left(\frac{\pi_\theta (a | s_\nu)}{\pi_t (a | s)}, A^{\pi_t}(a | s) \right) \right] \right),
$$
where $\mathcal{H}(\cdot)$ represents the entropy, and the function $g(x,y)$ is defined as:
$$
g(x,y) = \min \left\{ x\cdot y, \operatorname{clip}\left( x, 1-\eta, 1+\eta \right) \cdot y \right\},
$$
with $\eta$ being the clipping hyperparameter.

Furthermore, we approximate the objective $\mathcal{L}_{car}^{train}(\theta)$ by considering the practice sampling process. The approximate objective is defined as:
\begin{equation} \notag
    \mathcal{L}_{car}^{app}(\theta) = \frac{1}{|\mathcal{B}|} \max_{(s,a) \in \mathcal{B}} \left( -\frac{1}{\beta} \mathcal{H}\left( \pi_\theta(\cdot | s) \right) - \min_{s_\nu \in B(s)} g\left(\frac{\pi_\theta (a | s_\nu)}{\pi_t (a | s)}, A^{\pi_t}(a | s) \right) \right).
\end{equation}
To effectively utilize each sample in a minibatch and improve training efficiency, we introduce a soft version $\mathcal{L}_{car}^{soft}(\theta)$ of the CAR objective $\mathcal{L}_{car}^{app}(\theta)$:
\begin{equation}\notag
    \mathcal{L}_{car}^{soft}(\theta) = \frac{1}{|\mathcal{B}|} \sum_{(s_i, a_i) \in \mathcal{B}} \alpha_i \left( -\frac{1}{\beta} \mathcal{H}\left( \pi_\theta(\cdot | s_i) \right) - \min_{s_\nu \in B(s_i)} g\left(\frac{\pi_\theta (a_i | s_\nu)}{\pi_t (a_i | s_i)}, A^{\pi_t}(a_i | s_i) \right) \right),
\end{equation}
where the sample weighting $\alpha_i$ is defined by the following distribution over a minibatch:
$$
\alpha_i = \frac{e^{\frac{1}{\lambda} \left( -\frac{1}{\beta} \mathcal{H}\left( \pi_\theta(\cdot | s_i) \right) - \min_{s_\nu \in B(s_i)} g\left(\frac{\pi_\theta (a_i | s_\nu)}{\pi_t (a_i | s_i)}, A^{\pi_t}(a_i | s_i) \right) \right)}}{\sum_{i\in \mathcal{\left|B\right|}} e^{\frac{1}{\lambda} \left( -\frac{1}{\beta} \mathcal{H}\left( \pi_\theta(\cdot | s_i) \right) - \min_{s_\nu \in B(s_i)} g\left(\frac{\pi_\theta (a_i | s_\nu)}{\pi_t (a_i | s_i)}, A^{\pi_t}(a_i | s_i) \right) \right)}}.
$$
Here, $\mathcal{B}$ represents a sampled minibatch, and the coefficient $\lambda$ controls the level of softness. Detailed derivations are provided in Appendix~\ref{app: derivation of car-ppo}.

\section{Experiments}\label{sec:exp}

In this section, we conduct extensive comparisons and ablation experiments to validate the rationality of our theoretical analysis and the effectiveness of CAR-DQN and CAR-PPO. Our source code and models are available at~\href{https://github.com/RyanHaoranLi/CAR-RL}{https://github.com/RyanHaoranLi/CAR-RL}.

\subsection{Implementation Details}

\paragraph{Environments.}
Following recent works~\citep{zhang2020robust, oikarinen2021robust, liang2022efficient}, we conduct experiments on four Atari video games~\citep{brockman2016openai}, including Pong, Freeway, BankHeist, and RoadRunner with DQN agents to validate CAR-DQN. These environments feature high-dimensional pixel inputs and discrete action spaces. 
For PPO agents, we conduct experiments on four MuJoCo tasks~\citep{todorov2012mujoco}, including Hopper, Walker2d, Halfcheetah, and Ant, which have continuous action spaces.

\begin{table*}[t]
\caption{Average natural and robust episode rewards over 50 episodes for baselines and CAR-DQN. The best results within the same solver type are highlighted in bold. CAR-DQN with the PGD solver outperforms SA-DQN with the PGD solver in almost all metrics and achieves remarkably better robustness in the more complex BankHeist and RoadRunner environments. CAR-DQN with the convex relaxation solver outperforms baselines in a majority of cases.}
\label{table: compare2}
% \vskip 0.15in
\resizebox{\textwidth}{!}{%
\begin{tabular}{c|cc|c|ccc|ccc|c}
\hline \hline
\multirow{2}{*}{\textbf{Environment}} & \multicolumn{2}{c|}{\multirow{2}{*}{\textbf{Model}}}                                                                         & \multirow{2}{*}{\textbf{\begin{tabular}[c]{@{}c@{}}Natural\\  Reward\end{tabular}}} & \multicolumn{3}{c|}{\textbf{PGD}}                                               & \multicolumn{3}{c|}{\textbf{MinBest}}                                                      & \multicolumn{1}{c}{\textbf{ACR}}                               \\
                             & \multicolumn{2}{c|}{}                                                                                               &                              & $\epsilon=1/255$          & $\epsilon=3/255$          & $\epsilon=5/255$          & $\epsilon=1/255$          & $\epsilon=3/255$          & $\epsilon=5/255$          & $\epsilon=1/255$  \\ \hline
\multirow{7}{*}{\textbf{Pong}}        & \multicolumn{1}{c|}{Standard}                                                                      & DQN            & $21.0 $                  & $-21.0$           & $-21.0$           & $-20.8$           & $-21.0$           & $-21.0 $           & $-21.0$           & $0$                            \\ \cline{2-11} 
                             & \multicolumn{1}{c|}{\multirow{2}{*}{PGD}}                                                          & SA-DQN         & \textbf{$\bf 21.0$}         & \textbf{$\bf 21.0$}   & $-19.4 $           & $-21.0$           & \textbf{$\bf 21.0$}   & $-19.4$           & $-21.0$           & $0$                            \\
                             & \multicolumn{1}{c|}{}                                                                              & CAR-DQN (Ours) & \textbf{$\bf 21.0$}         & \textbf{$\bf 21.0$}   & $\bf 16.8$            & $-21.0$           & \textbf{$\bf 21.0$}   & $\bf 20.7$            & $\bf -0.8$            & $0$                            \\ \cline{2-11} 
                             & \multicolumn{1}{c|}{\multirow{4}{*}{\begin{tabular}[c]{@{}c@{}}Convex \\ Relaxation\end{tabular}}} & SA-DQN         & \textbf{$\bf 21.0$}         & \textbf{$\bf 21.0$}   & \textbf{$\bf 21.0$}   & $-19.6 $           & \textbf{$\bf 21.0$}   & \textbf{$\bf 21.0$}   & $-9.5 $            & $1.000$                        \\
                             & \multicolumn{1}{c|}{}                                                                              & RADIAL-DQN     & \textbf{$\bf 21.0 $}         & \textbf{$\bf 21.0$}   & \textbf{$\bf 21.0$}   & \textbf{$\bf 21.0$}   & \textbf{$\bf 21.0$}   & \textbf{$\bf 21.0$}   & $4.9$             & $0.898$                        \\
                             & \multicolumn{1}{c|}{}                                                                              & WocaR-DQN      & \textbf{$\bf 21.0 $}         & \textbf{$\bf 21.0 $}   & $20.5$            & $20.6 $            & \textbf{$\bf 21.0 $}   & $20.7 $            & $20.9$            & $0.979$                        \\
                             & \multicolumn{1}{c|}{}                                                                              & CAR-DQN (Ours) & \textbf{$\bf 21.0 $}         & \textbf{$\bf 21.0 $}   & \textbf{$\bf 21.0$}   & \textbf{$\bf 21.0$}   & \textbf{$\bf 21.0$}   & \textbf{$\bf 21.0$}   & \textbf{$\bf 21.0$}   & $0.986$                        \\ \hline \hline
\multirow{7}{*}{\textbf{Freeway}}     & \multicolumn{1}{c|}{Standard}                                                                      & DQN            & $33.9$                  & $0.0 $             & $0.0$             & $0.0$             & $0.0 $             & $0.0 $             & $0.0 $             & $0$                            \\ \cline{2-11} 
                             & \multicolumn{1}{c|}{\multirow{2}{*}{PGD}}                                                          & SA-DQN         & $33.6$                  & $23.4$            & $20.6$            & \textbf{$\bf 7.6$}    & $21.1$            & $21.3$            & $21.8$            & $0.250$                    \\
                             & \multicolumn{1}{c|}{}                                                                              & CAR-DQN (Ours) & \textbf{$\bf 34.0$}         & \textbf{$\bf 33.7$}   & \textbf{$\bf 25.8$}   & $3.8$             & \textbf{$\bf 33.7$}   & \textbf{$\bf 30.0$}   & \textbf{$\bf 26.2$}   & $0$                            \\ \cline{2-11} 
                             & \multicolumn{1}{c|}{\multirow{4}{*}{\begin{tabular}[c]{@{}c@{}}Convex \\ Relaxation\end{tabular}}} & SA-DQN         & $30.0 $                  & $30.0 $            & $30.2$            & $27.7$            & $30.0$            & $30.0$            & $29.2$            & $1.000$                       \\
                             & \multicolumn{1}{c|}{}                                                                              & RADIAL-DQN     & $33.1$                  & \textbf{$\bf 33.3$}   & \textbf{$\bf 33.3$}   & \textbf{$\bf 29.0$}   & \textbf{$\bf 33.3$}   & \textbf{$\bf 33.3$}   & \textbf{$\bf 31.2$}   & $0.998$                    \\
                             & \multicolumn{1}{c|}{}                                                                              & WocaR-DQN      & $30.8 $                  & $31.0$            & $30.6$            & $29.0 $            & $31.0$            & $31.1$            & $29.0$            & $0.992$                       \\
                             & \multicolumn{1}{c|}{}                                                                              & CAR-DQN (Ours) & \textbf{$\bf 33.2$}         & $33.2$            & $32.3$            & $27.6$            & $33.2$            & $32.8$            & $31.0$            & $0.981$                  \\ \hline \hline
\multirow{7}{*}{\textbf{BankHeist}}   & \multicolumn{1}{c|}{Standard}                                                                      & DQN            & $1317.2$                & $22.2$            & $0.0$             & $0.0$             & $0.0$             & $0.0$             & $0.0$             & $0$                           \\ \cline{2-11} 
                             & \multicolumn{1}{c|}{\multirow{2}{*}{PGD}}                                                          & SA-DQN         & $1248.8 $                & $965.8$          & $35.6$            & $0.6$             & $1118.0$          & $50.8$            & $4.8$             & $0$                           \\
                             & \multicolumn{1}{c|}{}                                                                              & CAR-DQN (Ours) & \textbf{$\bf 1307.0$}       & \textbf{$\bf 1243.2$} & \textbf{$\bf 908.2$} & \textbf{$\bf 83.0$}   & \textbf{$\bf 1242.6$} & \textbf{$\bf 970.8$}  & \textbf{$\bf 819.4$}  & $0$                            \\ \cline{2-11} 
                             & \multicolumn{1}{c|}{\multirow{4}{*}{\begin{tabular}[c]{@{}c@{}}Convex \\ Relaxation\end{tabular}}} & SA-DQN         & $1236.0 $                & $1232.2$          & $1208.8$          & $1029.8 $         & $1232.2$          & $1214.8$          & $1051.0$         & $0.991$                       \\
                             & \multicolumn{1}{c|}{}                                                                              & RADIAL-DQN     & $1341.8$                & $1341.8$          & \textbf{$\bf 1346.4$} & $1092.6$         & $1341.8$          & $1328.6$          & $732.6$          & $0.982$                        \\
                             & \multicolumn{1}{c|}{}                                                                              & WocaR-DQN      & $1315.0 $                & $1312.0$          & $1323.4$          & $1094.0$         & $1312.0$          & $1301.6$          & $1041.4$         & $0.987$                   \\
                             & \multicolumn{1}{c|}{}                                                                              & CAR-DQN (Ours) & \textbf{$\bf 1349.6 $}       & \textbf{$\bf 1347.6 $} & $1332.0 $          & \textbf{$\bf 1191.0$} & \textbf{$\bf 1347.4 $} & \textbf{$\bf 1338.0$} & \textbf{$\bf 1233.6$} & $0.974$                    \\ \hline \hline
\multirow{7}{*}{\textbf{RoadRunner}}  & \multicolumn{1}{c|}{Standard}                                                                      & DQN            & $41492 $                 & $0$                 & $0$                 & $0$                 & $0$                 & $0$                 & $0$                 & $0$                        \\ \cline{2-11} 
                             & \multicolumn{1}{c|}{\multirow{2}{*}{PGD}}                                                          & SA-DQN         & $33380 $                 & $20482 $          & $0$                 & $0 $                 & $24632$           & $614$              & $214$              & $0$                        \\
                             & \multicolumn{1}{c|}{}                                                                              & CAR-DQN (Ours) & \textbf{$\bf 49700$}       & \textbf{$\bf 43286$}  & \textbf{$\bf 25740 $} & \textbf{$\bf 2574 $}   & \textbf{$\bf 48908 $} & \textbf{$\bf 35882 $}  & \textbf{$\bf 23218 $}  & $0$               \\ \cline{2-11} 
                             & \multicolumn{1}{c|}{\multirow{4}{*}{\begin{tabular}[c]{@{}c@{}}Convex \\ Relaxation\end{tabular}}} & SA-DQN         & $46372 $                 & $44960$           & $20910$           & $3074$            & $45226$          & $25548$           & $12324$           & $0.819$                    \\
                             & \multicolumn{1}{c|}{}                                                                              & RADIAL-DQN     & $46224$                 & $45990 $          & \textbf{$\bf 42162 $} & \textbf{$\bf 23248 $}  & $46082$          & \textbf{$\bf 42036 $} & \textbf{$\bf 25434 $}  & $0.994$                    \\
                             & \multicolumn{1}{c|}{}                                                                              & WocaR-DQN      & $43686 $                & $45636 $           & $19386$           & $6538$            & $45636 $           & $21068$          & $15050$           & $0.956$                    \\
                             & \multicolumn{1}{c|}{}                                                                              & CAR-DQN (Ours) & \textbf{$\bf 49398 $}       & \textbf{$\bf 49456 $}  & $28588 $          & $15592 $           & \textbf{$\bf 47526 $} & $32878$          & $21102 $          & $0.760$                       \\ \hline \hline
\end{tabular}%
}
\end{table*}

\paragraph{Baselines.}
We compare CAR-RL with several state-of-the-art robust training methods. SA-DQN/SA-PPO~\citep{zhang2020robust} incorporates a KL-based regularization and solves the inner maximization problem using PGD~\citep{madry2017towards} and CROWN-IBP~\citep{zhang2019towards}, respectively. RADIAL-DQN/RADIAL-PPO~\citep{oikarinen2021robust} applies adversarial regularizations based on robustness verification bounds from IBP~\citep{gowal2018effectiveness}. WocaR-DQN/WocaR-PPO~\citep{liang2022efficient} employs a worst-case value estimation and incorporates the KL-based regularization. For DQN agents, we utilize the officially released models of SA-DQN and RADIAL-DQN, and replicate WocaR-DQN, as its official implementation uses a different environment wrapper from SA-DQN and RADIAL-DQN. For PPO agents, we utilize the official code to train 17 agents using different methods in the same setting, adjust some parameters appropriately, and report the medium performance for reproducibility due to the high variance in RL training.

\paragraph{Evaluations of DQN.}
For DQN agents, we evaluate their robustness using three metrics on Atari games: (1) episode return under a 10-step untargeted PGD attack~\citep{madry2017towards}, (2) episode return under the MinBest~\citep{huang2017adversarial} attack, both of which minimize the probability of selecting the learned optimal action, and (3) Action Certification Rate~(ACR)~\citep{zhang2020robust}, which employs relaxation bounds to estimate the percentage of frames where the learned optimal action is guaranteed to remain unchanged during rollouts under attacks. 

% Please add the following required packages to your document preamble:
% \usepackage{multirow}
% \usepackage{graphicx}
\begin{table}[t]
\caption{Average episode rewards over 50 episodes for baselines and CAR-PPO on MuJoCo tasks. Results include natural rewards, rewards under six types of attacks, and the worst rewards under these attacks. The best results of the algorithm under natural environments and various attacks are highlighted in bold. CAR-PPO significantly outperforms the baselines in worst-case robustness across all four tasks, improves the natural performance of vanilla PPO on Hopper, Walker2d, and Halfcheetah, and achieves comparable natural performance to vanilla PPO on Ant.}
\label{table: mujoco compare}
\resizebox{\columnwidth}{!}{%
\begin{tabular}{c|c|c|c|ccccccc}
\hline \hline
\multirow{2}{*}{\textbf{Env}}         & \multirow{2}{*}{\textbf{$\epsilon$}} & \multirow{2}{*}{\textbf{Method}} & \multirow{2}{*}{\textbf{\begin{tabular}[c]{@{}c@{}}Natural\\  Reward\end{tabular}}} & \multicolumn{7}{c}{\textbf{Attack Reward}}                                                                           \\
                                      &                                      &                                  &                                                                                     & \textbf{Random} & \textbf{Critic} & \textbf{MAD}  & \textbf{RS}   & \textbf{SA-RL} & \textbf{PA-AD} & \textbf{Worst} \\ \hline 
\multirow{6}{*}{\textbf{Hopper}}      & \multirow{6}{*}{0.075}               & PPO                              & 3081                                                                                & 2923            & 2035            & 1763          & 756           & 79             & 823            & 79             \\
                                      &                                      & SA-PPO                           & 3518                                                                                & 2835            & 3662            & 3045          & 1407          & 1476           & 1286           & 1286           \\
                                      &                                      & RADIAL-PPO                       & 3254                                                                                & 3170            & \textbf{3706}   & 2558          & 1307          & 993            & 1696           & 993            \\
                                      &                                      & WocaR-PPO                        & 3629                                                                                & 3546            & 3657            & 3048          & 1171          & 1452           & 2124           & 1171           \\
                                      &                                      & CAR-PPO SGLD (ours)              & 3566                                                                                & 3537            & 3480            & \textbf{3484} & \textbf{1990} & \textbf{2977}  & \textbf{3232}  & \textbf{1990}  \\
                                      &                                      & CAR-PPO PGD (ours)               & \textbf{3711}                                                                       & \textbf{3702}   & 3692            & 3473          & 1652          & 2430           & 2640           & 1652           \\ \hline \hline
\multirow{6}{*}{\textbf{Walker2d}}    & \multirow{6}{*}{0.05}                & PPO                              & 4622                                                                                & 4628            & 4584            & 4507          & 1062          & 719            & 336            & 336            \\
                                      &                                      & SA-PPO                           & \textbf{4875}                                                                       & \textbf{4907}   & 5029            & \textbf{4833} & 2775          & 3356           & 997            & 997            \\
                                      &                                      & RADIAL-PPO                       & 2531                                                                                & 2170            & 2063            & 2316          & 1239          & 426            & 1353           & 426            \\
                                      &                                      & WocaR-PPO                        & 4226                                                                                & 4347            & 4342            & 4373          & 3358          & 2385           & 1064           & 1064           \\
                                      &                                      & CAR-PPO SGLD (ours)              & 4622                                                                                & 4609            & 4684            & 4498          & 4242          & \textbf{4397}  & 3134           & 3134           \\
                                      &                                      & CAR-PPO PGD (ours)               & 4755                                                                                & 4848            & \textbf{5044}   & 4637          & \textbf{4379} & 4307           & \textbf{4303}  & \textbf{4303}  \\ \hline \hline
\multirow{6}{*}{\textbf{Halfcheetah}} & \multirow{6}{*}{0.15}                & PPO                              & 5048                                                                                & 4463            & 3281            & 918           & 1049          & -213           & -69            & -213           \\
                                      &                                      & SA-PPO                           & 4780                                                                                & 4983            & 5035            & 3759          & 2727          & 1443           & 1511           & 1443           \\
                                      &                                      & RADIAL-PPO                       & 4739                                                                                & 4642            & 4546            & 2961          & 1327          & 1522           & 1968           & 1327           \\
                                      &                                      & WocaR-PPO                        & 4723                                                                                & 4798            & 4846            & 4543          & 3302          & 2270           & 2498           & 2270           \\
                                      &                                      & CAR-PPO SGLD (ours)              & 4599                                                                                & 4574            & 4731            & 4348          & 3888          & 3908           & 4032           & 3888           \\
                                      &                                      & CAR-PPO PGD (ours)               & \textbf{5053}                                                                       & \textbf{5058}   & \textbf{5065}   & \textbf{5051} & \textbf{5140} & \textbf{4860}  & \textbf{4942}  & \textbf{4860}  \\ \hline \hline
\multirow{6}{*}{\textbf{Ant}}         & \multirow{6}{*}{0.15}                & PPO                              & \textbf{5381}                                                                       & \textbf{5329}   & 4696            & 1768          & 1097          & -1398          & -3107          & -3107          \\
                                      &                                      & SA-PPO                           & 5367                                                                                & 5217            & \textbf{5012}   & \textbf{5114} & \textbf{4396} & \textbf{4227}  & 2355           & 2355           \\
                                      &                                      & RADIAL-PPO                       & 4358                                                                                & 4309            & 3628            & 4205          & 3742          & 2364           & 3261           & 2364           \\
                                      &                                      & WocaR-PPO                        & 4069                                                                                & 3911            & 3978            & 3689          & 3176          & 1868           & 1830           & 1830           \\
                                      &                                      & CAR-PPO SGLD (ours)              & 5056                                                                               & 5007            & 4864            & 4468          & 3755 & 3088  & 3763  & 3088           \\
                                      &                                      & CAR-PPO PGD (ours)               & 5029                                                                                & 5006            & 4786            & 4549          & 3553          & 3099           & \textbf{3911}  & \textbf{3099}  \\ \hline \hline
\end{tabular}%
}
\end{table}

\paragraph{Evaluations of PPO.}
We assess the robustness of PPO agents using six attacks on MuJoCo tasks: (1) random attack, adding uniform random noise to state observations; (2) critic attack~\citep{pattanaik2017robust}, conducted based on the action-value functions; (3) MAD~(maximal action difference) attack~\citep{zhang2020robust}, maximizing the discrepancy between policies in clean and perturbed states; (4) RS~(robust sarsa) attack~\citep{zhang2020robust}, training a robust action-value function and then performing critic-based attacks based on it; (5) SA-RL~\citep{zhang2021robust}, which employs a learned adversarial agent to perturb the state; (6) PA-AD~\citep{sun2021strongest}, which trains an adversarial agent to select a perturbed direction and then uses FGSM to attack along that direction.

\begin{figure*}[t]
    \centering
        \begin{subfigure}
        \centering
        \includegraphics[width=0.45\textwidth]{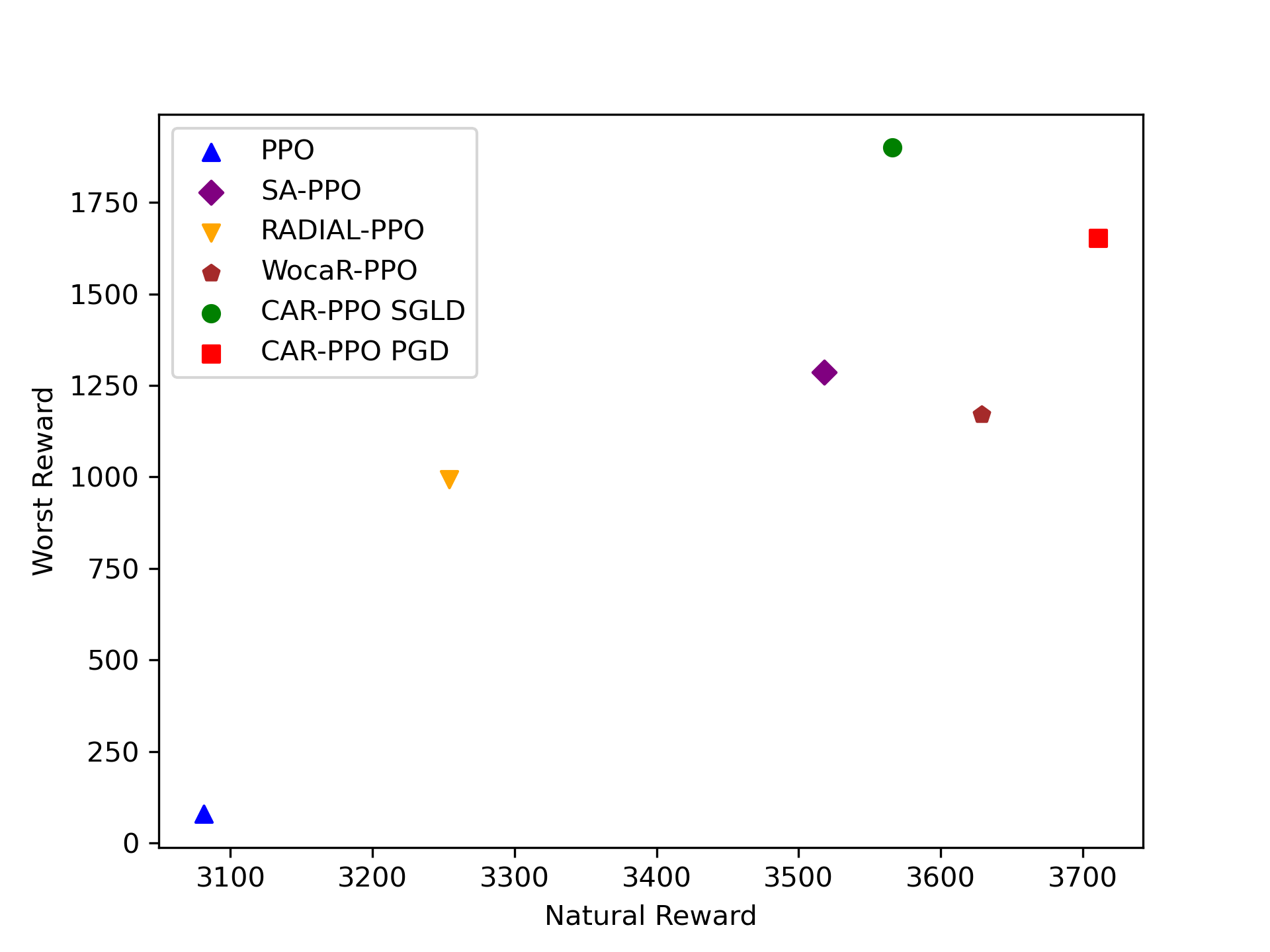}
    \end{subfigure}
    \begin{subfigure}
        \centering
        \includegraphics[width=0.45\textwidth]{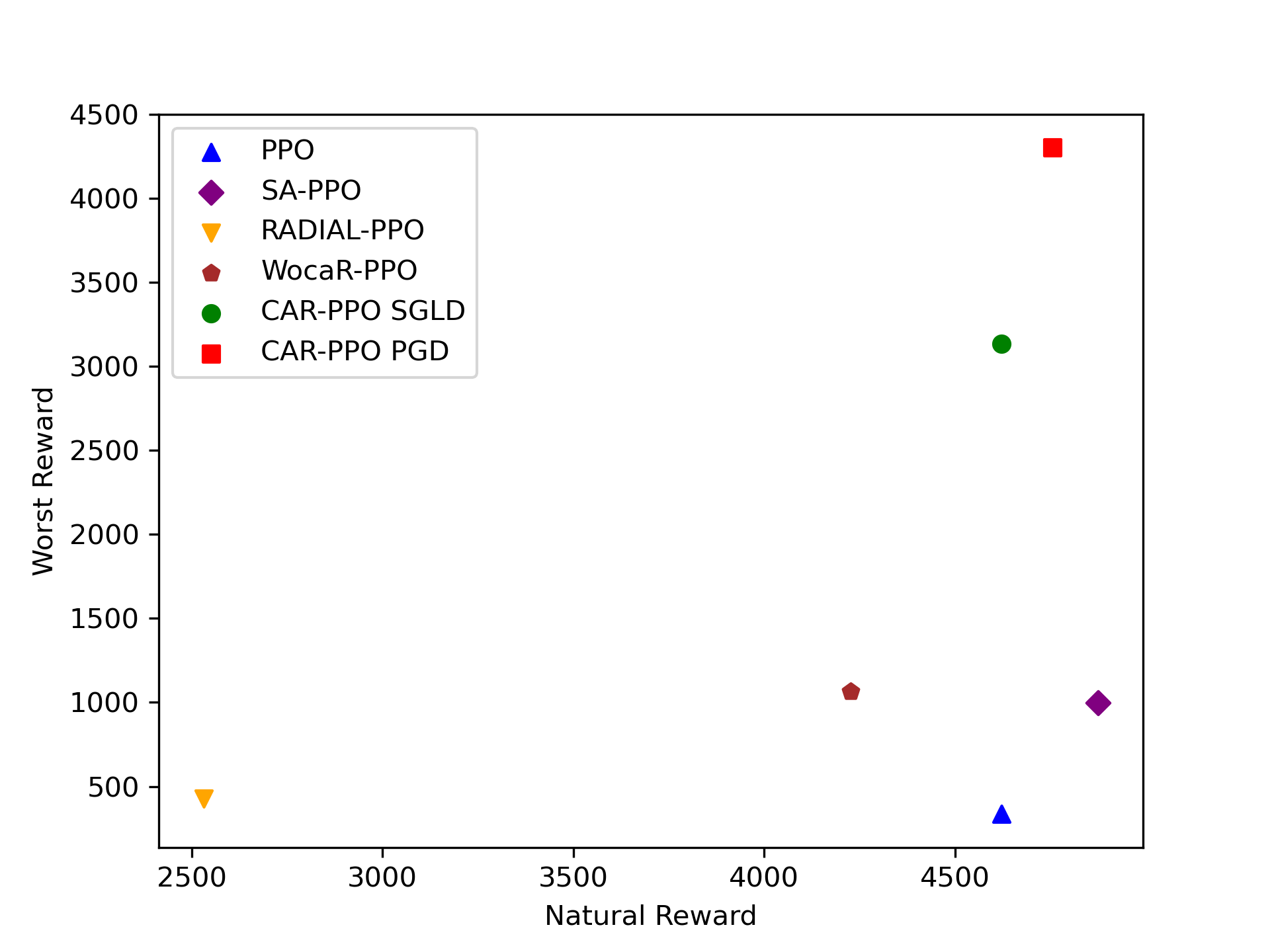}
    \end{subfigure}
    \\
    \vskip -0.1in
    \begin{minipage}{0.48\textwidth}
        \centering
        \quad \scriptsize{Hopper}
    \end{minipage}
    \begin{minipage}{0.48\textwidth}
        \centering
        \scriptsize{Walker2d}
    \end{minipage}
    \\
    \centering
        \begin{subfigure}
        \centering
        \includegraphics[width=0.45\textwidth]{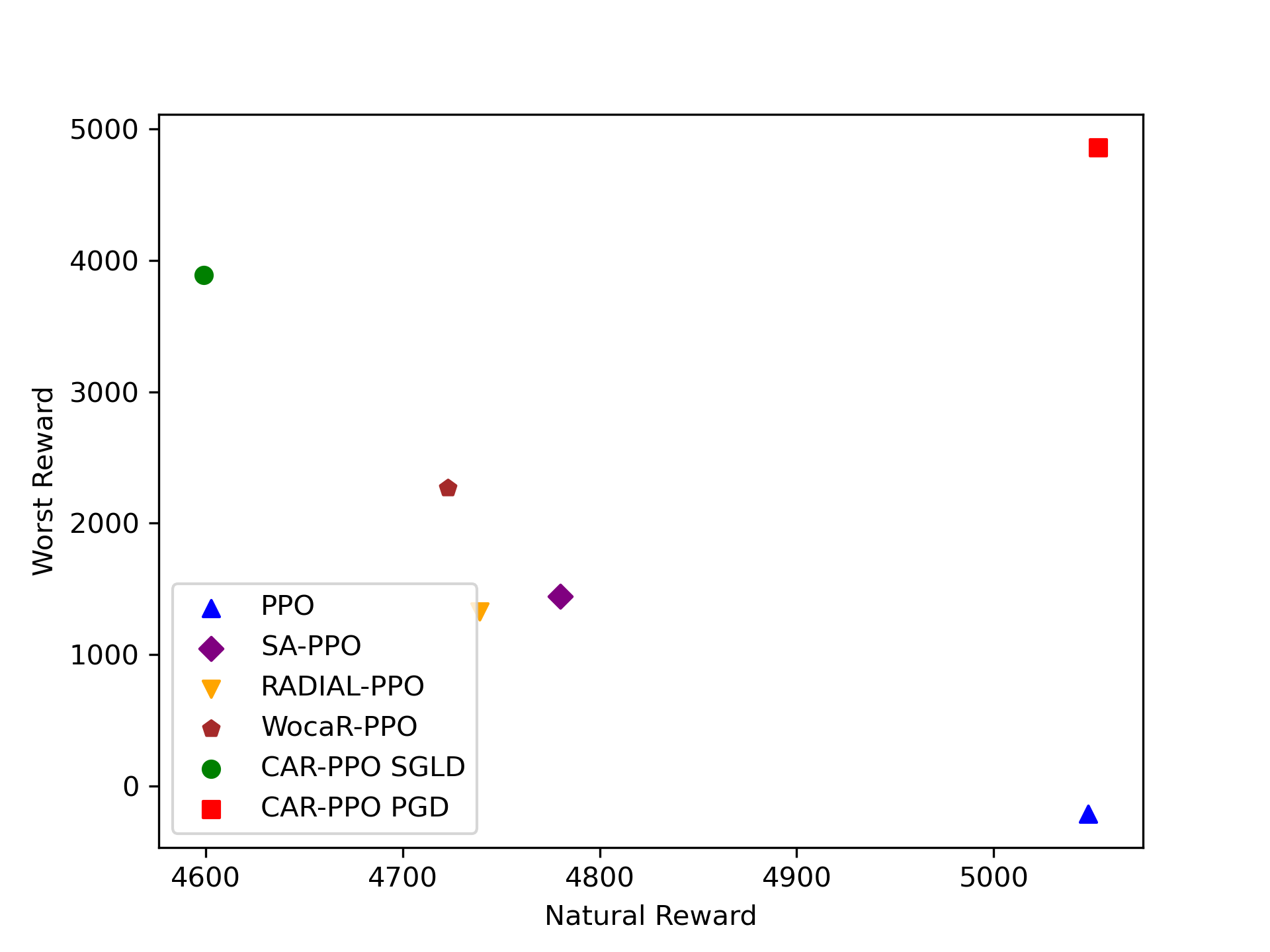}
    \end{subfigure}
    \begin{subfigure}
        \centering
        \includegraphics[width=0.45\textwidth]{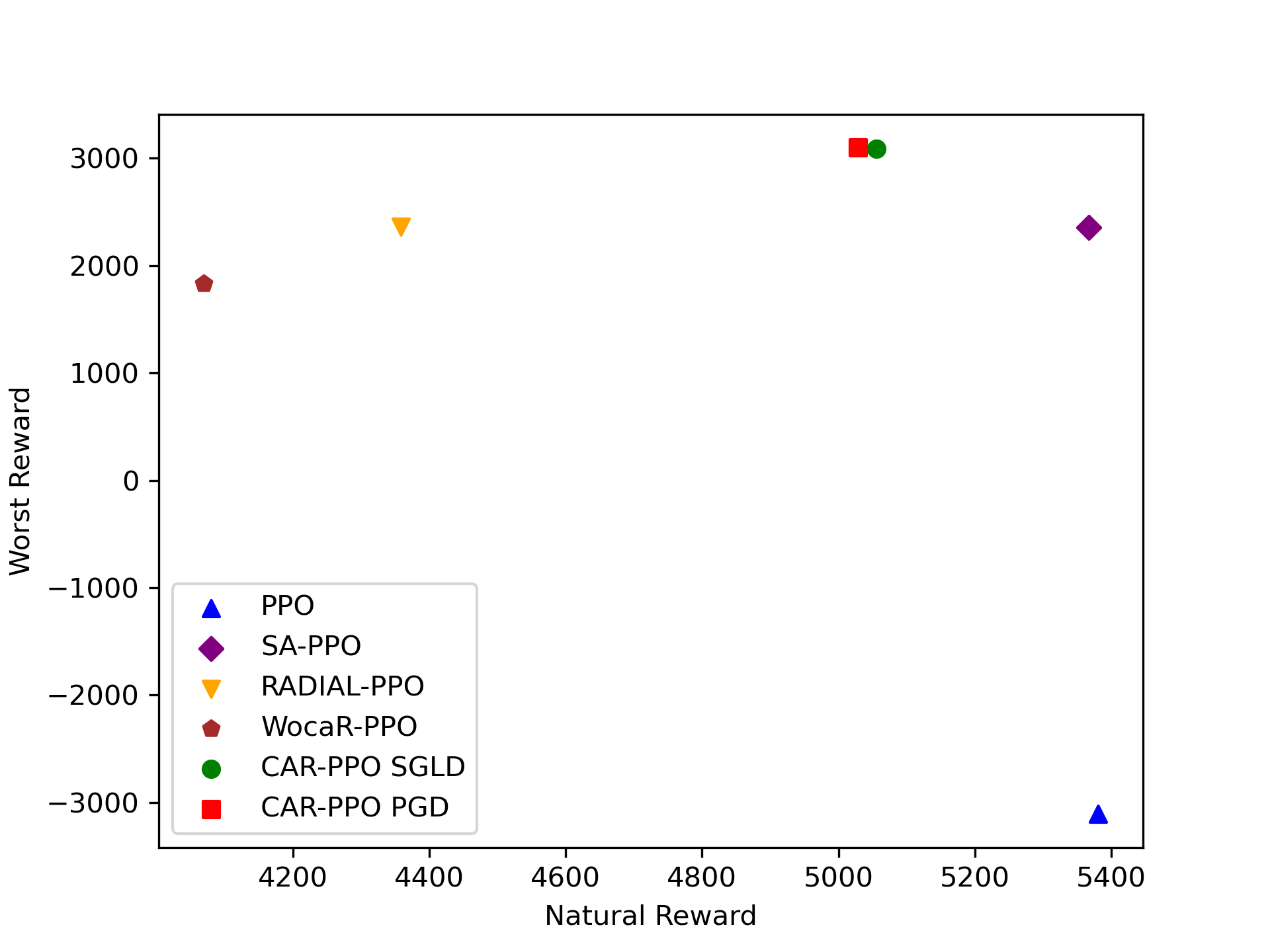}
    \end{subfigure}
    \\
    \vskip -0.1in
    \begin{minipage}{0.48\textwidth}
        \centering
        \quad \scriptsize{Halfcheetah}
    \end{minipage}
    \begin{minipage}{0.48\textwidth}
        \centering
        \scriptsize{Ant}
    \end{minipage}
    \caption{Natural reward and worst-case robustness under various attacks in MuJoCo.}
    \label{fig: natural and worst rewards}
\end{figure*}

\paragraph{CAR-DQN.}
CAR-DQN is implemented based on Double Dueling DQN~\citep{van2016deep,wang2016dueling}, and all baselines and CAR-DQN are trained for 4.5 million steps, based on the same standard model released by~\cite{zhang2020robust}, which has been trained for 6 million steps. We increase the attack $\epsilon$ from $0$ to $1/255$ during the first 4 million steps, using the same smoothed schedule as in~\cite{zhang2020robust, oikarinen2021robust, liang2022efficient}, and then continue training with a fixed $\epsilon$ for the remaining 0.5 million steps. We use Huber loss to replace the absolute value function and separately apply classic gradient-based methods (PGD) and cheap convex relaxation (IBP) for resolving the inner optimization in $\mathcal{L}_{car}^{soft}(\theta)$.
For CAR-DQN with the PGD solver, hyperparameters are the same as those of SA-DQN~\citep{zhang2020robust}. For CAR-DQN with the IBP solver, we update the target network every 2000 steps, set the learning rate to $1.25\times 10^{-4}$, use a batch size of $32$, and set the exploration $\epsilon_{exp}$-end to $0.01$, soft coefficient $\lambda=1.0$ and discount factor to $0.99$. The replay buffer has a capacity of $2\times 10^{5}$, and we use the Adam optimizer~\citep{kingma2014adam} with $\beta_1=0.9$ and $\beta_2=0.999$. The overall CAR-DQN algorithm process and additional details are shown in Appendix~\ref{app: additional algorithm details}.

\begin{figure*}[t]
    \centering
    % \rotatebox{90}{\scriptsize{\qquad \qquad \qquad \quad Robust}}
    \begin{subfigure}
        \centering
        \includegraphics[width=0.45\textwidth]{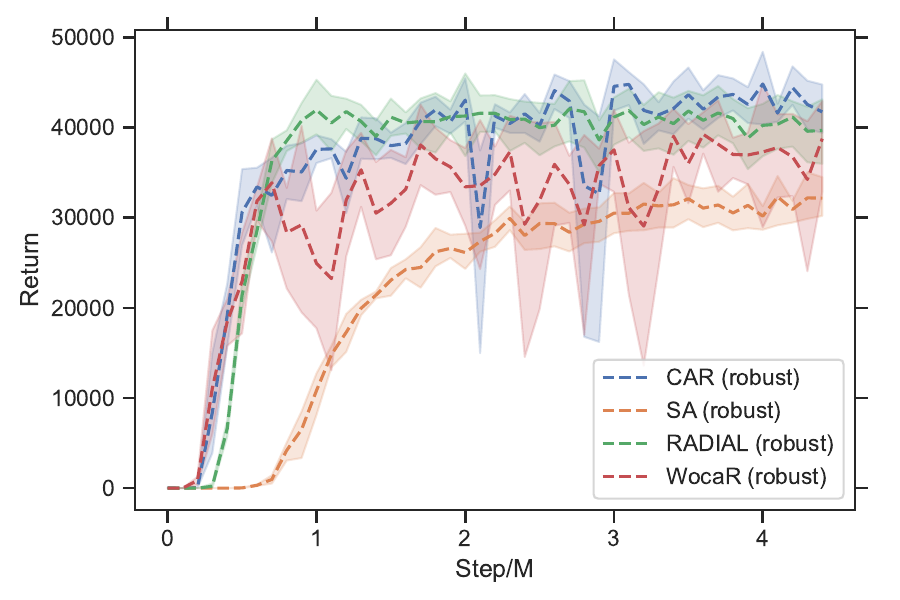}
    \end{subfigure}
    \begin{subfigure}
        \centering
        \includegraphics[width=0.45\textwidth]{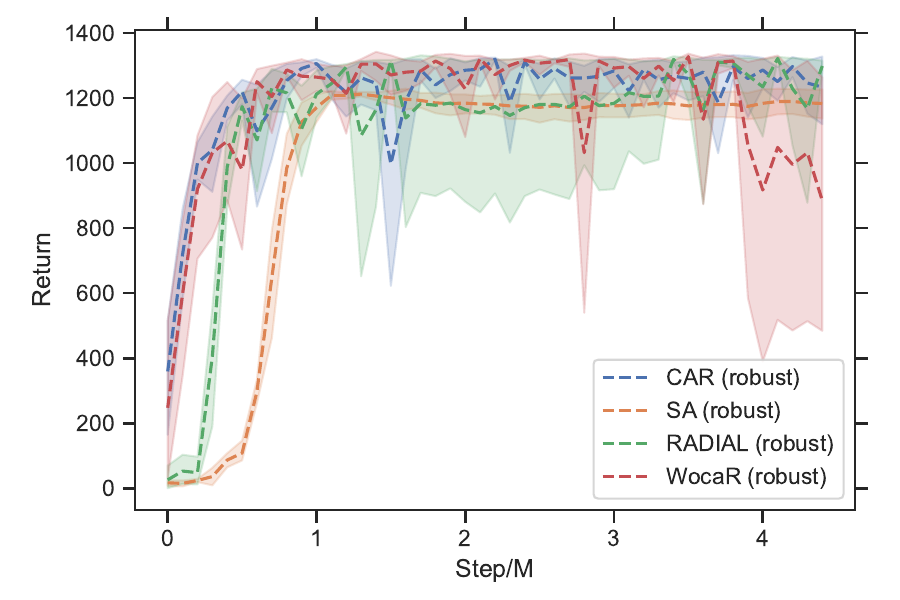}
    \end{subfigure}
    \\
    \vskip -0.1in
    \begin{minipage}{0.48\textwidth}
        \centering
        \qquad \scriptsize{RoadRunner}
    \end{minipage}
    \begin{minipage}{0.48\textwidth}
        \centering
        \quad \scriptsize{BankHeist}
    \end{minipage}
    \\
    \vskip 0.05in
    % \rotatebox{90}{\scriptsize{\qquad \qquad \qquad \quad Robust}}
    \begin{subfigure}
        \centering
        \includegraphics[width=0.45\textwidth]{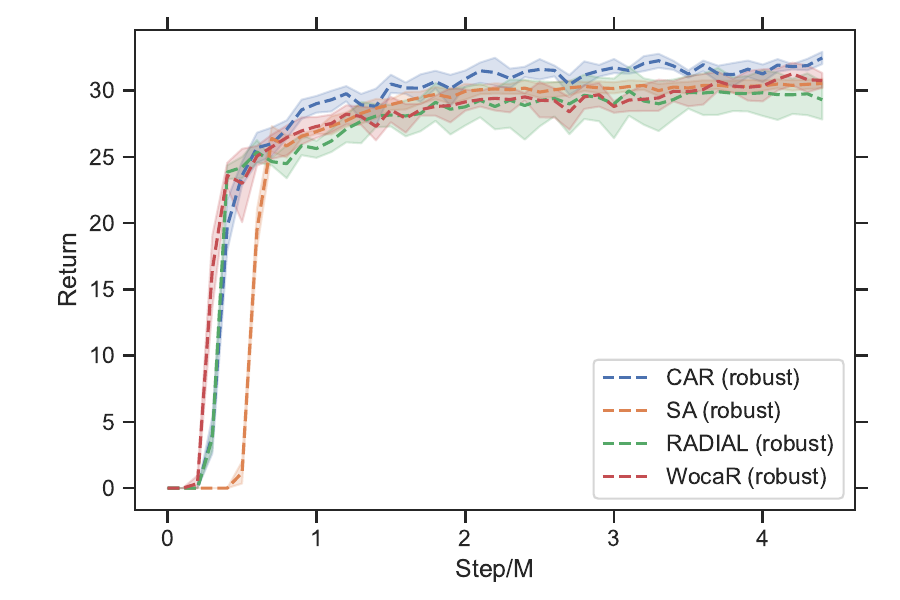}
    \end{subfigure}
    \begin{subfigure}
        \centering
        \includegraphics[width=0.45\textwidth]{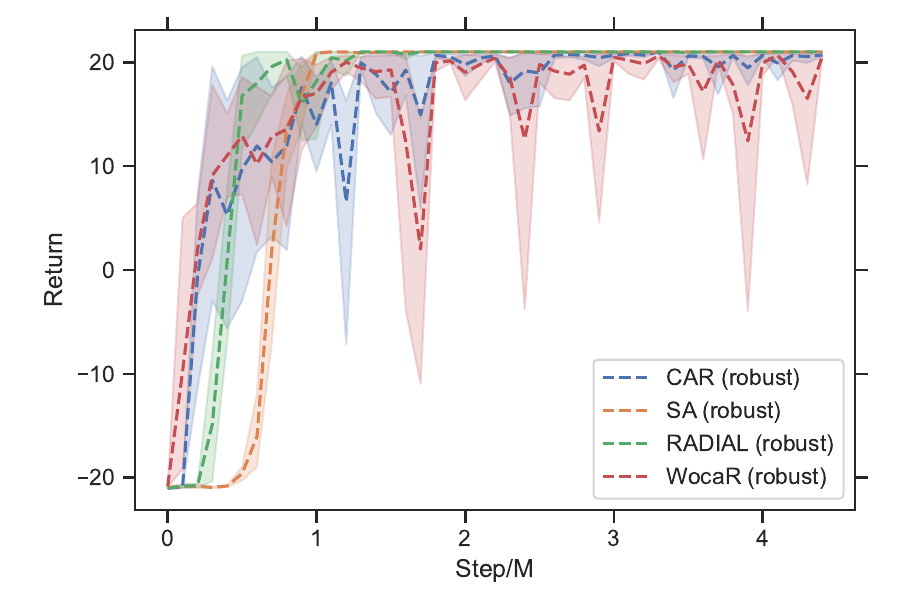}
    \end{subfigure}
    \\
    \vskip -0.1in
    \begin{minipage}{0.48\textwidth}
        \centering
        \setlength{\parindent}{1.3em}
        \quad \scriptsize{Freeway}
    \end{minipage}
    \begin{minipage}{0.48\textwidth}
        \centering
        \setlength{\parindent}{0.1em}
        \quad \scriptsize{Pong}
    \end{minipage}
    \caption{Robust episode rewards of baselines and CAR-DQN under strong PGD attacks on 4 Atari games.
    Shaded regions are computed over 5 random seeds. CAR-DQN demonstrates superior robust performance in all environments.}
    \label{fig: robustness rewards during training}
    \vskip -0.05in
\end{figure*}

\paragraph{CAR-PPO.}
SA-PPO, WocaR-PPO, and CAR-PPO are trained for 2 million steps (976 iterations) on Hopper, Walker2d, and Halfcheetah, and 10 million steps (4882 iterations) on Ant for convergence. RADIAL-PPO are trained for 4 million steps (2000 iterations) on Hopper, Walker2d, and Halfcheetah following the official implementation and 10 million steps (4882 iterations) on Ant. We run 2048 simulation steps per iteration and use a simple MLP network for all PPO agents. The attack budget $\epsilon$ is linearly increased from $0$ to the target value during the first 732 iterations on Hopper, Walker2d, and Halfcheetah, and 3662 iterations on Ant, before continuing with the target value for the remaining iterations. This scheduler is aligned with~\cite{zhang2020robust, liang2022efficient}. We combine vanilla PPO with CAR loss for efficient training, using a regularization weight $\kappa$. The weight $\kappa$ is chosen from $\{0.1, 0.3, 1.0\}$. The soft coefficient $\lambda$ in $\mathcal{L}_{car}^{soft}(\theta)$ is chosen from $\{ 10, 100, 1000 \}$ on Hopper, Walker2d and Halfcheetah and $\{ 100, 1000, 10000 \}$ on Ant to ensure stable training. We separately apply PGD and SGLD~\citep{gelfand1991recursive} to resolve the inner optimization in $\mathcal{L}_{car}^{soft}(\theta)$. We run 10 iterations with step size $\epsilon/10$ for both methods and set the temperature parameter $\beta=1\times 10^{-5}$ for SGLD. The overall CAR-PPO algorithm process and additional implementation details are shown in Appendix~\ref{app: additional algorithm details}.

\subsection{Comparison Results}\label{sec: Comparison}

\begin{figure*}[t]
    \centering
    % \rotatebox{90}{\scriptsize{\qquad \qquad \qquad \quad Robust}}
    \begin{subfigure}
        \centering
        \includegraphics[width=0.45\textwidth]{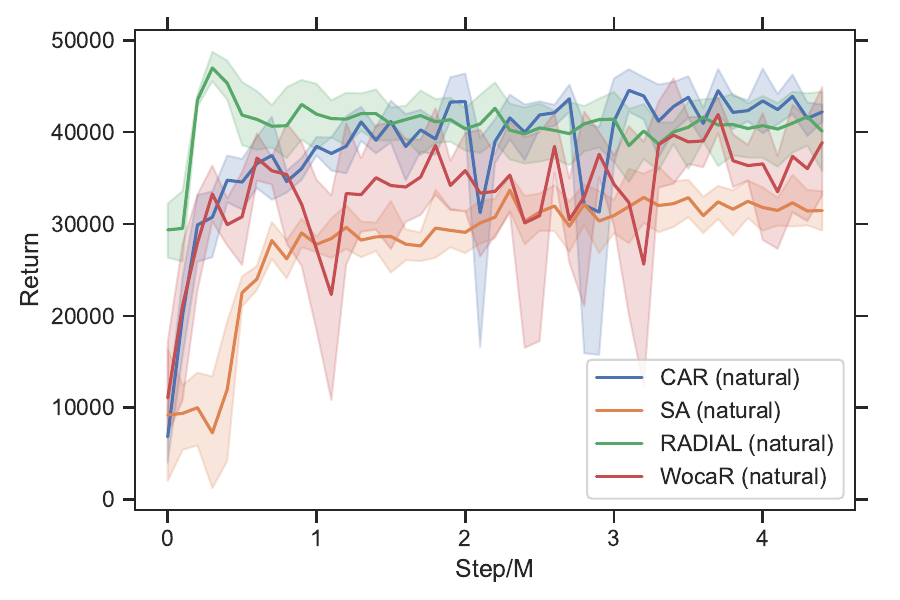}
    \end{subfigure}
    \begin{subfigure}
        \centering
        \includegraphics[width=0.45\textwidth]{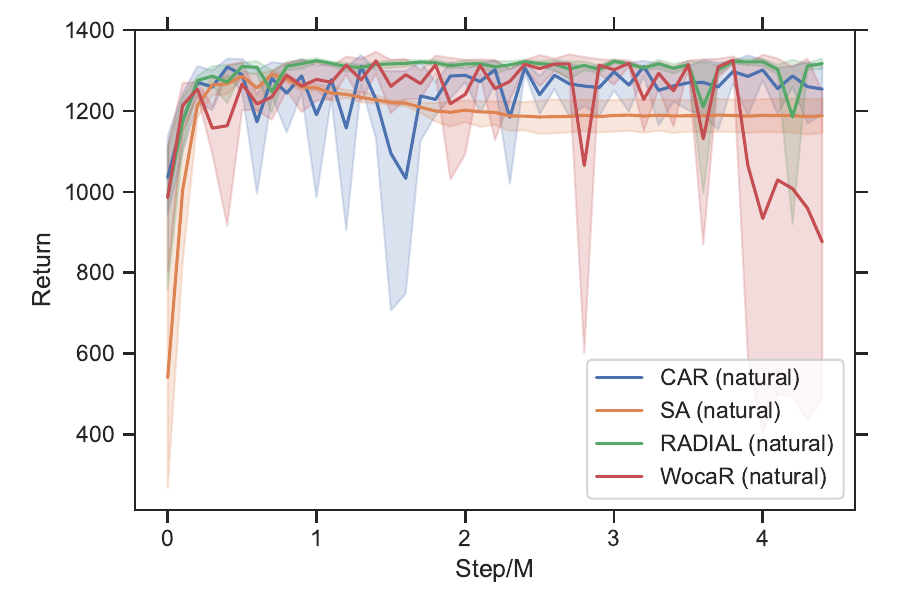}
    \end{subfigure}
    \\
    \vskip -0.1in
    \begin{minipage}{0.48\textwidth}
        \centering
        \qquad \scriptsize{RoadRunner}
    \end{minipage}
    \begin{minipage}{0.48\textwidth}
        \centering
        \quad \scriptsize{BankHeist}
    \end{minipage}
    \\
    \vskip 0.05in
    % \rotatebox{90}{\scriptsize{\qquad \qquad \qquad \quad Robust}}
    \begin{subfigure}
        \centering
        \includegraphics[width=0.45\textwidth]{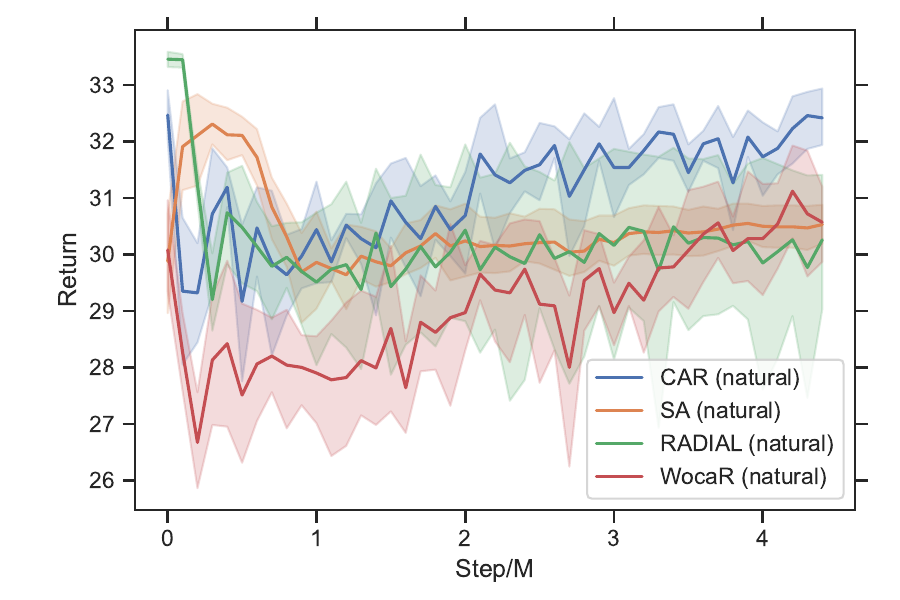}
    \end{subfigure}
    \begin{subfigure}
        \centering
        \includegraphics[width=0.45\textwidth]{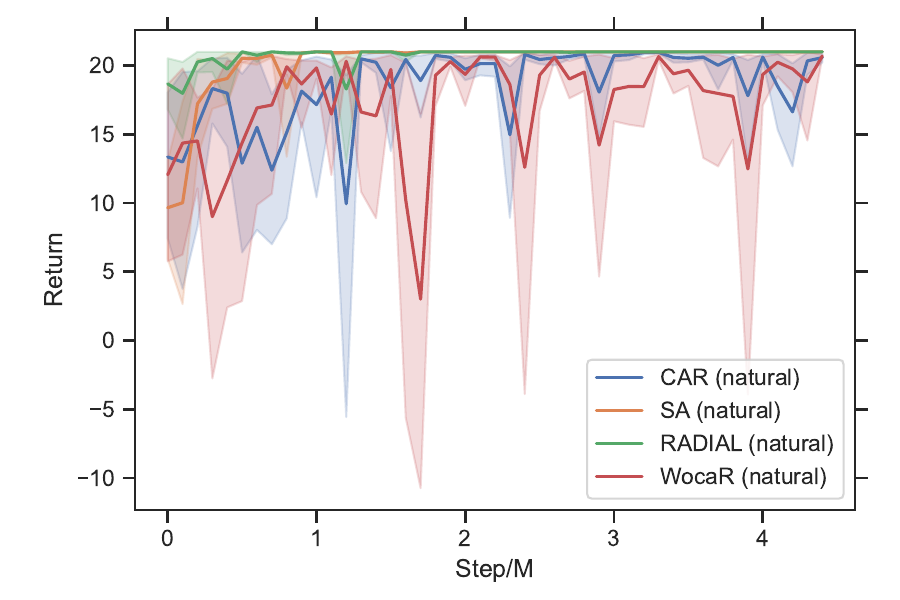}
    \end{subfigure}
    \\
    \vskip -0.1in
    \begin{minipage}{0.48\textwidth}
        \centering
        \setlength{\parindent}{1.3em}
        \quad \scriptsize{Freeway}
    \end{minipage}
    \begin{minipage}{0.48\textwidth}
        \centering
        \setlength{\parindent}{0.1em}
        \quad \scriptsize{Pong}
    \end{minipage}
    \caption{Natural episode rewards of baselines and CAR-DQN without attacks on 4 Atari games.
    Shaded regions are computed over 5 random seeds. CAR-DQN demonstrates superior natural performance in all environments.}
    \label{fig: natural rewards during training}
    \vskip -0.05in
\end{figure*}

\paragraph{Evaluation on Atari.} 
Table \ref{table: compare2} presents the natural and robust performance of DQN agents, all of which are trained using a perturbation radius of $\epsilon=1/255$ and evaluated under attacks with $\epsilon=1/255,\ 3/255,\ 5/255$. For the $\epsilon=1/255$ attacks, it is worth noting that CAR-DQN agents exhibit superior performance compared to baselines in the most challenging RoadRunner environment, achieving significant improvements in both natural and robust rewards. In the other three games, CAR-DQN matches the performance of the state-of-the-art baseline well. Our CAR-DQN loss function coupled with the PGD solver, achieves an impressive return of around 45,000 on the RoadRunner environment, significantly surpassing SA-DQN with the PGD approach. It also attains $60\%$ higher robust rewards under the MinBest attack on the Freeway game.
For attacks with increasing perturbation radius $\epsilon=3/255,\ 5/255$, We can see that CAR-DQN agents achieve superior performance in Pong and BankHeist, and attain comparable performance in Freeway. Notably, in more complex environments like BankHeist and RoadRunner, SA-DQN with the PGD solver fails to maintain robustness under larger perturbations. In contrast, CAR-DQN with the PGD solver, trained under the small perturbation budget, maintains strong robust performance even with larger attack budgets. 
In addition, we also notice that RADIAL-DQN outperforms CAR-DQN under large perturbation budgets on RoadRunner, for which extensive analysis and comparisons are provided in Appendix~\ref{app: exp with increasing perturbation radius}. 
We also compare the two solvers and find that PGD shows weaker robustness than convex relaxation, particularly failing to ensure the ACR computed with relaxation bounds. This discrepancy can be attributed to that the PGD solver offers a lower bound surrogate function of the loss, while the IBP solver provides an upper bound.
% The PGD version achieves a relatively worse robust performance than the convex relaxation version and cannot guarantee the ACR computed with relaxation bounds. We dissect this result from an optimization perspective that the PGD solver essentially provides a lower bound surrogate function of the loss while the IBP solver gives an upper bound, which leads to weaker robustness of the PGD version compared to IBP.
% We test the average natural and robustness performance under 10-steps untargeted PGD attacks over 20 episodes every 0.1 million steps during training on 5 seeds for SA-DQN with CROWN-IBP solver, RADIAL-DQN, WocaR-DQN and our CAR-DQN with IBP solver on four Atari games. 

\paragraph{Evaluation on MuJoCo.}
Table~\ref{table: mujoco compare} showcases the natural and robust performance of CAR-PPO and baselines. Hopper agents are trained and attacked with a perturbation radius of $\epsilon=0.075$, Walker2d agents are with $\epsilon=0.05$, and Halfcheetah and Ant agents are with $\epsilon=0.15$. \emph{Notably, CAR-PPO agents achieve the best robustness in the worst case across all four environments, with significant 55\%,\ 304\%,\ 114\%,\ 31\% improvement on Hopper, Walker2d, Halfcheetah, and Ant, respectively.}
Meanwhile, the natural performance of CAR-PPO agents also matches the best level in each environment. CAR-PPO agents with the PGD solver achieve the highest natural performance on Hopper and Halfcheetah, outperform vanilla PPO agents on Walker2d, and attain comparable natural rewards on Ant. 
While CAR-PPO is slightly $2.5\%,\ 6.3\%$ lower in natural performance than SA-PPO on Walker2d and Ant, it respectively achieves $332\%$ and $32\%$ higher worst-case robustness. The overall performance of CAR-PPO with PGD is better than CAR-PPO with SGLD, underscoring the importance of the PGD method for training robust PPO agents. More intuitive comparisons of natural and worst-case robust performance are shown in Fig.~\ref{fig: natural and worst rewards}.

% \begin{figure*}[ht]
%     \centering
%         \begin{subfigure}
%         \centering
%         \includegraphics[width=0.23\textwidth]{fig/hopper_natural_robust.png}
%     \end{subfigure}
%     \begin{subfigure}
%         \centering
%         \includegraphics[width=0.23\textwidth]{fig/walker_natural_robust.png}
%     \end{subfigure}
%         \begin{subfigure}
%         \centering
%         \includegraphics[width=0.23\textwidth]{fig/halfcheetah_natural_robust.png}
%     \end{subfigure}
%     \begin{subfigure}
%         \centering
%         \includegraphics[width=0.23\textwidth]{fig/ant_natural_robust.png}
%     \end{subfigure}
%     \\
%     \vskip -0.1in
%     \begin{minipage}{0.24\textwidth}
%         \centering
%         % \setlength{\parindent}{1.0em}
%         \quad \scriptsize{Hopper}
%     \end{minipage}
%     \begin{minipage}{0.24\textwidth}
%         \centering
%         % \setlength{\parindent}{0em}
%         \scriptsize{Walker2d}
%     \end{minipage}
%     \begin{minipage}{0.24\textwidth}
%         \centering
%         % \setlength{\parindent}{1.0em}
%         \quad \scriptsize{Halfcheetah}
%     \end{minipage}
%     \begin{minipage}{0.24\textwidth}
%         \centering
%         % \setlength{\parindent}{0em}
%         \scriptsize{Ant}
%     \end{minipage}
%     \caption{Natural and worst rewards under attacks in MuJoCo.}
%     \label{fig: natural and worst rewards}
% \end{figure*}

\begin{table}[t]
\caption{Performance of DQN with different Bellman $p$-error.}
\label{table: p-error}
\vskip 0.15in
\centering
\resizebox{0.85\columnwidth}{!}{%
\begin{tabular}{c|c|c|ccc}
\hline
Environment                 & Norm   & Natural   & PGD              & MinBest          & ACR     \\ \hline
\multirow{3}{*}{Pong}       & $L^1$      & $\bf 21.0 \pm 0.0$   & $-21.0 \pm 0.0$  & $-21.0 \pm 0.0$  & $0$     \\
                            & $L^2$      & $\bf 21.0 \pm 0.0$   & $-21.0 \pm 0.0$  & $-20.8 \pm 0.1$  & $0$     \\
                            & $L^\infty$ & $\bf 21.0 \pm 0.0$   & $\bf 21.0 \pm 0.0$   & $\bf 21.0 \pm 0.0$   & $0.985$ \\ \hline
\multirow{3}{*}{Freeway}    & $L^1$      & $\bf 33.9 \pm 0.1$   & $0.0 \pm 0.0$        & $0.0 \pm 0.0$        & $0$     \\
                            & $L^2$      & $21.8 \pm 0.3$   & $21.7 \pm 0.3$   & $22.1 \pm 0.3$   & $0$     \\
                            & $L^\infty$ & $33.3 \pm 0.1$   & $\bf 33.2 \pm 0.1$   & $\bf 33.2 \pm 0.1$   & $0.981$ \\ \hline
\multirow{3}{*}{BankHeist}  & $L^1$      & $1325.5 \pm 5.7$ & $27.0 \pm 2.0$   & $0.0 \pm 0.0$        & $0$     \\
                            & $L^2$      & $1314.5 \pm 4.0$ & $18.5 \pm 1.5$   & $22.5 \pm 2.6$   & $0$     \\
                            & $L^\infty$ & $\bf 1356.0 \pm 1.7$  & $\bf 1356.5 \pm 1.1$  & $\bf 1356.5 \pm 1.1$  & $0.969$ \\ \hline
\multirow{3}{*}{RoadRunner} & $L^1$      & $43795 \pm 1066$ & $0 \pm 0$        & $0 \pm 0$        & $0$     \\
                            & $L^2$      & $30620 \pm 990$  & $0 \pm 0$        & $0 \pm 0$        & $0$     \\
                            & $L^\infty$ & $\bf 49500 \pm 2106$ & $\bf 48230 \pm 1648$ & $\bf 48050 \pm 1642$ & $0.947$ \\ \hline
\end{tabular}%
}
\vskip -0.1in
\end{table}

\begin{figure*}[t]
    \centering
    % \rotatebox{90}{\scriptsize{\qquad \qquad \qquad \quad Robust}}
    \begin{subfigure}
        \centering
        \includegraphics[width=0.45\textwidth]{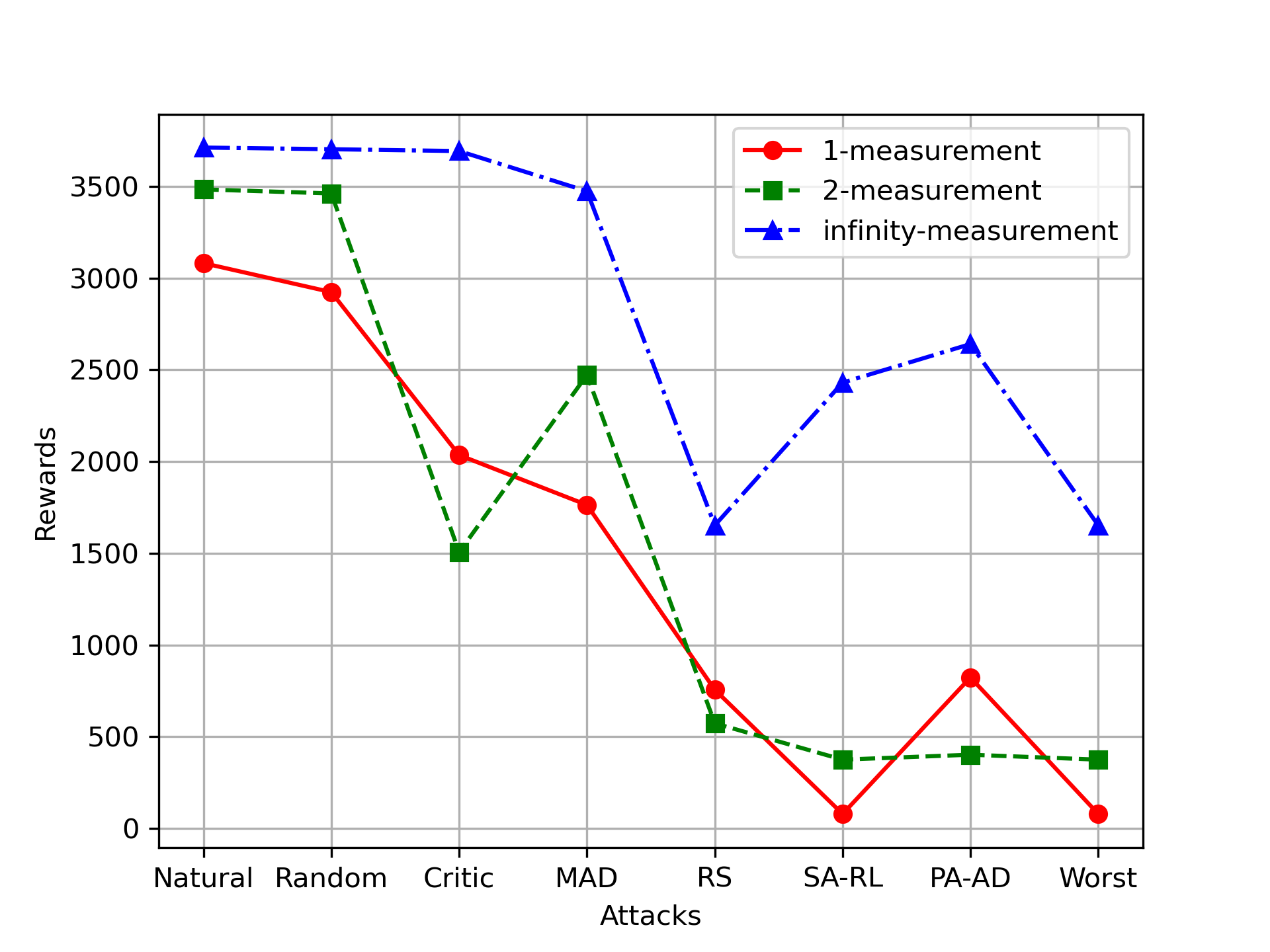}
    \end{subfigure}
    \begin{subfigure}
        \centering
        \includegraphics[width=0.45\textwidth]{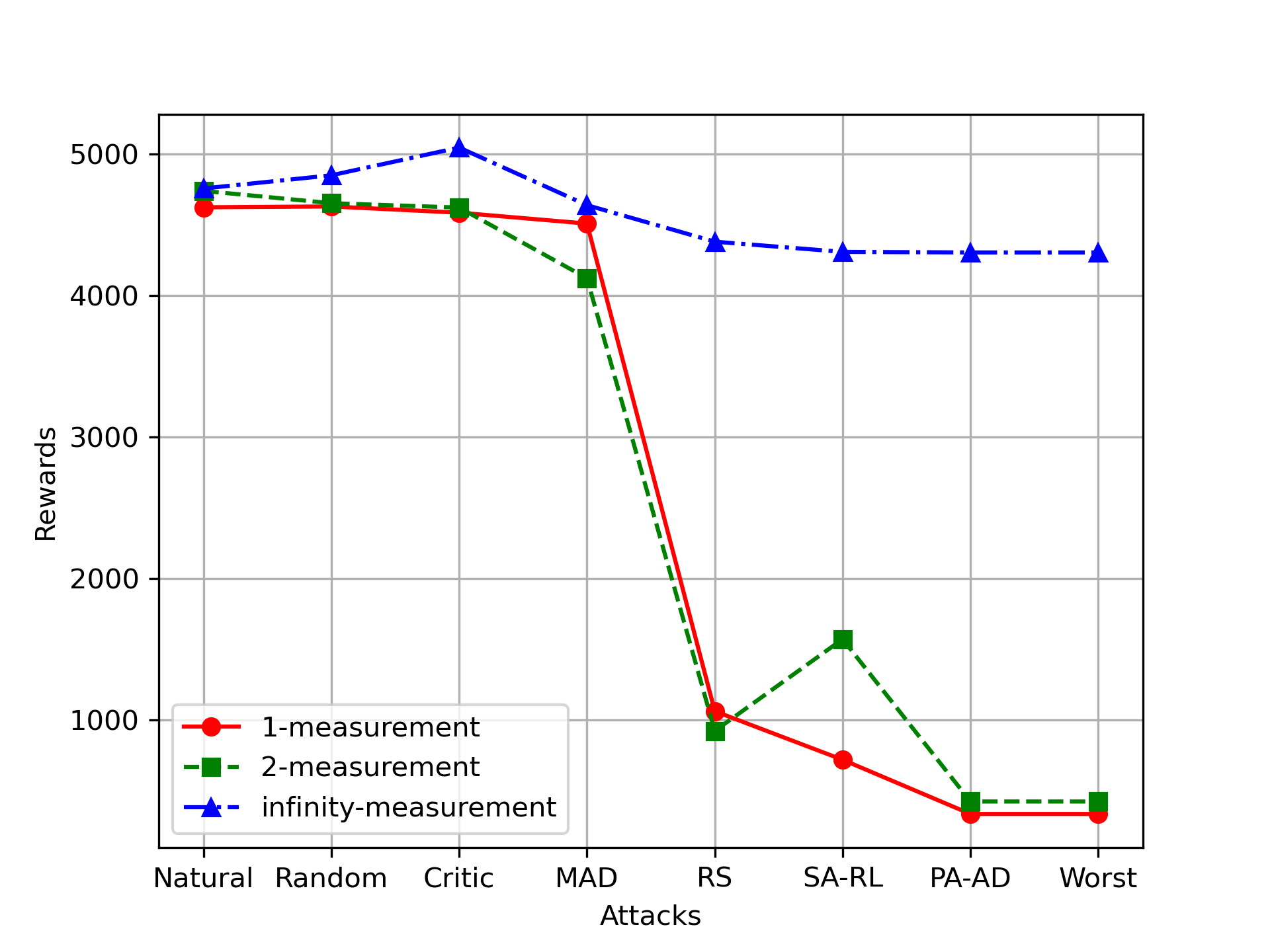}
    \end{subfigure}
    \\
    \vskip -0.1in
    \begin{minipage}{0.48\textwidth}
        \centering
        \quad \scriptsize{Hopper}
    \end{minipage}
    \begin{minipage}{0.48\textwidth}
        \centering
        \scriptsize{Walker2d}
    \end{minipage}
    \\
    \vskip 0.05in
    % \rotatebox{90}{\scriptsize{\qquad \qquad \qquad \quad Robust}}
    \begin{subfigure}
        \centering
        \includegraphics[width=0.45\textwidth]{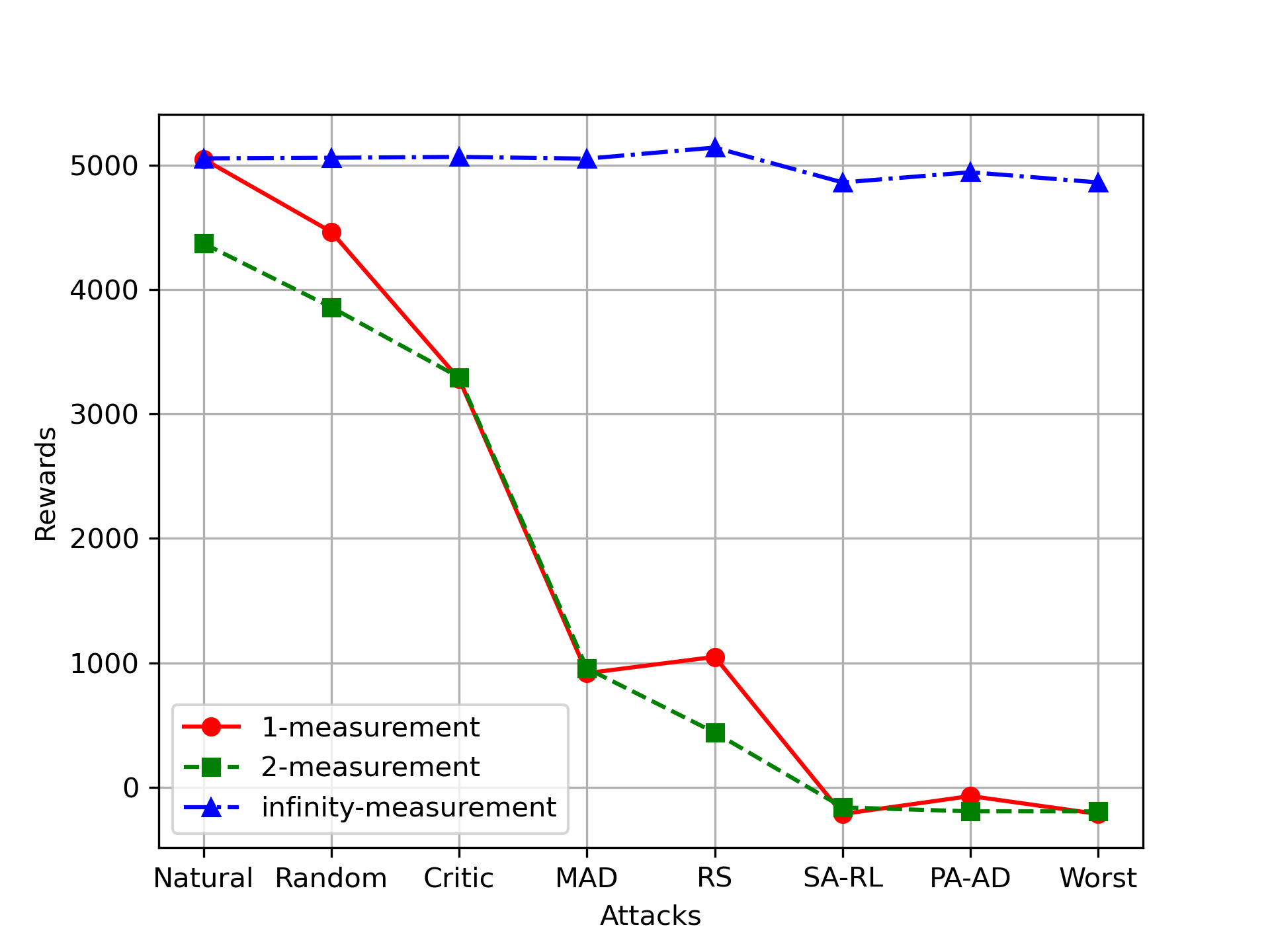}
    \end{subfigure}
    \begin{subfigure}
        \centering
        \includegraphics[width=0.45\textwidth]{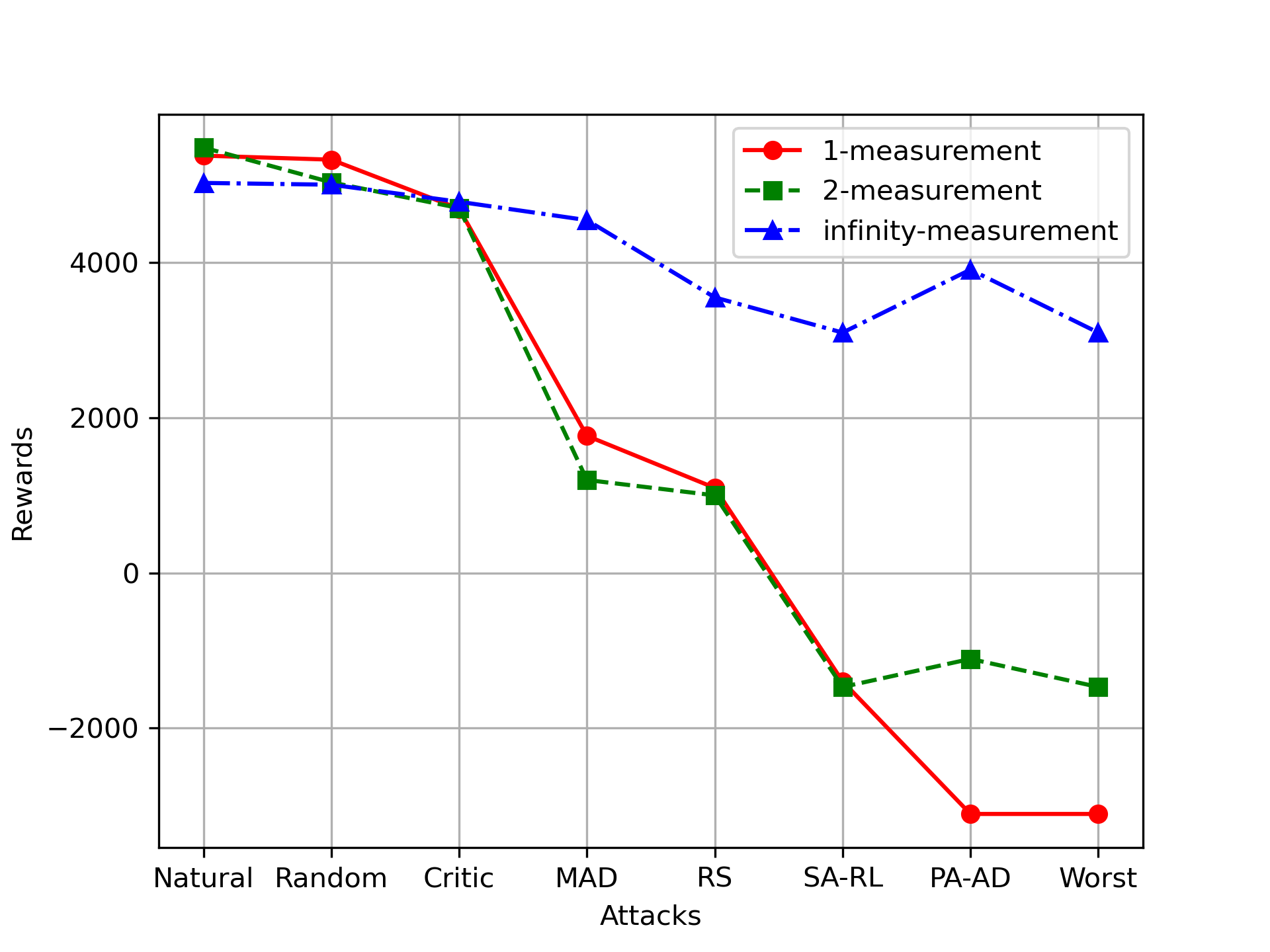}
    \end{subfigure}
    \\
    \vskip -0.1in
    \begin{minipage}{0.48\textwidth}
        \centering
        \quad \scriptsize{Halfcheetah}
    \end{minipage}
    \begin{minipage}{0.48\textwidth}
        \centering
        \setlength{\parindent}{-0.6em}
        \scriptsize{Ant}
    \end{minipage}
    
    \caption{Performance of PPO with different $k$-measurement errors.}
    \label{fig: k measurement}
    \vskip -0.05in
\end{figure*}

\paragraph{Consistency in Natural and PGD Attack Returns on Atari.} 
Figure~\ref{fig:consitency} records the natural and PGD attack returns of CAR-DQN agents during training, showcasing strong alignment between natural performance and robustness across all environments. This consistency supports our theory that the ORP is aligned with the Bellman optimal policy and confirms the rationality of the proposed ISA-MDP framework. 
Additionally, Figures~\ref{fig: natural rewards during training} and~\ref{fig: robustness rewards during training} illustrate the natural episode returns and robustness during training for various algorithms. It is worth noting that CAR-DQN agents quickly and stably converge to peak robustness and natural performance across all environments, while other algorithms exhibit unstable trends. 
For instance, the natural reward curves of SA-DQN and WocaR-DQN on BankHeist and RADIAL-DQN on RoadRunner show notable declines, and the robust curves of SA-DQN and WocaR-DQN on BankHeist also tend to decrease. 
These discrepancies primarily arise from their robustness objectives that deviate from the standard training loss, leading to learning sub-optimal actions. In contrast, the consistent objective of CAR-DQN ensures that it always learns optimal actions in both natural and robust contexts.

\paragraph{Training Efficiency.} 
Training times for SA-DQN, RADIAL-DQN, WocaR-DQN, and CAR-DQN are approximately 27, 12, 20, and 14 hours, respectively, all trained for 4.5 million frames on identical hardware with a GeForce RTX 3090 GPU. 
All robust PPO agents require around 2 hours to train on Hopper, Walker2d, and Halfcheetah, and 9 hours on Ant, on the same device with an AMD EPYC 7742 CPU. Additionally, our CAR loss incurs no additional memory consumption compared to vanilla training.

\subsection{Ablation Studies}

\paragraph{Necessity of Infinity Measurement Errors.} 
To verify the necessity of the $(\infty, d_{\mu_0}^\pi)$-norm in action-value function spaces for adversarial robustness, we train DQN agents using the Bellman error under the $(1, d_{\mu_0}^\pi)$-norm and $(2, d_{\mu_0}^\pi)$-norm, respectively. We then compare their performance with our CAR-DQN, which approximates the Bellman error under the $(\infty, d_{\mu_0}^\pi)$-norm. All agents are trained and evaluated with a perturbation budget $\epsilon=1/255$. As shown in Table~\ref{table: p-error}, all agents perform well without attacks across the four games. However, agents using the $(1, d_{\mu_0}^\pi)$-norm and $(2, d_{\mu_0}^\pi)$-norm experience significant performance degradation under strong attacks, with episode rewards dropping near the lowest values for each game. These results are highly consistent with Theorem \ref{thm:necessity of infty norm}, confirming the importance of the $(\infty, d_{\mu_0}^\pi)$-norm for robust performance.

To validate the necessity of infinity measurement errors in probability spaces, we train PPO agents that rely on $1$-measurement and $2$-measurement errors, respectively. We then compare their performance to our CAR-PPO agents, which are trained with surrogates of infinity measurement errors. As depicted in Figure~\ref{fig: k measurement} and Table~\ref{app table: k measurement}, all agents achieve good performance in natural environments and under weak attacks across the four continuous control tasks. However, agents trained with $1$-measurement and $2$-measurement errors exhibit similar and quite poor performance in environments with strong attacks. In contrast, agents using infinity measurement errors maintain strong robustness across all tasks. These experimental phenomenons align with Theorems~\ref{thm: Vulnerability of Non-infinity Measurement Errors} and~\ref{thm: robustness of infinity measurement}, underscoring the critical role of infinity measurement errors for robust DRL agents.

\begin{figure}[t]
    \centering
    \includegraphics[width=0.7\columnwidth]{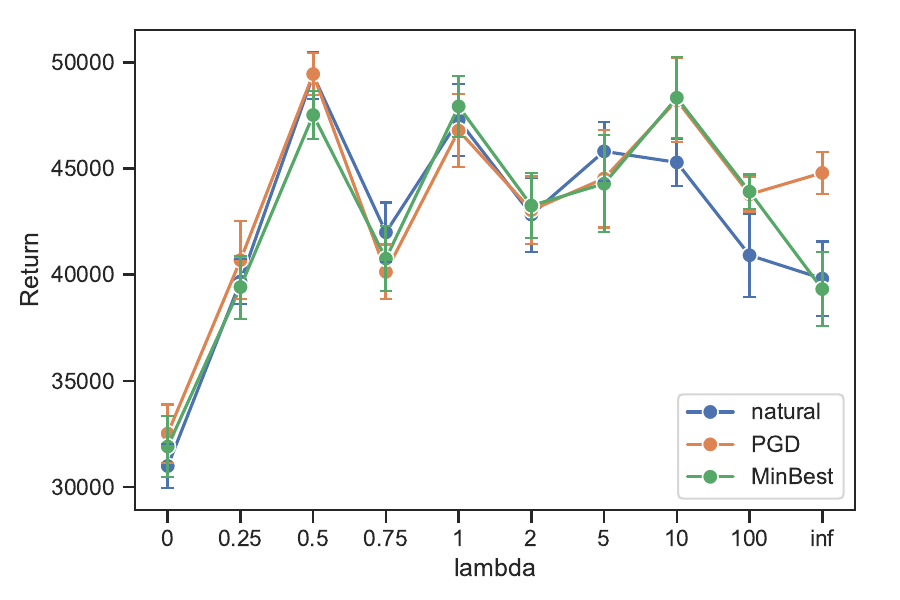}
    \vspace{-1em}
    \caption{Natural, PGD attack, and MinBest attack rewards of CAR-DQN with different soft coefficients $\lambda$ on the RoadRunner game.}
    \vspace{-1em}
    \label{fig:soft roadrunner}
\end{figure}

\begin{table}[t]
\caption{Ablation studies for soft coefficients on 4 Atari games.}
\label{table:soft coeff}
\vskip 0.15in
\centering
\resizebox{0.85\columnwidth}{!}{%
\begin{tabular}{c|c|c|ccc}
\hline
Environment                 & $\lambda$ & Natural    & PGD               & MinBest           & ACR     \\ \hline
\multirow{3}{*}{Pong}       & $0$       & $\bf 21.0 \pm 0.0$    & $\bf 21.0 \pm 0.0$    & $\bf 21.0 \pm 0.0$    & $0.972$ \\
                            & $1$       & $\bf 21.0 \pm 0.0$    & $\bf 21.0 \pm 0.0$    & $\bf 21.0 \pm 0.0$    & $0.985$ \\
                            & $\infty$  & $20.6 \pm 0.1$   & $20.7 \pm 0.1$    & $20.7 \pm 0.1$    & $0.980$ \\ \hline
\multirow{3}{*}{Freeway}    & $0$       & $31.6 \pm 0.2$    & $31.5 \pm 0.1$    & $31.5 \pm 0.1$    & $0.966$ \\
                            & $1$       & $\bf 33.3 \pm 0.1$   & $\bf 33.2 \pm 0.1$   & $\bf 33.2 \pm 0.1$   & $0.981$ \\
                            & $\infty$  & $31.5 \pm 0.1$    & $30.9 \pm 0.3$    & $31.2 \pm 0.2$    & $0.967$ \\ \hline
\multirow{3}{*}{BankHeist}  & $0$       & $1307.5 \pm 11.0$ & $1288.5 \pm 14.0$ & $1284.0 \pm 13.8$ & $0.980$ \\
                            & $1$       & $\bf 1356.0 \pm 1.7$  & $\bf 1356.5 \pm 1.1$  & $\bf 1356.5 \pm 1.1$  & $0.969$ \\
                            & $\infty$  & $1326.0 \pm 4.8$  & $1316.0 \pm 6.8$  & $1314.0 \pm 6.6$  & $0.979$ \\ \hline
\multirow{3}{*}{RoadRunner} & $0$       & $25160 \pm 802$   & $24540 \pm 760$   & $26785 \pm 617$   & $0.007$ \\
                            & $1$       & $\bf 49500 \pm 2106$ & $\bf 48230 \pm 1648$ & $\bf 48050 \pm 1642$ & $0.947$ \\
                            & $\infty$  & $40890 \pm 2075$  & $36760 \pm 1874$  & $36740 \pm 2098$  & $0.940$ \\ \hline
\end{tabular}%
}
\vskip -0.1in
\end{table}

\paragraph{Effects of Soft Coefficient $\lambda$.} 
We assess the impact of the soft coefficient $\lambda$ on the CAR-DQN loss function by training agents in the RoadRunner environment with varying values of $\lambda$ ranging from $0$ to $\infty$, validating the effectiveness of soft CAR loss.  When $\lambda=0$, the agent utilizes only the sample with the largest adversarial TD-error within the batch, whereas $\lambda=\infty$ corresponds to averaging over all samples in the batch. It is important to note that small values of $\lambda$ can lead to numerical instability. As depicted in Figure~\ref{fig:soft roadrunner}, agents exhibit similar capabilities when $0.5 \leq \lambda \leq 10$, indicating that the learned policies are relatively insensitive to the soft coefficient within this range. Table~\ref{table:soft coeff} presents the performance of CAR-DQN agents with $\lambda=0,1,\infty$ across four Atari environments. In the RoadRunner, $\lambda=0$ results in poor performance, achieving returns of around 25,000 due to insufficient utilization of the samples. Interestingly, utilizing only the sample with the largest adversarial TD error in the batch achieves good robustness in the other three simpler games. The case $\lambda=\infty$ results in weaker robustness compared to other cases with differentiated weights. This suggests that each sample in the batch plays a distinct role in robust training, and assigning appropriate weights to the samples in a batch enhances robust performance. These results further validate the efficacy of our CAR-DQN loss function.

Additionally, we evaluate the effects of the soft coefficient $\lambda$ through training CAR-PPO with various solvers and soft coefficients. As shown in Table~\ref{table:soft coeff mujoco}, in Hopper and Ant tasks, CAR-PPO agents show similar performance under selected coefficients and solvers, indicating that trained policies are relatively insensitive to the soft coefficient under these settings. In Walker2d and Halfcheetah environments, CAR-PPO agents are more sensitive to different $\lambda$, further confirming the effectiveness of the CAR-PPO loss function.

% Please add the following required packages to your document preamble:
% \usepackage{multirow}
% \usepackage{graphicx}
\begin{table}[t]
\caption{Ablation studies for soft coefficients on 4 Mujoco tasks.}
\label{table:soft coeff mujoco}
\vskip 0.15in
\centering
\resizebox{\columnwidth}{!}{%
\begin{tabular}{c|c|c|c|ccccccc}
\hline\hline
\multirow{2}{*}{\textbf{Env}}                                                            & \multirow{2}{*}{\textbf{Solver}}                                               & \multirow{2}{*}{\textbf{$\lambda$}} & \multirow{2}{*}{\textbf{\begin{tabular}[c]{@{}c@{}}Natural\\ Reward\end{tabular}}} & \multicolumn{7}{c}{\textbf{Attack Reward}}                                                                           \\
                                                                                         &                                                                                &                                     &                                                                                    & \textbf{Random} & \textbf{Critic} & \textbf{MAD}  & \textbf{RS}   & \textbf{SA-RL} & \textbf{PA-AD} & \textbf{Worst} \\ \hline
\multirow{6}{*}{\begin{tabular}[c]{@{}c@{}}Hopper\\ ($\epsilon$=0.075)\end{tabular}}     & \multirow{3}{*}{\begin{tabular}[c]{@{}c@{}}SGLD\\ ($\kappa$=1.0)\end{tabular}} & 10                                  & 3148                                                                               & 3090            & 3552            & 3211          & \textbf{2292} & 1554           & 2492           & 1554           \\
                                                                                         &                                                                                & 100                                 & \textbf{3566}                                                                      & \textbf{3537}   & 3480            & \textbf{3484} & 1990          & \textbf{2977}  & \textbf{3232}  & \textbf{1990}  \\
                                                                                         &                                                                                & 1000                                & 3540                                                                               & 3487            & \textbf{3622}   & 3324          & 1079          & 1079           & 2133           & 1079           \\ \cline{2-11}
                                                                                         & \multirow{3}{*}{\begin{tabular}[c]{@{}c@{}}PGD\\ ($\kappa$=0.3)\end{tabular}}  & 10                                  & 3660                                                                               & 3604            & 3283            & 3046          & 1426          & 601            & 2444           & 601            \\
                                                                                         &                                                                                & 100                                 & 3608                                                                               & \textbf{3734}   & \textbf{3745}   & 2946          & 1610          & 1103           & 1888           & 1103           \\
                                                                                         &                                                                                & 1000                                & \textbf{3711}                                                                      & 3702            & 3692            & \textbf{3473} & \textbf{1652} & \textbf{2430}  & \textbf{2640}  & \textbf{1652}  \\ \hline\hline
\multirow{6}{*}{\begin{tabular}[c]{@{}c@{}}Walker2d\\ ($\epsilon$=0.05)\end{tabular}}    & \multirow{3}{*}{\begin{tabular}[c]{@{}c@{}}SGLD\\ ($\kappa$=0.1)\end{tabular}} & 10                                  & \textbf{4622}                                                                      & \textbf{4609}   & \textbf{4684}   & \textbf{4498} & 4242          & \textbf{4397}  & \textbf{3134}  & \textbf{3134}  \\
                                                                                         &                                                                                & 100                                 & 3649                                                                               & 3252            & 3947            & 3607          & 1979          & 650            & 689            & 650            \\
                                                                                         &                                                                                & 1000                                & 4323                                                                               & 4326            & 4308            & 4340          & \textbf{4323} & 4137           & 2073           & 2073           \\ \cline{2-11}
                                                                                         & \multirow{3}{*}{\begin{tabular}[c]{@{}c@{}}PGD\\ ($\kappa$=1.0)\end{tabular}}  & 10                                  & 3247                                                                               & 3262            & 3771            & 3216          & 1449          & 1098           & 1416           & 1098           \\
                                                                                         &                                                                                & 100                                 & \textbf{4755}                                                                      & \textbf{4848}   & \textbf{5045}   & \textbf{4637} & \textbf{4379} & \textbf{4308}  & \textbf{4303}  & \textbf{4303}  \\
                                                                                         &                                                                                & 1000                                & 3400                                                                               & 3296            & 3136            & 3469          & 2937          & 1516           & 1362           & 1362           \\ \hline\hline
\multirow{6}{*}{\begin{tabular}[c]{@{}c@{}}Halfcheetah\\ ($\epsilon$=0.15)\end{tabular}} & \multirow{3}{*}{\begin{tabular}[c]{@{}c@{}}SGLD\\ ($\kappa$=0.1)\end{tabular}} & 10                                  & 3953                                                                               & 3967            & 3862            & 4008          & 3792          & 3273           & 3334           & 3273           \\
                                                                                         &                                                                                & 100                                 & \textbf{5131}                                                                      & \textbf{5059}   & \textbf{4803}   & 3131          & 3360          & 3032           & 1507           & 1507           \\
                                                                                         &                                                                                & 1000                                & 4599                                                                               & 4574            & 4731            & \textbf{4348} & \textbf{3888} & \textbf{3908}  & \textbf{4032}  & \textbf{3888}  \\ \cline{2-11}
                                                                                         & \multirow{3}{*}{\begin{tabular}[c]{@{}c@{}}PGD\\ ($\kappa$=1.0)\end{tabular}}  & 10                                  & 4751                                                                               & 4741            & 4676            & 4712          & 4354          & 4389           & 4501           & 4354           \\
                                                                                         &                                                                                & 100                                 & 3291                                                                               & 3298            & 3528            & 3341          & 3126          & 2991           & 3159           & 2991           \\
                                                                                         &                                                                                & 1000                                & \textbf{5053}                                                                      & \textbf{5058}   & \textbf{5065}   & \textbf{5053} & \textbf{5140} & \textbf{4860}  & \textbf{4942}  & \textbf{4860}  \\ \hline\hline
\multirow{6}{*}{\begin{tabular}[c]{@{}c@{}}Ant\\ ($\epsilon$=0.15)\end{tabular}}         & \multirow{3}{*}{\begin{tabular}[c]{@{}c@{}}SGLD\\ ($\kappa$=0.1)\end{tabular}} & 100                                 & 5056                                                                               & 5007            & 4864            & 4468          & \textbf{3755} & \textbf{3088}  & \textbf{3763}  & \textbf{3088}  \\
                                                                                         &                                                                                & 1000                                & \textbf{5283}                                                                      & \textbf{5180}   & 4798            & 4469          & 3292          & 2387           & 3442           & 2387           \\
                                                                                         &                                                                                & 10000                               & 5187                                                                               & 5008            & \textbf{4997}   & \textbf{4589} & 3715          & 2322           & 3527           & 2322           \\ \cline{2-11}
                                                                                         & \multirow{3}{*}{\begin{tabular}[c]{@{}c@{}}PGD\\ ($\kappa$=0.3)\end{tabular}}  & 100                                 & 4901                                                                               & 4807            & 4401            & 4429          & \textbf{3876} & \textbf{3197}  & \textbf{3917}  & \textbf{3197}  \\
                                                                                         &                                                                                & 1000                                & \textbf{5074}                                                                      & 4937            & \textbf{4838}   & 4536          & 2779          & 2961           & 3889           & 2776           \\
                                                                                         &                                                                                & 10000                               & 5029                                                                               & \textbf{5006}   & 4787            & \textbf{4549} & 3553          & 3099           & 3911           & 3099         \\ \hline\hline 
\end{tabular}%
}
\end{table}

\section{Conclusion and Future Work}
In this paper, we demonstrate the alignment of the optimal robust policy with the Bellman optimal policy within the universal Intrinsic State-adversarial MDP framework. We prove that optimizing different $k$-measurement errors yields varied performance, underscoring the necessity of infinity measurement errors for achieving adversarial robustness. Our findings are validated through experiments with CAR-DQN and CAR-PPO, which optimize surrogate objectives of infinity measurement errors. We believe this work contributes significantly to unveiling the nature of adversarial robustness in reinforcement learning. 

Our theoretical focus primarily addresses the necessity of infinity measurements in the worst-case scenario. In future work, we plan to explore how DRL agents converge to vulnerable models from the perspective of feature learning theory.
Additionally, the CAR operator we introduced to characterize the adversarial robust training involves a bilevel optimization problem, and robust training itself can be framed as a minimax problem. We aim to investigate these aspects further, leveraging bilevel and minimax optimization theories and techniques to improve the efficiency of robust training.

\section*{Acknowledgements}
This paper is supported by the National Key R\&D Program of China (2021YFA1000403) and the National Natural Science Foundation of China (Nos. 12431012). We would like to express our gratitude to Yongyuan Liang and Chung-En Sun for their enthusiastic assistance with the experimental setup. We also thank Shichen Liao for his early contributions to the conference version of this work. Additionally, we appreciate the valuable feedback on the paper drafts provided by Anqi Li and Wenzhao Liu.

\newpage

\appendix

\tableofcontents

\section{Examples of Intrinsic States}\label{app: instrinsic state}
In Figure \ref{app fig:intrinsic states pong}, \ref{app fig:intrinsic states free}, \ref{app fig:intrinsic states road}, we show some examples in 3 Atari games (Pong, Freeway, and RoadRunner), indicating that the state observation $s$ and adversarial observation $s_\nu$ share the same intrinsic state.

\begin{figure}[H]
    \centering
\includegraphics[width=0.75\textwidth]{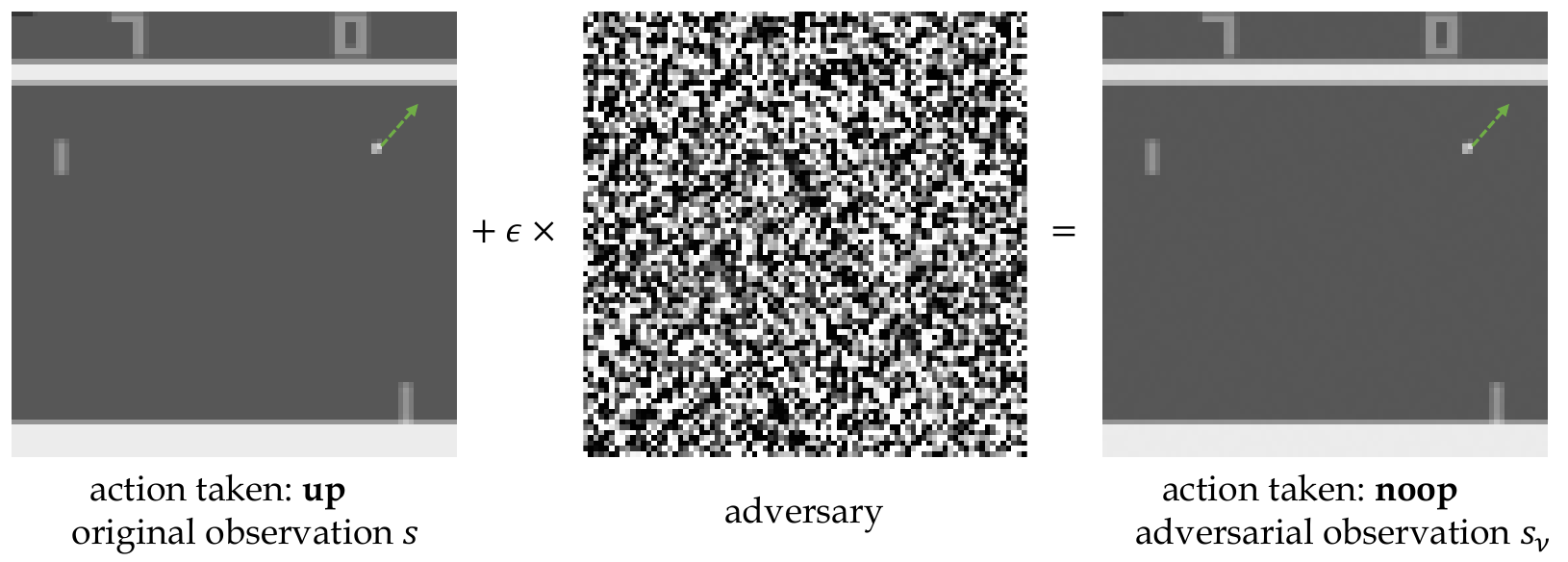} \includegraphics[width=0.75\textwidth]{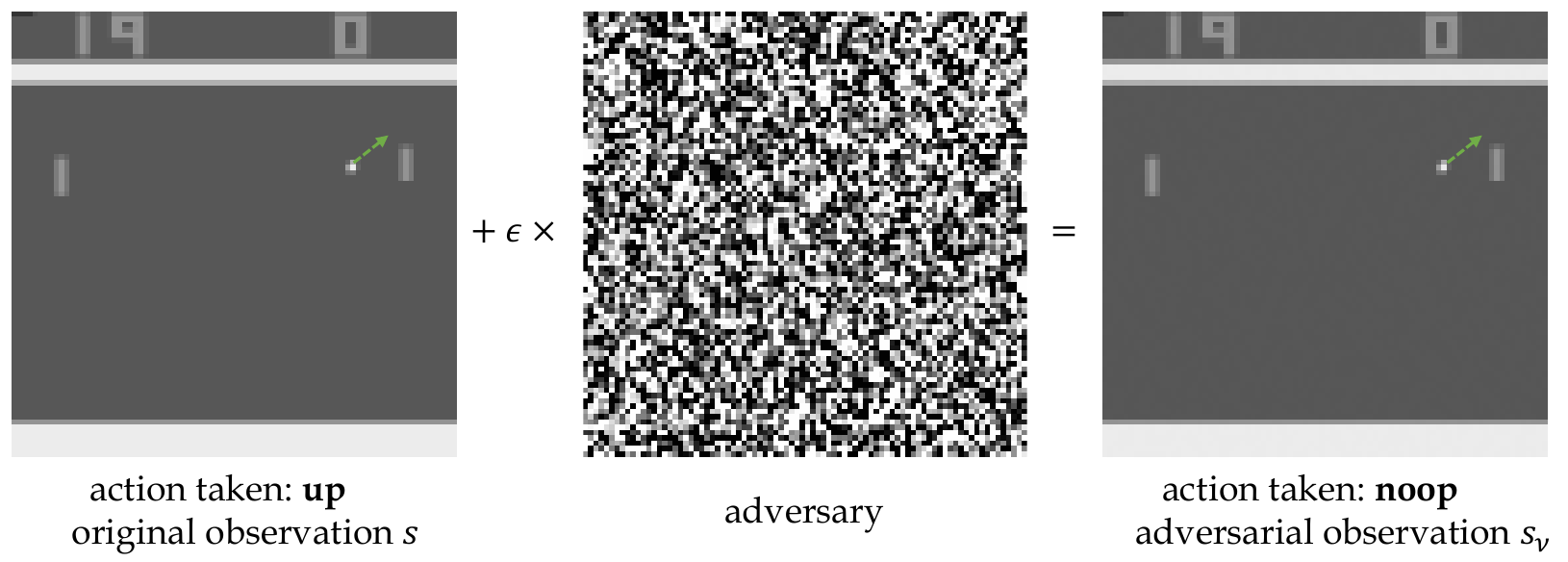}
\includegraphics[width=0.75\textwidth]{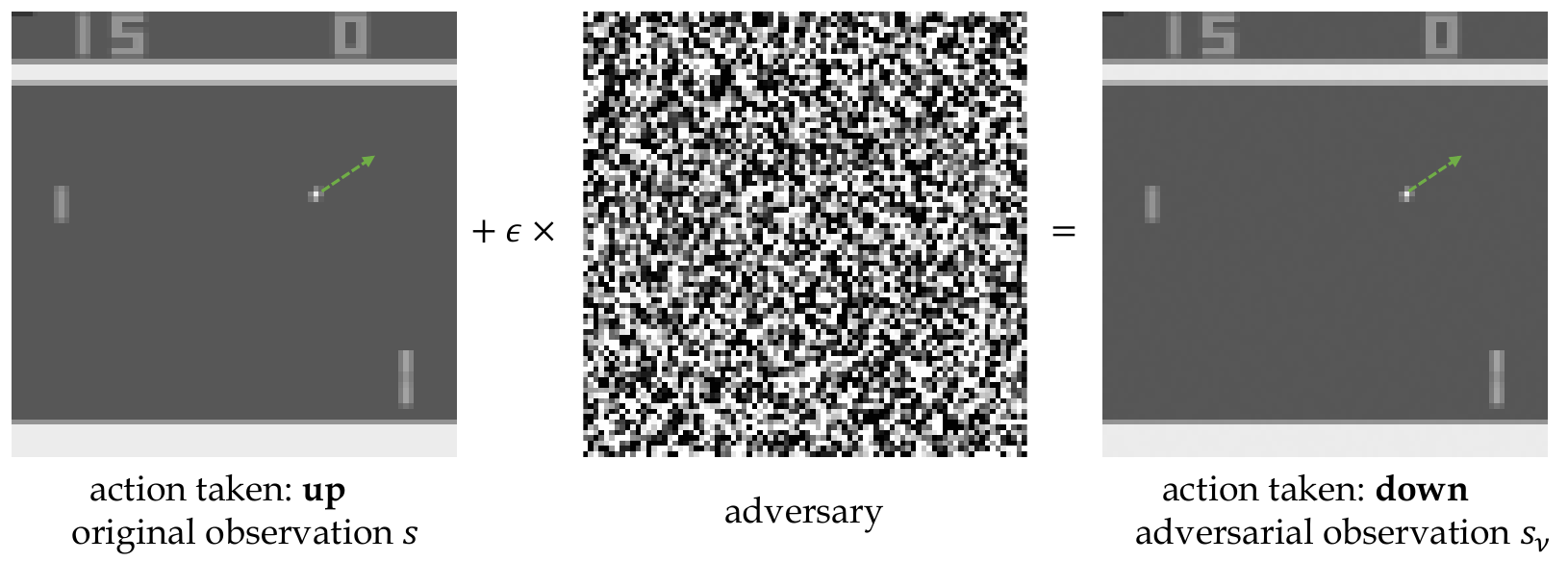}
\includegraphics[width=0.75\textwidth]{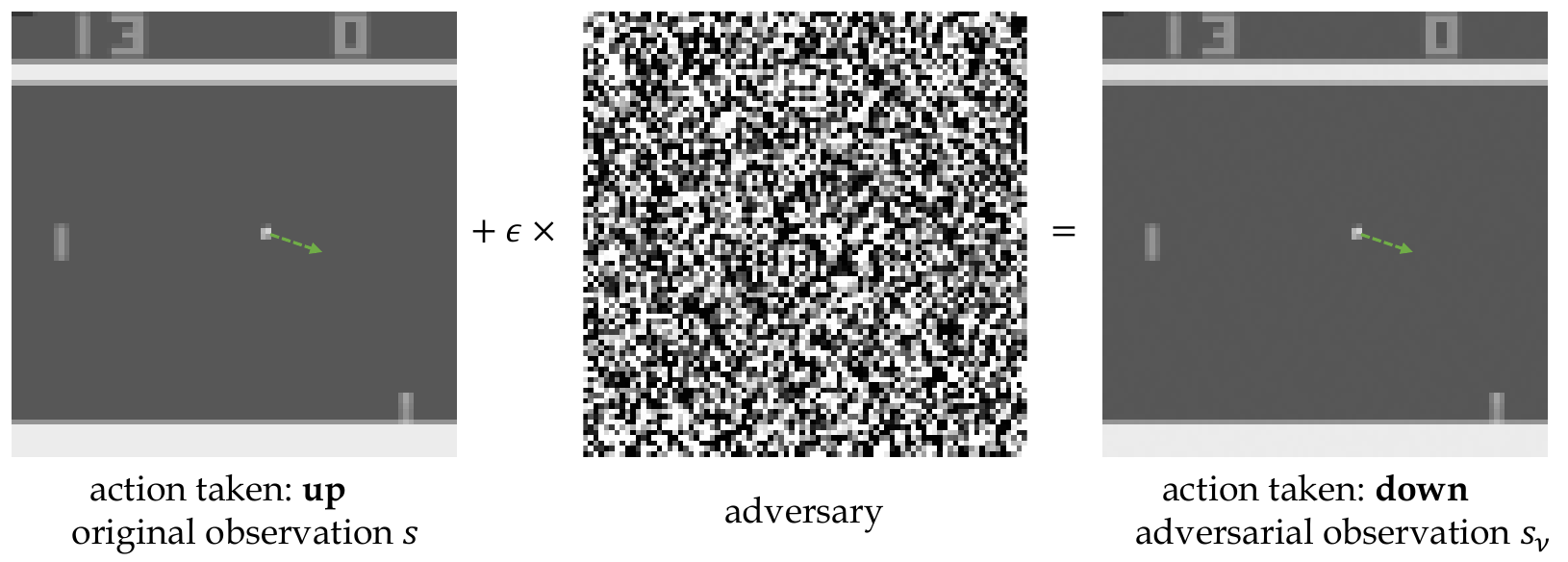}
\includegraphics[width=0.75\textwidth]{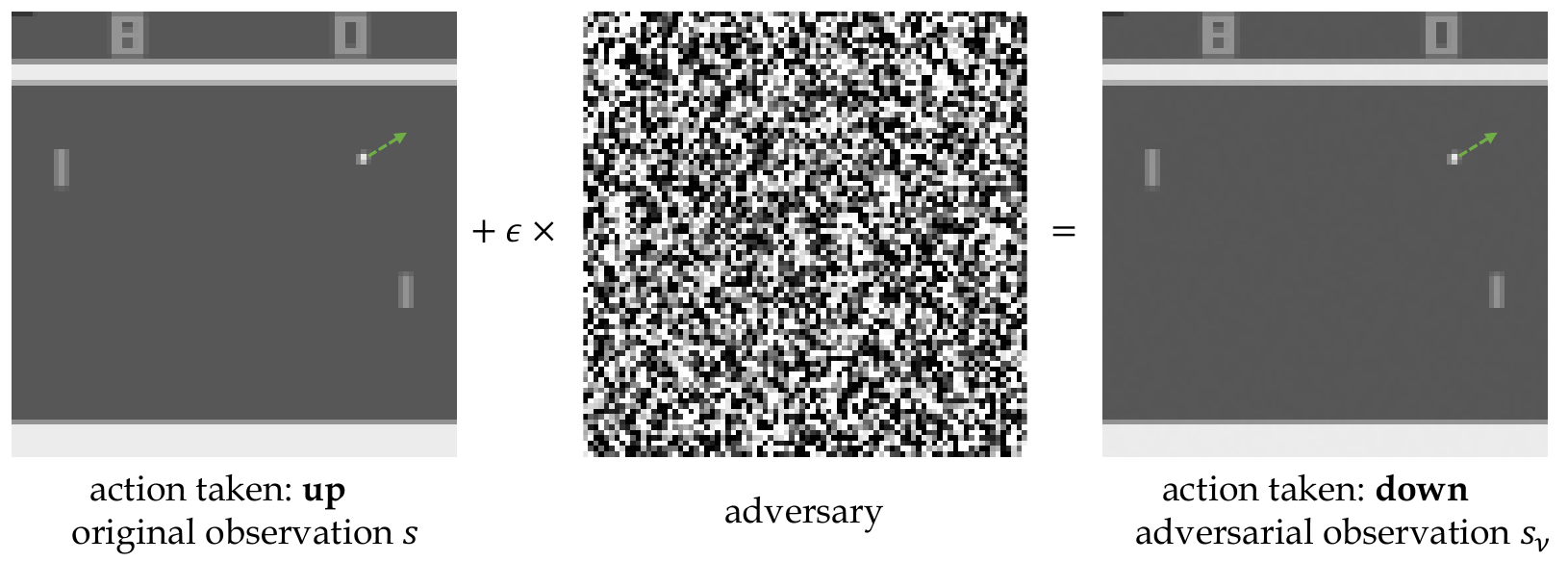}
    \caption{Examples of intrinsic states in Pong games. The direction for the movement of the ball is marked.}
    \label{app fig:intrinsic states pong}
\end{figure}

\begin{figure}[H]
    \centering
\includegraphics[width=0.75\textwidth]{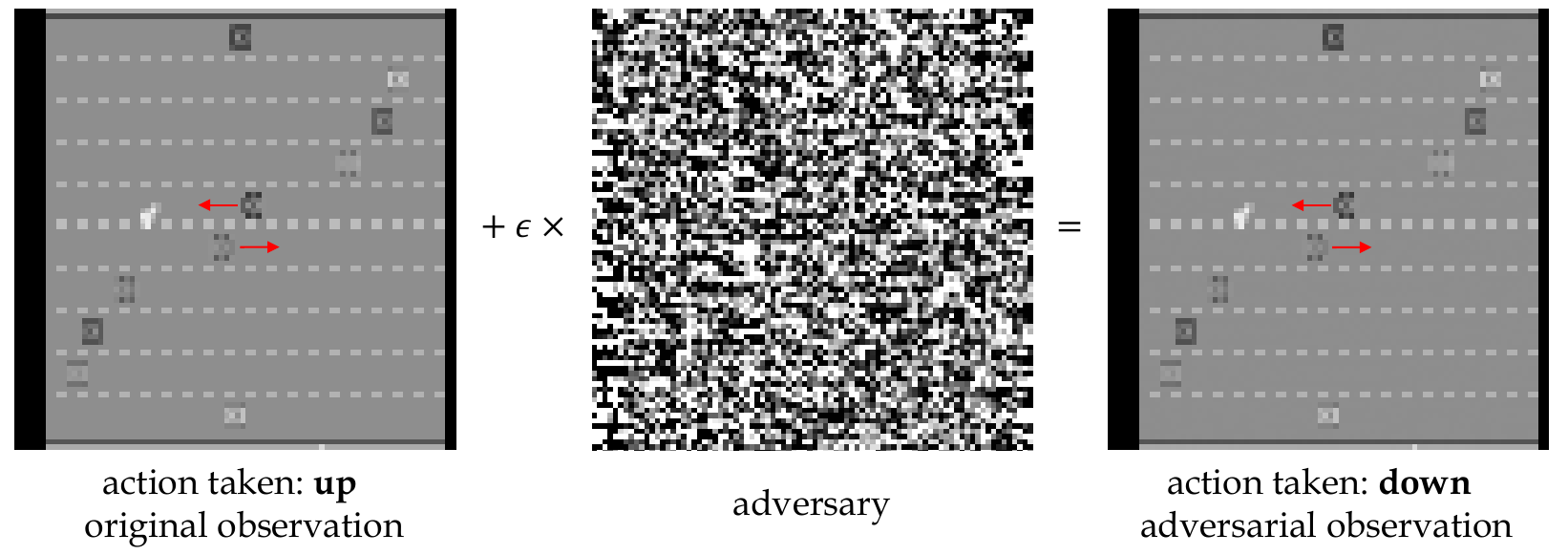}
\includegraphics[width=0.75\textwidth]{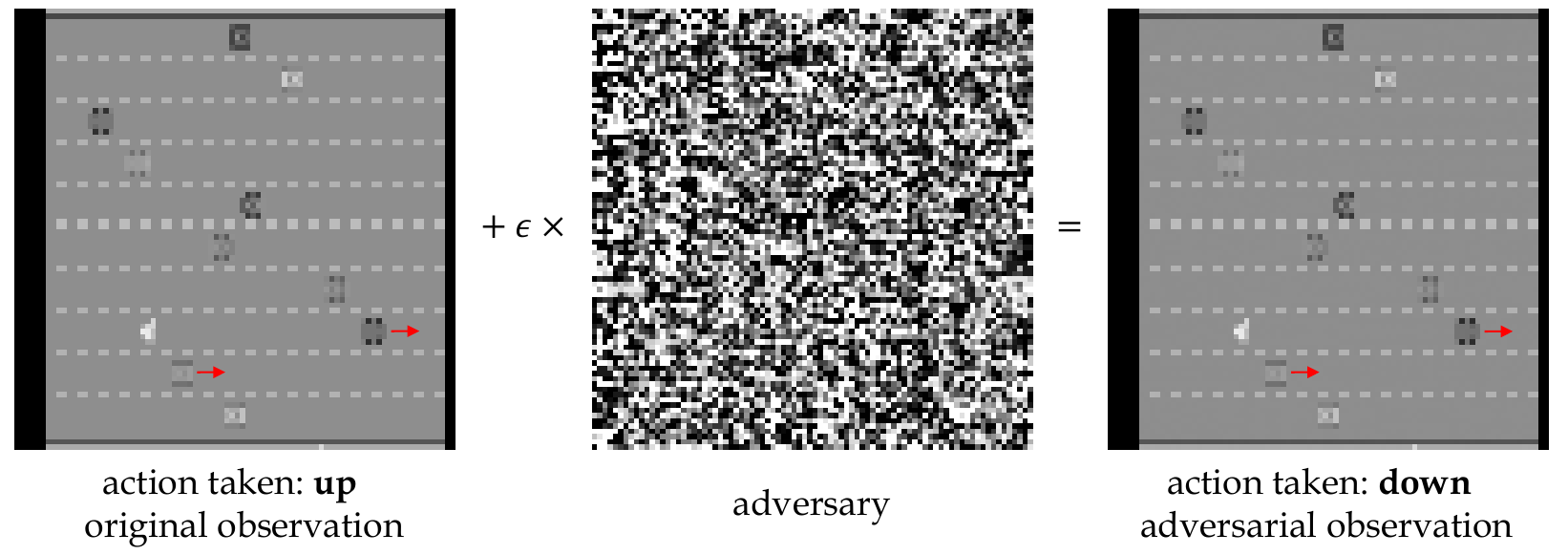}
\includegraphics[width=0.75\textwidth]{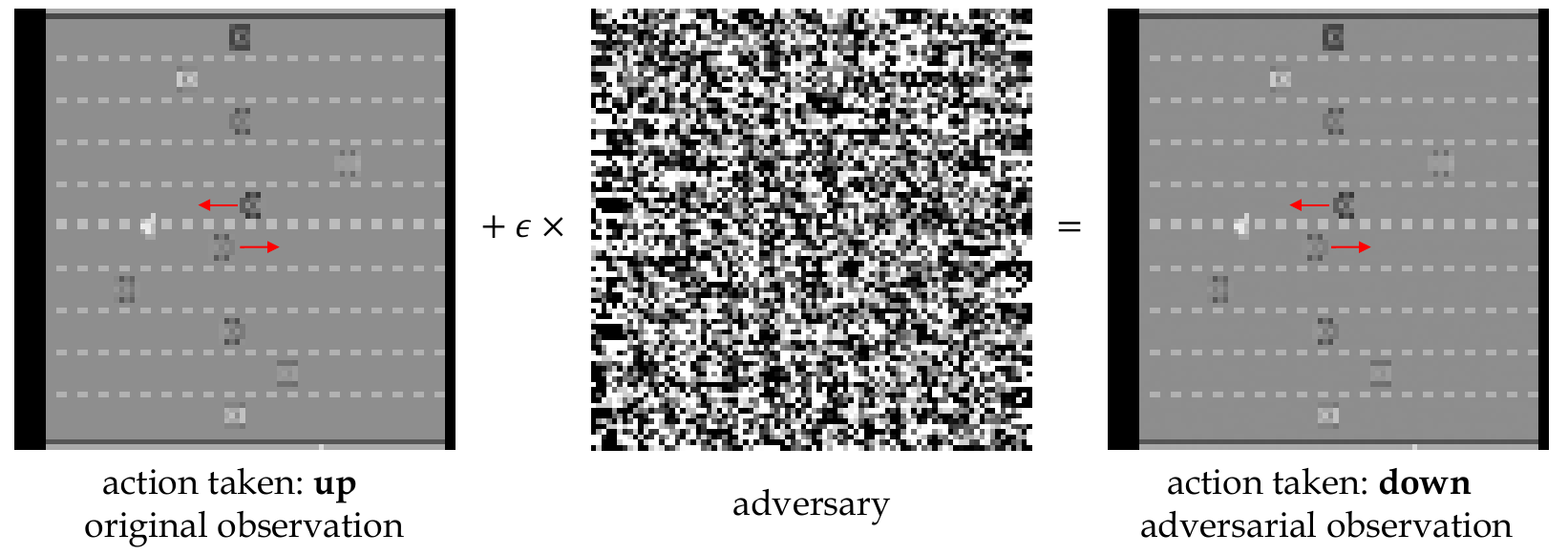}
\includegraphics[width=0.75\textwidth]{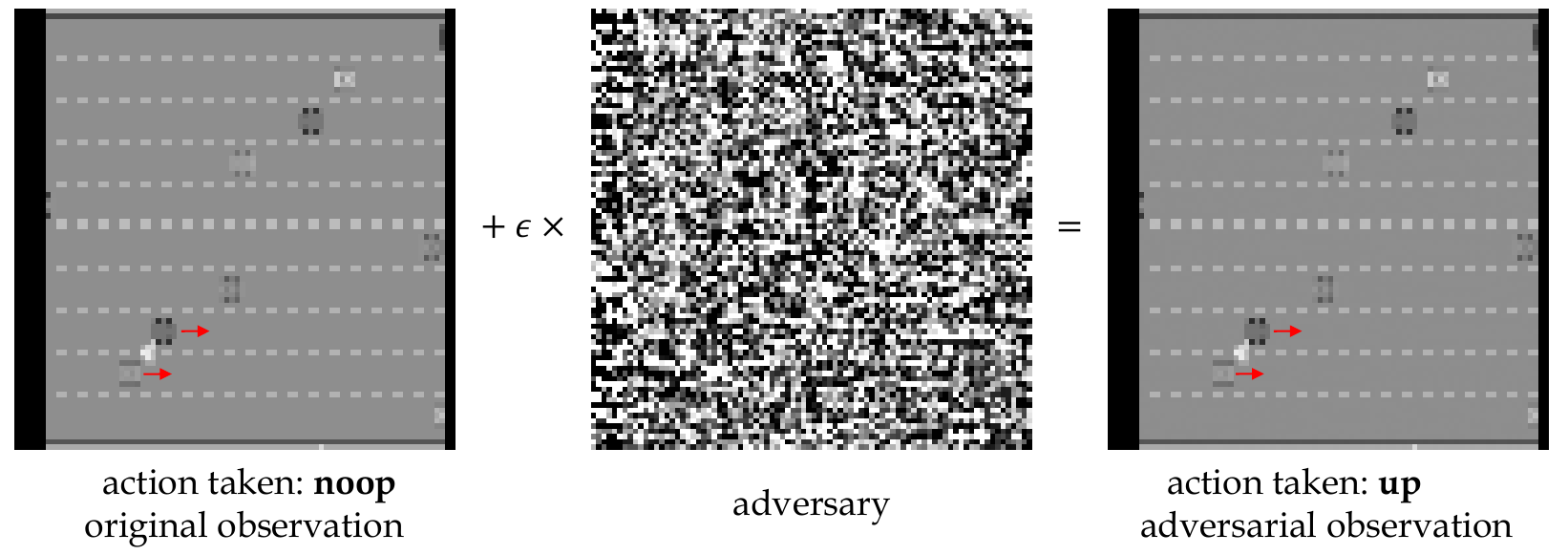}
\includegraphics[width=0.75\textwidth]{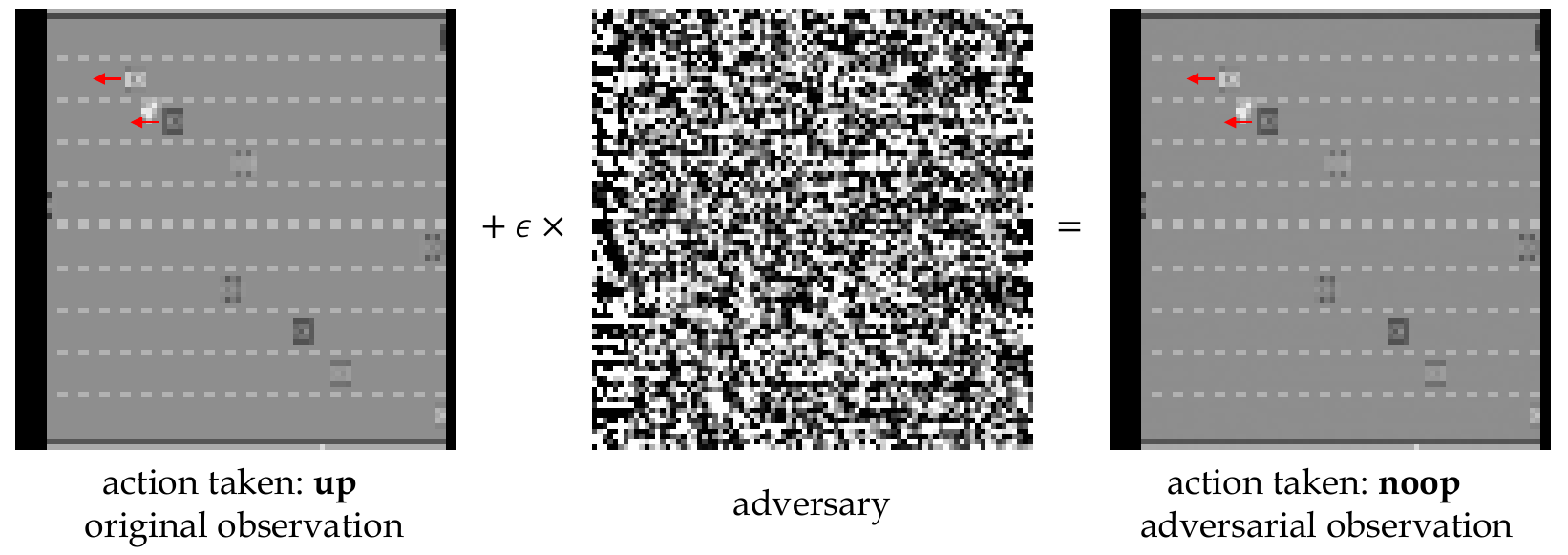}
    \caption{Examples of intrinsic states in Freeway games. The directions for the movement of cars around the chicken are marked.}
    \label{app fig:intrinsic states free}
\end{figure}

\begin{figure}[H]
    \centering
\includegraphics[width=0.75\textwidth]{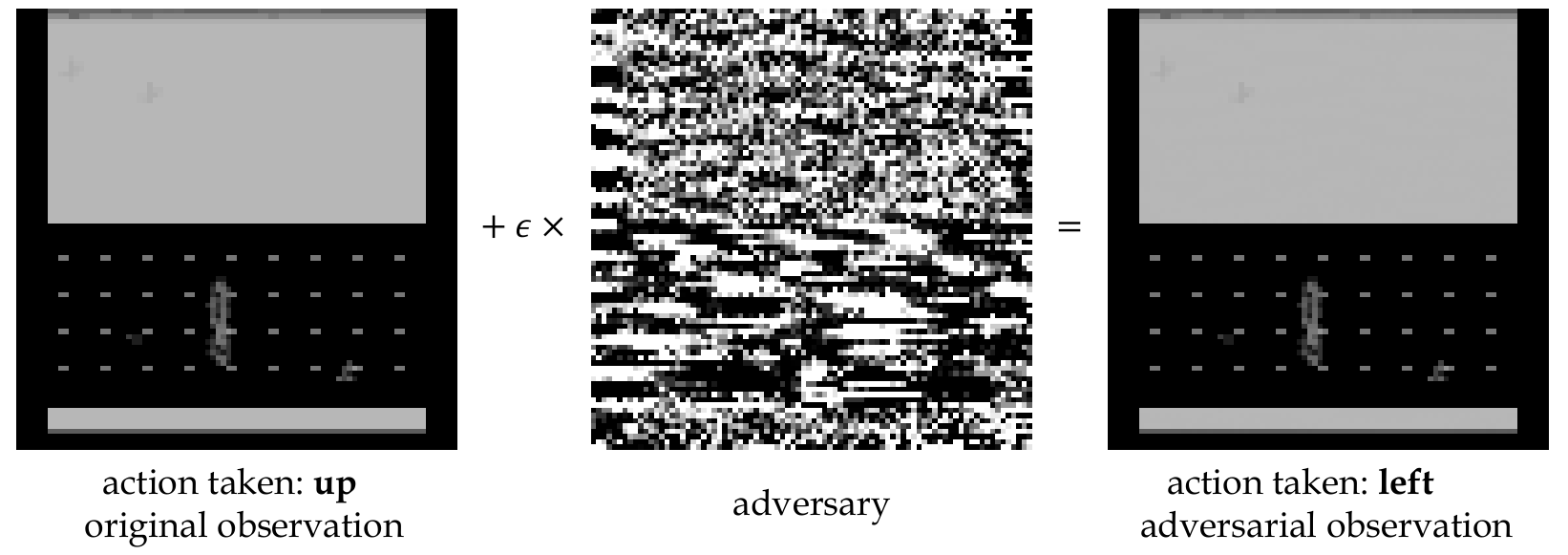}
\includegraphics[width=0.75\textwidth]{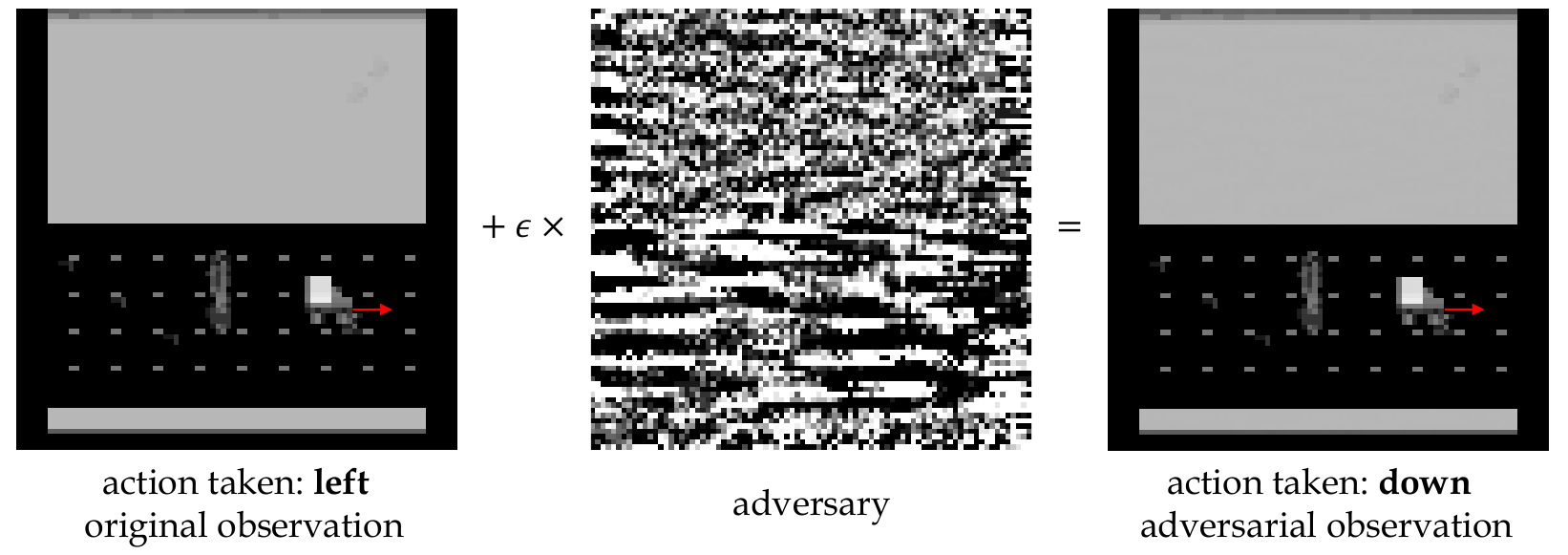}
\includegraphics[width=0.75\textwidth]{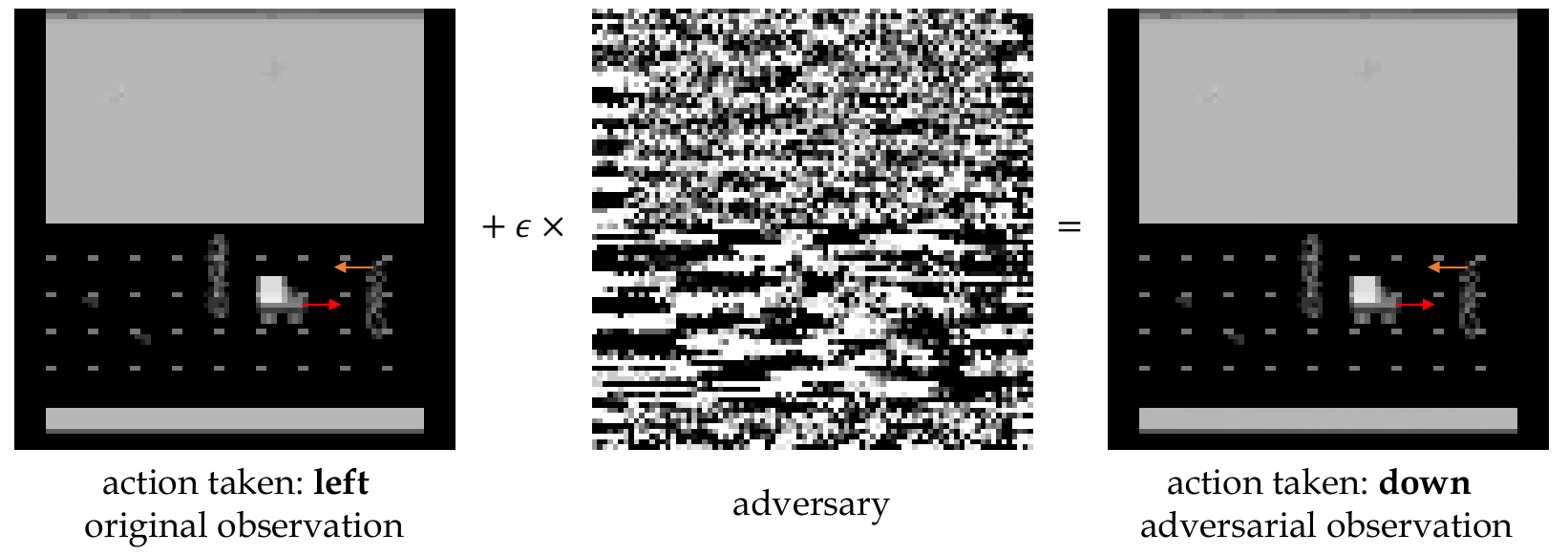}
\includegraphics[width=0.75\textwidth]{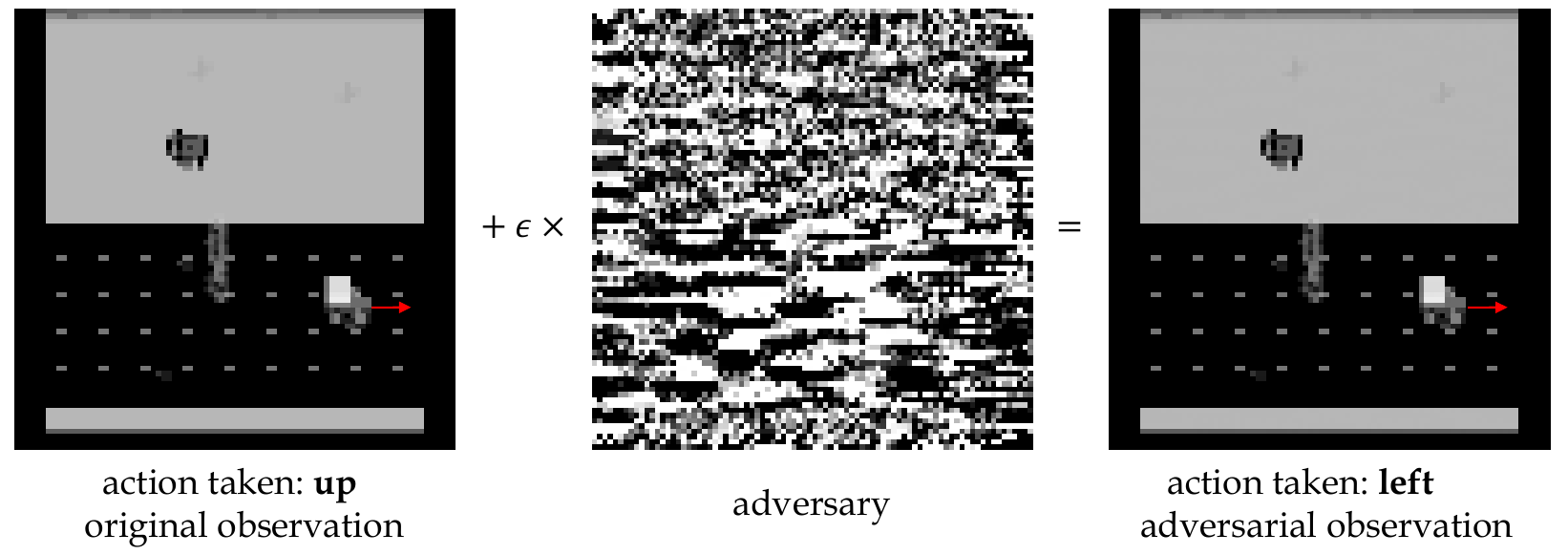}
\includegraphics[width=0.75\textwidth]{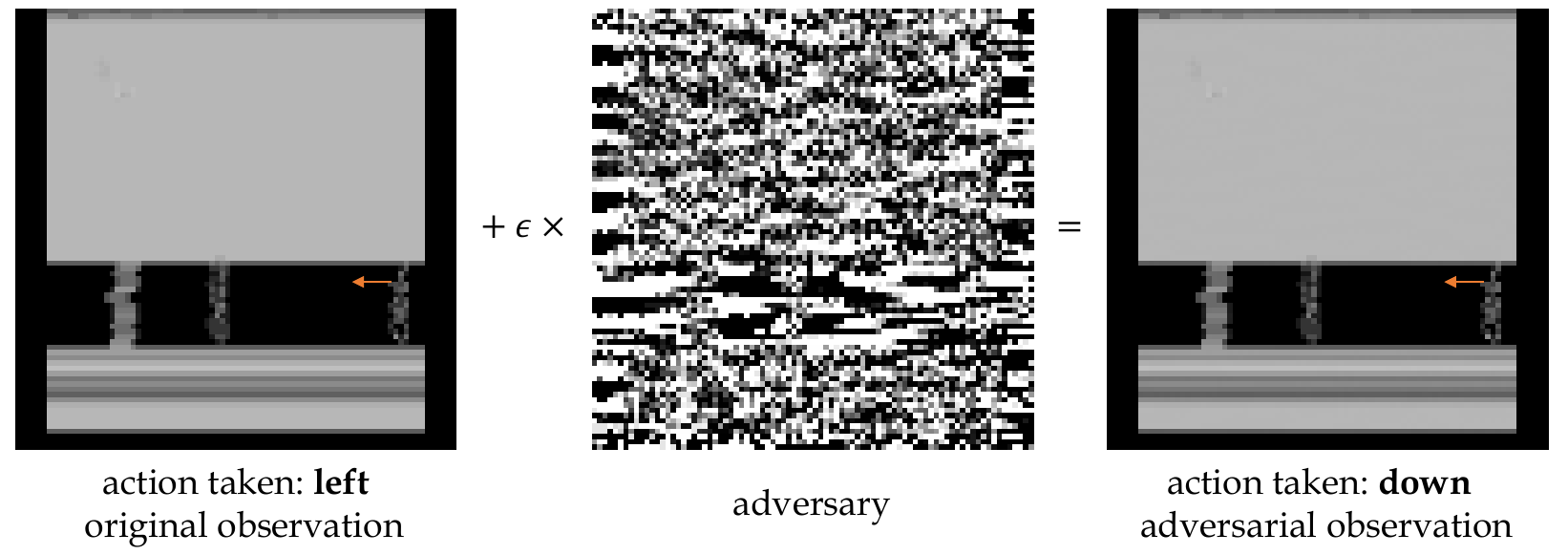}
    \caption{Examples of intrinsic states in RoadRunner games. The directions for the movement of trucks and competitors are marked.}
    \label{app fig:intrinsic states road}
\end{figure}

\section{Theorems and Proofs of Optimal Adversarial Robustness}
% \addcontentsline{toc}{section}{Theorems and Proofs of Optimal Adversarial Robustness}
% \minitoc
\subsection{Characterization of the Sparse Difference Between Intrinsic and Standard State Neighborhoods} \label{app: reasonable of C assumption}

\begin{theorem}\label{continuous a.e.}
    For any MDP $\mathcal{M}$, let $\mathcal{S}_{nu}$ denote the state set where the optimal action is not unique, i.e. $\mathcal{S}_{nu} = \left\{ s\in\mathcal{S} | \mathop{\arg\max}_a Q^*(s, a) \text{ is not a singleton} \right\}$. Then we have the following conclusions:%If $Q^*(\cdot,a)$ is continuous almost everywhere in $\mathcal{S}$ for all $a\in\mathcal{A}$, we have the following conclusions:
        \begin{itemize}
            \item Let $\mathcal{S}' =\{s \in S|\exists\  a \in A, \text{s.t. } Q^*(s, a)\  \text{is not continuous at}\  s\}$, then for all $ s \in \left(\mathcal{S} \setminus \mathcal{S}_{nu} \right) \setminus  \mathcal{S}^{\prime} $, there exists $\epsilon > 0$ such that $B_\epsilon(s) = B^*_\epsilon(s)$.
            \item Given $\epsilon > 0$, let $\mathcal{S}_{nin}$ denote the set of states where \textit{the intrinsic state $\epsilon$-neighourhood} is not the same as \textit{the $\epsilon$-neighourhood}, i.e. $\mathcal{S}_{nin} = \left\{ s\in\mathcal{S} |  B_\epsilon(s) \neq B^*_\epsilon(s) \right\}$. Then, we have $\mathcal{S}_{nin} \subseteq \mathcal{S}_{nu} \cup \mathcal{S}_{0} + B_\epsilon $, where $\mathcal{S}_{0}$ is the set of discontinuous points that cause the optimal action to change, i.e. $\mathcal{S}_{0}=\{s \in S|\forall\  \epsilon_1 > 0, \exists \  s' \in B_{\epsilon_1}(s), \text{s.t. } \arg\max_a Q^*(s', a) \neq \arg\max_a Q^*(s, a)\} \cap \mathcal{S}'$.
        \end{itemize}
\end{theorem}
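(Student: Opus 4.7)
The plan is to prove both parts via a single gap argument, augmented for Part~2 by a continuation argument along a segment in $B_\epsilon(s)$. For Part~1, fix $s\in(\mathcal{S}\setminus\mathcal{S}_{nu})\setminus\mathcal{S}'$, write $a^*:=\arg\max_a Q^*(s,a)$ (a singleton by hypothesis) and set $\delta:=Q^*(s,a^*)-\max_{a\neq a^*}Q^*(s,a)>0$. Since $Q^*(\cdot,a)$ is continuous at $s$ for every $a$ and $\mathcal{A}$ is finite, there exists $\epsilon>0$ with $|Q^*(s',a)-Q^*(s,a)|<\delta/3$ uniformly over $(s',a)\in B_\epsilon(s)\times\mathcal{A}$. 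A triangle-inequality comparison then shows $a^*$ remains the unique maximizer of $Q^*(s',\cdot)$ for every $s'\in B_\epsilon(s)$, so $\arg\max_a Q^*(s',a)=\arg\max_a Q^*(s,a)$ throughout the ball, giving $B_\epsilon(s)\subseteq B^*_\epsilon(s)$; the reverse inclusion is immediate.

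For Part~2, given $s\in\mathcal{S}_{nin}$, choose $s'\in B_\epsilon(s)$ with $A^*(s')\neq A^*(s)$, where $A^*(\cdot):=\arg\max_a Q^*(\cdot,a)$. Parameterize the segment from $s$ to $s'$ by $\gamma(t)=s+t(s'-s)$, $t\in[0,1]$, and define $t^*:=\inf\{t\in[0,1]:A^*(\gamma(t))\neq A^*(s)\}$, $\bar s:=\gamma(t^*)$. A short case split on whether $t^*$ itself belongs to that set---using that on $[0,t^*)$ the optimal-action set equals $A^*(s)$ while points at or immediately after $t^*$ have a different set---shows that every neighborhood of $\bar s$ meets a point whose optimal-action set disagrees with $A^*(\bar s)$. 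Moreover, $\|s-\bar s\|=t^*\|s'-s\|\le \epsilon$, so $s\in B_\epsilon(\bar s)$.

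I then classify $\bar s$ contrapositively using Part~1. If $A^*(\bar s)$ is not a singleton, $\bar s\in\mathcal{S}_{nu}$; otherwise, were $\bar s\notin\mathcal{S}'$, the Part~1 argument applied at $\bar s$ would furnish a neighborhood on which $A^*$ is constant, contradicting the ``close-points-differ'' property just established, and hence $\bar s\in\mathcal{S}'$, which together with that property places $\bar s\in\mathcal{S}_0$. Either way $\bar s\in\mathcal{S}_{nu}\cup\mathcal{S}_0$, so $s\in(\mathcal{S}_{nu}\cup\mathcal{S}_0)+B_\epsilon$, which is the intended reading of the claimed inclusion. The main obstacle is constructing $\bar s$ rigorously: one must verify the ``close-points-differ'' property in both infimum subcases, and handle the boundary cases $t^*\in\{0,1\}$ cleanly so that the witnessing neighbors still lie in $B_\epsilon(s)$; the remaining bookkeeping reduces to manipulating the gap $\delta$ and exploiting the finiteness of $\mathcal{A}$.
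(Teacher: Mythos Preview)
Your proposal is correct and follows essentially the same strategy as the paper: Part~1 is the identical gap/continuity argument over the finite action set, and Part~2 constructs a ``switching point'' where the optimal-action set changes in every neighborhood, then classifies that point as lying in $\mathcal{S}_{nu}\cup\mathcal{S}_0$ via the contrapositive of Part~1.

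The only structural differences are minor. First, the paper peels off the case where some $s'\in B_\epsilon(s)$ already lies in $\mathcal{S}_{nu}$ before doing any construction; you instead absorb this into your final case split on $A^*(\bar s)$, which is equally valid. Second, the paper defines the switching point as ``the point in $B_\epsilon(s)$ closest to $s$ with $a^*_{s_1}\neq a^*_s$'' (working in the whole ball), while you take an infimum along the segment $[s,s']$. Your version has the advantage of making the limit argument explicit---the paper simply asserts a closest point exists---but it tacitly assumes the segment stays in $\mathcal{S}$, i.e., local convexity of the state space. This is harmless in the paper's applications (pixel boxes, MuJoCo state spaces) but is not guaranteed by compactness alone; if you want to cover general compact $\mathcal{S}$, replace the segment parameterization by $t^*:=\inf\{\|s''-s\|:s''\in B_\epsilon(s),\,A^*(s'')\neq A^*(s)\}$ and extract $\bar s$ as a subsequential limit of a minimizing sequence using compactness of $\overline{B_\epsilon(s)}\cap\mathcal{S}$.
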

\begin{proof}
   (1)  Let $\mathcal{S}^{\prime}=\{ s \in \mathcal{S}  |  \exists a \in \mathcal{A},  \text{s.t. $Q^*(s,a)$  is not continuous at } s \}$, %Then $\mu(\mathcal{S}^{\prime})=0$ because $Q^*(\cdot,a)$ is continuous almost everywhere in $\mathcal{S}$ for all $a\in\mathcal{A}$ and $\mathcal{A}$ is a finite discrete set. And 
   then we have that $Q^*(s,a)$ is continuous in $\mathcal{S} \setminus \left(\mathcal{S}_{nu}  \cup  \mathcal{S}^{\prime}\right) $. 
   
   Because $ \arg\max_{a}Q^*(s,a) $ is a singleton  for $\mathcal{S} \setminus \left(\mathcal{S}_{nu}  \cup  \mathcal{S}^{\prime}\right) $, we can define $\arg\max_{a}Q^*(s,a) = \{a_s^*\} $ for any $s\in \left(\mathcal{S} \setminus \mathcal{S}_{nu} \right) \setminus  \mathcal{S}^{\prime} $.
   Then, we have that 
   $$Q^*(s,a_s^*) > Q^*(s,a) \text{ for a fixed } a\in \mathcal{A} \setminus \{a_{s}^*\}.$$
   According to continuity of $Q^*(\cdot,a)$ for all $a\in \mathcal{A}$,
   there exists $ \epsilon_a >0$, such that 
   $$Q^*(s^{\prime},a_{s}^*) > Q^*(s^{\prime},a) \text{ for all }s^{\prime} \in  B_{\epsilon_a}(s).$$
   Because $\mathcal{A}$ is a finite discrete set, let $\epsilon=\min_{a\in \mathcal{A} \setminus {a_s^*}}\{\epsilon_a\}$, then we have that
   $$Q^*(s^{\prime},a_{s^{\prime}}^*) > Q^*(s^{\prime},a) \qquad \forall s^{\prime} \in  B_\epsilon(s) \text{ and } a\in \mathcal{A} \setminus \{a_{s^{\prime}}^*\},$$
   which indicates that we have
   $$B_\epsilon(s)=B_\epsilon^*(s).$$

   (2) Let 
    $\mathcal{S}_n=\{ s \in \mathcal{S}  | \forall \epsilon_1>0, \exists s^{\prime} \in B_{\epsilon_1}(s), \ \text{s.t. }    \arg\max_{a}Q^*(s^{\prime},a) \neq \arg\max_{a}Q^*(s,a) \}$ 
    and 
    $\mathcal{S}_0=\mathcal{S}_n \cap \mathcal{S}^{\prime}$. Then $S_0$ is the set of discontinuous points that cause the optimal action to change.  
    And $\mu(\mathcal{S}_0)=\mu(\mathcal{S}_n \cap \mathcal{S}^{\prime})=0$ because $\mu(\mathcal{S}^{\prime})=0$. 
    
    For any $s\in \mathcal{S}_{nin} = \left\{ s\in\mathcal{S} |  B_\epsilon(s) \neq B^*_\epsilon(s) \right\}$, we have the following two cases.

    \textbf{Case 1}. $\exists s^\prime \in B_\epsilon(s)$ s.t. $s^\prime \in \mathcal{S}_{nu}$, then $s \in \mathcal{S}_{nu}+B_\epsilon$, i.e. 
    \begin{equation}
        s \in \mathcal{S}_{nu} \cup \mathcal{S}_{0} + B_\epsilon.
        \label{th2_case1}
        \notag
    \end{equation}

    \textbf{Case 2}. $\forall s^\prime \in B_\epsilon(s) , s^\prime \notin \mathcal{S}_{nu}$, which means that  $\arg\max_{a}Q^*(s^\prime,a)$ is a singleton for all $s^\prime\in B_{\epsilon}(s)$. Define $\arg\max_{a}Q^*(s^{\prime},a) = \{a_{s^\prime}^*\} $ for any $s^\prime\in B_\epsilon(s)$. 
     
     Because $s \in \mathcal{S}_{nin}$, there exist a $s^\prime \in B_\epsilon(s)$ such that $a_{s^\prime}^* \neq a_s^*$. Let $s_1$ be the point that closest to $s$  satisfing $a_{s_1}^* \neq a_s^*$, then $s_1 \in B_\epsilon(s)$. We have 
     \begin{equation}
         s_1 \in \mathcal{S}_n.
         \label{th2_case2_1}
     \end{equation}
     
     Otherwise $s_1 \notin \mathcal{S}_n$ means that $\exists \epsilon_1 >0, \forall s^\prime \in B_{\epsilon_1}(s_1)$, $a_{s^\prime}^*=a_{s_1}^*$, then $s_1$ is not the point that closest to $s$  satisfing $a_{s_1}^* \neq a_s^*$, which is a contradiction. We also have 
     \begin{equation}
         s_1\in \mathcal{S}^\prime.
         \label{th2_case2_2}
     \end{equation}
     
     Otherwise $s_1\notin \mathcal{S}^\prime$ means that $\forall a \in \mathcal{A}$, $Q^*(\cdot,a) $ is continuous in $s_1$. First, we have 
     \begin{equation}
         Q^*(s_1,a_{s_1}^*) > Q^*(s_1,a), \forall a \in \mathcal{A} \setminus \{a_{s_1}^*\}. 
         \notag
     \end{equation}

     Then $\exists \ \epsilon_2 >0$, $\forall s \in B_{\epsilon_2}(s_1)$, s.t.  
     \begin{equation}
         Q^*(s,a_{s_1}^*) > Q^*(s,a), \forall a \in \mathcal{A} \setminus \{a_{s_1}^*\}. 
         \notag
     \end{equation}
     because of the continuity of point $s_1$. This contradicts the definition of $s_1$. 

     According to (\ref{th2_case2_1}) and (\ref{th2_case2_2}), we have 
     $s_1\in \mathcal{S}^{\prime} \cap \mathcal{S}_n$ i.e. $s_1 \in \mathcal{S}_0$. Then $s\in \mathcal{S}_0 + B_\epsilon(s)$, i.e.
     \begin{equation}
          s \in \mathcal{S}_{nu} \cup \mathcal{S}_{0} + B_\epsilon.
        \label{th2_case2_3} 
        \notag
     \end{equation}

     Thus 
     \begin{equation}
        \mathcal{S}_{nin} \subseteq \mathcal{S}_{nu} \cup \mathcal{S}_{0} + B_\epsilon.
        \notag
     \end{equation}
     Therefore, the proof of the theorem is concluded.
\end{proof}

\begin{remark}
    In practical complex tasks, we can view $\mathcal{S}_{nu}$ as an empty set.
\end{remark}

\begin{remark}
    Except for the smooth environment, many tasks can be modeled as environments with sparse rewards. Further, the value and action-value functions in these environments are almost everywhere continuous, indicating the set $\mathcal{S}_0$ is a zero-measure set.
\end{remark}

\begin{remark}
    According to the above characteristics, we know that $\mathcal{S}_0$ is a set of special discontinuous points and its elements are rare in practical complex environments. 
\end{remark}

Further, we can get the following corollary in the setting of continuous functions and there are better conclusions.
\begin{corollary}\label{Th1}
    For any MDP $\mathcal{M}$, let $\mathcal{S}_{nu}$ denote the state set where the optimal action is not unique, i.e. $\mathcal{S}_{nu} = \left\{ s\in\mathcal{S} | \mathop{\arg\max}_a Q^*(s, a) \text{ is not a singleton} \right\}$. If $Q^*(\cdot,a)$ is continuous in $\mathcal{S}$ for all $a\in\mathcal{A}$, we have the following conclusions:
        \begin{itemize}
            \item For $s\in \mathcal{S} \setminus \mathcal{S}_{nu}$, there exists $\epsilon > 0$ such that $B_\epsilon(s) = B^*_\epsilon(s)$.
            \item Given $\epsilon > 0$, let $\mathcal{S}_{nin}$ denote the set of states where \textit{the intrinsic state $\epsilon$-neighourhood} is not the same as \textit{the $\epsilon$-neighourhood}, i.e. $\mathcal{S}_{nin} = \left\{ s\in\mathcal{S} |  B_\epsilon(s) \neq B^*_\epsilon(s) \right\}$. Then, we have $\mathcal{S}_{nin} \subseteq \mathcal{S}_{nu} + B_\epsilon = \left\{ s_1 + s_2 | s_1 \in \mathcal{S}_{nu},\ \|s_2\| \le \epsilon \right\}$. Especially, when $\mathcal{S}_{nu}$ is a finite set, we have $\mu\left( \mathcal{S}_{nin} \right) \le \left| \mathcal{S}_{nu} \right| \mu\left( B_\epsilon \right) = C_d \left| \mathcal{S}_{nu} \right| \epsilon^d $, where $C_d$ is a constant with respect to dimension $d$ and norm.
        \end{itemize}
\end{corollary}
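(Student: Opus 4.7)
The plan is to derive this corollary as a direct specialization of Theorem~\ref{continuous a.e.} under the added continuity hypothesis. The central observation is that if $Q^*(\cdot,a)$ is continuous on $\mathcal{S}$ for every $a \in \mathcal{A}$, then the exceptional set $\mathcal{S}' = \{s \in \mathcal{S} \mid \exists\, a \in \mathcal{A},\ Q^*(\cdot,a)\ \text{not continuous at}\ s\}$ appearing in Theorem~\ref{continuous a.e.} is empty. Consequently, $\mathcal{S}_0 = \mathcal{S}_n \cap \mathcal{S}' = \emptyset$ as well, which eliminates the discontinuity contribution from both conclusions of that theorem.

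For the first bullet, I would apply the first part of Theorem~\ref{continuous a.e.} verbatim: since $\mathcal{S}' = \emptyset$, we have $(\mathcal{S} \setminus \mathcal{S}_{nu}) \setminus \mathcal{S}' = \mathcal{S} \setminus \mathcal{S}_{nu}$, so the existence of $\epsilon > 0$ with $B_\epsilon(s) = B_\epsilon^*(s)$ holds for every $s \in \mathcal{S} \setminus \mathcal{S}_{nu}$. For the first inclusion in the second bullet, I would invoke the second part of Theorem~\ref{continuous a.e.}, which gives $\mathcal{S}_{nin} \subseteq \mathcal{S}_{nu} \cup \mathcal{S}_0 + B_\epsilon$; dropping $\mathcal{S}_0 = \emptyset$ yields $\mathcal{S}_{nin} \subseteq \mathcal{S}_{nu} + B_\epsilon$.

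For the measure estimate when $\mathcal{S}_{nu}$ is finite, I would write the Minkowski sum as a finite union of translated balls, $\mathcal{S}_{nu} + B_\epsilon = \bigcup_{s \in \mathcal{S}_{nu}} (s + B_\epsilon)$, and then apply subadditivity together with translation invariance of Lebesgue measure to get $\mu(\mathcal{S}_{nin}) \le |\mathcal{S}_{nu}|\, \mu(B_\epsilon)$. Finally, I would identify $\mu(B_\epsilon) = C_d \epsilon^d$ where $C_d$ denotes the volume of the unit ball in the chosen norm on $\mathbb{R}^d$, concluding the bound.

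There is no substantive obstacle here: the entire argument is a specialization plus an elementary measure-theoretic bound. The only small point worth stating carefully is the observation that continuity of $Q^*(\cdot,a)$ for each $a$ in a finite action set forces $\mathcal{S}' = \emptyset$, which in turn forces $\mathcal{S}_0 = \emptyset$; after that, the two bullets follow mechanically from Theorem~\ref{continuous a.e.} and standard properties of Lebesgue measure.
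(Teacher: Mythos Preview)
Your proposal is correct and follows essentially the same approach as the paper: both argue that continuity of $Q^*(\cdot,a)$ for all $a$ forces $\mathcal{S}' = \emptyset$, hence $\mathcal{S}_0 = \emptyset$, and then invoke Theorem~\ref{continuous a.e.} directly. Your treatment is in fact slightly more explicit than the paper's, which does not spell out the subadditivity and translation-invariance argument for the measure bound.
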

    \begin{proof}
        Corollary \ref{Th1} can be derived from Theorem \ref{continuous a.e.}
        because we have the following conclusion in continuous case.
        \begin{equation}
            \mathcal{S}_{0} \subseteq \{ s \in \mathcal{S}  |  \exists a \in \mathcal{A},  \text{s.t. $Q^*(s,a)$  is not continuous at } s \}=\emptyset.
            \notag
        \end{equation}
        Therefore, the theorem is concluded.
    \end{proof}
\begin{remark}
    Certain natural environments show smooth reward function and transition dynamics, especially in continuous control tasks where the transition dynamics come from some physical laws. Further, the value and action-value functions in these environments are continuous, making $\mathcal{S}_0$ an empty set.
\end{remark}

\subsection{Consistent Optimal Robust Policy and CAR Operator}
Define the consistent adversarial robust operator for adversarial action-value function:
\begin{equation} 
    \left( \mathcal{T}_{car} Q \right) (s,a) = r(s,a)  + \gamma \mathbb{E}_{ s^\prime \sim \mathbb{P}(\cdot|s,a)} \left[ \min _{s^\prime_\nu \in B_\epsilon(s^\prime)} Q \left(s^\prime,\mathop{\arg\max}_{a_{s^\prime_\nu}} Q\left(s^\prime_\nu, a_{s^\prime_\nu}\right)\right) \right].
    \notag
\end{equation}

\subsubsection{CAR Operator is Not a Contraction}\label{app: not a contraction}

\begin{theorem}
    $\mathcal{T}_{car}$ is \textit{not} a contraction.
\end{theorem}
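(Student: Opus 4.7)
The plan is to exhibit an explicit counterexample showing that $\mathcal{T}_{car}$ is not even Lipschitz continuous, which immediately rules out contractivity. The underlying reason is structural: the inner operation $\arg\max_{a_{s'_\nu}} Q(s'_\nu, a_{s'_\nu})$ appearing inside $\mathcal{T}_{car}$ is a discontinuous functional of $Q$. A vanishingly small perturbation of $Q(s'_\nu, \cdot)$ can flip the maximizing action, and that flip is then coupled to the outer evaluation $Q(s', \cdot)$, whose values at distinct actions need not be close at all. This is in sharp contrast with the Bellman operator $\mathcal{T}_B$, whose contractivity on $(L^\infty, \|\cdot\|_\infty)$ relies on $\max_a$ being $1$-Lipschitz in $Q$.

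Concretely, I would build a minimal MDP with action set $\mathcal{A} = \{a_1, a_2\}$, select a deterministic transition $(s,a)$ with $\mathbb{P}(s' \mid s, a) = 1$, and choose $\epsilon$ so that $B_\epsilon(s')$ contains at least one point $s_\nu^\star \neq s'$. For a small parameter $\delta > 0$, define two action-value functions $Q_1, Q_2$ that agree everywhere except on the pair $\{(s_\nu^\star, a_1), (s_\nu^\star, a_2)\}$, where their preferences are flipped: $Q_1(s_\nu^\star, a_1) = \delta/2$, $Q_1(s_\nu^\star, a_2) = -\delta/2$, whereas $Q_2(s_\nu^\star, a_1) = -\delta/2$, $Q_2(s_\nu^\star, a_2) = \delta/2$. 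I also fix $Q_i(s', a_1) = 1$ and $Q_i(s', a_2) = 0$ for both $i = 1, 2$, and assign values at all remaining $s'_\nu \in B_\epsilon(s')$ so that $\arg\max_a Q_i(s'_\nu, a) = a_1$. This last choice guarantees that the outer minimization $\min_{s'_\nu \in B_\epsilon(s')}$ inside $\mathcal{T}_{car}$ is attained at $s_\nu^\star$ for both functions, since the alternative $s'_\nu$ all produce the larger outer value $Q_i(s', a_1) = 1$.

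A direct computation then yields $\|Q_1 - Q_2\|_\infty = \delta$, while
$$ \bigl(\mathcal{T}_{car} Q_1\bigr)(s,a) - \bigl(\mathcal{T}_{car} Q_2\bigr)(s,a) = \gamma\bigl(Q_1(s', a_1) - Q_2(s', a_2)\bigr) = \gamma. $$
Hence the ratio of output distance to input distance is $\gamma/\delta$, which blows up as $\delta \to 0$. In particular no constant $L < \infty$ can satisfy $\|\mathcal{T}_{car} Q_1 - \mathcal{T}_{car} Q_2\|_\infty \le L\,\|Q_1 - Q_2\|_\infty$, and $\mathcal{T}_{car}$ is therefore not a contraction for any choice of modulus, including $\gamma$. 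The only mildly delicate step is the bookkeeping that forces $s_\nu^\star$ to be the actual minimizer for both $Q_1$ and $Q_2$ so the action flip contributes to the output gap; this is handled by the stipulation that $\arg\max_a Q_i(s'_\nu, a) = a_1$ at every competing perturbed state, which is easy to enforce since we have full freedom to specify $Q_i$ outside the two offending coordinates.
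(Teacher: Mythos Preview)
Your proposal is correct and follows essentially the same mechanism as the paper's proof: both exploit the discontinuity of the inner $\arg\max$ to flip the selected action at a perturbed state via an arbitrarily small $L^\infty$-perturbation of $Q$, which is then amplified by the outer evaluation $Q(s',\cdot)$ at the true state. The paper carries this out with explicit piecewise-linear functions on $\mathcal{S}=[-1,1]$ and a free scale parameter $n$, whereas your construction is a stripped-down finite-state version of the same idea; both yield an output gap that is bounded away from zero (in fact unbounded) while $\|Q_1-Q_2\|_\infty=\delta$ is arbitrarily small.
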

\begin{proof}
     Let $\mathcal{S}=[-1,1]$, $\mathcal{A}=\{a_1,a_2\}$, $0<\epsilon \ll 1$ and dynamic transition $\mathbb{P}(\cdot|s,a)$ be a determined function. Let $n > \max\{ \frac{\delta}{\gamma},2\delta\}$, $\delta > 0$ and
     \begin{equation}
      \begin{aligned}
         Q_1(s,a_1)&=2n \cdot \mathbbm{1}_{\left\{ s\in \left[ -1,0\right) \right\}}
         +\left[ 2n-\frac{2n-2\delta}{\frac{1}{8}\epsilon}s\right] \cdot \mathbbm{1}_{\left\{ s\in \left[ 0,\frac{1}{8}\epsilon \right) \right\}}
         +2\delta \cdot \mathbbm{1}_{\left\{ s\in \left[ \frac{1}{8}\epsilon,\frac{3}{8}\epsilon \right) \right\} }\\
         &+\left[ 2\delta+ \frac{n-2\delta}{\frac{1}{8}\epsilon}\left(  s-\frac{3\epsilon}{8} \right) \right] \cdot \mathbbm{1}_{\left\{ s\in \left[ \frac{3}{8}\epsilon,\frac{1}{2}\epsilon \right) \right\}}
         +n \cdot \mathbbm{1}_{\left\{ s\in \left[ \frac{1}{2}\epsilon,1\right] \right\}},
         \notag
     \end{aligned}
    \end{equation}
    
    \begin{equation}
      \begin{aligned}
         Q_1(s,a_2)&=n \cdot \mathbbm{1}_{\left\{ s\in \left[ -1,0\right) \right\}}
         +\left[ n-\frac{n-\delta}{\frac{1}{8}\epsilon}s\right] \cdot \mathbbm{1}_{\left\{ s\in \left[ 0,\frac{1}{8}\epsilon \right) \right\}}
         +\delta \cdot \mathbbm{1}_{\left\{ s\in \left[ \frac{1}{8}\epsilon,\frac{3}{8}\epsilon \right) \right\} }\\
         &+\left[ \delta+ \frac{2n-\delta}{\frac{1}{8}\epsilon}\left(  s-\frac{3\epsilon}{8} \right) \right] \cdot \mathbbm{1}_{\left\{ s\in \left[ \frac{3}{8}\epsilon,\frac{1}{2}\epsilon \right) \right\}}
         +2n \cdot \mathbbm{1}_{\left\{ s\in \left[ \frac{1}{2}\epsilon,1\right] \right\}},
         \notag
     \end{aligned}
    \end{equation}

    \begin{equation}
      \begin{aligned}
         Q_2(s,a_1)&=2n \cdot \mathbbm{1}_{\left\{ s\in \left[ -1,0\right) \right\}}
         +\left[ 2n-\frac{2n-\delta}{\frac{1}{8}\epsilon}s\right] \cdot \mathbbm{1}_{\left\{ s\in \left[ 0,\frac{1}{8}\epsilon \right) \right\}}
         +\delta \cdot \mathbbm{1}_{\left\{ s\in \left[ \frac{1}{8}\epsilon,\frac{3}{8}\epsilon \right) \right\} }\\
         &+\left[ \delta+ \frac{n-\delta}{\frac{1}{8}\epsilon}\left(  s-\frac{3\epsilon}{8} \right) \right] \cdot \mathbbm{1}_{\left\{ s\in \left[ \frac{3}{8}\epsilon,\frac{1}{2}\epsilon \right) \right\}}
         +n \cdot \mathbbm{1}_{\left\{ s\in \left[ \frac{1}{2}\epsilon,1\right] \right\}},
         \notag
     \end{aligned}
    \end{equation}

    \begin{equation}
      \begin{aligned}
         Q_2(s,a_2)&=n \cdot \mathbbm{1}_{\left\{ s\in \left[ -1,0\right) \right\}}
         +\left[ n-\frac{n-2\delta}{\frac{1}{8}\epsilon}s\right] \cdot \mathbbm{1}_{\left\{ s\in \left[ 0,\frac{1}{8}\epsilon \right) \right\}}
         +2\delta \cdot \mathbbm{1}_{\left\{ s\in \left[ \frac{1}{8}\epsilon,\frac{3}{8}\epsilon \right) \right\} }\\
         &+\left[ 2\delta+ \frac{2n-2\delta}{\frac{1}{8}\epsilon}\left(  s-\frac{3\epsilon}{8} \right) \right] \cdot \mathbbm{1}_{\left\{ s\in \left[ \frac{3}{8}\epsilon,\frac{1}{2}\epsilon \right) \right\}}
         +2n \cdot \mathbbm{1}_{\left\{ s\in \left[ \frac{1}{2}\epsilon,1\right] \right\}}.
         \notag
     \end{aligned}
    \end{equation}

     Then 
     \begin{equation}
         \|Q_1-Q_2\|_{L^\infty\left( \mathcal{S}\times\mathcal{A} \right)}=\delta.
         \notag
     \end{equation}
     We have
     \begin{equation}
     \begin{aligned}
         \mathcal{T}_{car} Q_1(s,a) - \mathcal{T}_{car} Q_2(s,a)&= \gamma \mathbb{E}_{ s^\prime \sim \mathbb{P}(\cdot|s,a)} \left[ \min _{s^1_\nu \in B_\epsilon(s^\prime)} Q_1 \left(s^\prime,\mathop{\arg\max}_{a_{s^1_\nu}} Q_1\left(s^1_\nu, a_{s^1_\nu}\right)\right) - \right.\\
         &\left. \min _{s^2_\nu \in B_\epsilon(s^\prime)} Q_2 \left(s^\prime,\mathop{\arg\max}_{a_{s^2_\nu}} Q_2\left(s^2_\nu, a_{s^2_\nu}\right)\right)  \right].
         \notag
     \end{aligned}
     \end{equation}
     Let  $\mathbb{P}(s^\prime=-\frac{\epsilon}{2}|s,a)=1$ and $s^\prime=-\frac{\epsilon}{2}$, then 
     \begin{equation}
         \begin{aligned}
             \min _{s^1_\nu \in B_\epsilon(s^\prime)} Q_1 \left(s^\prime,\mathop{\arg\max}_{a_{s^1_\nu}} Q_1\left(s^1_\nu, a_{s^1_\nu}\right)\right)=Q_1(s^{\prime},a_1),
             \notag
         \end{aligned}
     \end{equation}
     \begin{equation}
         \begin{aligned}
             \min _{s^1_\nu \in B_\epsilon(s^\prime)} Q_2 \left(s^\prime,\mathop{\arg\max}_{a_{s^2_\nu}} Q_2\left(s^2_\nu, a_{s^2_\nu}\right)\right)=Q_2(s^{\prime},a_2).
             \notag
         \end{aligned}
     \end{equation}
     Thus 
     \begin{equation}
     \begin{aligned}
        \mathcal{T}_{car} Q_1(s,a) - \mathcal{T}_{car} Q_2(s,a)=\gamma \left [ Q_1(s^{\prime},a_1)-Q_2(s^{\prime},a_2)\right ]
        =\gamma n
        >\delta,
        \notag
     \end{aligned}
     \end{equation}
     which means that
     \begin{align}
         \|\mathcal{T}_{car} Q_1 - \mathcal{T}_{car} Q_2\|_{L^\infty\left( \mathcal{S}\times\mathcal{A} \right)} >\|Q_1-Q_2\|_{L^\infty\left( \mathcal{S}\times\mathcal{A} \right)}.
         \notag
     \end{align}
     Therefore, $\mathcal{T}_{car}$ is not a contraction. 
\end{proof}

\subsubsection{Fixed Point of CAR Operator} \label{app: fixed point}

\begin{lemma}[Bellman equations for fixed $\pi$ and $\nu$ in SA-MDP, \citet{zhang2020robust}]\label{lem:bellman equations in samdp}
    Given $\pi: \mathcal{S}\rightarrow\Delta(\mathcal{A})$ and $\nu: \mathcal{S}\rightarrow\mathcal{S}$, we have 
    \begin{align*}
        V^{\pi\circ \nu}(s)&= \mathbb{E}_{a\sim \pi\left(\cdot|{\nu\left(s\right)}\right)}  Q^{\pi\circ \nu}(s,a)\\
        &= \mathbb{E}_{a\sim \pi\left(\cdot|{\nu\left(s\right)}\right)} \left[ r(s,a)  + \gamma \mathbb{E}_{ s^\prime \sim \mathbb{P}(\cdot|s,a)} V^{\pi\circ \nu}(s^\prime) \right],\\
        Q^{\pi\circ \nu}(s,a) &= r(s,a)  + \gamma \mathbb{E}_{ s^\prime \sim \mathbb{P}(\cdot|s,a)} V^{\pi\circ \nu}(s^\prime) \\
        &= r(s,a)  + \gamma \mathbb{E}_{ s^\prime \sim \mathbb{P}(\cdot|s,a), a^\prime\sim \pi\left(\cdot|{\nu\left(s^\prime\right)}\right)} Q^{\pi\circ \nu}(s^\prime,a^\prime).
    \end{align*}
\end{lemma}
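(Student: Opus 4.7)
The plan is to derive the four displayed identities directly from the definitions of $V^{\pi\circ\nu}$ and $Q^{\pi\circ\nu}$ as expected discounted returns, relying on two standard tools: linearity of expectation and the Markov property of the underlying dynamics. The only subtlety introduced by the SA-MDP setting is that the policy is evaluated on the perturbed observation $\nu(s_t)$ while the reward $r$ and transition kernel $\mathbb{P}$ still depend on the \emph{true} state $s_t$; careful bookkeeping of which state each quantity consumes is therefore essential.

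For the $Q$-recursion, I would expand
\begin{equation*}
Q^{\pi\circ\nu}(s,a) = r(s,a) + \mathbb{E}_{\pi\circ\nu,\mathbb{P}}\!\left[\sum_{t=1}^{\infty}\gamma^t r(s_t,a_t) \,\Big|\, s_0=s,\ a_0=a\right],
\end{equation*}
peel off the $t=0$ reward, and reindex the residual sum as $\gamma$ times an expectation, over $s_1\sim\mathbb{P}(\cdot\mid s,a)$, of a return starting from time zero at state $s_1$. Because all subsequent actions and states are generated by the composed policy $\pi\circ\nu$ from $s_1$ onward, the Markov property identifies this residual return exactly as $V^{\pi\circ\nu}(s_1)$, yielding the $Q$-to-$V$ identity in the third displayed line. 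Substituting the value recursion into the inner expectation then gives the $Q$-to-$Q$ identity on the final line.

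For the value recursion, I would begin with
\begin{equation*}
V^{\pi\circ\nu}(s) = \mathbb{E}_{\pi\circ\nu,\mathbb{P}}\!\left[\sum_{t=0}^{\infty}\gamma^t r(s_t,a_t) \,\Big|\, s_0=s\right],
\end{equation*}
and apply the tower property by first conditioning on $a_0$, which in the SA-MDP is drawn from $\pi(\cdot\mid\nu(s))$ rather than $\pi(\cdot\mid s)$. The inner conditional expectation given $(s_0,a_0)=(s,a)$ is precisely $Q^{\pi\circ\nu}(s,a)$, proving the first equality; the second equality then follows by plugging the $Q$-to-$V$ identity back in.

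The main obstacle, insofar as there is one, is purely notational rather than mathematical: one must be careful that when conditioning on $s_0=s$ the first action is drawn from $\pi(\cdot\mid\nu(s))$, and that the interchange of a countable sum with expectation is justified. Under the standing assumption that $r$ is bounded and $\gamma\in[0,1)$, the series converges absolutely and Fubini--Tonelli legitimizes every exchange; no contraction or fixed-point argument is required, so the lemma reduces to a direct Markov-property computation.
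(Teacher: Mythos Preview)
Your proposal is correct and follows the standard derivation of Bellman equations via the tower property and Markov property, with the single SA-MDP-specific adjustment that the initial action is sampled from $\pi(\cdot\mid\nu(s))$ rather than $\pi(\cdot\mid s)$. Note that the paper does not supply its own proof of this lemma: it is quoted verbatim from \citet{zhang2020robust} and used as a black box, so there is no in-paper argument to compare against; your direct computation is exactly what one would expect and is more than the paper itself provides.
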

\begin{lemma}[Bellman equation for strongest adversary $\nu^*$ in SA-MDP, \citet{zhang2020robust}]\label{lem:bellman equation for strongest adversary}
    \begin{equation}
        V^{\pi\circ \nu^*(\pi)}(s) = \min_{\nu(s)\in B(s)} \mathbb{E}_{a\sim \pi\left(\cdot|{\nu\left(s\right)}\right)}Q^{\pi\circ \nu^*(\pi)}(s,a).
        \notag
    \end{equation}
\end{lemma}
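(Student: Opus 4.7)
The plan is to show that $V^{\pi \circ \nu^*(\pi)}$ is a fixed point of the ``worst-case'' Bellman operator
$$(T_\pi V)(s) := \min_{\nu(s) \in B(s)} \mathbb{E}_{a \sim \pi(\cdot|\nu(s))}\left[ r(s,a) + \gamma \mathbb{E}_{s' \sim \mathbb{P}(\cdot|s,a)} V(s') \right],$$
since Lemma~\ref{lem:bellman equations in samdp} then lets me rewrite the bracketed quantity as $Q^{\pi \circ \nu^*(\pi)}(s,a)$, giving precisely the stated identity. So the whole proof reduces to identifying $V^{\pi \circ \nu^*(\pi)}$ with the unique fixed point of $T_\pi$.

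First, I would check that $T_\pi$ is a $\gamma$-contraction on $L^\infty(\mathcal{S})$: this follows because the inner expectation is affine in $V$ with coefficient $\gamma$, and the pointwise $\min$ over $\nu(s) \in B(s)$ is $1$-Lipschitz in sup-norm. By Banach's theorem $T_\pi$ admits a unique fixed point $V^\sharp$. Next, for each adversary $\nu$ let
$$(T^\nu_\pi V)(s) := \mathbb{E}_{a \sim \pi(\cdot|\nu(s))}\left[ r(s,a) + \gamma \mathbb{E}_{s' \sim \mathbb{P}(\cdot|s,a)} V(s') \right],$$
which is the Bellman operator whose unique fixed point is $V^{\pi \circ \nu}$ by Lemma~\ref{lem:bellman equations in samdp}. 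Since $T_\pi V \le T^\nu_\pi V$ pointwise for every $V$, iterating monotonically from $V_0 = V^{\pi \circ \nu}$ yields $V^\sharp = \lim_k T_\pi^k V^{\pi \circ \nu} \le V^{\pi \circ \nu}$ for every $\nu$.

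To produce the matching upper bound I would construct a stationary adversary $\nu^\sharp$ attaining the minimum in the definition of $T_\pi V^\sharp$ pointwise. By construction $T^{\nu^\sharp}_\pi V^\sharp = T_\pi V^\sharp = V^\sharp$, so uniqueness of the fixed point of $T^{\nu^\sharp}_\pi$ gives $V^{\pi \circ \nu^\sharp} = V^\sharp$. Combining with the previous step, $V^{\pi \circ \nu^\sharp}(s) \le V^{\pi \circ \nu}(s)$ for all $\nu$ and all $s$, so $\nu^\sharp$ realizes the pointwise minimum and may be taken as $\nu^*(\pi)$. Thus $V^{\pi \circ \nu^*(\pi)} = V^\sharp = T_\pi V^\sharp$, and rewriting $T_\pi V^\sharp$ via Lemma~\ref{lem:bellman equations in samdp} yields the stated Bellman equation.

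The main obstacle I anticipate is the measurable selection step producing $\nu^\sharp: \mathcal{S} \to \mathcal{S}$ with $\nu^\sharp(s) \in \arg\min_{\nu(s) \in B(s)}[\cdots]$. Under the paper's standing hypotheses ($\mathcal{S}$ compact, $B_\epsilon(s)$ compact, $\mathcal{A}$ finite), this follows from a standard measurable selection theorem (e.g.\ Kuratowski--Ryll-Nardzewski) provided the map $s' \mapsto \mathbb{E}_{a \sim \pi(\cdot|s')}[r(s,a) + \gamma \mathbb{E}_{s''} V^\sharp(s'')]$ is lower semicontinuous in $s'$; if such regularity of $\pi$ is not granted, one can replace $\nu^\sharp$ by a family of $\varepsilon$-optimal selectors, conclude $V^{\pi \circ \nu^\sharp_\varepsilon} \le V^\sharp + \tfrac{\varepsilon}{1-\gamma}$ from the contraction, and pass to the infimum over $\varepsilon > 0$ to recover the lemma without requiring the strongest adversary to be exactly realized.
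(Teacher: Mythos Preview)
The paper does not actually prove this lemma: it is quoted verbatim as a result of \citet{zhang2020robust} and used as a black box in the proof of Theorem~\ref{app thm: fixed point}. So there is no ``paper's own proof'' to compare against here.

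That said, your argument is the standard one and is essentially what the cited reference establishes. You correctly identify the problem as showing that $V^{\pi\circ\nu^*(\pi)}$ is the unique fixed point of the worst-case operator $T_\pi$, and your two-sided bound (contraction plus monotonicity for $V^\sharp \le V^{\pi\circ\nu}$, then a greedy selector $\nu^\sharp$ for the reverse) is exactly the usual MDP-style proof transplanted to the adversarial setting. The final rewriting via Lemma~\ref{lem:bellman equations in samdp} is also correct. Your caution about measurable selection is appropriate; in the cited reference the state space is treated as finite (or the selector is taken for granted), so the issue does not arise there, and your $\varepsilon$-optimal workaround handles the continuous case cleanly.
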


\begin{definition}
    Define the linear functional $\mathcal{L}^{\pi\circ \nu}: L^p\left( \mathcal{S}\times\mathcal{A} \right) \rightarrow L^p\left( \mathcal{S}\times\mathcal{A} \right)$ for fixed $\pi$ and $\nu$:
    \begin{equation}
        \left(\mathcal{L}^{\pi\circ \nu}Q\right)(s,a):=\mathbb{E}_{ s^\prime \sim \mathbb{P}(\cdot|s,a), a^\prime\sim \pi\left(\cdot|{\nu\left(s^\prime\right)}\right)} Q(s^\prime,a^\prime).
        \notag
    \end{equation}
\end{definition}
Then, by lemma \ref{lem:bellman equations in samdp}, we have that
\begin{equation}
    Q^{\pi\circ \nu} = r + \gamma \mathcal{L}^{\pi\circ \nu}Q^{\pi\circ \nu}.
    \notag
\end{equation}

\begin{lemma}\label{lem: bounded inverse functional}
    $\mathcal{T}:\mathcal{X} \rightarrow \mathcal{X}$ is a linear functional where $\mathcal{X}$ are normed vector space. If there exists $ m >0$ such that
    \begin{equation}
        \|\mathcal{T}x\| \ge m\|x\| \quad \forall x\in \mathcal{X},
        \notag
    \end{equation}
    then $\mathcal{T}$ has a bounded inverse operator $\mathcal{T}^{-1}$.
\end{lemma}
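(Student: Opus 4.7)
The plan is to prove this by direct construction on the range of $\mathcal{T}$, invoking only the defining bound $\|\mathcal{T}x\| \ge m\|x\|$ and the linearity of $\mathcal{T}$. No completeness or surjectivity assumption is actually needed for what is being asserted, so the argument should be elementary.

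First, I would establish injectivity. If $\mathcal{T}x = 0$, then $0 = \|\mathcal{T}x\| \ge m\|x\|$, and since $m > 0$ this forces $\|x\| = 0$, hence $x = 0$. The kernel of $\mathcal{T}$ is therefore trivial, so the map $\mathcal{T}: \mathcal{X} \to \mathcal{R}(\mathcal{T})$ is a linear bijection onto its range.

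Next I would define $\mathcal{T}^{-1}: \mathcal{R}(\mathcal{T}) \to \mathcal{X}$ by $\mathcal{T}^{-1}y = x$ whenever $y = \mathcal{T}x$; this is well-defined by the injectivity established above. Linearity of $\mathcal{T}^{-1}$ is inherited from that of $\mathcal{T}$: for $y_i = \mathcal{T}x_i$ and scalars $\alpha,\beta$, the vector $\alpha x_1 + \beta x_2$ is the unique preimage of $\alpha y_1 + \beta y_2$.

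Finally, the boundedness of $\mathcal{T}^{-1}$ is immediate from the hypothesis: for every $y = \mathcal{T}x \in \mathcal{R}(\mathcal{T})$,
\begin{equation*}
    \|\mathcal{T}^{-1}y\| = \|x\| \le \tfrac{1}{m}\|\mathcal{T}x\| = \tfrac{1}{m}\|y\|,
\end{equation*}
so $\|\mathcal{T}^{-1}\| \le 1/m$. Honestly, I do not anticipate any real obstacle here: the entire content of the lemma is a rearrangement of the bound $\|\mathcal{T}x\|\ge m\|x\|$ into a bound on the inverse. The only subtlety worth a sentence in the write-up is clarifying that $\mathcal{T}^{-1}$ is defined on $\mathcal{R}(\mathcal{T})$ rather than all of $\mathcal{X}$, since the hypothesis does not guarantee surjectivity.
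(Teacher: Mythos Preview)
Your proposal is correct and follows essentially the same argument as the paper: injectivity from the lower bound, then the inverse bound $\|\mathcal{T}^{-1}y\|\le \tfrac{1}{m}\|y\|$ by rearrangement. You are in fact slightly more careful than the paper in restricting $\mathcal{T}^{-1}$ to $\mathcal{R}(\mathcal{T})$, whereas the paper tacitly treats $\mathcal{T}$ as surjective onto $\mathcal{X}$.
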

\begin{proof}
    If  $\mathcal{T}x_1=\mathcal{T}x_2$, then $\mathcal{T}(x_1-x_2)=0$. While $0=\|\mathcal{T}(x_1-x_2)\| \geq m\|x_1-x_2\|$, thus $x_1=x_2$. Then $\mathcal{T}$ is a bijection and thus the inverse operator of $\mathcal{T}$ exists.

    For any $y\in\mathcal{X}$, $\mathcal{T}^{-1}y\in\mathcal{X}$. We have that
    \begin{align}
        \|y\| = \| \mathcal{T} \left( \mathcal{T}^{-1}y \right) \| \ge m \|  \mathcal{T}^{-1}y \|.
        \notag
    \end{align}
    Thus, we attain that
    \begin{equation}
        \|  \mathcal{T}^{-1}y \| \le \frac{1}{m} \|y\|,\quad \forall y\in\mathcal{X},
        \notag
    \end{equation}
    which shows that $\mathcal{T}^{-1}$ is bounded.
\end{proof}

\begin{lemma}\label{lem:invertible lemma}
    $I-\gamma \mathcal{L}^{\pi\circ \nu}$ is invertible and thus we have that
    \begin{equation}
        Q^{\pi\circ \nu} = \left(I-\gamma \mathcal{L}^{\pi\circ \nu}\right)^{-1} r.
        \notag
    \end{equation}
\end{lemma}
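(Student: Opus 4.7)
The plan is to establish invertibility via the Neumann series and then apply it to the Bellman fixed-point identity from Lemma~\ref{lem:bellman equations in samdp}.

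First, I would observe that $\mathcal{L}^{\pi\circ\nu}$ is a bounded linear operator with operator norm at most $1$. Linearity is immediate since expectation is linear in the integrand. For the norm bound, in the $L^\infty$ case Jensen's inequality yields
$$\left|(\mathcal{L}^{\pi\circ\nu}Q)(s,a)\right| = \left|\mathbb{E}_{s'\sim\mathbb{P}(\cdot|s,a),\,a'\sim\pi(\cdot|\nu(s'))} Q(s',a')\right| \le \|Q\|_\infty$$
uniformly in $(s,a)$, so $\|\mathcal{L}^{\pi\circ\nu}\|_{L^\infty\to L^\infty}\le 1$. For finite $p$ one uses the same reasoning, combining convexity of $|\cdot|^p$ (Jensen) applied to the probability kernel $(s,a)\mapsto(s',a')$ with Fubini's theorem against a reference measure that is invariant under the transition kernel, which again gives $\|\mathcal{L}^{\pi\circ\nu}\|_{L^p\to L^p}\le 1$.

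Second, since $\gamma\in[0,1)$, the operator $\gamma\mathcal{L}^{\pi\circ\nu}$ has norm strictly less than $1$. Therefore the Neumann series
$$\sum_{k=0}^{\infty}\bigl(\gamma\mathcal{L}^{\pi\circ\nu}\bigr)^k$$
converges in operator norm on the underlying Banach space and equals the bounded inverse of $I-\gamma\mathcal{L}^{\pi\circ\nu}$; in particular the operator is invertible. Finally, Lemma~\ref{lem:bellman equations in samdp} may be rewritten as $(I-\gamma\mathcal{L}^{\pi\circ\nu})Q^{\pi\circ\nu}=r$, and applying the inverse just constructed yields the claimed formula $Q^{\pi\circ\nu}=(I-\gamma\mathcal{L}^{\pi\circ\nu})^{-1}r$.

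An alternative route would be to invoke Lemma~\ref{lem: bounded inverse functional} via the triangle-inequality estimate $\|(I-\gamma\mathcal{L}^{\pi\circ\nu})Q\|\ge \|Q\|-\gamma\|\mathcal{L}^{\pi\circ\nu}Q\|\ge (1-\gamma)\|Q\|$, but this only directly delivers injectivity and boundedness of the inverse on the range, so surjectivity would still have to be supplied (most cleanly by the Neumann series itself or a closed-range argument). The main technical care point is therefore verifying the contraction estimate $\|\mathcal{L}^{\pi\circ\nu}\|\le 1$ in the exact $L^p$ space being used: this is essentially automatic in $L^\infty$, but for general $1\le p<\infty$ it requires choosing a reference measure compatible with the transition kernel $\mathbb{P}$ and the induced joint law of $(s',a')$, which is the only nontrivial step in the argument.
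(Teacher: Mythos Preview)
Your Neumann-series argument is correct and in fact tidier than the paper's own proof. The paper takes exactly the route you flag as the ``alternative'': it shows $\|\mathcal{L}^{\pi\circ\nu}Q\|_{L^\infty}\le\|Q\|_{L^\infty}$, deduces the lower bound $\|(I-\gamma\mathcal{L}^{\pi\circ\nu})Q\|_{L^\infty}\ge(1-\gamma)\|Q\|_{L^\infty}$, and then invokes Lemma~\ref{lem: bounded inverse functional}. Your diagnosis of that route is accurate---the lower bound delivers injectivity and a bound on the inverse, but not surjectivity---so the paper's invertibility proof is, strictly speaking, incomplete at this point. The gap is patched de~facto in the very next lemma (Lemma~\ref{lem:nonegative}), where the paper writes down the Neumann series $\sum_{t\ge 0}\gamma^t(\mathcal{L}^{\pi\circ\nu})^t$ and checks it inverts $I-\gamma\mathcal{L}^{\pi\circ\nu}$; your approach simply front-loads that step, which is the cleaner ordering. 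On the $L^p$ issue: the paper declares the operator on $L^p$ but proves and uses the norm bound only in $L^\infty$, so your worry about needing a kernel-compatible reference measure for $p<\infty$ is legitimate but moot here---you can restrict to $L^\infty$ throughout and omit the finite-$p$ discussion.
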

\begin{proof}
    Firstly, for all $(s,a)\in \mathcal{S}\times\mathcal{A}$, we have
    \begin{align*}
        \left(\mathcal{L}^{\pi\circ \nu}Q\right)(s,a)&=\mathbb{E}_{ s^\prime \sim \mathbb{P}(\cdot|s,a), a^\prime\sim \pi\left(\cdot|{\nu\left(s^\prime\right)}\right)} Q(s^\prime,a^\prime) \\
        &\le \mathbb{E}_{ s^\prime \sim \mathbb{P}(\cdot|s,a), a^\prime\sim \pi\left(\cdot|{\nu\left(s^\prime\right)}\right)} \left\| Q\right\|_{L^\infty\left( \mathcal{S}\times\mathcal{A} \right)} \\
        &= \left\| Q\right\|_{L^\infty\left( \mathcal{S}\times\mathcal{A} \right)}
    \end{align*}
    Thus, we have that
    \begin{equation}\label{eq:nonexpansion of pi nu bellman operator}
        \left\|\mathcal{L}^{\pi\circ \nu} Q \right\|_{L^\infty\left( \mathcal{S}\times\mathcal{A} \right)} \le \left\| Q\right\|_{L^\infty\left( \mathcal{S}\times\mathcal{A} \right)}.
    \end{equation}
    For any $Q\in L^p\left( \mathcal{S}\times\mathcal{A} \right)$, we have
    \begin{align*}
        \left\| \left(I-\gamma \mathcal{L}^{\pi\circ \nu}\right) Q \right\|_{L^\infty\left( \mathcal{S}\times\mathcal{A} \right)} &= \left\| Q-\gamma \mathcal{L}^{\pi\circ \nu} Q \right\|_{L^\infty\left( \mathcal{S}\times\mathcal{A} \right)}\\
        &\ge \left\| Q\right\|_{L^\infty\left( \mathcal{S}\times\mathcal{A} \right)}-\gamma \left\|\mathcal{L}^{\pi\circ \nu} Q \right\|_{L^\infty\left( \mathcal{S}\times\mathcal{A} \right)} \\
        &\ge \left\| Q\right\|_{L^\infty\left( \mathcal{S}\times\mathcal{A} \right)}-\gamma \left\| Q\right\|_{L^\infty\left( \mathcal{S}\times\mathcal{A} \right)} \\
        &= \left(1-\gamma\right)\left\| Q\right\|_{L^\infty\left( \mathcal{S}\times\mathcal{A} \right)},
    \end{align*}
    where the first inequality comes from the triangle inequality and the second inequality comes from \eqref{eq:nonexpansion of pi nu bellman operator}. Then, according to lemma \ref{lem: bounded inverse functional}, we attain that $I-\gamma \mathcal{L}^{\pi\circ \nu}$ is invertible.
\end{proof}

\begin{lemma} \label{lem:nonegative}
    If $Q>0$ for all $(s,a)\in \mathcal{S}\times\mathcal{A}$, then we have that $\left(I-\gamma \mathcal{L}^{\pi\circ \nu}\right)^{-1} Q > 0$ for all $(s,a)\in \mathcal{S}\times\mathcal{A}$.
\end{lemma}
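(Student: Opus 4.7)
The plan is to represent the inverse by the Neumann series and exploit the positivity-preserving nature of $\mathcal{L}^{\pi\circ\nu}$.

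First I would establish convergence of the Neumann series. From the proof of Lemma~\ref{lem:invertible lemma}, we already have $\|\mathcal{L}^{\pi\circ\nu}Q\|_{L^\infty} \le \|Q\|_{L^\infty}$, so $\mathcal{L}^{\pi\circ\nu}$ is a bounded linear operator on $L^\infty(\mathcal{S}\times\mathcal{A})$ with operator norm at most $1$. Consequently $\|\gamma\mathcal{L}^{\pi\circ\nu}\| \le \gamma < 1$, so the Neumann series
\begin{equation}\notag
\left(I-\gamma \mathcal{L}^{\pi\circ \nu}\right)^{-1} = \sum_{k=0}^\infty \gamma^k \left(\mathcal{L}^{\pi\circ \nu}\right)^k
\end{equation}
converges in operator norm and coincides with the inverse constructed in Lemma~\ref{lem:invertible lemma}.

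Next I would verify that $\mathcal{L}^{\pi\circ\nu}$ is a positive operator. By its definition as a conditional expectation, $(\mathcal{L}^{\pi\circ\nu}Q)(s,a) = \mathbb{E}_{s'\sim\mathbb{P}(\cdot|s,a),\, a'\sim\pi(\cdot|\nu(s'))}Q(s',a')$, so if $Q \ge 0$ pointwise then $\mathcal{L}^{\pi\circ\nu}Q \ge 0$ pointwise. By induction, each power $(\mathcal{L}^{\pi\circ\nu})^k Q \ge 0$ pointwise whenever $Q \ge 0$.

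Finally I would combine the two ingredients. Given $Q > 0$ pointwise, the $k=0$ term in the series is $Q > 0$, while every subsequent term $\gamma^k (\mathcal{L}^{\pi\circ\nu})^k Q$ is nonnegative pointwise by the preceding paragraph. Summing, $(I-\gamma\mathcal{L}^{\pi\circ\nu})^{-1}Q$ is bounded below pointwise by $Q > 0$, establishing the claim. The only subtle point — which I would expect to be the main place to be careful — is distinguishing the pointwise interpretation of ``$Q > 0$'' from inequalities in $L^\infty$-norm: convergence of the series holds in norm, but pointwise positivity is inherited term by term, and the strict positivity survives because the leading term alone is already strictly positive.
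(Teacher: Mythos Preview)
Your proposal is correct and follows essentially the same approach as the paper: represent the inverse by the Neumann series $\sum_{k\ge 0}\gamma^k(\mathcal{L}^{\pi\circ\nu})^k$, use that $\mathcal{L}^{\pi\circ\nu}$ preserves (non)negativity as a conditional expectation, and conclude strict positivity from the $k=0$ term. If anything, your handling of the $>0$ versus $\ge 0$ distinction is slightly cleaner than the paper's, which asserts strict positivity of every iterate without separating the leading term.
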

\begin{proof}
    At first, we have
    \begin{align*}
        &\quad \left(I-\gamma \mathcal{L}^{\pi\circ \nu}\right) \left( \sum_{t=0}^\infty \gamma^t \left(\mathcal{L}^{\pi\circ \nu}\right)^t \right) \\
        &=\sum_{t=0}^\infty \gamma^t \left(\mathcal{L}^{\pi\circ \nu}\right)^t - \sum_{t=1}^\infty \gamma^t \left(\mathcal{L}^{\pi\circ \nu}\right)^t\\
        &= I.
    \end{align*}
    Thus, we get that
    \begin{equation}
        \left(I-\gamma \mathcal{L}^{\pi\circ \nu}\right)^{-1} = \sum_{t=0}^\infty \gamma^t \left(\mathcal{L}^{\pi\circ \nu}\right)^t.
        \notag
    \end{equation}
    If $Q(s,a)>0$ for all $(s,a)\in \mathcal{S}\times\mathcal{A}$, then for all $(s,a)\in \mathcal{S}\times\mathcal{A}$, we have
    \begin{equation}
        \left(\mathcal{L}^{\pi\circ \nu}Q\right)(s,a)=\mathbb{E}_{ s^\prime \sim \mathbb{P}(\cdot|s,a), a^\prime\sim \pi\left(\cdot|{\nu\left(s^\prime\right)}\right)} Q(s^\prime,a^\prime) \ge 0.
        \notag
    \end{equation}
    Further, we have that $\left(\left(\mathcal{L}^{\pi\circ \nu}\right)^k Q\right) (s,a)> 0$ for all $k\in\mathbb{N}$ and $(s,a)\in \mathcal{S}\times\mathcal{A}$. Thus, we have 
    \begin{align*}
        &\quad \left(I-\gamma \mathcal{L}^{\pi\circ \nu}\right)^{-1} Q (s,a) \\
        &=\sum_{t=0}^\infty \gamma^t \left(\left(\mathcal{L}^{\pi\circ \nu}\right)^t Q\right) (s,a) \\
        &>0.
    \end{align*}
    Therefore, the proof of the lemma is concluded.
\end{proof}

\begin{theorem}
    If the optimal adversarial action-value function under the strongest adversary $Q_0(s,a):=\max_\pi \min_\nu Q^{\pi\circ \nu}(s,a)$ exists for all $s\in\mathcal{S}$ and $a\in\mathcal{A}$, then it is the fixed point of CAR operator.
\end{theorem}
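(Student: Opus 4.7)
The goal is to show $\mathcal{T}_{car}Q_0 = Q_0$. Let $\pi^*$ denote an ORP witnessing $Q_0 = Q^{\pi^*\circ\nu^*(\pi^*)}$ and set $\nu^* := \nu^*(\pi^*)$. The plan is to unfold $Q_0$ using the two Bellman-type lemmas already available: Lemma \ref{lem:bellman equations in samdp} gives $Q_0(s,a) = r(s,a) + \gamma\mathbb{E}_{s'\sim\mathbb{P}(\cdot|s,a)}V_0(s')$ with $V_0 := V^{\pi^*\circ\nu^*}$, and Lemma \ref{lem:bellman equation for strongest adversary} gives $V_0(s') = \min_{s'_\nu\in B(s')} \mathbb{E}_{a'\sim\pi^*(\cdot|s'_\nu)} Q_0(s',a')$. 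Comparing this with the explicit form of $\mathcal{T}_{car}Q_0(s,a)$, the whole proof reduces to the pointwise identity
\[
\min_{s'_\nu\in B(s')} \mathbb{E}_{a'\sim\pi^*(\cdot|s'_\nu)} Q_0(s',a') \;=\; \min_{s'_\nu\in B(s')} Q_0\left(s',\ \arg\max_{a'} Q_0(s'_\nu,a')\right).
\]

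The central claim behind this identity is that the ORP $\pi^*$ can be taken to act greedily with respect to $Q_0$ at every observation, that is, $\pi^*(\cdot|s'_\nu)$ is supported on $\arg\max_{a'} Q_0(s'_\nu,a')$ for every $s'_\nu$. I would establish this via a one-step policy-improvement argument: if $\pi^*$ places mass on a non-greedy action at some $s'_\nu$, define $\tilde\pi$ that coincides with $\pi^*$ everywhere except at $s'_\nu$, where it picks the greedy action. Using the resolvent representation $Q^{\pi\circ\nu} = (I-\gamma\mathcal{L}^{\pi\circ\nu})^{-1} r$ from Lemma \ref{lem:invertible lemma} and the positivity of the resolvent from Lemma \ref{lem:nonegative}, one shows $Q^{\tilde\pi\circ\nu^*} \ge Q_0$ pointwise; maximality of $\pi^*$ in $\max_\pi\min_\nu Q^{\pi\circ\nu}$ then forces equality, so without loss of generality $\pi^*$ may be taken greedy in $Q_0$. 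With the greedy property in hand, the inner expectation collapses to $Q_0(s',\arg\max_{a'} Q_0(s'_\nu,a'))$ for every $s'_\nu$, the pointwise identity above follows, and substituting back into the Bellman form of $Q_0$ gives $Q_0 = \mathcal{T}_{car} Q_0$.

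The main obstacle is the greedy lemma in the second paragraph. Unlike the classical MDP setting, where the Bellman optimality operator is a contraction and standard policy improvement applies directly, here the strongest adversary $\nu^*(\pi)$ shifts whenever $\pi$ is perturbed, so one must track the policy modification and the adversary's reaction simultaneously rather than holding $\nu$ fixed. The nonexpansion of $\mathcal{L}^{\pi\circ\nu}$ in $L^\infty$ established inside the proof of Lemma \ref{lem:invertible lemma}, together with the positivity/monotonicity of the resolvent from Lemma \ref{lem:nonegative}, are the key tools for bounding how much the adversary can recapture after $\pi^*$ is locally modified; a careful accounting of these two effects is what makes this step the crux of the argument, and it is also where the assumed existence of $Q_0$ is essential, since it guarantees that a single ORP attains the max--min at every $(s,a)$ so that the single-observation perturbation is well-defined.
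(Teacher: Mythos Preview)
The paper's proof does not pass through your greedy lemma. It asserts directly that
\[
V_0(s') \;=\; \min_{s'_\nu\in B(s')}\,\max_{\pi}\,\mathbb{E}_{a\sim\pi(\cdot|s'_\nu)}Q_0(s',a),
\]
citing only Lemma~\ref{lem:bellman equation for strongest adversary}, and from this one more line produces the CAR form. So your route --- fix a witnessing ORP $\pi^*$ and argue it can be taken greedy in $Q_0$ --- is genuinely different from, and more explicit than, the paper's about where the nontrivial content sits.

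The gap is precisely in the greedy step, and the policy-improvement argument you sketch does not close it. Replacing $\pi^*(\cdot|s'_\nu)$ by the point mass at $\arg\max_a Q_0(s'_\nu,a)$ is \emph{not} an improvement in the sense the resolvent machinery requires: at any true state $s$ with $s'_\nu\in B(s)$, the one-step gain is
\[
Q_0\bigl(s,\,\arg\max_a Q_0(s'_\nu,a)\bigr)\;-\;\mathbb{E}_{a\sim\pi^*(\cdot|s'_\nu)}Q_0(s,a),
\]
and this need not be nonnegative, because the greedy action maximizes $Q_0(s'_\nu,\cdot)$, not $Q_0(s,\cdot)$, and $s\neq s'_\nu$ in general. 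Hence the key nonnegativity $(\mathcal{L}^{\tilde\pi\circ\nu^*(\pi^*)}-\mathcal{L}^{\pi^*\circ\nu^*(\pi^*)})Q_0\ge 0$ you would need before invoking Lemma~\ref{lem:nonegative} simply fails. If instead you run the resolvent computation with the new adversary $\nu^*(\tilde\pi)$ from the start, the required inequality collapses to $\min_{s'_\nu}Q_0(s',\tilde\pi(s'_\nu))\ge V_0(s')$ for every $s'$, which is one half of the very identity you are trying to establish --- the argument is circular. Neither the $L^\infty$ nonexpansion of $\mathcal{L}^{\pi\circ\nu}$ nor resolvent positivity extracts the greedy property from the mere existence of $Q_0$; the paper sidesteps this entirely by working with the $\min$--$\max$ Bellman form of $V_0$ directly rather than through a fixed witnessing policy.
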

\begin{proof}
    Denote $V_0(s) = \max_\pi \min_\nu V^{\pi\circ \nu}(s)$.
    For all $s\in\mathcal{S}$ and $a\in\mathcal{A}$, we have 
    \begin{align*}
        Q_0(s,a) &= \max_\pi \min_\nu Q^{\pi\circ \nu}(s,a) \\
        &= r(s,a)  + \gamma \max_\pi \min_\nu \mathbb{E}_{ s^\prime \sim \mathbb{P}(\cdot|s,a)} V^{\pi\circ \nu}(s^\prime) \\
        &= r(s,a)  + \gamma \mathbb{E}_{ s^\prime \sim \mathbb{P}(\cdot|s,a)} V_0(s^\prime) \\
        &=r(s,a)  + \gamma \mathbb{E}_{ s^\prime \sim \mathbb{P}(\cdot|s,a)} \min_{\nu(s)\in B(s)} \max_\pi \mathbb{E}_{a\sim \pi\left(\cdot|{\nu\left(s\right)}\right)}Q_0(s,a)\\
        &= r(s,a)  + \gamma \mathbb{E}_{ s^\prime \sim \mathbb{P}(\cdot|s,a)} \left[ \min _{s^\prime_\nu \in B(s^\prime)} Q_0 \left(s^\prime,\mathop{\arg\max}_{a_{s^\prime_\nu}} Q_0\left(s^\prime_\nu, a_{s^\prime_\nu}\right)\right) \right]\\
        &= \left( \mathcal{T}_{car} Q \right)(s,a),
    \end{align*}
    where the fourth equation comes from lemma \ref{lem:bellman equation for strongest adversary}. This completes the proof.
\end{proof}

\begin{theorem}\label{app thm: fixed point}
    Within the ISA-MDP, $Q^*$ is the fixed point of the CAR operator. Further, $Q^*$ is the optimal adversarial action-value function under the strongest adversary, i.e. $Q^*(s, a) = \max_\pi \min_\nu Q^{\pi\circ \nu}(s, a)$, for all $s\in\mathcal{S}$ and $a\in\mathcal{A}$.
\end{theorem}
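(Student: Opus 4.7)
The plan is to prove the two assertions separately: first, that $Q^*$ is a fixed point of $\mathcal{T}_{car}$ within the ISA-MDP by a direct substitution argument; second, that $Q^*$ equals the max-min adversarial action-value function by exhibiting the Bellman greedy policy $\pi^*(s) := \arg\max_a Q^*(s, a)$ as an explicit max-min witness. The guiding observation throughout is that the intrinsic adversary, by construction, cannot alter the argmax of $Q^*$, so it is powerless against $\pi^*$.

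For the fixed-point step, I would fix $s\in\mathcal{S}$, $a\in\mathcal{A}$, and any $s'\in\mathcal{S}$. By Definition~\ref{defn: intrinsic state neighborhood}, for every $s'_\nu \in B^*(s')$ we have the set equality $\arg\max_{a'} Q^*(s'_\nu, a') = \arg\max_{a'} Q^*(s', a')$. Hence any element $a^*$ of this set satisfies $Q^*(s', a^*) = \max_{a'} Q^*(s', a')$ independently of $s'_\nu$ and of the tie-breaking rule used in the inner $\arg\max$. The inner minimum over $s'_\nu$ in the CAR operator therefore collapses to a constant, yielding
\begin{equation*}
    (\mathcal{T}_{car} Q^*)(s, a) = r(s, a) + \gamma\, \mathbb{E}_{s' \sim \mathbb{P}(\cdot|s, a)} \max_{a'} Q^*(s', a'),
\end{equation*}
which is exactly the Bellman optimality equation. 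Hence $\mathcal{T}_{car} Q^* = Q^*$ within the ISA-MDP.

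For the max-min identification, I would first note the easy upper bound: the identity map $\nu_0(s) = s$ is a valid intrinsic adversary since $s \in B^*(s)$ trivially, so $\min_{\nu_I} Q^{\pi \circ \nu_I}(s, a) \le Q^{\pi \circ \nu_0}(s, a) = Q^\pi(s, a) \le Q^*(s, a)$ for every policy $\pi$, giving $\max_\pi \min_{\nu_I} Q^{\pi \circ \nu_I} \le Q^*$. For the matching lower bound, I would take $\pi^*$ to be the greedy policy with respect to $Q^*$ (with any fixed tie-breaking rule). For any intrinsic adversary $\nu_I$ and any state $s$, the condition $\nu_I(s) \in B^*(s)$ forces $\pi^*(\nu_I(s)) = \pi^*(s)$, so $\pi^* \circ \nu_I = \pi^*$ as a state-to-action mapping. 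Invoking Lemma~\ref{lem:invertible lemma}, $Q^{\pi^* \circ \nu_I} = (I - \gamma \mathcal{L}^{\pi^* \circ \nu_I})^{-1} r = (I - \gamma \mathcal{L}^{\pi^*})^{-1} r = Q^{\pi^*} = Q^*$ for every intrinsic adversary $\nu_I$, so $\min_{\nu_I} Q^{\pi^* \circ \nu_I} = Q^*$, which realizes the upper bound and closes the chain of inequalities.

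The main subtlety is purely cosmetic: it lies in handling the case when $\arg\max_a Q^*(s, a)$ is not a singleton. Since the intrinsic neighborhood is defined in terms of \emph{set} equality of argmaxes, any consistent tie-breaking rule makes the identity $\pi^* \circ \nu_I = \pi^*$ hold as functions, so both the reduction in $\mathcal{T}_{car} Q^*$ and the invariance $Q^{\pi^* \circ \nu_I} = Q^{\pi^*}$ go through without extra continuity or uniqueness assumptions. Once this is observed, both assertions in the theorem follow immediately.
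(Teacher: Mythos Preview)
Your fixed-point argument is essentially identical to the paper's: both observe that within $B^*(s')$ the $\arg\max$ of $Q^*$ is invariant, so the inner bilevel structure of $\mathcal{T}_{car}$ collapses to the ordinary Bellman optimality operator.

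For the max-min identity, your route is correct but genuinely different from the paper's. The paper fixes the specific pair $\pi(s)=\arg\max_a Q^*(s,a)$ and $\nu(s)=\arg\min_{s_\nu\in B(s)}Q^*(s,\arg\max_a Q^*(s_\nu,a))$, shows $Q^*=Q^{\pi\circ\nu}=Q^{\pi\circ\nu^*(\pi)}$, and then, to prove dominance over every competing $\pi'$, computes
\[
Q^*-Q^{\pi'\circ\nu^*(\pi')}=\gamma\bigl(I-\gamma\mathcal{L}^{\pi'\circ\nu^*(\pi')}\bigr)^{-1}\bigl(\mathcal{L}^{\pi\circ\nu}-\mathcal{L}^{\pi'\circ\nu^*(\pi')}\bigr)Q^{\pi\circ\nu}\ge 0,
\]
invoking Lemma~\ref{lem:nonegative} on the positivity of the resolvent. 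Your squeeze argument is more elementary: the identity adversary gives the upper bound $\max_\pi\min_{\nu_I}Q^{\pi\circ\nu_I}\le Q^*$ for free via $Q^\pi\le Q^*$, and the observation $\pi^*\circ\nu_I=\pi^*$ for \emph{every} intrinsic adversary gives $Q^{\pi^*\circ\nu_I}\equiv Q^{\pi^*}=Q^*$, supplying the matching lower bound. This avoids Lemma~\ref{lem:nonegative} entirely and never singles out a worst adversary; the price is that you import the standard MDP fact $Q^{\pi^*}=Q^*$, which the paper's operator manipulation does not need explicitly. Both proofs are valid; yours is shorter, while the paper's operator-algebra approach yields, as a by-product, the explicit comparison $Q^*\ge Q^{\pi'\circ\nu^*(\pi')}$ that is reused in the subsequent corollary.
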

\begin{proof}
    \begin{align*}
        \left(\mathcal{T}_{car} Q^*\right)(s,a) &= r(s,a)  + \gamma \mathbb{E}_{ s^\prime \sim \mathbb{P}(\cdot|s,a)} \left[ \min _{s^\prime_\nu \in B^*(s^\prime)} Q^* \left(s^\prime,\mathop{\arg\max}_{a_{s^\prime_\nu}} Q^*\left(s^\prime_\nu, a_{s^\prime_\nu}\right)\right) \right]\\
        &=  r(s,a)  + \gamma \mathbb{E}_{ s^\prime \sim \mathbb{P}(\cdot|s,a)} \left[ \min _{s^\prime_\nu \in B^*(s^\prime)} \max_{a^\prime} Q^* \left(s^\prime,a^\prime \right) \right]\\
        &= r(s,a)  + \gamma \mathbb{E}_{ s^\prime \sim \mathbb{P}(\cdot|s,a)} \left[ \max_{a^\prime} Q^* \left(s^\prime,a^\prime \right) \right]\\
        &= Q^*(s,a),
    \end{align*}
    where the second equality utilizes the definition of $B^*(s^\prime)$. Thus, $Q^*$ is a fixed point of the CAR operator.
    
    Define $\pi$ and $\nu$ as the following:
    \begin{align}
        \pi(s) &:= \mathop{\arg\max}_a Q^*(s,a), \label{eq:pi}\\ 
        \nu(s) &:= \mathop{\arg\min}_{s_\nu \in B(s)} Q^* \left(s,\mathop{\arg\max}_{a_{s_\nu}} Q^*\left(s_\nu, a_{s_\nu}\right)\right). \label{eq:nu}
    \end{align}
    Then, we have 
    \begin{align*}
        \left( \mathcal{T}_{car} Q^* \right) (s,a) &= r(s,a)  + \gamma \mathbb{E}_{ s^\prime \sim \mathbb{P}(\cdot|s,a)} \left[ \min _{s^\prime_\nu \in B(s^\prime)} Q^* \left(s^\prime,\mathop{\arg\max}_{a_{s^\prime_\nu}} Q^*\left(s^\prime_\nu, a_{s^\prime_\nu}\right)\right) \right] \\
        &= r(s,a)  + \gamma \mathbb{E}_{ s^\prime \sim \mathbb{P}(\cdot|s,a)} \left[  Q^* \left(s^\prime,\mathop{\arg\max}_{a_{\nu(s^\prime)}} Q^*\left(\nu(s^\prime), a_{\nu(s^\prime)}\right)\right) \right] \\
        &= r(s,a)  + \gamma \mathbb{E}_{ s^\prime \sim \mathbb{P}(\cdot|s,a)} \left[  Q^* \left(s^\prime, \pi\left(\nu(s^\prime)\right)\right) \right] \\
        &=  r(s,a) + \gamma\left(\mathcal{L}^{\pi\circ \nu}Q^*\right)(s,a).
    \end{align*}
    Thus, we have 
    \begin{equation}
        Q^* = \left(I-\gamma \mathcal{L}^{\pi\circ \nu}\right)^{-1} r = Q^{\pi\circ \nu},
        \notag
    \end{equation}
    where equations comes from lemma \ref{lem:invertible lemma}. Further, according to the definition of ISA-MDP, we attain $Q^{\pi\circ \nu}(s,a)= Q^{\pi\circ \nu^*(\pi)}$.
    This shows that $Q^*$ is the action-value adversarial function of policy $\pi$ under the strongest adversary $\nu=\nu^*(\pi)$.

    According to the definition of $B^*$, within the ISA-MDP, we have that 
    \begin{equation}\label{eq: consistency}
        \pi(\nu(s)) = \pi(s), \quad \forall s\in\mathcal{S}.
    \end{equation}
    Then, for any stationary policy $\pi^\prime$, we have that
    \begin{equation}
    \begin{aligned}
        &\quad \left[\left(\mathcal{L}^{\pi\circ \nu} - \mathcal{L}^{\pi^\prime\circ \nu^*(\pi^\prime)} \right) Q^{\pi\circ \nu}\right] (s,a)\\
        &= \mathbb{E}_{ s^\prime \sim \mathbb{P}(\cdot|s,a)} \left[  Q^{\pi\circ \nu} \left(s^\prime, \pi\left(\nu(s^\prime)\right)\right) - \mathbb{E}_{  a^\prime\sim \pi^\prime\left(\cdot|{\nu^*\left(s^\prime; \pi^\prime\right)}\right)} Q^{\pi\circ \nu}(s^\prime,a^\prime) \right] \\
        &= \mathbb{E}_{ s^\prime \sim \mathbb{P}(\cdot|s,a)} \left[  Q^{\pi\circ \nu} \left(s^\prime, \pi\left(s^\prime\right)\right) - \mathbb{E}_{  a^\prime\sim \pi^\prime\left(\cdot|{\nu^*\left(s^\prime; \pi^\prime\right)}\right)} Q^{\pi\circ \nu}(s^\prime,a^\prime) \right] \\
        &= \mathbb{E}_{ s^\prime \sim \mathbb{P}(\cdot|s,a),   a^\prime\sim \pi^\prime\left(\cdot|{\nu^*\left(s^\prime; \pi^\prime\right)}\right)} \left[  Q^{\pi\circ \nu} \left(s^\prime, \pi\left(s^\prime\right)\right) - Q^{\pi\circ \nu}(s^\prime,a^\prime) \right] \\
        &\ge 0, \label{eq: nonegative 2}
    \end{aligned}
    \end{equation}
    where the second equality comes from \eqref{eq: consistency} and the last inequality comes from \eqref{eq:pi}.

    Further, we have that
    \begin{align*}
        Q^* - Q^{\pi^\prime\circ \nu^*(\pi^\prime)} &= Q^{\pi\circ \nu} - Q^{\pi^\prime\circ \nu^*(\pi^\prime)} \\
        &= Q^{\pi\circ \nu} - \left(I-\gamma \mathcal{L}^{\pi^\prime\circ \nu^*(\pi^\prime)}\right)^{-1} r \\
        &= Q^{\pi\circ \nu} - \left(I-\gamma \mathcal{L}^{\pi^\prime\circ \nu^*(\pi^\prime)}\right)^{-1} \left(I-\gamma \mathcal{L}^{\pi\circ \nu}\right) Q^{\pi\circ \nu} \\
        &= \left(I-\gamma \mathcal{L}^{\pi^\prime\circ \nu^*(\pi^\prime)}\right)^{-1} \left( \left(I-\gamma \mathcal{L}^{\pi^\prime\circ \nu^*(\pi^\prime)}\right) - \left(I-\gamma \mathcal{L}^{\pi\circ \nu}\right)  \right) Q^{\pi\circ \nu} \\
        &= \gamma \left(I-\gamma \mathcal{L}^{\pi^\prime\circ \nu^*(\pi^\prime)}\right)^{-1} \left( \mathcal{L}^{\pi\circ \nu} - \mathcal{L}^{\pi^\prime\circ \nu^*(\pi^\prime)} \right) Q^{\pi\circ \nu} \\
        &\ge 0,
    \end{align*}
    where the last inequality comes from \eqref{eq: nonegative 2} and lemma \ref{lem:nonegative}. 
    
    Thus, we have that $Q^{\pi\circ \nu} = Q^* \ge Q^{\pi^\prime\circ \nu^*(\pi^\prime)}$ for all policy $\pi^\prime$ which shows that $\pi$ is the optimal robust policy under strongest adversary.
\end{proof}

\begin{corollary}
    Within the ISA-MDP, there exists a deterministic and stationary policy $\pi^*$ which satisfies $V^{\pi^*\circ \nu^*(\pi^*)}(s) \ge V^{\pi\circ \nu^*(\pi)}(s)$ and $Q^{\pi^*\circ \nu^*(\pi^*)}(s, a) \ge Q^{\pi\circ \nu^*(\pi)}(s, a)$ for all $\pi\in\Pi$, $s\in\mathcal{S}$ and $a\in\mathcal{A}$.
\end{corollary}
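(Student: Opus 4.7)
The plan is to extract the policy $\pi^*$ directly from the proof of Theorem~\ref{app thm: fixed point}. Specifically, I would define $\pi^*(s) := \mathop{\arg\max}_a Q^*(s,a)$, exactly as in equation~\eqref{eq:pi}. By construction this policy is deterministic and stationary, so the only task left is to verify the two claimed inequalities. The $Q$-function inequality is essentially immediate: Theorem~\ref{app thm: fixed point} establishes $Q^*(s,a) = Q^{\pi^*\circ \nu^*(\pi^*)}(s,a)$ and, in its closing computation, shows $Q^{\pi\circ\nu} = Q^* \ge Q^{\pi'\circ \nu^*(\pi')}$ pointwise for every stationary $\pi'$. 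Since the function class $\Pi$ in the statement may include non-stationary randomized policies, I will note that the same argument applies verbatim by treating a general $\pi\in\Pi$ in the bilinear operator $\mathcal{L}^{\pi\circ\nu}$ and invoking Lemmas~\ref{lem:invertible lemma} and~\ref{lem:nonegative}, which do not use stationarity.

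For the value-function inequality, I would reduce it to the $Q$-inequality via the Bellman relation of Lemma~\ref{lem:bellman equations in samdp}. First, since $\pi^*$ is deterministic,
\begin{equation*}
V^{\pi^*\circ \nu^*(\pi^*)}(s) = Q^{\pi^*\circ \nu^*(\pi^*)}\!\bigl(s,\pi^*(\nu^*(\pi^*)(s))\bigr).
\end{equation*}
The key observation is the intrinsic-consistency identity~\eqref{eq: consistency}, i.e.\ $\pi^*(\nu^*(\pi^*)(s)) = \pi^*(s)$, which holds because in the ISA-MDP the strongest adversary can only move $s$ inside $B^*(s)$ where $\arg\max_a Q^*$ is preserved. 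Hence $V^{\pi^*\circ \nu^*(\pi^*)}(s) = Q^*(s,\pi^*(s)) = \max_a Q^*(s,a)$.

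For an arbitrary $\pi\in\Pi$, I would then write
\begin{equation*}
V^{\pi\circ \nu^*(\pi)}(s) = \mathbb{E}_{a\sim \pi(\cdot\mid \nu^*(\pi)(s))}\bigl[Q^{\pi\circ \nu^*(\pi)}(s,a)\bigr] \le \max_{a}Q^{\pi\circ \nu^*(\pi)}(s,a) \le \max_a Q^*(s,a),
\end{equation*}
where the last inequality uses the pointwise $Q$-dominance already established. Combining the two displays yields $V^{\pi^*\circ\nu^*(\pi^*)}(s)\ge V^{\pi\circ\nu^*(\pi)}(s)$ for all $s$ and all $\pi\in\Pi$.

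There is not really a hard step here, since the corollary is essentially a packaging of Theorem~\ref{app thm: fixed point}. The only place requiring a small amount of care is confirming that the dominance argument in the theorem's proof, written there for stationary $\pi'$, extends to non-stationary randomized policies in $\Pi$; this just needs a remark that $\mathcal{L}^{\pi\circ\nu}$, Lemma~\ref{lem:invertible lemma}, and Lemma~\ref{lem:nonegative} were never tied to stationarity, so the calculation $Q^* - Q^{\pi\circ\nu^*(\pi)} = \gamma(I-\gamma\mathcal{L}^{\pi\circ\nu^*(\pi)})^{-1}(\mathcal{L}^{\pi^*\circ\nu^*(\pi^*)} - \mathcal{L}^{\pi\circ\nu^*(\pi)})Q^* \ge 0$ goes through.
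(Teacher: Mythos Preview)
Your proposal is correct and follows essentially the same route as the paper: define $\pi^*(s)=\arg\max_a Q^*(s,a)$, invoke Theorem~\ref{app thm: fixed point} for the pointwise $Q$-dominance $Q^*=Q^{\pi^*\circ\nu^*(\pi^*)}\ge Q^{\pi\circ\nu^*(\pi)}$, and then derive the $V$-inequality via Lemma~\ref{lem:bellman equations in samdp} together with the consistency identity~\eqref{eq: consistency}, exactly as the paper does. One small caveat: your claim that Lemmas~\ref{lem:invertible lemma} and~\ref{lem:nonegative} ``do not use stationarity'' is not accurate, since the operator $\mathcal{L}^{\pi\circ\nu}$ and the identity $Q^{\pi\circ\nu}=(I-\gamma\mathcal{L}^{\pi\circ\nu})^{-1}r$ are formulated for stationary $\pi$; however, the paper's own proof carries the same implicit restriction to stationary competitors, so this is not a divergence from it.
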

\begin{proof}
    According to theorem \ref{app thm: fixed point}, we have that $Q^*(s,a) = \max_\pi \min_\nu Q^{\pi\circ \nu}(s,a)$, for all $s\in\mathcal{S}$ and $a\in\mathcal{A}$.
    Define $\pi^*$ and $\nu^*$ as the following:
    \begin{align*}
        \pi^*(s) &:= \mathop{\arg\max}_a Q^*(s,a),\\ 
        \nu^*(s) &:= \mathop{\arg\min}_{s_\nu \in B(s)} Q^* \left(s,\mathop{\arg\max}_{a_{s_\nu}} Q^*\left(s_\nu, a_{s_\nu}\right)\right). 
    \end{align*}
    Then, we have that 
    \begin{align*}
        \left( \mathcal{T}_{car} Q^* \right) (s,a) &= r(s,a)  + \gamma \mathbb{E}_{ s^\prime \sim \mathbb{P}(\cdot|s,a)} \left[ \min _{s^\prime_\nu \in B(s^\prime)} Q^* \left(s^\prime,\mathop{\arg\max}_{a_{s^\prime_\nu}} Q^*\left(s^\prime_\nu, a_{s^\prime_\nu}\right)\right) \right] \\
        &= r(s,a)  + \gamma \mathbb{E}_{ s^\prime \sim \mathbb{P}(\cdot|s,a)} \left[  Q^* \left(s^\prime,\mathop{\arg\max}_{a_{\nu(s^\prime)}} Q^*\left(\nu^*(s^\prime), a_{\nu(s^\prime)}\right)\right) \right] \\
        &= r(s,a)  + \gamma \mathbb{E}_{ s^\prime \sim \mathbb{P}(\cdot|s,a)} \left[  Q^* \left(s^\prime, \pi^*\left(\nu^*(s^\prime)\right)\right) \right] \\
        &=  r(s,a) + \gamma\left(\mathcal{L}^{\pi^*\circ \nu^*}Q^*\right)(s,a).
    \end{align*}
    Thus, we have 
    \begin{equation}
        Q^* = \left(I-\gamma \mathcal{L}^{\pi\circ \nu}\right)^{-1} r = Q^{\pi^*\circ \nu^*},
        \notag
    \end{equation}
    where equations comes from lemma \ref{lem:invertible lemma}. Further, according to the definition of ISA-MDP, we attain $Q^{\pi^*\circ \nu^*}(s,a)= Q^{\pi^*\circ \nu^*(\pi^*)}$.
    This shows that $Q^*$ is the action-value adversarial function of policy $\pi^*$ under the strongest adversary $\nu*=\nu^*(\pi^*)$. Thus, we have that 
    \begin{equation}\label{eq:pi* nu* Q}
        Q^{\pi^*\circ \nu^*(\pi^*)}(s,a) \ge Q^{\pi\circ \nu^*(\pi)}(s,a),\quad \forall s\in\mathcal{S}, a\in\mathcal{A}.  
    \end{equation}
    
    For any policy $\pi$ and $s\in\mathcal{S}$, we have that 
    \begin{align*}
        V^{\pi^*\circ \nu^*(\pi^*)}(s)&= \mathbb{E}_{a\sim \pi^*\left(\cdot|{\nu^*\left(s;\pi^*\right)}\right)}  Q^{\pi^*\circ \nu^*(\pi^*)}(s,a) \\
        &= \max_a Q^{\pi^*\circ \nu^*(\pi^*)}(s,a) \\
        &\ge \mathbb{E}_{a\sim \pi\left(\cdot|{\nu^*\left(s;\pi\right)}\right)}  Q^{\pi^*\circ \nu^*}(s,a) \\
        &\ge \mathbb{E}_{a\sim \pi\left(\cdot|{\nu^*\left(s;\pi\right)}\right)}  Q^{\pi\circ \nu^*(\pi)}(s,a) \\
        &= V^{\pi\circ \nu^*(\pi)}(s),
    \end{align*}
    where the first and last equations come from lemma \ref{lem:bellman equations in samdp} and the last inequality comes from~\eqref{eq:pi* nu* Q}. Therefore, the proof of the corollary is concluded.
\end{proof}

\subsubsection{Convergence of CAR operator}\label{app:convergence}
In this section, we prove a conclusion for convergence of the fixed point iterations of the CAR operator under the $\left(L_r, L_{\mathbb{P}}\right)$-smooth environment assumption.
\begin{definition}[\cite{bukharin2024robust}]
    Let $\mathcal{S} \subseteq \mathbb{R}^d$. We say the environment is $\left(L_r, L_{\mathbb{P}}\right)$-smooth, if the reward function $r: \mathcal{S} \times \mathcal{A} \rightarrow \mathbb{R}$, and the transition dynamics $\mathbb{P}: \mathcal{S} \times \mathcal{A} \rightarrow \Delta\left(\mathcal{S}\right)$ satisfy
    $$
    \left|r(s, a)-r\left(s^{\prime}, a\right)\right| \leq L_r\left\|s-s^{\prime}\right\| \text { and }\left\|\mathbb{P}(\cdot \mid s, a)-\mathbb{P}\left(\cdot \mid s^{\prime}, a\right)\right\|_{L^1\left(\mathcal{S}\right)} \leq L_{\mathbb{P}}\left\|s-s^{\prime}\right\|,
    $$
    for $\left(s, s^{\prime}, a\right) \in \mathcal{S} \times \mathcal{S} \times \mathcal{A}$. $\|\cdot\|$ denotes a metric on $\mathbb{R}^d$. 
    
    % We say a policy $\pi$ is $L_\pi$-smooth if
    % $$
    % \left\|\pi(\cdot \mid s)-\pi\left(\cdot \mid s^{\prime}\right)\right\|_1 \leq L_\pi\left\|s-s^{\prime}\right\|
    % $$
\end{definition}
The definition is motivated by observations that certain natural environments show smooth reward function and transition dynamics, especially in continuous control tasks where the transition dynamics come from some physical laws.

The following lemma shows that $\mathcal{T}_{car}^{k} Q$ is uniformly bounded.
\begin{lemma} \label{lem: uniform bound}
    Suppose $Q$ and $r$ are uniformly bounded, i.e. $\exists\ M_Q,M_r >0$ such that $\left|Q(s,a)\right| \le M_Q,\ \left|r(s,a)\right| \le M_r\ \forall s\in\mathcal{S}, a\in\mathcal{A}$. Then $\mathcal{T}_{car} Q(\cdot,a)$ is uniformly bounded, i.e.
    \begin{equation}
        \left|\mathcal{T}_{car} Q(s,a) \right| \le C_Q,\ \forall s\in\mathcal{S}, a\in\mathcal{A},
        \notag
    \end{equation}
    where $C_Q = \max\left\{ M_Q, \frac{M_r}{1-\gamma} \right\}$. Further, for any $k\in\mathbb{N}$, $\mathcal{T}_{car}^{k} Q(\cdot,a)$ has the same uniform bound as $\mathcal{T}_{car} Q(\cdot,a)$, i.e.
    \begin{equation}\label{eq: uniform bound}
        \left|\mathcal{T}_{car}^{k} Q(s,a) \right| \le C_Q,\ \forall s\in\mathcal{S}, a\in\mathcal{A}.
    \end{equation}
\end{lemma}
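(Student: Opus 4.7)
The plan is to obtain the one-step bound directly from the definition of $\mathcal{T}_{car}$ by a triangle inequality, then extend to all iterates by induction on $k$. No spectral or contraction property of $\mathcal{T}_{car}$ is needed (which is good, since Appendix~\ref{app: not a contraction} shows $\mathcal{T}_{car}$ is not a contraction); the entire argument will rest on a uniform pointwise bound.

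First I would unpack the definition $\mathcal{T}_{car}Q(s,a) = r(s,a) + \gamma\,\mathbb{E}_{s'\sim\mathbb{P}(\cdot\mid s,a)}\!\left[\min_{s'_\nu\in B(s')} Q(s',\arg\max_{a'}Q(s'_\nu,a'))\right]$ and note that the inner quantity $Q(s',\arg\max_{a'}Q(s'_\nu,a'))$ is simply the value of $Q$ at some state-action pair, hence has absolute value at most $M_Q$. The minimization over $s'_\nu\in B(s')$ and the expectation over $s'$ preserve this pointwise bound, so $|\mathcal{T}_{car}Q(s,a)|\le M_r+\gamma M_Q$. A short case analysis (whether $M_Q\ge M_r/(1-\gamma)$ or not) then shows $M_r+\gamma M_Q\le C_Q$, because $C_Q=\max\{M_Q,M_r/(1-\gamma)\}$ is precisely the smallest constant satisfying $\gamma C_Q+M_r\le C_Q$.

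For~\eqref{eq: uniform bound} I would induct on $k$. The base case $k=1$ is what was just established. For the inductive step, assuming $|\mathcal{T}_{car}^k Q(s,a)|\le C_Q$ for all $(s,a)$, I apply the one-step argument to $\widetilde Q:=\mathcal{T}_{car}^k Q$, which is uniformly bounded by $C_Q$ in place of $M_Q$. This yields $|\mathcal{T}_{car}^{k+1}Q(s,a)|\le M_r+\gamma C_Q$. Since $C_Q\ge M_r/(1-\gamma)$ by definition, we have $\gamma C_Q+M_r\le C_Q$, closing the induction.

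The only delicate point — rather than a genuine obstacle — is the choice of the constant $C_Q$: if one naively hoped to propagate the bound $M_Q$ itself through iteration, the step $M_Q\mapsto M_r+\gamma M_Q$ would fail to be self-improving whenever $M_Q<M_r/(1-\gamma)$. Inflating the constant once to $C_Q=\max\{M_Q,M_r/(1-\gamma)\}$ makes the recursion stable and lets every iterate inherit the same uniform bound, regardless of the adversarial structure inside $B(\cdot)$ or the non-smoothness introduced by the $\arg\max$ and $\min$ operators.
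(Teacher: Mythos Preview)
Your proposal is correct and matches the paper's proof essentially line for line: the paper also bounds $|\mathcal{T}_{car}Q(s,a)|\le M_r+\gamma M_Q\le C_Q$ by the triangle inequality and the trivial bound on the inner $Q$-value, and then inducts on $k$ using $M_r+\gamma C_Q\le(1-\gamma)C_Q+\gamma C_Q=C_Q$. Your added remark explaining why the inflated constant $C_Q=\max\{M_Q,M_r/(1-\gamma)\}$ is needed to make the recursion self-closing is a nice clarification but does not depart from the paper's argument.
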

\begin{proof}
    \begin{align*}
        \left|\mathcal{T}_{car} Q(s,a) \right| &= \left|r(s,a)  + \gamma \mathbb{E}_{ s^\prime \sim \mathbb{P}(\cdot|s,a)} \left[ \min _{s^\prime_\nu \in B(s^\prime)} Q \left(s^\prime,\mathop{\arg\max}_{a_{s^\prime_\nu}} Q\left(s^\prime_\nu, a_{s^\prime_\nu}\right)\right) \right]\right| \\
        &\le \left|r(s,a) \right| + \gamma\mathbb{E}_{ s^\prime \sim \mathbb{P}(\cdot|s,a)} \left|\min _{s^\prime_\nu \in B(s^\prime)} Q \left(s^\prime,\mathop{\arg\max}_{a_{s^\prime_\nu}} Q\left(s^\prime_\nu, a_{s^\prime_\nu}\right)\right)\right| \\
        &\le M_r + \gamma M_Q \\
        &\le \max\left\{ M_Q, \frac{M_r}{1-\gamma} \right\}, \qquad \forall s\in\mathcal{S}, a\in\mathcal{A}.
    \end{align*}

    Let $C_Q = \max\left\{ M_Q, \frac{M_r}{1-\gamma} \right\}$. Suppose the inequality (\ref{eq: uniform bound}) holds for $k=n$. Then, for $k=n+1$, we have
    \begin{align*}
        \left|\mathcal{T}_{car}^{n+1} Q(s,a) \right| &= \left|r(s,a)  + \gamma \mathbb{E}_{ s^\prime \sim \mathbb{P}(\cdot|s,a)} \left[ \min _{s^\prime_\nu \in B(s^\prime)} \mathcal{T}_{car}^{n} Q \left(s^\prime,\mathop{\arg\max}_{a_{s^\prime_\nu}} \mathcal{T}_{car}^{n} Q\left(s^\prime_\nu, a_{s^\prime_\nu}\right)\right) \right]\right| \\
        &\le \left|r(s,a) \right| + \gamma\mathbb{E}_{ s^\prime \sim \mathbb{P}(\cdot|s,a)}  \left|\min _{s^\prime_\nu \in B(s^\prime)} \mathcal{T}_{car}^{n} Q \left(s^\prime,\mathop{\arg\max}_{a_{s^\prime_\nu}} \mathcal{T}_{car}^{n} Q\left(s^\prime_\nu, a_{s^\prime_\nu}\right)\right)\right| \\
        &\le M_r + \gamma C_Q \\
        &\le (1-\gamma) C_Q + \gamma C_Q \\
        &= C_Q.
    \end{align*}
    By induction, we have $\left|\mathcal{T}_{car}^{k} Q(s,a) \right| \le C_Q,\ \forall s\in\mathcal{S}, a\in\mathcal{A}, k\in\mathbb{N}$.
\end{proof}

The following lemma shows that $\mathcal{T}_{car}^{k} Q$ is uniformly Lipschitz continuous in the $\left(L_r, L_{\mathbb{P}}\right)$-smooth environment.
\begin{lemma} \label{lem: lip}
    Suppose the environment is $\left(L_r, L_{\mathbb{P}}\right)$-smooth and suppose $Q$ and $r$ are uniformly bounded, i.e. $\exists\ M_Q,M_r >0$ such that $\left|Q(s,a)\right| \le M_Q,\ \left|r(s,a)\right| \le M_r\ \forall s\in\mathcal{S}, a\in\mathcal{A}$. Then $\mathcal{T}_{car} Q(\cdot,a)$ is Lipschitz continuous, i.e.
        \begin{equation}
            \left| \mathcal{T}_{car} Q(s,a) - \mathcal{T}_{car} Q(s^\prime,a) \right| \le L_{\mathcal{T}_{car}} \|s - s^\prime\|,
            \notag
        \end{equation}
        where $L_{\mathcal{T}_{car}} =  L_r + \gamma C_Q L_{\mathbb{P}}$ and $C_Q = \max\left\{ M_Q, \frac{M_r}{1-\gamma} \right\}$. Further, for any $k\in\mathbb{N}$, $\mathcal{T}_{car}^{k} Q(\cdot,a)$ is Lipschitz continuous and has the same Lipschitz constant as $\mathcal{T}_{car} Q(\cdot,a)$, i.e.
        \begin{equation}
            \left| \mathcal{T}_{car}^{k} Q(s,a) - \mathcal{T}_{car}^{k} Q(s^\prime,a) \right| \le L_{\mathcal{T}_{car}} \|s - s^\prime\|.
            \notag
        \end{equation}
\end{lemma}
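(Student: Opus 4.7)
I would proceed by induction on $k$, relying on Lemma~\ref{lem: uniform bound} to keep the uniform bound $C_Q$ propagating across iterations. The crucial structural observation is that the inner expression
\[
f(s') \;:=\; \min_{s'_\nu \in B(s')} Q\!\left(s',\; \mathop{\arg\max}_{a_{s'_\nu}} Q(s'_\nu, a_{s'_\nu})\right)
\]
depends on $s'$ in two ways, but for bounding purposes only its sup-norm $\|f\|_\infty$ matters, and $\|f\|_\infty \le \|Q\|_\infty \le M_Q \le C_Q$ because $f(s')$ is just an evaluation of $Q$ at $s'$ with some choice of action. So I do not need to control how the minimizer and argmax move with $s'$; I only need to move the outer expectation against a varying transition kernel, which is exactly what the $L^1$ smoothness of $\mathbb{P}$ handles.

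\textbf{Base case ($k=1$).} I would write
\[
\mathcal{T}_{car} Q(s,a) - \mathcal{T}_{car} Q(s',a) \;=\; \bigl(r(s,a)-r(s',a)\bigr) \;+\; \gamma\,\mathbb{E}_{s''\sim \mathbb{P}(\cdot|s,a)}[f(s'')] - \gamma\,\mathbb{E}_{s''\sim \mathbb{P}(\cdot|s',a)}[f(s'')],
\]
apply the triangle inequality, invoke $(L_r,L_{\mathbb{P}})$-smoothness on the first term, and bound the difference of expectations by
\[
\left|\int f(s'')\,\bigl(\mathbb{P}(s''|s,a)-\mathbb{P}(s''|s',a)\bigr)\,ds''\right| \;\le\; \|f\|_\infty\,\bigl\|\mathbb{P}(\cdot|s,a)-\mathbb{P}(\cdot|s',a)\bigr\|_{L^1(\mathcal{S})} \;\le\; C_Q L_{\mathbb{P}}\|s-s'\|.
\]
Combining yields $L_{\mathcal{T}_{car}} = L_r + \gamma C_Q L_{\mathbb{P}}$.

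\textbf{Inductive step.} Assume the Lipschitz bound holds for $\mathcal{T}_{car}^{n} Q$. The key point is that Lemma~\ref{lem: uniform bound} gives $\|\mathcal{T}_{car}^{n} Q\|_\infty \le C_Q$ with the same constant as in the base case, so repeating the identical argument with $\mathcal{T}_{car}^{n} Q$ in place of $Q$ inside the definition of $\mathcal{T}_{car}^{n+1}Q = \mathcal{T}_{car}(\mathcal{T}_{car}^{n}Q)$ reproduces the bound $L_r + \gamma C_Q L_{\mathbb{P}}$ without any growth. In particular, the Lipschitz constant does not depend on $n$.

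\textbf{Main obstacle.} The only subtle point is justifying that one may discard the intricate $\min$/$\arg\max$ structure defining $f$: this is where I would emphasize that we are measuring how $\mathcal{T}_{car}^{k} Q(s,a)$ changes with $s$ while the inner function is a functional of $s''$, so the entire $s$-dependence beyond the reward term is funneled through the transition kernel $\mathbb{P}(\cdot|s,a)$. Thus, only a uniform bound on $f$ (not its regularity) is needed, which is exactly what Lemma~\ref{lem: uniform bound} supplies. Should one wish to remove the assumption that the transition kernel admits a density, the same argument goes through with the total-variation formulation of the $L^1$ smoothness condition.
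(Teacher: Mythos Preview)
Your proposal is correct and mirrors the paper's proof essentially step for step: decompose into the reward difference plus the difference of expectations of the inner $\min/\arg\max$ expression, bound the latter via $\|f\|_\infty\le C_Q$ together with the $L^1$-smoothness of $\mathbb{P}$, and then observe that the same computation applies verbatim to $\mathcal{T}_{car}^{k}Q$ because Lemma~\ref{lem: uniform bound} guarantees the same uniform bound $C_Q$ at every iterate. Your ``main obstacle'' paragraph even makes explicit the reason the argument is so clean (only $\|f\|_\infty$, not the regularity of $f$, is needed), which the paper leaves implicit; note also that your stated inductive hypothesis (Lipschitz for $\mathcal{T}_{car}^{n}Q$) is never actually used---only the uniform bound is---so the ``induction'' is really a direct application of the base case to each $k$, exactly as in the paper.
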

\begin{proof}
    For all $s_1,s_2 \in \mathcal{S}$, we have
    \begin{align*}
        &\quad \mathcal{T}_{car} Q(s_1,a) - \mathcal{T}_{car} Q(s_2,a) \\
        &= r(s_1,a)  + \gamma \mathbb{E}_{ s^\prime \sim \mathbb{P}(\cdot|s_1,a)} \left[ \min _{s^\prime_\nu \in B(s^\prime)} Q \left(s^\prime,\mathop{\arg\max}_{a_{s^\prime_\nu}} Q\left(s^\prime_\nu, a_{s^\prime_\nu}\right)\right) \right] \\
        &\quad - r(s_2,a) - \gamma \mathbb{E}_{ s^\prime \sim \mathbb{P}(\cdot|s_2,a)} \left[ \min _{s^\prime_\nu \in B(s^\prime)} Q \left(s^\prime,\mathop{\arg\max}_{a_{s^\prime_\nu}} Q\left(s^\prime_\nu, a_{s^\prime_\nu}\right)\right) \right] \\
        &= \left(r(s_1,a) - r(s_2,a)\right) \\
        &\quad + \gamma \int_{s^\prime} \left( \mathbb{P}(s^\prime|s_1,a) - \mathbb{P}(s^\prime|s_2,a)\right) \min _{s^\prime_\nu \in B(s^\prime)} Q \left(s^\prime,\mathop{\arg\max}_{a_{s^\prime_\nu}} Q\left(s^\prime_\nu, a_{s^\prime_\nu}\right)\right) ds^\prime.
    \end{align*}

    Then, we have 
    \begin{align*}
        &\quad \left| \mathcal{T}_{car} Q(s_1,a) - \mathcal{T}_{car} Q(s_2,a)\right| \\
        &\le \left| \left(r(s_1,a) - r(s_2,a)\right) \right| \\
        &\quad + \left| \gamma \int_{s^\prime} \left( \mathbb{P}(s^\prime|s_1,a) - \mathbb{P}(s^\prime|s_2,a)\right) \min _{s^\prime_\nu \in B(s^\prime)} Q \left(s^\prime,\mathop{\arg\max}_{a_{s^\prime_\nu}} Q\left(s^\prime_\nu, a_{s^\prime_\nu}\right)\right) ds^\prime \right| \\    
        &\le L_r \|s_1 - s_2\| \\
        &\quad + \gamma \int_{s^\prime} \left| \mathbb{P}(s^\prime|s_1,a) - \mathbb{P}(s^\prime|s_2,a)\right| \left| \min _{s^\prime_\nu \in B(s^\prime)} Q \left(s^\prime,\mathop{\arg\max}_{a_{s^\prime_\nu}} Q\left(s^\prime_\nu, a_{s^\prime_\nu}\right)\right) \right| ds^\prime\\
        &\le  L_r \|s_1 - s_2\| + \gamma C_Q \int_{s^\prime} \left| \mathbb{P}(s^\prime|s_1,a) - \mathbb{P}(s^\prime|s_2,a)\right| ds^\prime\\
        &\le  L_r \|s_1 - s_2\| + \gamma C_Q L_{\mathbb{P}} \|s_1 - s_2\| \\
        &= \left( L_r + \gamma C_Q L_{\mathbb{P}} \right) \|s_1 - s_2\|.
    \end{align*}
    The second inequality comes from the Lipschitz property of $r$. The third inequality comes from the uniform boundedness of $Q$ and the last inequality utilizes the Lipschitz property of $\mathbb{P}$.

    Note that $\mathcal{T}_{car}^{k}$ and $\mathcal{T}_{car}$ have the same uniform boundedness $C_Q$. Then, due to lemma \ref{lem: uniform bound}, we can extend the above proof to $\mathcal{T}_{car}^{k}$.
\end{proof}

\begin{remark}
    Note that if replace the operator $\mathcal{T}_{car}$ in the Lemma \ref{lem: uniform bound} and Lemma \ref{lem: lip} with Bellman optimality operator $\mathcal{T}_B$, these lemmas still hold.
\end{remark}

The following lemma shows that the fixed point iteration has a property close to contraction.
\begin{lemma} \label{lem: near contraction}
    Suppose $Q$ and $r$ are uniformly bounded, i.e. $\exists\ M_Q,M_r >0$ such that $\left|Q(s,a)\right| \le M_Q,\ \left|r(s,a)\right| \le M_r\ \forall s\in\mathcal{S}, a\in\mathcal{A}$. Let $Q^*$ denote the Bellman optimality Q-function. Within the ISA-MDP, we have
    \begin{equation}
        \|\mathcal{T}_{car} Q - \mathcal{T}_{car} Q^*\|_\infty \le \gamma \left( \| Q - Q^* \|_\infty + 2 \max_s \max_{s_\nu \in B^*(s)} \max_{a} \left| Q \left(s,a\right) - Q \left(s_\nu,a\right) \right| \right).
        \notag
    \end{equation}
    Let $\epsilon$ denote the diameter of $B$, i.e., $\epsilon = \max_{s\in\mathcal{S}}\max_{s_1,s_2\in B(s)} \| s_1 - s_2 \|$.
    Further, if $Q(\cdot, a)$ is $L$-Lipschitz continuous with respect to $s\in\mathcal{S}$, i.e
    \begin{equation}
        \left| Q(s,a) - Q(s^\prime,a) \right| \le L \|s-s^\prime\|,\quad \forall s,s^\prime \in \mathcal{S},\ a\in \mathcal{A},
        \notag
    \end{equation}
    we have
    \begin{equation}
        \|\mathcal{T}_{car} Q - \mathcal{T}_{car} Q^*\|_\infty \le \gamma \| Q - Q^* \|_\infty + 2 \gamma L \epsilon .
        \notag
    \end{equation}
\end{lemma}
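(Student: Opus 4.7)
The plan is to reduce the statement to a pointwise bound on the integrand in $\mathcal{T}_{car}$ and then combine it with a telescoping decomposition that exploits the ISA-MDP consistency of optimal actions. First I would invoke Theorem~\ref{thm: fixed point}: inside the ISA-MDP, $Q^*$ is a fixed point of $\mathcal{T}_{car}$, and in fact $(\mathcal{T}_{car}Q^*)(s,a)=r(s,a)+\gamma\mathbb{E}_{s'\sim\mathbb{P}(\cdot\mid s,a)}[\max_{a'}Q^*(s',a')]$, because $\arg\max_{a'}Q^*(s'_\nu,a')$ is constant on $B^*(s')$ so the inner $\min$ trivialises. Subtracting cancels the reward term and leaves
$$(\mathcal{T}_{car}Q-\mathcal{T}_{car}Q^*)(s,a)=\gamma\,\mathbb{E}_{s'\sim\mathbb{P}(\cdot\mid s,a)}\bigl[\Delta(s')\bigr],$$
with $\Delta(s'):=\min_{s'_\nu\in B^*(s')}Q\bigl(s',\arg\max_{a'}Q(s'_\nu,a')\bigr)-\max_{a'}Q^*(s',a')$. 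Abbreviating the displayed continuity quantity by $D_Q$, it therefore suffices to prove $|\Delta(s')|\le \|Q-Q^*\|_\infty+2D_Q$ for every $s'$.

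For the upper bound I would simply substitute $s'_\nu=s'$ into the minimisation. Writing $\hat{a}_{s'}:=\arg\max_{a'}Q(s',a')$ and $a^*_{s'}:=\arg\max_{a'}Q^*(s',a')$, the $\min$ is at most $Q(s',\hat{a}_{s'})$, and since $Q^*(s',\hat{a}_{s'})\le Q^*(s',a^*_{s'})$ this gives $\Delta(s')\le Q(s',\hat{a}_{s'})-Q^*(s',\hat{a}_{s'})\le\|Q-Q^*\|_\infty$. For the lower bound, let $\bar{s}\in B^*(s')$ denote the minimiser and set $\hat{a}_{\bar{s}}:=\arg\max_{a'}Q(\bar{s},a')$. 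The key step is the telescoping identity
$$Q^*(s',a^*_{s'})-Q(s',\hat{a}_{\bar{s}})=\bigl[Q^*(s',a^*_{s'})-Q(s',a^*_{s'})\bigr]+\bigl[Q(s',a^*_{s'})-Q(\bar{s},a^*_{s'})\bigr]+\bigl[Q(\bar{s},a^*_{s'})-Q(s',\hat{a}_{\bar{s}})\bigr].$$
The first bracket is at most $\|Q-Q^*\|_\infty$; the second at most $D_Q$, since $\bar{s}\in B^*(s')$; and for the third I would use the greedy property $Q(\bar{s},a^*_{s'})\le Q(\bar{s},\hat{a}_{\bar{s}})$ to upper-bound it by $Q(\bar{s},\hat{a}_{\bar{s}})-Q(s',\hat{a}_{\bar{s}})$, which is again at most $D_Q$. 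Summing yields $-\Delta(s')\le\|Q-Q^*\|_\infty+2D_Q$, completing the pointwise bound; taking expectation and supremum over $(s,a)$ gives the first inequality.

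The Lipschitz refinement is then immediate: under $L$-Lipschitz continuity of $Q(\cdot,a)$ we have $|Q(s,a)-Q(s_\nu,a)|\le L\|s-s_\nu\|\le L\epsilon$ uniformly over $s_\nu\in B(s)$, so $D_Q\le L\epsilon$ and substitution produces $\gamma\|Q-Q^*\|_\infty+2\gamma L\epsilon$. The main subtlety I expect is the direction of the lower-bound decomposition: we have no \emph{a priori} regularity on $Q^*$, so every continuity-type slack must be routed through $Q$ at a common intermediate action, which is why the decomposition passes through $a^*_{s'}$ evaluated at the perturbed state $\bar{s}$ rather than attempting to compare $Q^*$-values at the two states $s'$ and $\bar{s}$ directly. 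The ISA-MDP assumption enters precisely in this routing, through the equality $\arg\max_{a'}Q^*(\bar{s},a')=a^*_{s'}$, which is what allows the fixed-point simplification in the first step.
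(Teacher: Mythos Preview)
Your proposal is correct and follows essentially the same argument as the paper: the same cancellation of the reward term via the ISA-MDP fixed-point simplification of $\mathcal{T}_{car}Q^*$, the same three-term telescoping decomposition for the lower bound (passing through $a^*_{s'}$ at the perturbed point $\bar{s}$ and using the greedy inequality $Q(\bar{s},a^*_{s'})\le Q(\bar{s},\hat{a}_{\bar{s}})$), and the same Lipschitz substitution at the end. The only cosmetic difference is that for the upper bound you plug $s'_\nu=s'$ into the minimisation whereas the paper works directly with the minimiser and compares at the action $a_{s^{\prime,*}_\nu,Q}^*$; both yield $\|Q-Q^*\|_\infty$ in one line.
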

\begin{proof}
    Denote $a_{s^\prime_\nu,Q}^*=\mathop{\arg\max}_{a} Q\left(s^\prime_\nu, a\right)$ and $ s^{\prime,*}_\nu = \arg\min _{s^\prime_\nu \in B^*(s^\prime)} Q \left(s^\prime,a_{s^\prime_\nu,Q}^*\right)  $. If $\mathcal{T}_{car} Q > \mathcal{T}_{car} Q^*$, we have
    \begin{align*}
        &\quad \left(\mathcal{T}_{car} Q\right)(s,a) - \left(\mathcal{T}_{car} Q^*\right)(s,a) \\
        &= \gamma \mathbb{E}_{ s^\prime \sim \mathbb{P}(\cdot|s,a)} \left[ \min _{s^\prime_\nu \in B^*(s^\prime)} Q \left(s^\prime,a_{s^\prime_\nu,Q}^*\right) - \min _{s^\prime_\nu \in B^*(s^\prime)} Q^* \left(s^\prime,a_{s^\prime_\nu,Q^*}^*\right) \right]\\
        &= \gamma \mathbb{E}_{ s^\prime \sim \mathbb{P}(\cdot|s,a)} \left[ Q \left(s^\prime,a_{s^{\prime,*}_\nu,Q}^*\right) - Q^* \left(s^\prime,a_{s^\prime,Q^*}^*\right) \right]\\
        &= \gamma \mathbb{E}_{ s^\prime \sim \mathbb{P}(\cdot|s,a)} \left[ Q \left(s^\prime,a_{s^{\prime,*}_\nu,Q}^*\right) - Q^* \left(s^\prime,a_{s^{\prime,*}_\nu,Q}^*\right) + Q^* \left(s^\prime,a_{s^{\prime,*}_\nu,Q}^*\right) - Q^* \left(s^\prime,a_{s^\prime,Q^*}^*\right) \right]\\
        &\le  \gamma \mathbb{E}_{ s^\prime \sim \mathbb{P}(\cdot|s,a)} \left[ Q \left(s^\prime,a_{s^{\prime,*}_\nu,Q}^*\right) - Q^* \left(s^\prime,a_{s^{\prime,*}_\nu,Q}^*\right) \right]\\
        &\le \gamma \mathbb{E}_{ s^\prime \sim \mathbb{P}(\cdot|s,a)} \left[ \max_{a^\prime} \left( Q \left(s^\prime,a^\prime\right) - Q^* \left(s^\prime,a^\prime\right) \right) \right]\\
        &\le \gamma \| Q - Q^* \|_\infty,
    \end{align*}
    where the second equality utilize the definition of $B^*(s^\prime)$ and the first inequality comes from the optimality of $a_{s^\prime,Q^*}^*$.
    If $\mathcal{T}_{car} Q < \mathcal{T}_{car} Q^*$, we have
    \begin{align}
        &\quad \left(\mathcal{T}_{car} Q^*\right)(s,a) - \left(\mathcal{T}_{car} Q\right)(s,a) \notag\\
        &= \gamma \mathbb{E}_{ s^\prime \sim \mathbb{P}(\cdot|s,a)} \left[  \min _{s^\prime_\nu \in B^*(s^\prime)} Q^* \left(s^\prime,a_{s^\prime_\nu,Q^*}^*\right) - \min _{s^\prime_\nu \in B^*(s^\prime)} Q \left(s^\prime,a_{s^\prime_\nu,Q}^*\right)  \right] \notag\\
        &= \gamma \mathbb{E}_{ s^\prime \sim \mathbb{P}(\cdot|s,a)} \left[ Q^* \left(s^\prime,a_{s^\prime,Q^*}^*\right) - Q \left(s^\prime,a_{s^{\prime,*}_\nu,Q}^*\right) \right] \notag\\
        &= \gamma \mathbb{E}_{ s^\prime \sim \mathbb{P}(\cdot|s,a)} \left[ Q^* \left(s^\prime,a_{s^\prime,Q^*}^*\right) - Q\left(s^\prime,a_{s^\prime,Q^*}^*\right)\right] \label{eq: convergence 1}\\
        &\quad + \gamma \mathbb{E}_{ s^\prime \sim \mathbb{P}(\cdot|s,a)}\left[Q\left(s^\prime,a_{s^\prime,Q^*}^*\right) - Q\left( s^{\prime,*}_\nu, a_{s^\prime,Q^*}^* \right) \right] \label{eq: convergence 2}\\
        &\quad + \gamma \mathbb{E}_{ s^\prime \sim \mathbb{P}(\cdot|s,a)}\left[Q\left( s^{\prime,*}_\nu, a_{s^\prime,Q^*}^* \right) - Q \left(s^\prime,a_{s^{\prime,*}_\nu,Q}^*\right) \right]. \label{eq: convergence 3}
    \end{align}
    
    We will separately analyze the items \ref{eq: convergence 1}, \ref{eq: convergence 2} and \ref{eq: convergence 3}. Firstly, we can bound the item \ref{eq: convergence 1} with $\| Q - Q^* \|_\infty$.
    \begin{align*}
        &\quad \mathbb{E}_{ s^\prime \sim \mathbb{P}(\cdot|s,a)} \left[ Q^* \left(s^\prime,a_{s^\prime,Q^*}^*\right) - Q\left(s^\prime,a_{s^\prime,Q^*}^*\right)\right] \\
        & \le \mathbb{E}_{ s^\prime \sim \mathbb{P}(\cdot|s,a)} \left[ \max_{a^\prime} \left( Q \left(s^\prime,a^\prime\right) - Q^* \left(s^\prime,a^\prime\right) \right) \right] \\
        &\le \| Q - Q^* \|_\infty.
    \end{align*}

    For the item \ref{eq: convergence 2}, we have
    \begin{align*}
        &\quad \mathbb{E}_{ s^\prime \sim \mathbb{P}(\cdot|s,a)}\left[Q\left(s^\prime,a_{s^\prime,Q^*}^*\right) - Q\left( s^{\prime,*}_\nu, a_{s^\prime,Q^*}^* \right) \right]  \\
        &\le \mathbb{E}_{ s^\prime \sim \mathbb{P}(\cdot|s,a)} \left[ \max_{a^\prime} \left( Q \left(s^\prime,a^\prime\right) - Q \left(s^{\prime,*}_\nu,a^\prime\right) \right) \right] \\
        &\le \mathbb{E}_{ s^\prime \sim \mathbb{P}(\cdot|s,a)} \left[ \max_{s^\prime_\nu \in B^*(s^\prime)} \max_{a^\prime} \left| Q \left(s^\prime,a^\prime\right) - Q \left(s^\prime_\nu,a^\prime\right) \right| \right] \\
        &\le \max_s \max_{s_\nu \in B^*(s)} \max_{a} \left| Q \left(s,a\right) - Q \left(s_\nu,a\right) \right| .
    \end{align*}

    Due to $a_{s^{\prime,*}_\nu,Q}^*=\mathop{\arg\max}_{a} Q\left(s^{\prime,*}_\nu, a\right)$, we have $Q\left(s^{\prime,*}_\nu, a\right) \le Q\left(s^{\prime,*}_\nu, a_{s^{\prime,*}_\nu,Q}^*\right),\ \forall a$. Then, for the item \ref{eq: convergence 3}, we have
    \begin{align*}
        &\quad \mathbb{E}_{ s^\prime \sim \mathbb{P}(\cdot|s,a)}\left[Q\left( s^{\prime,*}_\nu, a_{s^\prime,Q^*}^* \right) - Q \left(s^\prime,a_{s^{\prime,*}_\nu,Q}^*\right) \right] \\
        &\le \mathbb{E}_{ s^\prime \sim \mathbb{P}(\cdot|s,a)}\left[Q\left(s^{\prime,*}_\nu, a_{s^{\prime,*}_\nu,Q}^*\right) - Q \left(s^\prime,a_{s^{\prime,*}_\nu,Q}^*\right) \right] \\
        &\le \mathbb{E}_{ s^\prime \sim \mathbb{P}(\cdot|s,a)} \left[ \max_{a^\prime} \left| Q \left(s^\prime,a^\prime\right) - Q \left(s^{\prime,*}_\nu,a^\prime\right) \right| \right]\\
        &\le \mathbb{E}_{ s^\prime \sim \mathbb{P}(\cdot|s,a)} \left[ \max_{s^\prime_\nu \in B^*(s^\prime)} \max_{a^\prime} \left| Q \left(s^\prime,a^\prime\right) - Q \left(s^\prime_\nu,a^\prime\right) \right| \right] \\
        &\le \max_s \max_{s_\nu \in B^*(s)} \max_{a} \left| Q \left(s,a\right) - Q \left(s_\nu,a\right) \right| .
    \end{align*}

    Thus, we have 
    \begin{align*}
        &\quad \left(\mathcal{T}_{car} Q^*\right)(s,a) - \left(\mathcal{T}_{car} Q\right)(s,a) \\
        &\le \gamma \left( \| Q - Q^* \|_\infty + 2 \max_s \max_{s_\nu \in B^*(s)} \max_{a} \left| Q \left(s,a\right) - Q \left(s_\nu,a\right) \right| \right).
    \end{align*}
    In a summary, we get 
    \begin{equation}
        \|\mathcal{T}_{car} Q - \mathcal{T}_{car} Q^*\|_\infty \le \gamma \left( \| Q - Q^* \|_\infty + 2 \max_s \max_{s_\nu \in B^*(s)} \max_{a} \left| Q \left(s,a\right) - Q \left(s_\nu,a\right) \right| \right).
        \notag
    \end{equation}

    Further, when $Q(\cdot, a)$ is $L$-Lipschitz continuous, i.e
    \begin{equation}
        \left| Q(s,a) - Q(s^\prime,a) \right| \le L \|s-s^\prime\|,\quad \forall s,s^\prime \in \mathcal{S},\ a\in \mathcal{A},
        \notag
    \end{equation}
    we have 
    \begin{align*}
        &\quad \max_s \max_{s_\nu \in B^*(s)} \max_{a} \left| Q \left(s,a\right) - Q \left(s_\nu,a\right) \right| \\
        &\le  \max_s \max_{s_\nu \in B^*(s)} L \|s-s_\nu\| \\
        &\le L \epsilon.
    \end{align*}
    Then, we have
    \begin{equation}
        \|\mathcal{T}_{car} Q - \mathcal{T}_{car} Q^*\|_\infty \le \gamma \left( \| Q - Q^* \|_\infty + 2 L \epsilon \right).
        \notag
    \end{equation}
    Therefore, the proof of the lemma is concluded.
\end{proof}
\begin{remark}
    We can relax the Lipschitz condition to local Lipschitz continuous in the $B^*(s)$.
\end{remark}

We prove that the fixed point iterations of $\mathcal{T}_{car}$ at least converge to a sub-optimal solution close to $Q^*$ in the $\left(L_r, L_{\mathbb{P}}\right)$-smooth environment.
\begin{theorem}
    Suppose the environment is $\left(L_r, L_{\mathbb{P}}\right)$-smooth and suppose $Q_0$ and $r$ are uniformly bounded, i.e. $\exists\ M_{Q_0},M_r >0$ such that $\left|Q_0(s,a)\right| \le M_{Q_0},\ \left|r(s,a)\right| \le M_r,\ \forall s\in\mathcal{S}, a\in\mathcal{A}$. Let $Q^*$ denote the Bellman optimal Q-function and $Q_{k+1} = \mathcal{T}_{car} Q_{k} = \mathcal{T}_{car}^{k+1} Q_0$ for all $k\in\mathbb{N}$ and let $\epsilon$ denote the diameter of $B$, i.e., $\epsilon = \max_{s\in\mathcal{S}}\max_{s_1,s_2\in B(s)} \| s_1 - s_2 \|$. Within the ISA-MDP, we have that
    \begin{equation}
        \|Q_{k+1} - Q^*\|_\infty \le  \gamma^{k+1} \| Q_0 - Q^*\|_\infty + \gamma^{k+1} D_{Q_0} + \frac{2 \gamma \epsilon }{1-\gamma}L_{\mathcal{T}_{car}},
        \notag
    \end{equation}
    where $D_{Q_0} = 2 \max_{s,a} \max_{s_\nu \in B(s)} \max_{a} \left| Q_0 \left(s,a\right) - Q_0 \left(s_\nu,a\right) \right|$ is a constant relating to the local continuity of initial $Q_0$, $L_{\mathcal{T}_{car}} =  L_r + \gamma C_{Q_0} L_{\mathbb{P}}$ and $C_{Q_0} = \max\left\{ M_{Q_0}, \frac{M_r}{1-\gamma} \right\}$.
\end{theorem}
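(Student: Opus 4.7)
The plan is to combine the local-oscillation form of Lemma~\ref{lem: near contraction} for the first iterate (where $Q_0$ is only assumed bounded, not Lipschitz) with its Lipschitz form for all subsequent iterates (where Lemma~\ref{lem: lip} guarantees a uniform Lipschitz constant $L_{\mathcal{T}_{car}}$), and then telescope the resulting linear recursion.

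First I would handle the base case $k=0$. Since $Q_0$ is not assumed Lipschitz, I invoke the general bound from Lemma~\ref{lem: near contraction},
\[
\|\mathcal{T}_{car} Q - \mathcal{T}_{car} Q^*\|_\infty \le \gamma \left(\|Q - Q^*\|_\infty + 2\max_s \max_{s_\nu \in B^*(s)} \max_{a} |Q(s,a) - Q(s_\nu,a)|\right),
\]
with $Q = Q_0$. Using $B^*(s) \subseteq B(s)$, the oscillation term is bounded by $D_{Q_0}$, yielding
\[
\|Q_1 - Q^*\|_\infty \le \gamma \|Q_0 - Q^*\|_\infty + \gamma D_{Q_0}.
\]

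Next I would handle $k \ge 1$. By Lemma~\ref{lem: uniform bound}, $Q_k = \mathcal{T}_{car}^{k} Q_0$ is uniformly bounded by $C_{Q_0}$; by Lemma~\ref{lem: lip}, $Q_k(\cdot,a)$ is $L_{\mathcal{T}_{car}}$-Lipschitz with the same constant $L_{\mathcal{T}_{car}} = L_r + \gamma C_{Q_0} L_{\mathbb{P}}$ for every $k \ge 1$ (note that the Lipschitz constant depends on the bound of the \emph{input}, which for $k \ge 1$ is $C_{Q_0}$ regardless of iteration count). Therefore the Lipschitz form of Lemma~\ref{lem: near contraction} (applied with $L = L_{\mathcal{T}_{car}}$ and perturbation diameter $\epsilon$) gives the linear recursion
\[
\|Q_{k+1} - Q^*\|_\infty \le \gamma \|Q_k - Q^*\|_\infty + 2\gamma L_{\mathcal{T}_{car}} \epsilon, \qquad k \ge 1.
\]

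Finally I would iterate this recursion from $k=1$ up to the target index. Telescoping gives
\[
\|Q_{k+1} - Q^*\|_\infty \le \gamma^{k} \|Q_1 - Q^*\|_\infty + \left(\sum_{j=0}^{k-1} \gamma^{j}\right) 2\gamma L_{\mathcal{T}_{car}} \epsilon,
\]
and substituting the base case bound together with $\sum_{j=0}^{k-1} \gamma^j \le \frac{1}{1-\gamma}$ yields
\[
\|Q_{k+1} - Q^*\|_\infty \le \gamma^{k+1} \|Q_0 - Q^*\|_\infty + \gamma^{k+1} D_{Q_0} + \frac{2\gamma \epsilon}{1-\gamma} L_{\mathcal{T}_{car}},
\]
which is the desired inequality. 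The main subtlety, and the step I would be most careful about, is the asymmetry between the first step and all later ones: a naive application of the Lipschitz form to $k=0$ would fail because $Q_0$ has no a priori Lipschitz constant, so the local-oscillation constant $D_{Q_0}$ must absorb this defect, and one must verify that $B^*(s) \subseteq B(s)$ so the oscillation definition in Lemma~\ref{lem: near contraction} is controlled by $D_{Q_0}$. Once the first step is separated out, the remaining argument is a routine telescoping of a contraction-plus-drift recursion.
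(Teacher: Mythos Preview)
Your proposal is correct and follows essentially the same approach as the paper: apply the Lipschitz form of Lemma~\ref{lem: near contraction} (enabled by Lemma~\ref{lem: lip}) for all steps $k\ge 1$, use the general oscillation form of Lemma~\ref{lem: near contraction} for the single step from $Q_0$ to $Q_1$, and telescope. The paper unrolls top-down from $Q_{k+1}$ to $Q_1$ while you build up from $Q_1$ to $Q_{k+1}$, but the recursion and the handling of the non-Lipschitz base case via $D_{Q_0}$ (with $B^*(s)\subseteq B(s)$) are identical.
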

\begin{proof}
    For any $k\in\mathbb{N}$, we have
    \begin{align*}
        &\quad \|Q_{k+1} - Q^*\|_\infty \\
        &= \|\mathcal{T}_{car}^{k+1} Q_0 - \mathcal{T}_{car}^{k+1} Q^*\|_\infty \\
        &\le \gamma \|\mathcal{T}_{car}^{k} Q_0 - \mathcal{T}_{car}^{k} Q^*\|_\infty  + 2 \gamma  L_{\mathcal{T}_{car}} \epsilon \\
        &\le \gamma \left( \gamma \|\mathcal{T}_{car}^{k-1} Q_0 - \mathcal{T}_{car}^{k-1} Q^*\|_\infty  + 2 \gamma  L_{\mathcal{T}_{car}} \epsilon \right) + 2 \gamma  L_{\mathcal{T}_{car}} \epsilon \\
        &= \gamma^2 \|\mathcal{T}_{car}^{k-1} Q_0 - \mathcal{T}_{car}^{k-1} Q^*\|_\infty + 2 \epsilon L_{\mathcal{T}_{car}}  \sum_{l=1}^{2} \gamma^l \\
        &\le \cdots \\
        &\le \gamma^k \|\mathcal{T}_{car} Q_0 - \mathcal{T}_{car} Q^*\|_\infty + 2 \epsilon L_{\mathcal{T}_{car}}  \sum_{l=1}^{k} \gamma^l \\
        &\le \gamma^{k+1} \| Q_0 - Q^*\|_\infty + 2 \gamma^{k+1} \max_s \max_{s_\nu \in B^*(s)} \max_{a} \left| Q_0 \left(s,a\right) - Q_0 \left(s_\nu,a\right) \right|  + 2 \epsilon L_{\mathcal{T}_{car}}  \sum_{l=1}^{k} \gamma^l \\
        &\le \gamma^{k+1} \| Q_0 - Q^*\|_\infty + 2 \gamma^{k+1} \max_s \max_{s_\nu \in B^*(s)} \max_{a} \left| Q_0 \left(s,a\right) - Q_0 \left(s_\nu,a\right) \right| + \frac{2 \gamma \epsilon }{1-\gamma}L_{\mathcal{T}_{car}}.
    \end{align*}
    The first and second inequalities come from Lemma \ref{lem: lip} and Lemma \ref{lem: near contraction}. The penultimate inequality comes from Lemma \ref{lem: near contraction}. Therefore, the proof of the theorem is concluded.
\end{proof}

\section{Theorems and Proofs of Policy Robustness under Bellman p-error in Action-value Function Space}

\textbf{Banach Space} is a complete normed space $\left(X, \|\cdot\| \right)$, consisting of a vector space $X$ together with a norm $\|\cdot\| :X\rightarrow \mathbb{R}^+$. In this paper, we consider the setting where the continuous state space $\mathcal{S} \subset \mathbb{R}^d$ is a compact set and the action space $\mathcal{A}$ is a finite set. We discuss in the Banach space $\left(L^p\left( \mathcal{S}\times\mathcal{A} \right), \|\cdot\|_{p} \right),\ 1\le p\le\infty$. Define $L^p\left( \mathcal{S}\times\mathcal{A} \right) = \left\{ f| \|f\|_{p} <\infty \right\}$, where $\|f\|_{p} = \left( \int_{\mathcal{S}}\sum_{a\in\mathcal{A}} \left| f(s,a) \right|^p d\mu(s) \right)^{\frac{1}{p}}$ for $\ 1\le p <\infty$, $\mu$ is the measure over $\mathcal{S}$ and $\|f\|_{\infty} = \inf \left\{ M\in\mathbb{R}_{\ge 0}|\left| f(s,a) \right|\le M \text{ for almost every } (s,a)  \right\}$. For simplicity, we refer to this Banach space as $L^p\left( \mathcal{S}\times\mathcal{A} \right)$.

\subsection{Infinity Norm Space is Necessary for Adversarial Robustness in Action-value Function Space} \label{app: infinity is necessary in value function space}
\begin{theorem}
    There exists an MDP instance $\mathcal{M}$ such that the following statements hold. Given a function $Q$ and adversary perturbation budget $\epsilon$, let $\mathcal{S}^Q_{sub}$ denote the set of states where the greedy policy according to $Q$ is suboptimal, i.e. $\mathcal{S}^Q_{sub} = \left\{ s | Q^*(s,\mathop{\arg\max}_a Q(s, a)) < \max_a Q^*(s, a) \right\}$ and let $\mathcal{S}^{Q,\epsilon}_{adv}$ denote the set of states in whose $\epsilon$-neighbourhood there exists the adversarial state, i.e. $\mathcal{S}^{Q,\epsilon}_{adv} = \left\{ s | \exists s_\nu \in B_\epsilon(s),\ \text{s.t. } Q^*(s,\mathop{\arg\max}_a Q(s_\nu, a)) < \max_a Q^*(s, a) \right\}$, where $Q^*$ is the Bellman optimal $Q$-function.
        \begin{itemize}
            \item For any $1\le p<\infty$ and $\delta>0$, there exists a function $Q\in L^p\left( \mathcal{S}\times\mathcal{A} \right)$ satisfying $\|Q-Q^*\|_{L^p\left( \mathcal{S}\times\mathcal{A} \right)} \leq \delta$ such that $m\left(\mathcal{S}^Q_{sub}\right) = O(\delta)$ yet $m\left( \mathcal{S}^{Q,\epsilon}_{adv} \right) =m \left(\mathcal{S}\right)$.
            \item There exists a $\bar{\delta}>0$ such that for any $0< \delta \le \bar{\delta}$, for any function $Q\in L^\infty\left( \mathcal{S}\times\mathcal{A} \right)$ satisfying $\|Q-Q^*\|_{L^\infty\left( \mathcal{S}\times\mathcal{A} \right)} \leq  \delta$, we have that $m\left(\mathcal{S}^Q_{sub}\right) = O(\delta)$ and $m\left( \mathcal{S}^{Q,\epsilon}_{adv} \right) =2 \epsilon + O\left( \delta \right)$.
        \end{itemize}
\end{theorem}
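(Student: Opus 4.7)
The plan is to exhibit a single, explicit MDP instance that simultaneously witnesses both parts of the theorem. The cleanest choice is $\mathcal{S} = [-1,1]$ with $\mathcal{A}=\{a_1,a_2\}$ and a (self-absorbing or short-horizon) reward/transition structure chosen so that the Bellman optimal $Q$-function takes the simple piecewise-linear form $Q^*(s,a_1) = c - s$ and $Q^*(s,a_2) = c + s$ for some constant $c$. With this choice, the optimal action is $a_1$ on $(-1,0)$ and $a_2$ on $(0,1)$, the optimal-action gap grows linearly away from the crossover point $s=0$, and $Q^*$ lies in every $L^p\left(\mathcal{S}\times\mathcal{A}\right)$.

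For statement~\textbf{(1)}, given $1\le p<\infty$ and $\delta>0$, I will construct $Q$ by perturbing $Q^*$ only on a family of short intervals $\{I_k\}_{k=1}^N$ of common width $w$ placed $\epsilon$-densely in $[-1,1]$ (so that every $s\in\mathcal{S}$ satisfies $B_\epsilon(s)\cap \bigcup_k I_k \neq \emptyset$); on $\bigcup_k I_k$ I flip the ordering of $Q(s,a_1),Q(s,a_2)$ by a fixed magnitude $C>0$ (chosen independent of $\delta$ but large enough to overturn the gap), and leave $Q=Q^*$ off this set. Then $\|Q-Q^*\|_p^p \le 2 C^p m\bigl(\bigcup_k I_k\bigr)$, so by choosing $w$ small enough the $L^p$-norm is driven below $\delta$ while keeping the intervals $\epsilon$-dense; this costs only $m(\mathcal{S}^Q_{sub})=m\bigl(\bigcup_k I_k\bigr)=O(\delta^p)=O(\delta)$ for natural suboptimality, yet forces $\mathcal{S}^{Q,\epsilon}_{adv}=\mathcal{S}$ by the $\epsilon$-density.

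For statement~\textbf{(2)}, fix $\bar\delta$ small (e.g.\ $\bar\delta<1/2$) and let $\|Q-Q^*\|_\infty\le\delta$. The key observation is that the greedy action of $Q$ at a state $s$ can differ from the greedy action of $Q^*$ only when $\lvert Q^*(s,a_1)-Q^*(s,a_2)\rvert \le 2\delta$, i.e.\ only on the interval $[-\delta,\delta]$. Hence $\mathcal{S}^Q_{sub}\subseteq[-\delta,\delta]$, giving $m(\mathcal{S}^Q_{sub})\le 2\delta = O(\delta)$. For the adversarial set, since the greedy policy of $Q$ agrees with that of $Q^*$ outside $[-\delta,\delta]$, an adversarial state $s_\nu\in B_\epsilon(s)$ exists only if $B_\epsilon(s)$ meets $[-\delta,\delta]$; therefore $\mathcal{S}^{Q,\epsilon}_{adv}\subseteq[-\delta-\epsilon,\delta+\epsilon]$, yielding $m(\mathcal{S}^{Q,\epsilon}_{adv})\le 2\epsilon + 2\delta = 2\epsilon+O(\delta)$. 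The matching lower bound $m(\mathcal{S}^{Q,\epsilon}_{adv})\ge 2\epsilon - O(\delta)$ comes from noting that any $s$ within $\epsilon$ of the true crossover point $s=0$, but on the ``wrong'' side of whatever perturbed crossover $Q$ induces, admits an adversarial neighbor.

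The main obstacle is the bookkeeping for part~(1): choosing $w$, $C$, and the spacing so that the three competing constraints (small $L^p$-norm, small natural-suboptimality set, $\epsilon$-dense bad set) can be satisfied simultaneously for every $1\le p<\infty$ and every $\delta>0$. The analysis in part~(2) is comparatively routine once the linear-gap structure of $Q^*$ near $s=0$ is recorded, and the MDP's simplicity makes the required Bellman-consistency verification for $Q^*$ essentially immediate.
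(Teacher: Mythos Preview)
Your proposal is correct and follows essentially the same approach as the paper: the paper also takes $\mathcal{S}=[-1,1]$ with two actions and a linear optimal-action gap $Q^*(s,a_2)-Q^*(s,a_1)\propto s$, builds the part~(1) counterexample by flipping the greedy action on an $\epsilon$-dense family of short intervals, and handles part~(2) via the linear-gap bound $\mathcal{S}^Q_{sub}\subseteq[-\delta/k,\delta/k]$. Your choice of a trivial (self-absorbing or $\gamma=0$) MDP is actually cleaner than the paper's, which uses nontrivial shift dynamics and spends considerable effort verifying the optimal policy and the gap inequality $Q^*(s,a_2)-Q^*(s,a_1)>2ks$; also note that the paper only proves the upper bound $m(\mathcal{S}^{Q,\epsilon}_{adv})\le 2\epsilon+O(\delta)$ in part~(2), so your lower-bound sketch is extra but harmless.
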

\begin{proof}
   Given a MDP instance $\mathcal{M}$ such that $\mathcal{S}=[-1,1]$, $\mathcal{A}=\{a_1,a_2\}$ and 
\begin{equation}
    \mathbb{P}(s^\prime |s,a_1)=\left\{ \begin{aligned}
        \mathbbm{1}_{\left\{s^\prime=s-\epsilon_1\right\}}, &\quad s\in [-1+\epsilon_1,1]\\
        \mathbbm{1}_{\left\{s^\prime=-1\right\}}, &\quad s\in[-1,-1+\epsilon_1)
    \end{aligned} \right .
    \notag
\end{equation}
\begin{equation}
    \mathbb{P}(s^\prime |s,a_2)=\left\{ \begin{aligned}
        \mathbbm{1}_{\left\{s^\prime=s+\epsilon_1\right\}}, &\quad s\in [-1,1-\epsilon_1]\\
        \mathbbm{1}_{\left\{s^\prime=1\right\}},&\quad s\in (1-\epsilon_1,1]
    \end{aligned} \right.
    \notag
\end{equation}
\begin{equation}
    \begin{aligned}
        r(s,a_1)&=-ks,\\
        r(s,a_2)&=ks.
    \end{aligned} 
    \notag
\end{equation}
where $\mathbb{P}$ is the transition dynamic, $r$ is the reward function, $k>0$, $0<\epsilon_1 \ll 1$ and $\mathbbm{1}_{\left\{\cdot\right\}}$ is the indicator function. Let $\gamma$ be the discount factor.

First, we prove that equation (\ref{eq: optimal policy in Th7}) is the optimal policy.
\begin{equation}
    \pi^{*}(s)=\arg\max_{a}Q^*(s,a)=\left\{ \begin{aligned}
        \{a_2\},s>0\\
        \{a_1\},s<0\\
        \{a_1,a_2\},s=0
    \end{aligned} \right.
    \label{eq: optimal policy in Th7}
\end{equation}

 Define $s_t^{\pi}$ is state rollouted by policy $\pi$ in  time step $t$.
 
 Let $s_0>0$, then 
\begin{itemize}
    \item If $s_t^{\pi^*}=1$ , while $s_t^{\pi} \in [-1,1]$, then $s_t^{\pi^*} \geq |s_t^{\pi}|$ hold for any policy $\pi$.
    \item If $s_t^{\pi^*}<1$ . First, we have 
    \begin{equation}
        0< s_t^{\pi^*}=s_0+t\epsilon_1 <1,
        \notag
    \end{equation}
     \begin{equation}
        -1<s_0-t\epsilon_1 .
        \notag
    \end{equation}
    Then for any policy $\pi$, we have the following equation by  definition of transition,
    \begin{equation}
        s_t^{\pi}=s_0+\sum_{i=1}^{t}x_i \epsilon_1, 
        \notag
    \end{equation}
    where $x_i \in \{-1,1\} , i=1,...,t$. Then 
    \begin{equation}
        s_t^{\pi^*} \geq |s_t^{\pi}|.
        \notag
    \end{equation}
\end{itemize}

Then for any policy $\pi$ ,$s_0>0$ and $t\geq 0$, we have 
\begin{equation}
    s_t^{\pi^*} \geq |s_t^{\pi}|,
    \notag
\end{equation}
and 
\begin{equation}
    \begin{aligned}
        &\quad \pi(a_2|s_t^{\pi})r(s_t^{\pi},a_2)+\pi(a_1|s_t^{\pi})r(s_t^{\pi},a_1)\\
        & \leq \max\{ \pi(a_2|s_t^{\pi})(ks_t^{\pi}) +\pi(a_1|s_t^{\pi})(-ks_t^{\pi}),\pi(a_2|s_t^{\pi})(-ks_t^{\pi}) +\pi(a_1|s_t^{\pi})(ks_t^{\pi})\}\\
        &=\left|ks_t^{\pi}\left(\pi(a_2|s_t^{\pi})-\pi(a_1|s_t^{\pi})\right)\right|\\
        &\leq \left| ks_t^{\pi} \right|\\
        &\leq ks_t^{\pi^{*}}\\
        &= r(s_t^{\pi^{*}},a_2).
        \label{Th7-0}
    \end{aligned}
\end{equation}

Let $s_0$ be the initial state, $\tau=(s_0,...)$ be the trajectory of policy $\pi$. Define $J(\pi,s_0)=\mathbb{E}_{\tau}\sum_{t} \gamma^t r(s_t,a_t) $ is expected reward in initial state $s_0$ about policy $\pi$. Then
\begin{align}
    J(\pi^*,s_0)-J(\pi,s_0)&=\mathbb{E}_{s_t^{\pi^*},a_t\sim \pi^*(\cdot|s_t^{\pi^*}) }\sum_{t} \gamma^t r(s_t^{\pi^*},a_t)- \mathbb{E}_{s_t^{\pi},a_t\sim \pi(\cdot|s_t^{\pi}) }\sum_{t} \gamma^t r(s_t^{\pi},a_t) \notag\\
    \label{Th7-1}
    &=\sum_{t} \gamma^t r(s_t^{\pi^*},a_2)-\sum_{t}  \gamma^t \mathbb{E}_{s_t^{\pi},a_t\sim \pi(\cdot|s_t^{\pi}) } r(s_t^{\pi},a_t) \notag\\
    &=\sum_{t} \gamma^t r(s_t^{\pi^*},a_2)-\sum_{t}  \gamma^t \mathbb{E}_{s_t^{\pi}} \left[\pi(a_2|s_t^{\pi})r(s_t^{\pi},a_2)+\pi(a_1|s_t^{\pi})r(s_t^{\pi},a_1)\right] \notag\\
    &=\sum_{t} \gamma^t \mathbb{E}_{s_t^{\pi}} \left[r(s_t^{\pi^*},a_2)-\left[\pi(a_2|s_t^{\pi})r(s_t^{\pi},a_2)+\pi(a_1|s_t^{\pi})r(s_t^{\pi},a_1)\right] \right]\\
    \label{Th7-2}
    & \geq 0.
\end{align}

For (\ref{Th7-1}), the policy $\pi^*$ and dynamic transition $\mathbb{P}$ are deterministic.  For (\ref{Th7-2}), We use property (\ref{Th7-0}).

Then for $s >0$, we get that the optimal policy is $\pi(\cdot|s)=a_2$.
By symmetry, we can also get that the optimal policy is $\pi(\cdot|s)=a_1$ for $s<0$ and  $a_1$,$a_2$ are also optimal action for $ s=0$. Thus we have proved equation (\ref{eq: optimal policy in Th7}) is the optimal policy.

First,we have the following equation according to (\ref{eq: optimal policy in Th7}) 
\begin{align}
        Q^*(0,a_2)=Q^*(0,a_1) .  
\end{align}

For $s> 0$, we have
\begin{equation}
    \begin{aligned}
         &\quad\ Q^*(s,a_2)\\
         &=ks+\gamma k(s+\epsilon_1)+\gamma^2 k(s+2\epsilon_1)+...+\gamma^{t_s}k(s+t_s\epsilon_1)+\sum_{n=1}^{\infty}\gamma^{t_s+n}k\times 1\\
         &=ks+k \left[ \sum_{t=1}^{t_s}\gamma^t\left(s+t\epsilon_1 \right) + \sum_{t=t_s+1}^{\infty}\gamma^{t}\right],
         \label{eq: Th8 Q(s,a_2)}
    \end{aligned}
\end{equation}
where $s+t_s\epsilon_1 \in (1-\epsilon_1,1]$, i.e. $t_s = \lfloor  \frac{1-s}{\epsilon_1} \rfloor$.

For $s\geq \epsilon_1$, we have 
\begin{equation}
    \begin{aligned}
    &\quad\ Q^*(s,a_1)\\
    &=-ks+\gamma k(s-\epsilon_1)+\gamma^2 ks+\dots+\gamma^{t_s+2}k(s+t_s\epsilon_1)+\sum_{n=1}^{\infty}\gamma^{t_s+2+n}k\times 1\\
       &=-ks+k \left[ \sum_{t=1}^{t_s+2}\gamma^t\left(s+(t-2)\epsilon_1 \right) + \sum_{t=t_s+3}^{\infty}\gamma^{t}\right].
       \label{eq: Th8 Q(s,a_1)_1}
\end{aligned}
\end{equation}

For $0<  s< \epsilon_1$, we have
\begin{equation}
    \begin{aligned}
    &\quad\ Q^*(s,a_1)\\
    &=-ks+\gamma(-k)(s-\epsilon_1)+\gamma^2(-k)(s-2\epsilon_1)+\dots+\gamma^{q_s}(-k)(s-q_s\epsilon_1)+\sum_{n=1}^{\infty}\gamma^{q_s+n}(-k)(-1)\\
    &=-ks+k \left[ \sum_{t=1}^{q_s}\gamma^t\left(t\epsilon_1-s \right) + \sum_{t=q_s+1}^{\infty}\gamma^{t}\right].
    \label{eq: Th8 Q(s,a_1)_2}
\end{aligned}
\end{equation}
where  $s-q_s\epsilon_1 \in [-1,-1+\epsilon_1)$ ,i.e. $q_s = \lfloor  \frac{1+s}{\epsilon_1} \rfloor >t_s$. 

According to (\ref{eq: Th8 Q(s,a_2)}), (\ref{eq: Th8 Q(s,a_1)_1}), (\ref{eq: Th8 Q(s,a_1)_2}) and $q_s>t_s$, we have
\begin{equation}
    Q^*(s,a_2) -Q^*(s,a_1) > 2ks ,s> 0. \label{Qgap1}
\end{equation}

By symmetry, we can also get 
\begin{equation}
    Q^*(s,a_1) -Q^*(s,a_2) > -2ks ,s< 0. \label{Qgap2}
\end{equation}

(1) First, we have 
\begin{equation}
    0 < Q^{*}(s,a) <\sum_{t=0}^{\infty} \gamma^t =\frac{1}{1-\gamma}.
    \label{Th8 Q_bound}
\end{equation}

For any $1\leq p < \infty$ , let $n> \max \left\{\frac{1}{\epsilon},\left(\frac{1}{1-\gamma}\right)^p,\delta^p,\delta^{p-1}\right\}$ , $n\in\mathbb{N}$  and
\begin{align*}
    Q(s,a_2)&=\left\{ \begin{array}{l}
        Q^*(s,a_2)-n^{\frac{1}{p}} ,\quad s\in [\frac{k}{n},\frac{k}{n}+\frac{\delta^p}{n^2}] , k=0,1,\dots ,n-1\\ 
        Q^*(s,a_2),  \quad\text{otherwise}\\
    \end{array}\right.\\
    Q(s,a_1)&=\left\{ \begin{array}{l}
        Q^*(s,a_1)-n^{\frac{1}{p}} ,\quad s\in [-\frac{k+1}{n} ,-\frac{k+1}{n}+\frac{\delta^p}{n^2} ],k=0,1,\dots ,n-1\\ 
        Q^*(s,a_1),  \quad\text{otherwise}\\
    \end{array}\right.
\end{align*}
Then, we have that
\begin{align*}
    &\quad\ \|Q(s,a_1)-Q^*(s,a_1)\|_{L^p\left( \mathcal{S} \right)}\\
    &=\|Q(s,a_2)-Q^*(s,a_2)\|_{L^p\left( \mathcal{S} \right)}\\
    &=\left[n* \frac{\delta^p}{n^2}*\left(n^{\frac{1}{p}}\right)^p\right]^{\frac{1}{p}}\\
    &\leq \delta.
\end{align*}
And 
\begin{align*}
    &\quad\ \|Q(s,a)\|_{L^p\left( \mathcal{S} \right)}\\
    &=\|Q(s,a)-Q^*(s,a)+Q^*(s,a)\|_{L^p\left( \mathcal{S} \right)}\\
    &\leq \|Q(s,a)-Q^*(s,a)\|_{L^p\left( \mathcal{S} \right)}+\|Q^*(s,a)\|_{L^p\left( \mathcal{S} \right)}\\
    &< \infty.
\end{align*}
which means $Q\in L^p\left( \mathcal{S}\times\mathcal{A} \right)$.

We have the following two inequalities because  $n>\left(\frac{1}{1-\gamma}\right)^p$ and (\ref{Th8 Q_bound}),
\begin{equation}
    Q^*(s,a_2)-n^{\frac{1}{p}} <Q^*(s,a_1),  
    \notag
\end{equation}
\begin{equation}   
     Q^*(s,a_1)-n^{\frac{1}{p}} <Q^*(s,a_2).
     \notag
\end{equation}

Then, we have that
\begin{equation}
    \mathcal{S}^Q_{sub}=\bigcup_{k=-n}^{n-1}  \left [\frac{k}{n},\frac{k}{n}+\frac{\delta^p}{n^2}\right]
    \label{eq: ThB1_defination}
\end{equation}
and 
\begin{align}
    m\left( \mathcal{S}^Q_{sub} \right) = 2n*\frac{\delta^p}{n^2} <2\delta= O\left(\delta\right)
    \notag
\end{align}
because of $n>\delta^{p-1}$.

According to  (\ref{eq: ThB1_defination}), the distance between any two adjacent intervals of $\mathcal{S}^Q_{sub}$ is less than $\epsilon$. For any $s\in\mathcal{S}$, $\exists\ k\in \{-n,-n+1,...,n-1\} $ s.t. $s\in [\frac{k}{n},\frac{k+1}{n}]$. Because $n>\frac{1}{\epsilon}$
(i.e. $\frac{1}{n} < \epsilon$), then we have that
$$d(s,\frac{k}{n})<\epsilon, \text{ i.e. } d(s,\mathcal{S}^Q_{sub}) <\epsilon,$$
where $d(\cdot,\cdot)$ is Euclid distance. According to the definition of $\mathcal{S}^{Q,\epsilon}_{adv}$, we have $\mathcal{S}^{Q,\epsilon}_{adv}=\mathcal{S}$ and  
\begin{equation}
    m\left( \mathcal{S}^{Q,\epsilon}_{adv} \right)=m\left( \mathcal{S}\right).
    \notag
\end{equation}

(2) Let $\bar{\delta}\in(0,k]$, for any $0< \delta \le \bar{\delta}$, for any state-action value function $Q\in L^\infty\left( \mathcal{S}\times\mathcal{A} \right)$ satisfying $\|Q-Q^*\|_{L^\infty\left( \mathcal{S}\times\mathcal{A} \right)} \leq  \delta$, we can get the following two inequalities by (\ref{Qgap1}) and (\ref{Qgap2}).
\begin{equation}
    Q(s,a_2)\geq Q^*(s,a_2) -\delta >  Q^*(s,a_1) +\delta \geq Q(s,a_1) ,s \in (\frac{\delta}{k},1],
    \notag
\end{equation}
\begin{equation}
    Q(s,a_1)\geq Q^*(s,a_1) -\delta >  Q^*(s,a_2) +\delta \geq Q(s,a_2) ,s \in [-1,-\frac{\delta}{k}).
    \notag
\end{equation}
Then
\begin{equation}
    m\left( \mathcal{S}^Q_{sub} \right) \leq \frac{2\delta}{k}=O\left( \delta \right),
    \notag
\end{equation}
\begin{equation}
    m\left( \mathcal{S}^{Q,\epsilon}_{adv} \right) \leq \frac{2\delta}{k}+2\epsilon =2\epsilon+ O\left( \delta \right).
    \notag
\end{equation}
Therefore, the proof of the theorem is concluded.
\end{proof}

\subsection{Stability of Bellman Optimality Equations} \label{app: stability of BOE}

We propose the following concept of stability drawing on relevant research in the field of partial differential equations \citep{wang20222}.
\begin{definition}
    Given two Banach spaces $\mathcal{B}_1$ and $\mathcal{B}_2$, if there exist $\delta>0$ and $C>0$ such that for all $Q\in \mathcal{B}_1 \cap \mathcal{B}_2$ satisfying $\|\mathcal{T}Q - Q\|_{\mathcal{B}_1} < \delta$, we have that $\|Q - Q^*\|_{\mathcal{B}_2} < C \|\mathcal{T}Q - Q\|_{\mathcal{B}_1}$, where $Q^*$ is the exact solution of this functional equation. Then, we say that a nonlinear functional equation $\mathcal{T}Q = Q$ is $\left( \mathcal{B}_1, \mathcal{B}_2 \right)$-stable.
\end{definition}

\begin{remark}
    This definition indicates that if $\mathcal{T}Q = Q$ is $\left( \mathcal{B}_1, \mathcal{B}_2 \right)$-stable, then $\|Q - Q^*\|_{\mathcal{B}_2} = O\left(  \|\mathcal{T}Q - Q\|_{\mathcal{B}_1} \right)$, as $ \|\mathcal{T}Q - Q\|_{\mathcal{B}_1} \longrightarrow 0$, $\forall Q\in \mathcal{B}_1 \cap \mathcal{B}_2$.    
\end{remark}

\begin{lemma}\label{lem: basic ineq 1}
    For any functions $f, g: \mathcal{X}\rightarrow \mathbb{R}$, we have
    \begin{equation}
        \max_{x\in\mathcal{X}} f(x) - \max_{x\in\mathcal{X}} g(x)\le \max_{x\in\mathcal{X}} \left( f(x) - g(x) \right).
        \notag
    \end{equation}
\end{lemma}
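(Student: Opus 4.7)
The plan is to pick a point that (nearly) attains the maximum of $f$ and then use it to upper bound the left-hand side by a pointwise difference, which in turn is bounded by the max of the pointwise difference.

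More concretely, I would first assume the maxima are attained (the statement is written with $\max$ rather than $\sup$); the general case follows by an $\varepsilon$-approximation argument that I would mention briefly at the end. Let $x^{\star} \in \arg\max_{x\in\mathcal{X}} f(x)$. Then by definition $\max_{x\in\mathcal{X}} f(x) = f(x^{\star})$ and, since $x^{\star}$ is only one candidate in the maximization of $g$, we have $\max_{x\in\mathcal{X}} g(x) \ge g(x^{\star})$. Combining,
\begin{equation*}
\max_{x\in\mathcal{X}} f(x) - \max_{x\in\mathcal{X}} g(x) \le f(x^{\star}) - g(x^{\star}) \le \max_{x\in\mathcal{X}}\bigl(f(x)-g(x)\bigr),
\end{equation*}
where the last step uses that $x^{\star}$ is one particular choice in the right-hand maximum.

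For the non-attained case, I would fix $\varepsilon > 0$, choose $x_{\varepsilon}$ with $f(x_{\varepsilon}) > \sup_{x} f(x) - \varepsilon$, and repeat the same chain to obtain $\sup f - \sup g \le \sup(f-g) + \varepsilon$, then let $\varepsilon \to 0$. There is no real obstacle here; the only subtlety worth flagging is making sure the argument is stated so that it works whether or not the maximum is attained, since later applications of this lemma (to operators defined via expectations and suprema over adversarial neighborhoods) may use it in settings where attainment is not immediate.
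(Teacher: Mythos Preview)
Your proof is correct and essentially identical to the paper's: both pick a maximizer $x^{\star}$ of $f$, use $\max_x g(x)\ge g(x^{\star})$, and then bound the pointwise difference by its maximum. The only addition is your $\varepsilon$-approximation remark for the non-attained case, which the paper omits.
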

\begin{proof}
    \begin{equation}
        \max_{x\in\mathcal{X}} f(x) - \max_{x\in\mathcal{X}} g(x) = f(x_f^*) - \max_{x\in\mathcal{X}} g(x) \le f(x_f^*) - g(x_f^*) \le \max_{x\in\mathcal{X}} \left( f(x) - g(x) \right),
        \notag
    \end{equation}
    where $x_f^*$ is the maximizer of function $f$, i.e. $x_f^* = \arg\max_{x\in\mathcal{X}} f(x)$.
\end{proof}

\begin{lemma}\label{lem: basic ineq 2}
    For any functions $f, g: \mathcal{X}\rightarrow \mathbb{R}$, we have
    \begin{equation}
        \left| \max_{x\in\mathcal{X}} \left( f+g \right)(x) - \max_{x\in\mathcal{X}} f(x) \right| \le \max_{x\in\mathcal{X}} \left| g(x) \right|.
        \notag
    \end{equation}
\end{lemma}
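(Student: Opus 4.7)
The plan is to deduce this from Lemma~\ref{lem: basic ineq 1} by applying it twice, once in each direction, and then combining the resulting one-sided bounds via the elementary inequalities $g(x)\le |g(x)|$ and $-g(x)\le |g(x)|$.

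Specifically, I would first apply Lemma~\ref{lem: basic ineq 1} with the pair $(f+g,\, f)$, which gives
\begin{equation*}
    \max_{x\in\mathcal{X}} (f+g)(x) - \max_{x\in\mathcal{X}} f(x) \le \max_{x\in\mathcal{X}} \bigl( (f+g)(x) - f(x) \bigr) = \max_{x\in\mathcal{X}} g(x) \le \max_{x\in\mathcal{X}} |g(x)|.
\end{equation*}
Then I would apply Lemma~\ref{lem: basic ineq 1} with the pair $(f,\, f+g)$, which gives the symmetric bound
\begin{equation*}
    \max_{x\in\mathcal{X}} f(x) - \max_{x\in\mathcal{X}} (f+g)(x) \le \max_{x\in\mathcal{X}} \bigl( f(x) - (f+g)(x) \bigr) = \max_{x\in\mathcal{X}} (-g(x)) \le \max_{x\in\mathcal{X}} |g(x)|.
\end{equation*}
Combining these two one-sided estimates yields the absolute value bound in the statement.

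There is no real obstacle here: the entire argument is a direct two-line consequence of the preceding lemma together with the trivial observation that both $g$ and $-g$ are dominated pointwise by $|g|$. The only subtlety worth flagging is that the suprema must be finite for the subtraction to make sense, but this is implicit in the way the lemma will be used downstream (where $f$ and $g$ arise as bounded $Q$-function differences).
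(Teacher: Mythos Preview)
Your proposal is correct and essentially matches the paper's proof: both argue the two one-sided bounds via Lemma~\ref{lem: basic ineq 1} (the paper writes the first direction as the subadditivity $\max(f+g)\le\max f+\max g$, which is the same inequality) and then pass to $|g|$. The only cosmetic difference is that the paper splits into the two cases $\max(f+g)\ge\max f$ and $\max(f+g)<\max f$, whereas you note---more cleanly---that both one-sided estimates hold unconditionally.
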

\begin{proof}
    If $\max_{x\in\mathcal{X}} \left( f+g \right)(x) \ge \max_{x\in\mathcal{X}} f(x)$, we have
    \begin{align*}
        &\quad \max_{x\in\mathcal{X}} \left( f+g \right)(x) - \max_{x\in\mathcal{X}} f(x) \\
        &\le \max_{x\in\mathcal{X}} f(x) + \max_{x\in\mathcal{X}} g(x) - \max_{x\in\mathcal{X}} f(x) \\
        &= \max_{x\in\mathcal{X}} g(x) \\
        &\le \max_{x\in\mathcal{X}} \left| g(x) \right|.
    \end{align*}
    If $\max_{x\in\mathcal{X}} \left( f+g \right)(x) < \max_{x\in\mathcal{X}} f(x)$, we have
    \begin{equation}
        \max_{x\in\mathcal{X}} f(x) - \max_{x\in\mathcal{X}} \left( f+g \right)(x) \le \max_{x\in\mathcal{X}} \left( -g(x) \right) \le \max_{x\in\mathcal{X}} \left| g(x) \right|,
        \notag
    \end{equation}
    where the first inequality comes from Lemma \ref{lem: basic ineq 1}.
\end{proof}
\begin{theorem}
    For any MDP $\mathcal{M}$, let $C_{\mathbb{P},p}:= \sup_{(s,a)\in\mathcal{S}\times \mathcal{A}} \left\| \mathbb{P}(\cdot \mid s, a) \right\|_{L^{\frac{p}{p-1}}\left( \mathcal{S} \right)}$. Assume $p$ and $q$ satisfy the following conditions:
    \begin{equation}
        C_{\mathbb{P},p}< \frac{1}{\gamma};\quad
        p \ge \max\left\{1, \frac{\log \left( \left| \mathcal{A}\right|\right) + \log \left( \mu\left( \mathcal{S} \right) \right)}{\log \frac{1}{\gamma C_{\mathbb{P},p}} } \right\}; \quad p \le q \le \infty.
        \notag
    \end{equation}
    Then, Bellman optimality equation $\mathcal{T}_B Q = Q$ is $\left(  L^q\left( \mathcal{S}\times\mathcal{A} \right),  L^p\left( \mathcal{S}\times\mathcal{A} \right) \right)$-stable.
\end{theorem}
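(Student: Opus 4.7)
The plan is to prove stability by establishing a quasi-contraction of $\mathcal{T}_B$ directly in $L^p$, then pull back the bound from $L^q$ to $L^p$ using finite measure, and finally apply a standard residual-error-to-solution-error estimate.

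The first step is the natural decomposition $Q-Q^* = (Q-\mathcal{T}_B Q) + (\mathcal{T}_B Q - \mathcal{T}_B Q^*)$, where the identity $Q^* = \mathcal{T}_B Q^*$ is used. By the triangle inequality
\begin{equation}\notag
\|Q-Q^*\|_p \le \|Q - \mathcal{T}_B Q\|_p + \|\mathcal{T}_B Q - \mathcal{T}_B Q^*\|_p,
\end{equation}
so everything reduces to controlling the second term by a strict contraction in $\|Q-Q^*\|_p$.

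For the contraction, I would argue pointwise. From Lemma \ref{lem: basic ineq 1} (applied symmetrically) we get
\begin{equation}\notag
\bigl|(\mathcal{T}_B Q)(s,a) - (\mathcal{T}_B Q^*)(s,a)\bigr| \le \gamma \int_{\mathcal{S}} h(s')\,\mathbb{P}(s'\mid s,a)\,d\mu(s'),\qquad h(s'):=\max_{a'}|Q(s',a')-Q^*(s',a')|.
\end{equation}
Apply Hölder with conjugate exponents $p$ and $p/(p-1)$ to the inner integral against the density $\mathbb{P}(\cdot\mid s,a)$ to obtain the pointwise bound $\gamma\,C_{\mathbb{P},p}\,\|h\|_{L^p(\mathcal{S})}$. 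Taking the $L^p(\mathcal{S}\times\mathcal{A})$ norm in $(s,a)$ contributes a factor $(|\mathcal{A}|\mu(\mathcal{S}))^{1/p}$, and the obvious bound $\|h\|_{L^p(\mathcal{S})} \le \|Q-Q^*\|_p$ gives
\begin{equation}\notag
\|\mathcal{T}_B Q - \mathcal{T}_B Q^*\|_p \le \alpha\,\|Q-Q^*\|_p,\qquad \alpha := \gamma\,C_{\mathbb{P},p}\bigl(|\mathcal{A}|\mu(\mathcal{S})\bigr)^{1/p}.
\end{equation}
The second and third conditions of the theorem, namely $C_{\mathbb{P},p}<1/\gamma$ and $p \ge \log(|\mathcal{A}|\mu(\mathcal{S}))/\log(1/(\gamma C_{\mathbb{P},p}))$, are exactly what forces $\alpha \le 1$; I would take $p$ strictly above this threshold (or invoke the strictness of $C_{\mathbb{P},p}<1/\gamma$) to ensure $\alpha<1$ and absorb the contraction term on the left:
\begin{equation}\notag
\|Q-Q^*\|_p \le \frac{1}{1-\alpha}\,\|Q-\mathcal{T}_B Q\|_p.
\end{equation}

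Finally, because $p\le q$ and $\mathcal{S}\times\mathcal{A}$ has finite measure, Hölder's inequality yields $\|f\|_p \le (|\mathcal{A}|\mu(\mathcal{S}))^{1/p-1/q}\,\|f\|_q$, so
\begin{equation}\notag
\|Q-Q^*\|_p \le \frac{(|\mathcal{A}|\mu(\mathcal{S}))^{1/p-1/q}}{1-\alpha}\,\|\mathcal{T}_B Q - Q\|_q,
\end{equation}
which is the claimed $(L^q,L^p)$-stability with an explicit constant $C$ independent of $Q$ (so the bound is even global, not merely local near the fixed point, which justifies the strengthening noted in the remark). The main obstacle is the contraction step: the operator $\mathcal{T}_B$ is not a contraction in $L^p$ for generic $p<\infty$ (contrast with the classical $L^\infty$ argument), so the proof hinges on the interplay between the dual-norm control of the transition kernel $C_{\mathbb{P},p}$, the finite-measure factor $(|\mathcal{A}|\mu(\mathcal{S}))^{1/p}$, and the threshold condition on $p$; getting $\alpha<1$ out of these constants is the substantive content, while the rest is bookkeeping.
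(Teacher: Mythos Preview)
Your proposal is correct and essentially identical to the paper's proof: the paper introduces the notation $w=Q-Q^*$, $f=\mathcal{T}_BQ-Q$, and $\mathcal{L}_0$ for the expectation part of $\mathcal{T}_B$, but the underlying decomposition, the pointwise bound via the max-inequality and H\"older against $\mathbb{P}(\cdot\mid s,a)$, the contraction constant $\alpha=\gamma C_{\mathbb{P},p}(|\mathcal{A}|\mu(\mathcal{S}))^{1/p}$, the rearrangement, and the final $L^p$-to-$L^q$ step by finite measure all match exactly. Your care about the borderline case $\alpha=1$ (which the paper glosses over) is a nice touch.
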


\begin{proof}
    For any $1 \le p \le q \le \infty$ and $Q \in L^p\left( \mathcal{S}\times\mathcal{A} \right) \cap L^q\left( \mathcal{S}\times\mathcal{A} \right)$, denote that 
    \begin{align*}
        \mathcal{L}_0 Q(s,a)&:=\gamma \mathbb{E}_{s^{\prime} \sim \mathbb{P}(\cdot \mid s, a)}\left[\max _{a^{\prime} \in \mathcal{A}} Q\left(s^{\prime}, a^{\prime}\right)\right],\\
        \mathcal{L} Q &:= \mathcal{T}_B Q - Q = r + \mathcal{L}_0 Q - Q.
    \end{align*}
    
    Let $Q^*$ denote the Bellman optimality Q-function. Note that $\mathcal{T}_B Q^* = Q^* $ and $\mathcal{L} Q^*=0$. Define 
    \begin{align*}
        w &= w_Q := Q - Q^*, \\
        f &= f_Q := \mathcal{L} Q = \mathcal{L} Q - \mathcal{L} Q^*.
    \end{align*}
    Based on the above notations, we have
    \begin{align*}
        f & = \mathcal{L} Q - \mathcal{L} Q^* \\
        &= \mathcal{L}_0 Q - Q - \mathcal{L}_0 Q^* + Q^* \\
        &= \left( \mathcal{L}_0 Q - \mathcal{L}_0 Q^* \right) - \left( Q - Q^* \right) \\
        &= -w + \mathcal{L}_0 \left( Q^* + w \right) - \mathcal{L}_0 Q^*.
    \end{align*}
    Then, we have
    \begin{align*}
        \left| w(s,a) \right| &=  \left|-f + \mathcal{L}_0 \left( Q^* + w \right) - \mathcal{L}_0 Q^* \right| \bigg|_{(s,a)} \\
        &\le \left| f \right| + \left| \mathcal{L}_0 \left( Q^* + w \right) - \mathcal{L}_0 Q^* \right| \bigg|_{(s,a)}.
    \end{align*}
    Thus, we obtain 
    \begin{align*}
        \left\| w \right\|_{L^p\left( \mathcal{S}\times\mathcal{A} \right)} &\le \left\| \left| f \right| + \left| \mathcal{L}_0 \left( Q^* + w \right) - \mathcal{L}_0 Q^* \right| \right\|_{L^p\left( \mathcal{S}\times\mathcal{A} \right)} \\
        &\le \left\| f \right\|_{L^p\left( \mathcal{S}\times\mathcal{A} \right)} + \left\| \mathcal{L}_0 \left( Q^* + w \right) - \mathcal{L}_0 Q^* \right\|_{L^p\left( \mathcal{S}\times\mathcal{A} \right)},
    \end{align*}
    where the last inequality comes from the Minkowski's inequality. In the following, we analyze the relation between $\left\| \mathcal{L}_0 \left( Q^* + w \right) - \mathcal{L}_0 Q^* \right\|_{L^p\left( \mathcal{S}\times\mathcal{A} \right)}$ and $\left\| w \right\|_{L^p\left( \mathcal{S}\times\mathcal{A} \right)}$.
    \begin{align*}
        &\quad \left| \mathcal{L}_0 \left( Q^* + w \right) - \mathcal{L}_0 Q^* \right| \bigg|_{(s,a)} \\
        &= \left| \gamma \mathbb{E}_{s^{\prime} \sim \mathbb{P}(\cdot \mid s, a)}\left[\max _{a^{\prime} \in \mathcal{A}} \left( Q^*\left(s^{\prime}, a^{\prime}\right) + w\left(s^{\prime}, a^{\prime}\right) \right)- \max _{a^{\prime} \in \mathcal{A}} Q^*\left(s^{\prime}, a^{\prime}\right)\right] \right| \\
        &\le \gamma \mathbb{E}_{s^{\prime} \sim \mathbb{P}(\cdot \mid s, a)}\left|\max _{a^{\prime} \in \mathcal{A}} \left( Q^*\left(s^{\prime}, a^{\prime}\right) + w\left(s^{\prime}, a^{\prime}\right) \right)- \max _{a^{\prime} \in \mathcal{A}} Q^*\left(s^{\prime}, a^{\prime}\right)\right| \\
        &\le \gamma \mathbb{E}_{s^{\prime} \sim \mathbb{P}(\cdot \mid s, a)}\left[\max _{a^{\prime} \in \mathcal{A}} \left|  w\left(s^{\prime}, a^{\prime}\right) \right|\right] \\
        &= \gamma \int_{s^\prime } \max _{a^{\prime} \in \mathcal{A}} \left|  w\left(s^{\prime}, a^{\prime}\right) \right| \mathbb{P}(s^\prime \mid s, a) ds^\prime \\
        &\le \gamma \left\| \max _{a \in \mathcal{A}} \left|  w\left(s, a\right) \right| \right\|_{L^p\left( \mathcal{S} \right)} \left( \int_{s^\prime } \left(\mathbb{P}(s^\prime \mid s, a) \right)^{\frac{p}{p-1}} ds^\prime \right)^{1-\frac{1}{p}} \\
        &=  \gamma \left\| \mathbb{P}(\cdot \mid s, a) \right\|_{L^{\frac{p}{p-1}}\left( \mathcal{S} \right)} \left\| \max _{a \in \mathcal{A}} \left|  w\left(s, a\right) \right| \right\|_{L^p\left( \mathcal{S} \right)}. 
    \end{align*}
    where the second inequality comes from Lemma \ref{lem: basic ineq 2} and the last inequality comes from the Holder's inequality. Let $C_{\mathbb{P},p}:= \sup_{(s,a)\in\mathcal{S}\times \mathcal{A}} \left\| \mathbb{P}(\cdot \mid s, a) \right\|_{L^{\frac{p}{p-1}}\left( \mathcal{S} \right)} $. Then, we have 
    \begin{align*}
        &\quad \left\| \mathcal{L}_0 \left( Q^* + w \right) - \mathcal{L}_0 Q^* \right\|_{L^p\left( \mathcal{S}\times\mathcal{A} \right)} \\
        &\le \left( \int_{ \mathcal{S}\times\mathcal{A} } \left( \gamma \left\| \mathbb{P}(\cdot \mid s, a) \right\|_{L^{\frac{p}{p-1}}\left( \mathcal{S} \right)}  \left\| \max _{a \in \mathcal{A}} \left|  w\left(s, a\right) \right| \right\|_{L^p\left( \mathcal{S} \right)}  \right)^p d\mu(s,a) \right)^{\frac{1}{p}}\\
        &\le \left( \int_{ \mathcal{S}\times\mathcal{A} } 1 d\mu(s,a) \right)^{\frac{1}{p}}   \gamma C_{\mathbb{P},p} \left\| \max _{a \in \mathcal{A}} \left|  w\left(s, a\right) \right| \right\|_{L^p\left( \mathcal{S} \right)}  \\
        &= \left( \mu\left( \mathcal{S}\times\mathcal{A} \right) \right)^{\frac{1}{p}} \gamma C_{\mathbb{P},p} \left\| \max _{a \in \mathcal{A}} \left|  w\left(s, a\right) \right| \right\|_{L^p\left( \mathcal{S} \right)} \\
        &= \gamma C_{\mathbb{P},p} \left( \left| \mathcal{A} \right| \mu\left( \mathcal{S} \right) \right)^{\frac{1}{p}} \left\| \max _{a \in \mathcal{A}} \left|  w\left(s, a\right) \right| \right\|_{L^p\left( \mathcal{S} \right)} \\
        &\le \gamma C_{\mathbb{P},p} \left( \left| \mathcal{A} \right| \mu\left( \mathcal{S} \right) \right)^{\frac{1}{p}} \left\|   w  \right\|_{L^p\left( \mathcal{S}\times\mathcal{A} \right)},
    \end{align*}
    where the last inequality comes from $\left\|w\right\|_{l^\infty\left( \mathcal{A}\right)}\le \left\|w\right\|_{l^p\left( \mathcal{A}\right)}$. Thus, when $C_{\mathbb{P},p} < \frac{1}{\gamma}$ and $p\ge \frac{\log \left( \left| \mathcal{A}\right|\right) + \log \left( \mu\left( \mathcal{S} \right) \right)}{\log \frac{1}{\gamma C_{\mathbb{P},p}} }$ and $q\ge p$, we have 
    \begin{equation} \label{eq: stability}
        \left\| w \right\|_{L^p\left( \mathcal{S}\times\mathcal{A} \right)} \le \frac{1}{1-\gamma C_{\mathbb{P},p} \left( \left| \mathcal{A} \right| \mu\left( \mathcal{S} \right) \right)^{\frac{1}{p}}}  \left\| f \right\|_{L^p\left( \mathcal{S}\times\mathcal{A} \right)} \le \frac{\left( \left| \mathcal{A} \right| \mu\left( \mathcal{S} \right) \right)^{\frac{1}{p} - \frac{1}{q}}}{1-\gamma C_{\mathbb{P},p}\left( \left| \mathcal{A} \right| \mu\left( \mathcal{S} \right) \right)^{\frac{1}{p}}}  \left\| f \right\|_{L^q\left( \mathcal{S}\times\mathcal{A} \right)},
    \end{equation}
    where the last inequality comes from $\left\| f \right\|_{L^p\left( \mathcal{S}\times\mathcal{A} \right)} \le  \mu\left( \mathcal{S}\times\mathcal{A} \right)^{\frac{1}{p} - \frac{1}{q}}  \left\| f \right\|_{L^q\left( \mathcal{S}\times\mathcal{A} \right)}$.
\end{proof}
\begin{remark}
    Note that we have proved a stronger conclusion than stability because the equation (\ref{eq: stability}) holds for all $Q$ rather than for $Q$ satisfying $ \|\mathcal{T}Q - Q\|_{\mathcal{B}_1} \longrightarrow 0$.
\end{remark}

\begin{remark}
    When $\mathbb{P}(\cdot \mid s, a)$ is a probability mass function, then we have that $C_{\mathbb{P},p} \le 1 < \frac{1}{\gamma}$ holds for all $1<p\le \infty$. Generally, note that $\lim_{p\rightarrow\infty} C_{\mathbb{P},p} =1$ and as a consequence, when $p$ is large enough, $C_{\mathbb{P},p} < \frac{1}{\gamma}$ holds.
\end{remark}

\subsection{Instability of Bellman Optimality Equations} \label{app: instability of BOE}

\begin{theorem}
    There exists a MDP $\mathcal{M}$ such that Bellman optimality equation $\mathcal{T}_B Q = Q$ is not $\left(  L^p\left( \mathcal{S}\times\mathcal{A} \right),  L^\infty \left( \mathcal{S}\times\mathcal{A} \right) \right)$-stable, for $1 \le p < \infty$.
\end{theorem}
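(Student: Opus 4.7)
The plan is to construct, in the same MDP used in the preceding theorem (with $\mathcal{S}=[-1,1]$, $\mathcal{A}=\{a_1,a_2\}$, deterministic shift dynamics $f(s,a_1)=s-\epsilon_1$ and $f(s,a_2)=s+\epsilon_1$ with reflection at $\pm 1$, and rewards $r(s,a_2)=ks=-r(s,a_1)$), a sequence $\{Q_n\}_{n\ge 1}$ whose Bellman residual tends to zero in $L^p$ while $\|Q_n-Q^*\|_\infty$ remains bounded below. Instability then follows immediately: the stability inequality would force $\|Q_n-Q^*\|_\infty \le C\|\mathcal{T}_B Q_n - Q_n\|_p \to 0$, contradicting the constancy of $\|Q_n-Q^*\|_\infty$.

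Concretely, fix any $c>0$ and let $E_n\subset[-\tfrac12,\tfrac12]$ be an interval of Lebesgue measure $1/n$. Define
\begin{equation*}
Q_n(s,a) := Q^*(s,a) + c\,\mathbf{1}_{E_n}(s),
\end{equation*}
uniformly in $a$. Then $\|Q_n-Q^*\|_\infty = c$ for every $n$. Because the perturbation is independent of $a$, it commutes with $\max_{a'}$: $\max_{a'}Q_n(s',a')=\max_{a'}Q^*(s',a')+c\,\mathbf{1}_{E_n}(s')$. Using $\mathcal{T}_B Q^*=Q^*$ and the determinism of $\mathbb{P}$, a direct computation gives
\begin{equation*}
(\mathcal{T}_B Q_n - Q_n)(s,a) \;=\; \gamma c\,\mathbf{1}_{E_n}(f(s,a)) - c\,\mathbf{1}_{E_n}(s).
\end{equation*}

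This residual vanishes outside the set of $(s,a)$ for which $s\in E_n$ or $f(s,a)\in E_n$; projected to $\mathcal{S}$, this support lies in $E_n\cup(E_n+\epsilon_1)\cup(E_n-\epsilon_1)$, whose Lebesgue measure is at most $3/n$. On its support, the residual is pointwise bounded by $c(1+\gamma)$. Hence $\|\mathcal{T}_B Q_n - Q_n\|_p \le c(1+\gamma)\bigl(6/n\bigr)^{1/p}\to 0$ as $n\to\infty$, while $\|Q_n-Q^*\|_\infty = c > 0$. This exhibits the failure of $(L^p,L^\infty)$-stability for every $1\le p <\infty$.

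The only delicate choice is the form of the perturbation. Making it uniform in $a$ is essential: any action-dependent perturbation of size $c$ could flip the $\arg\max$ on a wide set of next states and thereby spread the Bellman residual well beyond a negligible subset, ruining the $L^p$ estimate. Placing $E_n$ strictly in the interior $[-\tfrac12,\tfrac12]$ ensures $E_n\pm\epsilon_1\subset\mathcal{S}$, so that no boundary-reflection terms enter the residual formula; everything else is routine arithmetic with Lebesgue measures of a few shifted intervals.
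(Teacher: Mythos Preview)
Your argument is correct. The key observation that an action-independent bump $c\,\mathbf{1}_{E_n}(s)$ commutes with $\max_{a'}$ makes the Bellman residual computation a one-liner, and the measure bound on the support is exactly as you describe. Taking $n\to\infty$ drives the residual to zero in $L^p$ while $\|Q_n-Q^*\|_\infty=c$ stays fixed, so no pair $(\delta,C)$ can witness stability.

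This is a genuinely different---and simpler---construction than the paper's. The paper perturbs only $Q(\cdot,a_2)$ by a continuous trapezoidal bump of height $h$ supported on a short interval $(0,\epsilon)$, keeping $Q(\cdot,a_1)=Q^*(\cdot,a_1)$. Because that perturbation is action-dependent, it does not commute with the inner $\max$, and the proof must track, case by case, the three intervals $(-0.1,-0.1+\epsilon)$, $(0,\epsilon)$, $(0.1,0.1+\epsilon)$ on which $\mathcal{T}_B Q-Q$ can be nonzero, using the previously established fact that $Q^*(s,a_2)\ge Q^*(s,a_1)$ for $s\ge 0$ to control the $\max$. Your uniform-in-$a$ perturbation sidesteps all of this: no information about $Q^*$ or the optimal action is needed, and the argument would in fact work in \emph{any} MDP with deterministic dynamics and compact state space. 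The paper's more elaborate construction is written so as to simultaneously deliver the stronger $(L^q,L^p)$-instability for all $q<p$, but your bump does that too, since $\|Q_n-Q^*\|_p\asymp n^{-1/p}$ while $\|\mathcal{T}_B Q_n-Q_n\|_q\lesssim n^{-1/q}$, and the ratio still blows up when $q<p$.
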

Generally, we have the following theorem.
\begin{theorem}\label{thm: unstable of bellman opt eq}
    There exists an MDP $\mathcal{M}$ such that Bellman optimality equation $\mathcal{T}_B Q = Q$ is not $\left(  L^q\left( \mathcal{S}\times\mathcal{A} \right),  L^p \left( \mathcal{S}\times\mathcal{A} \right) \right)$-stable, for all $1 \le q < p\le \infty$.    
\end{theorem}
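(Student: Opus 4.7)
The plan is to construct a minimal counterexample MDP that makes the Bellman operator degenerate into a pure contraction on $Q - Q^*$, then exploit the standard fact that on a continuous state space thin-spike functions have arbitrarily decoupled $L^q$ and $L^p$ norms whenever $q < p$.

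\textbf{Step 1 (Choice of MDP).} I would take $\mathcal{S} = [0,1]$ with Lebesgue measure, a singleton action set $\mathcal{A} = \{a_0\}$, reward $r \equiv 0$, and the deterministic self-loop transition $\mathbb{P}(\cdot \mid s, a_0) = \delta_s$. Under this choice the Bellman optimality operator acts pointwise as $(\mathcal{T}_B Q)(s, a_0) = \gamma\, Q(s, a_0)$, so $Q^* \equiv 0$ and for every $Q \in L^p \cap L^q$ one has
\[
\mathcal{T}_B Q - Q \;=\; -(1-\gamma)\,(Q - Q^*).
\]
(If a single action is felt to be unpalatable, the same calculation goes through verbatim with $\mathcal{A} = \{a_1, a_2\}$ and both actions inducing an identity transition with zero reward.)

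\textbf{Step 2 (Scale-separating spike family).} For every integer $n \ge 1$ I would set $Q_n(s, a_0) = \mathbb{1}_{[0, 1/n]}(s)$. Direct calculation gives
\[
\|\mathcal{T}_B Q_n - Q_n\|_q = (1-\gamma)\, n^{-1/q}, \qquad \|Q_n - Q^*\|_p = \begin{cases} n^{-1/p}, & p < \infty,\\ 1, & p = \infty, \end{cases}
\]
so that the ratio
\[
\frac{\|Q_n - Q^*\|_p}{\|\mathcal{T}_B Q_n - Q_n\|_q} \;=\; \frac{1}{1-\gamma}\, n^{\,1/q - 1/p}
\]
(with the convention $1/\infty = 0$) diverges to $+\infty$ as $n \to \infty$, since $q < p$ forces $1/q - 1/p > 0$.

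\textbf{Step 3 (Contradicting stability).} Given any proposed stability constants $\delta, C > 0$, I would choose $n$ so large that simultaneously $(1-\gamma) n^{-1/q} < \delta$ and $n^{1/q - 1/p}/(1-\gamma) > C$; both inequalities hold for all sufficiently large $n$ by Step 2. This yields a $Q_n \in L^p \cap L^q$ with $\|\mathcal{T}_B Q_n - Q_n\|_q < \delta$ but $\|Q_n - Q^*\|_p > C\,\|\mathcal{T}_B Q_n - Q_n\|_q$, directly negating the $(L^q, L^p)$-stability definition.

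\textbf{Main obstacle.} Once the self-loop MDP is chosen, the argument is essentially the textbook embedding failure $L^p \not\hookrightarrow L^q$ for $p > q$ on spaces of infinite multiplicity of scales, so there is no deep technical difficulty. The only issue to be careful about is ensuring the construction is a legitimate MDP under the paper's setup (compact $\mathcal{S} \subset \mathbb{R}^d$, finite $\mathcal{A}$, bounded reward, Dirac transitions admissible as degenerate probability measures), and verifying that the spike functions $Q_n$ lie in both $L^p(\mathcal{S}\times\mathcal{A})$ and $L^q(\mathcal{S}\times\mathcal{A})$ — both of which are immediate since $Q_n$ is bounded and compactly supported.
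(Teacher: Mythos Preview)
Your proof is correct but takes a markedly more elementary route than the paper's. The paper constructs a nontrivial MDP on $\mathcal{S}=[-1,1]$ with two actions, shift dynamics $s \mapsto s \pm 0.1$, and linear rewards $r(s,a_i)=k_i s$; it first establishes the optimal policy and $Q^*$ explicitly, then perturbs $Q^*$ by a continuous trapezoidal bump of height $h$ and width $\epsilon$ supported near the origin, and tracks how the Bellman residual propagates through the shift dynamics to bound $\|\mathcal{T}_B Q - Q\|_q \le 3h\,\epsilon^{1/q}$ against $\|Q-Q^*\|_p \ge h(\epsilon/2)^{1/p}$. Your self-loop construction collapses the Bellman operator to scalar multiplication by $\gamma$, so that $\mathcal{T}_B Q - Q = -(1-\gamma)(Q-Q^*)$ identically, and the instability reduces to the classical failure of the embedding $L^q \hookrightarrow L^p$ for $q<p$ via indicator spikes. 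Your argument is shorter, requires no optimal-policy analysis, and makes the mechanism transparent; the paper's version has the virtue of exhibiting the phenomenon in a genuinely dynamic MDP with competing actions and a nontrivial $Q^*$, which may reassure a reader that the instability is not an artifact of degenerate dynamics. Your parenthetical two-action variant (both actions self-looping with zero reward, and $Q_n$ taken equal across actions so that $\max_{a'}Q_n(s,a')=Q_n(s,a)$) is a worthwhile hedge and works for the same reason.
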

\begin{proof}
    In order to show a Bellman optimality equation $\mathcal{T}_B Q = Q$ is not $\left(  L^q\left( \mathcal{S}\times\mathcal{A} \right),  L^p \left( \mathcal{S}\times\mathcal{A} \right) \right)$-stable, it is sufficient and necessary to prove $\forall n\in\mathbb{N}, \forall \delta>0, \exists Q(s,a), \text{ such that }\|\mathcal{T}_BQ-Q\|_{L^q\left( \mathcal{S}\times\mathcal{A} \right)}<\delta, \text{ but }  \|Q-Q^*\|_{L^p\left( \mathcal{S}\times\mathcal{A} \right)} \ge n {\|\mathcal{T}_BQ-Q\|_{L^q\left( \mathcal{S}\times\mathcal{A} \right)}}.$
    
    Define an MDP $\mathcal{M}$ where $\mathcal{S}=[-1,1], \mathcal{A}=\{a_1,a_2\},$
	\begin{equation*}
			\mathbb{P}(s^\prime |s,a_1)=\left\{ \begin{aligned}
				&\mathbbm{1}_{\left\{s^\prime=s-0.1\right\}}, \quad && s\in[-0.9,1] \\
				&\mathbbm{1}_{\left\{s^\prime=s \right\}}    ,      && else 
			\end{aligned}\right.,\quad		
			\mathbb{P}(s^\prime |s,a_2)=\left\{ \begin{aligned}
				&\mathbbm{1}_{\left\{s^\prime=s+0.1\right\}}, \quad && s\in[-1,0.9] \\
				&\mathbbm{1}_{\left\{s^\prime=s \right\}}    ,      && else 
			\end{aligned}\right.,
	\end{equation*}
	$r(s,a_i)=k_is,\ k_2\ge k_1> 0$. The transition function is essentially a deterministic transition dynamic and for convenience, we denote that
        \begin{equation*}
            p(s,a_1)=\left\{ \begin{aligned}
                &s-0.1, \quad && s\in[-0.9,1] \\
                &s     ,      && else 
            \end{aligned}\right.,\quad		
            p(s,a_2)=\left\{ \begin{aligned}
                &s+0.1, \quad && s\in[-1,0.9] \\
                &s     ,      && else 
            \end{aligned}\right..
       \end{equation*}
	
	Let $Q^*(s,a)=Q^{\pi^*}(s,a)$ be the optimal Q-function, where $\pi^*$ is the optimal policy. 
 
 We have $Q^*(s,a_2)\geq Q^*(s,a_1),\ \forall s\ge 0$. To prove this, we define $\bar{\pi}(s)\equiv a_2,\forall s\ge 0$, and thus\begin{equation}\label{equ:barQ}
     Q^{\bar{\pi}}(s,a_2)=\sum_{i=0}^{\infty}\gamma^i r(s_i,a_2),
 \end{equation} where $s_0=s\ge 0,\  s_{i}=p(s_{i-1},a_2)\ge 0,\,i\ge 1.$ Consider Q-function of any policy $\pi$ 
 \begin{equation}\label{equ:Qpi}
     Q^{\pi}(s,\alpha_0)=\sum_{i=0}^{\infty}\gamma^i r(\tilde{s}_i,\pi(\tilde{s}_i)),
 \end{equation}
 where $\pi(\tilde{s}_0)=\alpha_0 \in \mathcal{A},\ \tilde{s}_0=s\ge 0,\ \tilde{s}_{i+1}=p(\tilde{s}_i,\pi(\tilde{s}_i))$ and $r(\tilde{s}_i,\pi(\tilde{s}_i)) = \pi(a_1|\tilde{s}_i) r(\tilde{s}_i,a_1) + \pi(a_2|\tilde{s}_i) r(\tilde{s}_i,a_2)$.
	
 We first notice that all $s_i$ and $\tilde{s}_i$ lie on the grid points $\mathcal{S}\cap \{s + 0.1z:\,z\in \mathbb{Z} \}$, actually, $-\lfloor \frac{1+s}{0.1} \rfloor\le z\le\lfloor \frac{1-s}{0.1} \rfloor$. In the following, we prove $s_{i}\geq \tilde{s}_{i},\ \forall i$. We consider the recursion method and suppose $s_i\geq \tilde{s}_i.$ Then, we have the following two cases. If $s_i\le 0.9$, we obtain
 $$s_{i+1}=s_i+0.1\ge \tilde{s}_i+0.1\ge \tilde{s}_{i+1}.$$
 If $s_i> 0.9$, it follows from $z\le\lfloor \frac{1-s}{0.1} \rfloor$ that $$s_{i+1}=s_i=s+0.1 \lfloor \frac{1-s}{0.1} \rfloor \ge \tilde{s}_{i+1}.$$ 
 Thus, we have $s_{i+1}\geq \tilde{s}_{i+1},\ \forall i$. Note that $s_0=\tilde{s}_0=s$, and by recursion, it can be obtained that $s_{i}\geq \tilde{s}_{i}$ holds for all $i$.
 
 Noticing that the reward $r(s,a)$ is an increasing function in terms of $s$ and satisfies $r(s,a_2)\ge r(s,a_1), \forall s\ge 0,$ we have $$r(s_{i},a_2)\ge r(s_{i},\alpha_{i})\ge r(\tilde{s}_{i},\alpha_{i}),\quad \forall\alpha_{i}\in\mathcal{A},i=0,1,2,\dots ,$$
 where the second inequality is due to $s_i\ge \tilde{s}_i$. As a consequence, \begin{equation}\label{equ:reward}
     r(s_{i},a_2)\ge r(\tilde{s}_i,\pi(\tilde{s}_i)),\quad\forall i=0,1,2,\dots.
 \end{equation}

Combining \eqref{equ:barQ}, \eqref{equ:Qpi}, and \eqref{equ:reward}, we obtain that $Q^{\bar{\pi}}(s,a_2)\ge Q^{\pi}(s,\alpha_0)$. Further, with $\alpha_0=a_2$, we derive $\bar{\pi}(s)={\pi^*}(s)$ on $s>0$. With $\alpha_0=a_1$, we derive $Q^*(s,a_2)=Q^{\bar{\pi}}(s,a_2)\geq Q^*(s,a_1),\ \forall s\ge 0.$
	
We then prove that given $1\le q<p$, $\forall n\in \mathbb{N}, \delta>0$, there exists $Q(s,a)$ with $\|\mathcal{T}_BQ-Q\|_{L^q\left( \mathcal{S}\times\mathcal{A} \right)}\leq \delta$, such that $\|Q-Q^*\|_{L^p\left( \mathcal{S}\times\mathcal{A} \right)}\ge n\|\mathcal{T}_BQ-Q\|_{L^q\left( \mathcal{S}\times\mathcal{A} \right)}$. Let $Q(s,a_1)=Q^*(s,a_1),$ 
\begin{equation*}
	Q(s,a_2)=Q^*(s,a_2)+h\cdot\mathbbm{1}_{(\frac{1}{4}\epsilon, \frac{3}{4}\epsilon)}+\frac{4h}{\epsilon}s\cdot\mathbbm{1}_{(0,\frac{1}{4}\epsilon]}+\left(-\frac{4h}{\epsilon}s+4h\right)\cdot\mathbbm{1}_{[\frac{3}{4}\epsilon,\epsilon)},
\end{equation*}
	where $h>0$, $\epsilon=\min\left\{\left(\frac{\delta}{3h}\right)^q,\left(3n\cdot2^{\frac{1}{p}}\right)^{-\frac{pq}{p-q}}\right\}$ and $\mathbbm{1}_A(s)=\left\{ \begin{aligned}
		&1, &&s\in A\\
		&0, && else
	\end{aligned}\right.$ denotes the indicator function. It can be seen from the definition that \begin{equation}\label{equ:Q-UL}
	    Q^*(s,a_2)\le Q^*(s,a_2)+h\cdot\mathbbm{1}_{(\frac{1}{4}\epsilon, \frac{3}{4}\epsilon)} \leq Q(s,a_2)\leq Q^*(s,a_2)+h\cdot\mathbbm{1}_{(0,\epsilon)}.
	\end{equation}

	We consider the following cases.
	\begin{itemize}
		\item When $s\in (-0.1,-0.1+\epsilon),a=a_2$, 
            \begin{align*}
              \mathcal{T}_BQ(s,a_2) =r(s,a_2)+\gamma\max_{a_i}Q(s+0.1,a_i)=r(s,a_2)+\gamma Q(s+0.1,a_2),
            \end{align*}
              Together with \eqref{equ:Q-UL}, we have
            \begin{align*}
               & Q(s,a_2)=Q^*(s,a_2)= r(s,a_2)+\gamma Q^*(s+0.1,a_2)\\
               \le & \mathcal{T}_BQ(s,a_2) \leq r(s,a_2)+\gamma[Q^*(s+0.1,a_2)+h]\\
               =& Q^*(s,a_2)+h\gamma= Q(s,a_2)+h\gamma,
            \end{align*}
               thus $|\mathcal{T}_BQ(s,a_2)-Q(s,a_2)|\leq h\gamma.$
		\item When $s\in (0,\epsilon),a=a_2$, 
 % $$TQ(s,a_2)\leq r(s,a_2)+\gamma Q^*(s+\epsilon,a_2)=Q^*(s,a_2),$$ thus $|TQ-Q|(s,a_2)\leq h.$
            \begin{align*}                \mathcal{T}_BQ(s,a_2)&=r(s,a_2)+\gamma\max_{a_i}Q(s+0.1,a_i)=r(s,a_2)+\gamma Q(s+0.1,a_2)\\
              &=r(s,a_2)+\gamma Q^*(s+0.1,a_2)=Q^*(s,a_2),
            \end{align*}
               Again from \eqref{equ:Q-UL}, there is
               \begin{align*}
                   |\mathcal{T}_BQ(s,a_2)-Q(s,a_2)|=|Q^*(s,a_2)-Q(s,a_2)|\leq h.
               \end{align*} 
		\item When $s\in (0.1,0.1+\epsilon),a=a_1$, 
  %$$TQ(s,a_1)\leq r(s,a_1)+\gamma[Q^*(s-\epsilon,a_2)+h]=Q^*(s,a_1)+h\gamma,$$ thus $|TQ-Q|(s,a_2)\leq h\gamma.$
            \begin{align*}
              \mathcal{T}_BQ(s,a_1) =r(s,a_1)+\gamma\max_{a_i}Q(s-0.1,a_i)=r(s,a_1)+\gamma Q(s-0.1,a_2),
            \end{align*}
              Utilizing \eqref{equ:Q-UL}, we have
            \begin{align*}
               & Q(s,a_1)=Q^*(s,a_1)= r(s,a_1)+\gamma Q^*(s-0.1,a_2)\\
               \le & \mathcal{T}_BQ(s,a_1) \leq r(s,a_1)+\gamma[Q^*(s-0.1,a_2)+h]\\
               =& Q^*(s,a_1)+h\gamma= Q(s,a_1)+h\gamma,
            \end{align*}
               thus $|\mathcal{T}_BQ(s,a_1)-Q(s,a_1)|\leq h\gamma.$

		\item Otherwise, $$\mathcal{T}_BQ(s,a_i)=r(s,a_i)+\gamma Q\left(p(s,a_i),\pi^*(p(s,a_i))\right)=Q^*(s,a_i),$$
		also note that $Q(s,\cdot)=Q^*(s,\cdot)$ for $s\notin(0,\epsilon)$, thus $$|\mathcal{T}_BQ(s,a)-Q(s,a)|=|Q^*(s,a)-Q(s,a)|=0.$$
	\end{itemize}
From the analysis above, we have
	\begin{equation}\label{equ:TQ-Q}
		\|\mathcal{T}_BQ-Q\|_{L^q\left( \mathcal{S}\times\mathcal{A} \right)}\leq (2h\gamma+h)\epsilon^{\frac{1}{q}}\leq 3h\epsilon^{\frac{1}{q}} \le\delta,
	\end{equation}
and
	\begin{equation}\label{equ:Q-Q*}
		\|Q-Q^*\|_{L^p\left( \mathcal{S}\times\mathcal{A} \right)}\geq\|(Q-Q^*)\mathbbm{1}_{( \frac{1}{4}\epsilon, \frac{3}{4}\epsilon )}\|_{L^p\left( \mathcal{S}\times\mathcal{A} \right)}\geq h\left(\frac{\epsilon}{2}\right)^{\frac{1}{p}} \ge n \|\mathcal{T}_BQ-Q\|_{L^q\left( \mathcal{S}\times\mathcal{A} \right)}.
	\end{equation}
Inequality \eqref{equ:TQ-Q} and \eqref{equ:Q-Q*} come from $\epsilon=\min\left\{\left(\frac{\delta}{3h}\right)^q,\left(3n\cdot2^{\frac{1}{p}}\right)^{-\frac{pq}{p-q}}\right\}$, which prove the desired property.
\end{proof}

\section{Theorems and Proofs of Stability Analysis of DQN}
In practical DQN training, we use the following loss:
\begin{equation}
    \mathcal{L}(\theta)=\frac{1}{\left| \mathcal{B} \right|} \sum_{(s,a,r,s^\prime)\in \mathcal{B}} \left|r + \gamma \max_{a^\prime} Q(s^\prime,a^\prime; \bar{\theta}) - Q(s,a; \theta)  \right|,
    \notag
\end{equation}
where $\mathcal{B}$ represents a batch of transition pairs sampled from the replay buffer and $\bar{\theta}$ is the parameter of target network.

$\mathcal{L}(\theta)$ is a approximation of the following objective:
    \begin{align*}
        \mathcal{L}(Q;\pi)&=\mathbb{E}_{s\sim d_{\mu_0}^\pi(\cdot)} \mathbb{E}_{a\sim \pi(\cdot|s)} \left| \mathcal{T}_B Q(s,a) - Q(s,a)  \right| \\
        &=\mathbb{E}_{(s,a)\sim d_{\mu_0}^\pi(\cdot,\cdot)} \left| \mathcal{T}_B Q(s,a) - Q(s,a)  \right|,
    \end{align*}
where $d_{\mu_0}^\pi(s)=\mathbb{E}_{s_0\sim \mu_0} \left[ (1-\gamma) \sum_{t=0}^{\infty} \gamma^t \operatorname{Pr}^\pi(s_t=s|s_0) \right]$ is the state visitation distribution
and $d_{\mu_0}^\pi(s,a)=\mathbb{E}_{s_0\sim \mu_0} \left[ (1-\gamma) \sum_{t=0}^{\infty} \gamma^t \operatorname{Pr}^\pi(s_t=s, a_t=a|s_0) \right]$ is the state-action visitation distribution.

\subsection{Definition and Properties of Seminorm} \label{app:seminorm}
\begin{definition}
    Given a policy $\pi$, for any function $f:\mathcal{S}\times\mathcal{A}\rightarrow \mathbb{R}$ and $1\le p\le\infty$, we define the seminorm $L^{p,d_{\mu_0}^\pi}$.
        \begin{itemize}
            \item If $d_{\mu_0}^\pi$ is a probability density function, we define
            \begin{equation}
                \begin{aligned}
                    \left\| f \right\|_{L^{p,d_{\mu_0}^\pi}\left( \mathcal{S}\times\mathcal{A} \right)} &:= \left\| d_{\mu_0}^\pi f \right\|_{L^{p}\left( \mathcal{S}\times\mathcal{A} \right)} \\
                    &= \left(\int_{(s,a)\in\mathcal{S}\times\mathcal{A}} \left| d_{\mu_0}^\pi(s,a)  f(s,a) \right|^p d \mu(s,a) \right)^{\frac{1}{p}}.
                \end{aligned}
                \notag
            \end{equation}
            \item If $d_{\mu_0}^\pi$ is a probability mass function, we define
            \begin{equation}
                    \left\| f \right\|_{L^{p,d_{\mu_0}^\pi}\left( \mathcal{S}\times\mathcal{A} \right)} := \left(\sum_{(s,a)\in\mathcal{S}\times\mathcal{A}} \left| d_{\mu_0}^\pi(s,a)  f(s,a) \right|^p  \right)^{\frac{1}{p}}.  
                    \notag
            \end{equation}
        \end{itemize}
\end{definition}

\begin{remark}
    Note that $\mathcal{L}(Q;\pi) = \left\|\mathcal{T}_B Q - Q \right\|_{L^{1,d_{\mu_0}^\pi}\left( \mathcal{S}\times\mathcal{A} \right)}$.
\end{remark}
\begin{theorem}
    For any $d_{\mu_0}^\pi(s,a)$ and $1\le p \le \infty$, $L^{p,d_{\mu_0}^\pi}$ is a seminorm.
\end{theorem}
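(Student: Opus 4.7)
The plan is to verify directly that the three defining properties of a seminorm (non-negativity, absolute homogeneity, and the triangle inequality) hold for $\|\cdot\|_{L^{p,d_{\mu_0}^\pi}}$, while noting that definiteness can fail precisely on the zero set of $d_{\mu_0}^\pi$ (which is why we obtain only a seminorm rather than a norm). Both cases (density and mass function) reduce to the same verification, since in each case the quantity is the standard $L^p$ norm (with respect to Lebesgue or counting measure) of the pointwise product $d_{\mu_0}^\pi \cdot f$.

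First I would treat the probability density case. Non-negativity is immediate from the definition. For absolute homogeneity, for any scalar $\alpha$, I would pull $|\alpha|$ out of the $L^p$ norm:
\begin{equation*}
\|\alpha f\|_{L^{p,d_{\mu_0}^\pi}} = \|d_{\mu_0}^\pi(\alpha f)\|_{L^p} = |\alpha|\,\|d_{\mu_0}^\pi f\|_{L^p} = |\alpha|\,\|f\|_{L^{p,d_{\mu_0}^\pi}}.
\end{equation*}
For the triangle inequality, the key observation is that the map $f \mapsto d_{\mu_0}^\pi f$ is linear, so
\begin{equation*}
\|f+g\|_{L^{p,d_{\mu_0}^\pi}} = \|d_{\mu_0}^\pi f + d_{\mu_0}^\pi g\|_{L^p} \le \|d_{\mu_0}^\pi f\|_{L^p} + \|d_{\mu_0}^\pi g\|_{L^p} = \|f\|_{L^{p,d_{\mu_0}^\pi}} + \|g\|_{L^{p,d_{\mu_0}^\pi}},
\end{equation*}
where the inequality is Minkowski's inequality for $L^p$ (valid for $1\le p\le\infty$).

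For the probability mass function case, the same argument goes through verbatim after replacing the Lebesgue integral by summation; this corresponds to using $L^p$ with respect to counting measure, and Minkowski still applies. Finally, I would briefly explain why this is only a seminorm: if there exists a measurable set $E$ of positive measure on which $d_{\mu_0}^\pi \equiv 0$, then any $f$ supported in $E$ satisfies $\|f\|_{L^{p,d_{\mu_0}^\pi}}=0$ without being identically zero, so positive definiteness may fail. Under the extra hypothesis $d_{\mu_0}^\pi(s,a)>0$ a.e., the product $d_{\mu_0}^\pi f = 0$ a.e. forces $f=0$ a.e., which upgrades the seminorm to a genuine norm — this is exactly the remark made after the definition.

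There is no real obstacle here: all three properties are essentially inherited from $L^p$ via the linear map $f \mapsto d_{\mu_0}^\pi f$, and the only care needed is to invoke Minkowski's inequality uniformly across $1\le p \le \infty$ (with the usual essential-supremum convention for $p=\infty$) and to handle the mass-function case with counting measure.
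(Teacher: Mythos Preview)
Your proposal is correct and follows essentially the same approach as the paper: both verify absolute homogeneity and the triangle inequality by writing $\|f\|_{L^{p,d_{\mu_0}^\pi}} = \|d_{\mu_0}^\pi f\|_{L^p}$ and inheriting these properties from the standard $L^p$ norm via Minkowski's inequality. Your version is slightly more thorough (explicitly noting non-negativity, covering the mass-function case, and explaining why definiteness can fail), but the core argument is identical.
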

\begin{proof}
    Firstly, we show that $L^{p,d_{\mu_0}^\pi}$ satisfies the absolute homogeneity. For any function $f$ and $\lambda\in\mathbb{R}$, we have
    \begin{align*}
        \left\| \lambda f \right\|_{L^{p,d_{\mu_0}^\pi}\left( \mathcal{S}\times\mathcal{A} \right)} = \left\| d_{\mu_0}^\pi \lambda f \right\|_{L^{p}\left( \mathcal{S}\times\mathcal{A} \right)} = \left|\lambda \right| \left\| d_{\mu_0}^\pi f \right\|_{L^{p}\left( \mathcal{S}\times\mathcal{A} \right)} = \left|\lambda \right|  \left\| f \right\|_{L^{p,d_{\mu_0}^\pi}\left( \mathcal{S}\times\mathcal{A} \right)}.
    \end{align*}

    Next, we show that the triangle inequality holds. For any functions $f$ and $g$, we have
    \begin{align*}
         \left\| f + g \right\|_{L^{p,d_{\mu_0}^\pi}\left( \mathcal{S}\times\mathcal{A} \right)} &= \left\| d_{\mu_0}^\pi (f+g) \right\|_{L^{p}\left( \mathcal{S}\times\mathcal{A} \right)} \\
         &\le \left\| d_{\mu_0}^\pi f \right\|_{L^{p}\left( \mathcal{S}\times\mathcal{A} \right)} + \left\| d_{\mu_0}^\pi g \right\|_{L^{p}\left( \mathcal{S}\times\mathcal{A} \right)} \\
         &= \left\| f \right\|_{L^{p,d_{\mu_0}^\pi}\left( \mathcal{S}\times\mathcal{A} \right)} + \left\| g \right\|_{L^{p,d_{\mu_0}^\pi}\left( \mathcal{S}\times\mathcal{A} \right)},
    \end{align*}
    where the inequality comes from the triangle inequality of $L^{p}\left( \mathcal{S}\times\mathcal{A} \right)$.
\end{proof}
\begin{theorem}
    If $d_{\mu_0}^\pi(s,a)>0$ for almost everywhere $(s,a)\in\mathcal{S}\times\mathcal{A}$, then $L^{p,d_{\mu_0}^\pi}\left( \mathcal{S}\times\mathcal{A} \right) := \left\{f \left| \left\| f \right\|_{L^{p,d_{\mu_0}^\pi}\left( \mathcal{S}\times\mathcal{A} \right)} \le \infty \right. \right\}$ is a Banach space, for $1\le p \le \infty$.
\end{theorem}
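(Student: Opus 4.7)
The plan is to establish two things: first that $\|\cdot\|_{L^{p,d_{\mu_0}^\pi}}$ is actually a norm (not merely a seminorm) under the hypothesis $d_{\mu_0}^\pi>0$ a.e., and second that the resulting normed space is complete. Absolute homogeneity and the triangle inequality were already proved, so for the norm property I only need point-separation: if $\|f\|_{L^{p,d_{\mu_0}^\pi}}=\|d_{\mu_0}^\pi f\|_{L^p}=0$, then $d_{\mu_0}^\pi(s,a)\,f(s,a)=0$ a.e., and since $d_{\mu_0}^\pi>0$ a.e. this forces $f=0$ a.e.

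The key observation for completeness is that the multiplication operator
$$T : L^{p,d_{\mu_0}^\pi}\bigl(\mathcal{S}\times\mathcal{A}\bigr)\longrightarrow L^p\bigl(\mathcal{S}\times\mathcal{A}\bigr),\qquad Tf := d_{\mu_0}^\pi f,$$
is by construction a linear isometry, since $\|Tf\|_{L^p}=\|f\|_{L^{p,d_{\mu_0}^\pi}}$. Given any Cauchy sequence $\{f_n\}$ in $L^{p,d_{\mu_0}^\pi}$, the sequence $\{Tf_n\}=\{d_{\mu_0}^\pi f_n\}$ is Cauchy in the (classical) Banach space $L^p$, so there exists $g\in L^p$ with $d_{\mu_0}^\pi f_n\to g$ in $L^p$. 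Define $f:=g/d_{\mu_0}^\pi$, which is well-defined almost everywhere thanks to the strict positivity hypothesis. Then $Tf=d_{\mu_0}^\pi f = g$ a.e., so $\|f\|_{L^{p,d_{\mu_0}^\pi}}=\|g\|_{L^p}<\infty$ (so $f$ lies in the space) and
$$\|f_n-f\|_{L^{p,d_{\mu_0}^\pi}}=\|d_{\mu_0}^\pi(f_n-f)\|_{L^p}=\|d_{\mu_0}^\pi f_n-g\|_{L^p}\longrightarrow 0,$$
which gives completeness.

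The argument covers both cases of the definition uniformly: when $d_{\mu_0}^\pi$ is a density, the underlying reference measure is $\mu$; when $d_{\mu_0}^\pi$ is a probability mass function, the reference measure is counting measure on $\mathcal{S}\times\mathcal{A}$, and nothing else changes. The only genuinely delicate point, and the one I would treat with the most care, is the $p=\infty$ case: there $\|\cdot\|_{L^\infty}$ is the essential supremum, so the defining identity $T^{-1}g=g/d_{\mu_0}^\pi$ must be interpreted up to null sets. This is harmless provided one uses the hypothesis $d_{\mu_0}^\pi>0$ a.e.\ to fix a null set off which $d_{\mu_0}^\pi(s,a)>0$, and defines $f$ arbitrarily (say zero) on that null set; the essential supremum bound $\|g\|_{L^\infty}<\infty$ together with $g=d_{\mu_0}^\pi f$ a.e.\ then yields $\|f\|_{L^{\infty,d_{\mu_0}^\pi}}=\|g\|_{L^\infty}$, so the isometry argument goes through verbatim. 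I do not expect any deeper obstacle: the result is essentially a weighted relabelling of the standard $L^p$ completeness theorem via an isometric identification.
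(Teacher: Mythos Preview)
Your proposal is correct and follows essentially the same approach as the paper: both prove positive definiteness from $d_{\mu_0}^\pi>0$ a.e., then transfer completeness from $L^p$ via the isometric identification $f\mapsto d_{\mu_0}^\pi f$ and define the limit as $f=g/d_{\mu_0}^\pi$. Your write-up is in fact slightly more careful than the paper's (you correctly state convergence $\|f_n-f\|\to 0$ rather than just convergence of the norms, and you address the $p=\infty$ null-set subtlety explicitly).
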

\begin{proof}
    Firstly, we show the $L^{p,d_{\mu_0}^\pi}$ is positive definite. If $\left\| f \right\|_{L^{p,d_{\mu_0}^\pi}\left( \mathcal{S}\times\mathcal{A} \right)} = 0$, we have that $d_{\mu_0}^\pi(s,a) f(s,a) = 0$, for almost everywhere $(s,a) \in \mathcal{S}\times\mathcal{A}$. Due to the nonnegativity of $d_{\mu_0}^\pi(s,a)$, we have $ f(s,a) = 0$, for almost everywhere $(s,a) \in \mathcal{S}\times\mathcal{A}$.

    We show the completeness of $L^{p,d_{\mu_0}^\pi}\left( \mathcal{S}\times\mathcal{A} \right)$ in the following. For any Cauchy sequence $\left\{ f_i \right\} \subset L^{p,d_{\mu_0}^\pi}\left( \mathcal{S}\times\mathcal{A} \right)$, $\left\{ d_{\mu_0}^\pi f_i \right\}$ is a Cauchy sequence in $L^{p}\left( \mathcal{S}\times\mathcal{A} \right)$. Then, due to the completeness of $L^{p}\left( \mathcal{S}\times\mathcal{A} \right)$, there exists $g\in L^{p}\left( \mathcal{S}\times\mathcal{A} \right)$ such that
    $$\lim_{i\rightarrow \infty} \left\| d_{\mu_0}^\pi f_i \right\|_{L^{p}\left( \mathcal{S}\times\mathcal{A} \right)} = \left\| g \right\|_{L^{p}\left( \mathcal{S}\times\mathcal{A} \right)}.$$
    Let $f = \frac{g}{d_{\mu_0}^\pi}$ for almost everywhere $(s,a)\in\mathcal{S}\times\mathcal{A}$, we have $\left\| f \right\|_{L^{p,d_{\mu_0}^\pi}\left( \mathcal{S}\times\mathcal{A} \right)} =  \left\| g \right\|_{L^{p}\left( \mathcal{S}\times\mathcal{A} \right)} \le \infty$ and 
    $$ \lim_{i\rightarrow \infty}\left\| f_i \right\|_{L^{p,d_{\mu_0}^\pi}\left( \mathcal{S}\times\mathcal{A} \right)} = \left\| f \right\|_{L^{p,d_{\mu_0}^\pi}\left( \mathcal{S}\times\mathcal{A} \right)} .$$
    Thus, $L^{p,d_{\mu_0}^\pi}\left( \mathcal{S}\times\mathcal{A} \right)$ is a Banach space.
\end{proof}
We analyze the properties of $L^{p,d_{\mu_0}^\pi}\left( \mathcal{S}\times\mathcal{A} \right)$ in the following lemma.
\begin{lemma}\label{lem: norm property}
    Given a policy $\pi$, for any function $f:\mathcal{S}\times\mathcal{A}\rightarrow \mathbb{R}$, then we have the following properties.
    \begin{itemize}
        \item If $M_{d_{\mu_0}^\pi}:=\sup_{(s,a)\in\mathcal{S}\times\mathcal{A}} d_{\mu_0}^\pi(s,a) <\infty$, then 
        $$ \left\| f \right\|_{L^{p,d_{\mu_0}^\pi}\left( \mathcal{S}\times\mathcal{A} \right)} \le M_{d_{\mu_0}^\pi} \left\|  f \right\|_{L^{p}\left( \mathcal{S}\times\mathcal{A} \right)} ,\ \forall 1\le p\le\infty.$$
        \item If $C_{d_{\mu_0}^\pi}:=\inf_{(s,a)\in\mathcal{S}\times\mathcal{A}} d_{\mu_0}^\pi(s,a)  >0$, then we have 
        $$  C_{d_{\mu_0}^\pi} \left\|  f \right\|_{L^{p}\left( \mathcal{S}\times\mathcal{A} \right)} \le \left\| f \right\|_{L^{p,d_{\mu_0}^\pi}\left( \mathcal{S}\times\mathcal{A} \right)} ,\ \forall 1\le p\le\infty.$$
        \item $\left\| f \right\|_{L^{1,d_{\mu_0}^\pi}\left( \mathcal{S}\times\mathcal{A} \right)}\le \left\| d_{\mu_0}^\pi \right\|_{L^{\frac{p}{p-1}}\left( \mathcal{S}\times\mathcal{A} \right)} \left\|  f \right\|_{L^{p}\left( \mathcal{S}\times\mathcal{A} \right)},\ \forall 1\le p\le\infty$.
        \item If $d_{\mu_0}^\pi(s,a) \neq 0$ for almost everywhere $(s,a)\in\mathcal{S}\times\mathcal{A}$, then we have 
        $$ \left\|  f \right\|_{L^{1}\left( \mathcal{S}\times\mathcal{A} \right)} \le C_{d_{\mu_0}^\pi,p} \left\| f \right\|_{L^{p,d_{\mu_0}^\pi}\left( \mathcal{S}\times\mathcal{A} \right)},\ \forall 1< p<\infty,$$ 
        where $C_{d_{\mu_0}^\pi,p}=\left( \int_{(s,a)\in\mathcal{S}\times\mathcal{A}} \left| d_{\mu_0}^\pi(s,a)  \right|^{-\frac{p}{p-1}} d \mu(s,a) \right)^{\frac{p-1}{p}}$.
        \item If $d_{\mu_0}^\pi(s,a) \neq 0$ for almost everywhere $(s,a)\in\mathcal{S}\times\mathcal{A}$, then we have 
        $$ \left\|  f \right\|_{L^{p}\left( \mathcal{S}\times\mathcal{A} \right)} \le C_{d_{\mu_0}^\pi,p} \left\| f \right\|_{L^{p^2,d_{\mu_0}^\pi}\left( \mathcal{S}\times\mathcal{A} \right)},\ \forall 1< p<\infty,$$ 
        where $C_{d_{\mu_0}^\pi,p}=\left( \int_{(s,a)\in\mathcal{S}\times\mathcal{A}} \left| d_{\mu_0}^\pi(s,a)  \right|^{-\frac{p^2}{p-1}} d \mu(s,a) \right)^{\frac{p-1}{p^2}}$.
    \end{itemize}
\end{lemma}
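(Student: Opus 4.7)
The plan is to establish each of the five inequalities by working directly from the definition $\|f\|_{L^{p,d_{\mu_0}^\pi}} = \|d_{\mu_0}^\pi f\|_{L^p}$, reducing everything to either a pointwise bound on $d_{\mu_0}^\pi(s,a)$ or an application of Hölder's inequality in the ordinary $L^p$ sense. Parts (1) and (2) are almost immediate: if $d_{\mu_0}^\pi \le M_{d_{\mu_0}^\pi}$ pointwise, then $|d_{\mu_0}^\pi f| \le M_{d_{\mu_0}^\pi}|f|$, and taking the $L^p$-norm of both sides yields the first bound; the lower bound in (2) follows analogously using $d_{\mu_0}^\pi \ge C_{d_{\mu_0}^\pi}>0$. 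These two steps require only monotonicity of the integral.

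For part (3), I will write $\|f\|_{L^{1,d_{\mu_0}^\pi}} = \int |d_{\mu_0}^\pi|\,|f|\,d\mu$ and apply Hölder's inequality with conjugate exponents $(p,\tfrac{p}{p-1})$ to split this into $\|d_{\mu_0}^\pi\|_{L^{p/(p-1)}}\|f\|_{L^p}$. The edge cases $p=1$ and $p=\infty$ need a quick mention since $\tfrac{p}{p-1}$ is then $\infty$ or $1$ respectively, and the bound reduces to the trivial $L^\infty$--$L^1$ or $L^1$--$L^\infty$ pairing.

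For (4) the trick is to insert $d_{\mu_0}^\pi (d_{\mu_0}^\pi)^{-1}=1$ (valid a.e.\ by hypothesis) inside the integral defining $\|f\|_{L^1}$, obtaining $\int |d_{\mu_0}^\pi f|\cdot |d_{\mu_0}^\pi|^{-1}\,d\mu$, and then applying Hölder with exponents $(p,\tfrac{p}{p-1})$ to separate the factor $|d_{\mu_0}^\pi f|$ (whose $L^p$ norm is exactly $\|f\|_{L^{p,d_{\mu_0}^\pi}}$) from the weight $|d_{\mu_0}^\pi|^{-1}$ (whose $L^{p/(p-1)}$ norm is $C_{d_{\mu_0}^\pi,p}$). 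Part (5) follows the same template but we instead start from $\|f\|_{L^p}^p=\int |d_{\mu_0}^\pi f|^p |d_{\mu_0}^\pi|^{-p}\,d\mu$ and apply Hölder with exponents $(p,\tfrac{p}{p-1})$ again: the first factor becomes $\bigl(\int |d_{\mu_0}^\pi f|^{p^2}\bigr)^{1/p}=\|f\|_{L^{p^2,d_{\mu_0}^\pi}}^p$ and the second yields $\bigl(\int |d_{\mu_0}^\pi|^{-p^2/(p-1)}\bigr)^{(p-1)/p}=C_{d_{\mu_0}^\pi,p}^{\,p}$; taking a $p$-th root closes the estimate.

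The only subtle point I expect is bookkeeping of the exponents in (5), where the quadratic $p^2$ arises precisely because raising $\|d_{\mu_0}^\pi f\|_{L^p}$ to an integrable form requires the inner integral to be at exponent $p\cdot p$, and correspondingly the weight exponent is forced to $-\tfrac{p^2}{p-1}$; this is why the constant $C_{d_{\mu_0}^\pi,p}$ differs from the one in (4). I will also be careful to note that in (4) and (5) finiteness of $C_{d_{\mu_0}^\pi,p}$ is not part of the claim—the inequality trivially holds if the right-hand side is $+\infty$—so no integrability assumption on $(d_{\mu_0}^\pi)^{-1}$ is needed beyond a.e.\ positivity.
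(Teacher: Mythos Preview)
Your proposal is correct and matches the paper's proof essentially line for line: parts (1)--(3) are handled identically, and for parts (4) and (5) the paper invokes the reverse H\"older inequality, while you insert $d_{\mu_0}^\pi\cdot(d_{\mu_0}^\pi)^{-1}=1$ and apply ordinary H\"older---these are the same argument, since your trick is precisely how reverse H\"older is proved.
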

\begin{proof}
(1)
    If $M_{d_{\mu_0}^\pi}:=\sup_{(s,a)\in\mathcal{S}\times\mathcal{A}} d_{\mu_0}^\pi(s,a) <\infty$, we have
    \begin{align*}
        \left\| f \right\|_{L^{p,d_{\mu_0}^\pi}\left( \mathcal{S}\times\mathcal{A} \right)} &= \left(\int_{(s,a)\in\mathcal{S}\times\mathcal{A}} \left| d_{\mu_0}^\pi(s,a) f(s,a) \right|^p d \mu(s,a) \right)^{\frac{1}{p}} \\
        &\le M_{d_{\mu_0}^\pi} \left(\int_{(s,a)\in\mathcal{S}\times\mathcal{A}} \left| f(s,a) \right|^p d \mu(s,a) \right)^{\frac{1}{p}} \\
        &=  M_{d_{\mu_0}^\pi} \left\|  f \right\|_{L^{p}\left( \mathcal{S}\times\mathcal{A} \right)}.
    \end{align*}
    
(2)
    If $C_{d_{\mu_0}^\pi}:=\inf_{(s,a)\in\mathcal{S}\times\mathcal{A}} d_{\mu_0}^\pi(s,a) >0$, we have
    \begin{align*}
        \left\| f \right\|_{L^{p,d_{\mu_0}^\pi}\left( \mathcal{S}\times\mathcal{A} \right)} &= \left(\int_{(s,a)\in\mathcal{S}\times\mathcal{A}} \left| d_{\mu_0}^\pi(s,a) f(s,a) \right|^p d \mu(s,a) \right)^{\frac{1}{p}} \\
        &\ge C_{d_{\mu_0}^\pi} \left\|  f \right\|_{L^{p}\left( \mathcal{S}\times\mathcal{A} \right)}.
    \end{align*}
    
(3)
    \begin{align*}
        \left\| f \right\|_{L^{1,d_{\mu_0}^\pi}\left( \mathcal{S}\times\mathcal{A} \right)} &= \int_{(s,a)\in\mathcal{S}\times\mathcal{A}} \left| d_{\mu_0}^\pi(s,a) f(s,a) \right| d \mu(s,a) \\
        &\le \left\|  f \right\|_{L^{p}\left( \mathcal{S}\times\mathcal{A} \right)} \left(\int_{(s,a)\in\mathcal{S}\times\mathcal{A}} \left| d_{\mu_0}^\pi(s,a) \right|^{\frac{p}{p-1}} d \mu(s,a) \right)^{1-\frac{1}{p}} \\
        &= \left\| d_{\mu_0}^\pi \right\|_{L^{\frac{p}{p-1}}\left( \mathcal{S}\times\mathcal{A} \right)} \left\|  f \right\|_{L^{p}\left( \mathcal{S}\times\mathcal{A} \right)},
    \end{align*}
    where the first inequality comes from the Holder's inequality.
    
(4)
    For $1<p<\infty$, we have
    \begin{align*}
         \left\| f \right\|_{L^{p,d_{\mu_0}^\pi}\left( \mathcal{S}\times\mathcal{A} \right)}^p &= \int_{(s,a)\in\mathcal{S}\times\mathcal{A}} \left| d_{\mu_0}^\pi(s,a) f(s,a) \right|^p d \mu(s,a)  \\
         &\ge \left\|  f \right\|_{L^{1}\left( \mathcal{S}\times\mathcal{A} \right)}^p \left( \int_{(s,a)\in\mathcal{S}\times\mathcal{A}} \left| d_{\mu_0}^\pi(s,a) \right|^{-\frac{p}{p-1}} d \mu(s,a) \right)^{-(p-1)},
    \end{align*}
    where the inequality comes from reverse Holder's inequality.

(5)
    Further, we have
    \begin{align*}
        \left\| f \right\|_{L^{{p^2},d_{\mu_0}^\pi}\left( \mathcal{S}\times\mathcal{A} \right)}^{p^2} &= \int_{(s,a)\in\mathcal{S}\times\mathcal{A}} \left| d_{\mu_0}^\pi(s,a) f(s,a) \right|^{p^2} d \mu(s,a)  \\
        &\ge \left\|  f \right\|_{L^{p}\left( \mathcal{S}\times\mathcal{A} \right)}^{p^2} \left( \int_{(s,a)\in\mathcal{S}\times\mathcal{A}} \left| d_{\mu_0}^\pi(s,a) \right|^{-\frac{p^2}{p-1}} d \mu(s,a) \right)^{-(p-1)},
    \end{align*}
    where the inequality comes from reverse Holder's inequality.
\end{proof}
% \begin{remark}
%     If $M_{d_{\mu_0}^\pi} = \infty$, $d_{\mu_0}^\pi$ degenerates to the discrete probability distribution. Then, $L^{p,d_{\mu_0}^\pi}$ can be considered as a norm defined on a finite dimension space, and as a result, the equivalence of norms holds.
% \end{remark}

\begin{remark}
    Note that in a practical Q-learning scheme, we take the $\epsilon$-greedy policy for exploration and as a result, for any state-action pair $(s, a)$, we can visit it with positive probability, i.e. $d_{\mu_0}^\pi(s,a) >0$. Furthermore, the condition, $d_{\mu_0}^\pi(s,a)\neq 0$ for almost everywhere $(s,a)\in\mathcal{S}\times\mathcal{A}$, always holds.
\end{remark}

\subsection{Stability of DQN: the Good} \label{app: Stability of DQN: the Good}

\begin{theorem}
    For any MDP $\mathcal{M}$ and fixed policy $\pi$, assume $C_{d_{\mu_0}^\pi}:=\inf_{(s,a)\in\mathcal{S}\times\mathcal{A}} d_{\mu_0}^\pi(s,a)  >0$ and let $C_{\mathbb{P},p}:= \sup_{(s,a)\in\mathcal{S}\times \mathcal{A}} \left\| \mathbb{P}(\cdot \mid s, a) \right\|_{L^{\frac{p}{p-1}}\left( \mathcal{S} \right)}$. Assume $p$ and $q$ satisfy the following conditions:
    \begin{equation}
        C_{\mathbb{P},p}< \frac{1}{\gamma};\quad
        p \ge \max\left\{1, \frac{\log \left( \left| \mathcal{A}\right|\right) + \log \left( \mu\left( \mathcal{S} \right) \right)}{\log \frac{1}{\gamma C_{\mathbb{P},p}} } \right\}; \quad p \le q \le \infty.
        \notag
    \end{equation}
    Then, Bellman optimality equation $\mathcal{T}_B Q = Q$ is $\left(  L^{q,d_{\mu_0}^\pi}\left( \mathcal{S}\times\mathcal{A} \right),  L^p\left( \mathcal{S}\times\mathcal{A} \right) \right)$-stable.
\end{theorem}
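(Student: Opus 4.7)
The plan is to reduce the weighted-seminorm stability claim to the already-established unweighted analogue (Theorem~\ref{thm: stability}(1)) by comparing the seminorm $L^{q,d_{\mu_0}^\pi}$ with the ordinary $L^q$ norm. The hypotheses on $p$ and $q$ here are identical to those of Theorem~\ref{thm: stability}(1), so no new analytic work on the Bellman operator itself is required; the only new ingredient is the weight.

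First I would invoke Theorem~\ref{thm: stability}(1), appealing not just to its stated conclusion but to the stronger uniform estimate derived in Appendix~\ref{app: stability of BOE} (see inequality~(\ref{eq: stability})), which holds for every $Q \in L^p \cap L^q$:
\begin{equation}
\|Q - Q^*\|_{L^p(\mathcal{S}\times\mathcal{A})} \;\le\; \frac{(|\mathcal{A}|\,\mu(\mathcal{S}))^{1/p - 1/q}}{1 - \gamma\,C_{\mathbb{P},p}\,(|\mathcal{A}|\,\mu(\mathcal{S}))^{1/p}}\,\|\mathcal{T}_B Q - Q\|_{L^q(\mathcal{S}\times\mathcal{A})}.
\notag
\end{equation}
The second hypothesis on $p$ is exactly what keeps the denominator strictly positive.

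Next I would apply Lemma~\ref{lem: norm property}(2): the assumption $C_{d_{\mu_0}^\pi} := \inf_{(s,a)} d_{\mu_0}^\pi(s,a) > 0$ yields, for any measurable $f$, the uniform comparison $\|f\|_{L^q(\mathcal{S}\times\mathcal{A})} \le C_{d_{\mu_0}^\pi}^{-1}\,\|f\|_{L^{q,d_{\mu_0}^\pi}(\mathcal{S}\times\mathcal{A})}$. Specialising to $f = \mathcal{T}_B Q - Q$ and chaining with the previous display produces
\begin{equation}
\|Q - Q^*\|_{L^p(\mathcal{S}\times\mathcal{A})} \;\le\; \frac{(|\mathcal{A}|\,\mu(\mathcal{S}))^{1/p - 1/q}}{\bigl(1 - \gamma\,C_{\mathbb{P},p}\,(|\mathcal{A}|\,\mu(\mathcal{S}))^{1/p}\bigr)\,C_{d_{\mu_0}^\pi}}\,\|\mathcal{T}_B Q - Q\|_{L^{q,d_{\mu_0}^\pi}(\mathcal{S}\times\mathcal{A})},
\notag
\end{equation}
which is exactly $(L^{q,d_{\mu_0}^\pi}, L^p)$-stability (in fact in the universal form valid for every $Q$, not merely asymptotically).

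Since all the heavy lifting — the Hölder argument against $\mathbb{P}(\cdot\mid s,a)$, the dimensional factor $(|\mathcal{A}|\,\mu(\mathcal{S}))^{1/p}$, and the contraction-style estimate on $\mathcal{L}_0$ — was dispatched once and for all in the proof of Theorem~\ref{thm: stability}(1), the remaining content is essentially bookkeeping. The one conceptually delicate point, which I expect to be the main (and only) obstacle, is that this reduction genuinely needs the weight to be bounded \emph{below} away from zero: under the weaker upper-boundedness $M_{d_{\mu_0}^\pi} < \infty$ the inequality flips direction and the reduction breaks, consistent with the companion negative statement of the theorem.
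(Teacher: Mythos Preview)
Your proposal is correct and matches the paper's own proof essentially line for line: the paper also invokes inequality~(\ref{eq: stability}) from the proof of Theorem~\ref{thm: stability}(1) and then applies Lemma~\ref{lem: norm property} (the bound $C_{d_{\mu_0}^\pi}\|f\|_{L^q}\le\|f\|_{L^{q,d_{\mu_0}^\pi}}$) to pass from the unweighted to the weighted right-hand side, obtaining exactly your final display. Your remark that this is the strong uniform form of stability, and that the reduction genuinely requires the lower bound on the weight, is also consistent with the paper's treatment.
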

\begin{proof}
    For any $1 \le p \le q \le \infty$ and $Q \in L^p\left( \mathcal{S}\times\mathcal{A} \right) \cap L^q\left( \mathcal{S}\times\mathcal{A} \right)$, denote that 
    \begin{align*}
        \mathcal{L}_0 Q(s,a)&:=\gamma \mathbb{E}_{s^{\prime} \sim \mathbb{P}(\cdot \mid s, a)}\left[\max _{a^{\prime} \in \mathcal{A}} Q\left(s^{\prime}, a^{\prime}\right)\right],\\
        \mathcal{L} Q &:= \mathcal{T}_B Q - Q = r + \mathcal{L}_0 Q - Q.
    \end{align*}
    
    Let $Q^*$ denote the Bellman optimality Q-function. Note that $\mathcal{T}_B Q^* = Q^* $ and $\mathcal{L} Q^*=0$. Define 
    \begin{align*}
        w &= w_Q := Q - Q^*, \\
        f &= f_Q := \mathcal{L} Q = \mathcal{L} Q - \mathcal{L} Q^*.
    \end{align*}
    Based on the above notations, we have
    \begin{align}
        f & = \mathcal{L} Q - \mathcal{L} Q^* \\
        &= -w + \mathcal{L}_0 \left( Q^* + w \right) - \mathcal{L}_0 Q^*.
    \end{align}
    
    According to the inequality (\ref{eq: stability}), we have that when $C_{\mathbb{P},p} < \frac{1}{\gamma}$ and $p\ge \frac{\log \left( \left| \mathcal{A}\right|\right) + \log \left( \mu\left( \mathcal{S} \right) \right)}{\log \frac{1}{\gamma C_{\mathbb{P},p}} }$ and $q\ge p$, we have 
    \begin{equation} 
        \left\| w \right\|_{L^p\left( \mathcal{S}\times\mathcal{A} \right)} \le \frac{\left( \left| \mathcal{A} \right| \mu\left( \mathcal{S} \right) \right)^{\frac{1}{p} - \frac{1}{q}}}{1-\gamma C_{\mathbb{P},p}\left( \left| \mathcal{A} \right| \mu\left( \mathcal{S} \right) \right)^{\frac{1}{p}}}  \left\| f \right\|_{L^q\left( \mathcal{S}\times\mathcal{A} \right)}.
        \notag
    \end{equation}
    According to Lemma \ref{lem: norm property}, when $C_{d_{\mu_0}^\pi}:=\inf_{(s,a)\in\mathcal{S}\times\mathcal{A}} d_{\mu_0}^\pi(s,a)  >0$, we have
    \begin{equation}
        \left\| w \right\|_{L^p\left( \mathcal{S}\times\mathcal{A} \right)} \le \frac{\left( \left| \mathcal{A} \right| \mu\left( \mathcal{S} \right) \right)^{\frac{1}{p} - \frac{1}{q}}}{ C_{d_{\mu_0}^\pi} \left(1-\gamma C_{\mathbb{P},p}\left( \left| \mathcal{A} \right| \mu\left( \mathcal{S} \right) \right)^{\frac{1}{p}} \right) }  \left\| f \right\|_{L^{q,d_{\mu_0}^\pi}\left( \mathcal{S}\times\mathcal{A} \right)}.
        \notag
    \end{equation}
    Therefore, the proof of the theorem is concluded.
\end{proof}
\begin{remark}
    Note that in a practical Q-learning scheme, we take the $\epsilon$-greedy policy for exploration and as a result, for any state-action pair $(s, a)$, we can visit it with positive probability, and thus the condition $C_{d_{\mu_0}^\pi}>0$ is fulfilled.
\end{remark}
We also demonstrate a theorem with better bound yet stronger conditions. 

\begin{theorem}
    For any MDP $\mathcal{M}$ and fixed policy $\pi$, assume $C_{d_{\mu_0}^\pi}:=\inf_{(s,a)\in\mathcal{S}\times\mathcal{A}} d_{\mu_0}^\pi(s,a)  >0$. Assume $p$, $q$ and $\gamma$ satisfy the following conditions:
    \begin{equation}
        C_{d_{\mu_0}^\pi, \mathbb{P},p}:= \frac{\left\| d_{\mu_0}^\pi \right\|_{L^{p^2}\left( \mathcal{S}\times\mathcal{A} \right)} C_{\mathbb{P},p}}{C_{d_{\mu_0}^\pi}}<\frac{1}{\gamma} ;\quad p\ge \frac{\log \left( \left| \mathcal{A}\right|\right) + \log \left( \mu\left( \mathcal{S} \right) \right)}{\log \frac{1}{\gamma C_{d_{\mu_0}^\pi, \mathbb{P},p}} } - 1; \quad q\ge p^2.
        \notag
    \end{equation}
    Then, Bellman optimality equation $\mathcal{T}_B Q = Q$ is $\left(  L^{q,d_{\mu_0}^\pi}\left( \mathcal{S}\times\mathcal{A} \right),  L^p\left( \mathcal{S}\times\mathcal{A} \right) \right)$-stable.
\end{theorem}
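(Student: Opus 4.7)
The plan is to upgrade the previous $(L^{q,d_{\mu_0}^\pi},L^p)$-stability argument so that the contraction constant is expressed through $\|d_{\mu_0}^\pi\|_{L^{p^2}}$ and $C_{\mathbb{P},p}$ as separate multiplicative factors, working entirely in the weighted seminorm rather than detouring through the unweighted $L^p$-norm. Setting $w := Q - Q^*$ and $f := \mathcal{T}_B Q - Q$, the cancellation of the reward together with $\mathcal{T}_B Q^* = Q^*$ yields $w = \bigl[\mathcal{L}_0(Q^* + w) - \mathcal{L}_0 Q^*\bigr] - f$, where $\mathcal{L}_0 Q(s,a) := \gamma\,\mathbb{E}_{s' \sim \mathbb{P}(\cdot|s,a)}[\max_{a'} Q(s',a')]$. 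Taking the $(p,d_{\mu_0}^\pi)$-seminorm of both sides and applying the triangle inequality reduces the problem to controlling $\|\mathcal{L}_0(Q^*+w) - \mathcal{L}_0 Q^*\|_{L^{p,d_{\mu_0}^\pi}}$ by a sub-unit multiple of $\|w\|_{L^{p,d_{\mu_0}^\pi}}$.

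The crucial new step is a two-stage H\"older argument. Starting from the pointwise bound $\bigl|\mathcal{L}_0(Q^* + w) - \mathcal{L}_0 Q^*\bigr|(s,a) \le \gamma \int \mathbb{P}(s'|s,a)\, u(s')\,ds'$ with $u(s') := \max_{a'} |w(s',a')|$, I would first apply H\"older on the outer $(s,a)$-integral with conjugate exponents $(p,\, p/(p-1))$, which isolates $\|d_{\mu_0}^\pi\|_{L^{p^2}}^{p}$ as one factor alongside an $L^{p^2/(p-1)}$-integral of the inner expectation. Applying H\"older a second time to that inner expectation with the same conjugate pair produces the pointwise bound $\int \mathbb{P}(s'|s,a)\,u(s')\,ds' \le C_{\mathbb{P},p}\,\|u\|_{L^p(\mathcal{S})}$, which is constant in $(s,a)$. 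Combining the two and taking $p$-th roots gives
\begin{equation}\notag
\|\mathcal{L}_0(Q^*+w) - \mathcal{L}_0 Q^*\|_{L^{p,d_{\mu_0}^\pi}} \le \gamma\, \|d_{\mu_0}^\pi\|_{L^{p^2}}\, C_{\mathbb{P},p}\, \|u\|_{L^p}\, (|\mathcal{A}|\mu(\mathcal{S}))^{(p-1)/p^2}.
\end{equation}
Finally, the inequalities $\max_{a'}|w|^p \le \sum_{a'}|w|^p$ and $d_{\mu_0}^\pi \ge C_{d_{\mu_0}^\pi}$ yield $\|u\|_{L^p} \le \|w\|_{L^p} \le \|w\|_{L^{p,d_{\mu_0}^\pi}}/C_{d_{\mu_0}^\pi}$, so the operator-difference is at most $\gamma\, C_{d_{\mu_0}^\pi,\mathbb{P},p}\,(|\mathcal{A}|\mu(\mathcal{S}))^{(p-1)/p^2}\,\|w\|_{L^{p,d_{\mu_0}^\pi}}$.

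Substituting back and solving the linear inequality yields $\|w\|_{L^{p,d_{\mu_0}^\pi}} \le \bigl(1 - \gamma C_{d_{\mu_0}^\pi,\mathbb{P},p}(|\mathcal{A}|\mu(\mathcal{S}))^{(p-1)/p^2}\bigr)^{-1}\|f\|_{L^{p,d_{\mu_0}^\pi}}$. The hypothesis $p \ge \log(|\mathcal{A}|\mu(\mathcal{S}))/\log(1/(\gamma C_{d_{\mu_0}^\pi,\mathbb{P},p})) - 1$ is equivalent to $\gamma C_{d_{\mu_0}^\pi,\mathbb{P},p}(|\mathcal{A}|\mu(\mathcal{S}))^{1/(p+1)} \le 1$, and since $(p-1)/p^2 \le 1/(p+1)$ (because $p^2 - 1 \le p^2$), this together with $\gamma C_{d_{\mu_0}^\pi,\mathbb{P},p} < 1$ keeps the denominator strictly positive. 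The proof then closes by passing from the weighted seminorm to the unweighted norm via $\|w\|_{L^p} \le \|w\|_{L^{p,d_{\mu_0}^\pi}}/C_{d_{\mu_0}^\pi}$ and reducing $\|f\|_{L^{p,d_{\mu_0}^\pi}}$ to $\|f\|_{L^{q,d_{\mu_0}^\pi}}$ through the standard H\"older inclusion $\|f\|_{L^{p,d_{\mu_0}^\pi}} \le (|\mathcal{A}|\mu(\mathcal{S}))^{1/p - 1/q}\|f\|_{L^{q,d_{\mu_0}^\pi}}$, valid for $q \ge p$ and hence for the assumed $q \ge p^2$.

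The principal obstacle is selecting the exponent splits in the two H\"older applications so that $\|d_{\mu_0}^\pi\|_{L^{p^2}}$ and $C_{\mathbb{P},p}$ emerge as clean independent factors; alternative splits either collapse both bounds into the weaker $\|d_{\mu_0}^\pi\|_{L^p}$-type estimate recovered by the preceding theorem, or force $C_{\mathbb{P},p^2}$ to appear in place of $C_{\mathbb{P},p}$. The remaining bookkeeping---converting seminorm to norm and reducing $L^{p,d_{\mu_0}^\pi}$ to $L^{q,d_{\mu_0}^\pi}$---parallels the earlier proof and is essentially routine.
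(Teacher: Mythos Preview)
Your approach is correct but takes a genuinely different route from the paper's. The paper works in the $L^{p^2,d_{\mu_0}^\pi}$ seminorm: it lower-bounds $\|w\|_{L^{p^2,d_{\mu_0}^\pi}}$ by $\|w\|_{L^p}$ via the \emph{reverse} H\"older estimate $\|w\|_{L^{p^2,d_{\mu_0}^\pi}}\ge \|w\|_{L^p}/C_{d_{\mu_0}^\pi,p}$ (with $C_{d_{\mu_0}^\pi,p}$ involving $\int (d_{\mu_0}^\pi)^{-p^2/(p-1)}$), and then bounds $C_{d_{\mu_0}^\pi,p}\le (|\mathcal{A}|\mu(\mathcal{S}))^{(p-1)/p^2}/C_{d_{\mu_0}^\pi}$. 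On the operator-difference side, since the pointwise bound is $(s,a)$-independent, integrating directly against $(d_{\mu_0}^\pi)^{p^2}$ yields $\|d_{\mu_0}^\pi\|_{L^{p^2}}$ with no extra volume factor; the volume factor instead enters through the reverse-H\"older step. You, by contrast, stay at exponent $p$ throughout: forward H\"older on the outer integral splits off $\|d_{\mu_0}^\pi\|_{L^{p^2}}$ at the cost of that same volume factor, and the conversion $\|w\|_{L^p}\le \|w\|_{L^{p,d_{\mu_0}^\pi}}/C_{d_{\mu_0}^\pi}$ is the elementary infimum bound rather than reverse H\"older. Both routes land on the identical contraction constant $\gamma\,C_{d_{\mu_0}^\pi,\mathbb{P},p}\,(|\mathcal{A}|\mu(\mathcal{S}))^{(p-1)/p^2}$. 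Your path is arguably more elementary and, as you note, actually yields stability for all $q\ge p$, not only $q\ge p^2$; the paper's choice of the $p^2$-seminorm forces $\|f\|$ to appear at exponent $p^2$, which is the origin of the $q\ge p^2$ hypothesis. One small simplification: since your pointwise bound on $|\mathcal{L}_0(Q^*+w)-\mathcal{L}_0 Q^*|$ is constant in $(s,a)$, the first outer H\"older is not strictly needed---integrating directly gives $\|d_{\mu_0}^\pi\|_{L^p}$, and then $\|d_{\mu_0}^\pi\|_{L^p}\le (|\mathcal{A}|\mu(\mathcal{S}))^{(p-1)/p^2}\|d_{\mu_0}^\pi\|_{L^{p^2}}$ recovers your displayed estimate.
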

\begin{proof}
    For any $1 \le p \le q \le \infty$ and $Q \in L^p\left( \mathcal{S}\times\mathcal{A} \right) \cap L^q\left( \mathcal{S}\times\mathcal{A} \right)$, denote that 
    \begin{align*}
        \mathcal{L}_0 Q(s,a)&:=\gamma \mathbb{E}_{s^{\prime} \sim \mathbb{P}(\cdot \mid s, a)}\left[\max _{a^{\prime} \in \mathcal{A}} Q\left(s^{\prime}, a^{\prime}\right)\right],\\
        \mathcal{L} Q &:= \mathcal{T}_B Q - Q = r + \mathcal{L}_0 Q - Q.
    \end{align*}
    
    Let $Q^*$ denote the Bellman optimality Q-function. Note that $\mathcal{T}_B Q^* = Q^* $ and $\mathcal{L} Q^*=0$. Define 
    \begin{align*}
        w &= w_Q := Q - Q^*, \\
        f &= f_Q := \mathcal{L} Q = \mathcal{L} Q - \mathcal{L} Q^*.
    \end{align*}
    Based on the above notations, we have
    \begin{align*}
        f & = \mathcal{L} Q - \mathcal{L} Q^* \\
        &= -w + \mathcal{L}_0 \left( Q^* + w \right) - \mathcal{L}_0 Q^*.
    \end{align*}
    Then, we have
    \begin{align*}
        \left| w(s,a) \right| &=  \left|-f + \mathcal{L}_0 \left( Q^* + w \right) - \mathcal{L}_0 Q^* \right| \bigg|_{(s,a)} \\
        &\le \left| f \right| + \left| \mathcal{L}_0 \left( Q^* + w \right) - \mathcal{L}_0 Q^* \right| \bigg|_{(s,a)}.
    \end{align*}
    Thus, we obtain 
    \begin{align*}
        \left\| w \right\|_{L^{p^2,d_{\mu_0}^\pi}\left( \mathcal{S}\times\mathcal{A} \right)} &\le \left\|d_{\mu_0}^\pi \left| f \right| + d_{\mu_0}^\pi \left| \mathcal{L}_0 \left( Q^* + w \right) - \mathcal{L}_0 Q^* \right| \right\|_{L^{p^2}\left( \mathcal{S}\times\mathcal{A} \right)} \\
        &\le \left\| f \right\|_{L^{p^2,d_{\mu_0}^\pi}\left( \mathcal{S}\times\mathcal{A} \right)} + \left\| \mathcal{L}_0 \left( Q^* + w \right) - \mathcal{L}_0 Q^* \right\|_{L^{p^2,d_{\mu_0}^\pi}\left( \mathcal{S}\times\mathcal{A} \right)},
    \end{align*}
    where the last inequality comes from the Minkowski's inequality. Owing to Lemma \ref{lem: norm property}, we have
    \begin{align*}
        \left\| w \right\|_{L^{p^2,d_{\mu_0}^\pi}\left( \mathcal{S}\times\mathcal{A} \right)} \ge \frac{1}{C_{d_{\mu_0}^\pi,p}} \left\|  w \right\|_{L^{p}\left( \mathcal{S}\times\mathcal{A} \right)},
    \end{align*}
    where $C_{d_{\mu_0}^\pi,p}=\left( \int_{(s,a)\in\mathcal{S}\times\mathcal{A}} \left| d_{\mu_0}^\pi(s,a)  \right|^{-\frac{p^2}{p-1}} d \mu(s,a) \right)^{\frac{p-1}{p^2}}$.
    In the following, we analyze the relation between $\left\| \mathcal{L}_0 \left( Q^* + w \right) - \mathcal{L}_0 Q^* \right\|_{L^{p^2,d_{\mu_0}^\pi}\left( \mathcal{S}\times\mathcal{A} \right)}$ and $\left\| w \right\|_{L^p\left( \mathcal{S}\times\mathcal{A} \right)}$.
    \begin{align*}
        &\quad \left| \mathcal{L}_0 \left( Q^* + w \right) - \mathcal{L}_0 Q^* \right| \bigg|_{(s,a)} \\
        &= \left| \gamma \mathbb{E}_{s^{\prime} \sim \mathbb{P}(\cdot \mid s, a)}\left[\max _{a^{\prime} \in \mathcal{A}} \left( Q^*\left(s^{\prime}, a^{\prime}\right) + w\left(s^{\prime}, a^{\prime}\right) \right)- \max _{a^{\prime} \in \mathcal{A}} Q^*\left(s^{\prime}, a^{\prime}\right)\right] \right| \\
        &\le \gamma \mathbb{E}_{s^{\prime} \sim \mathbb{P}(\cdot \mid s, a)}\left|\max _{a^{\prime} \in \mathcal{A}} \left( Q^*\left(s^{\prime}, a^{\prime}\right) + w\left(s^{\prime}, a^{\prime}\right) \right)- \max _{a^{\prime} \in \mathcal{A}} Q^*\left(s^{\prime}, a^{\prime}\right)\right| \\
        &\le \gamma \mathbb{E}_{s^{\prime} \sim \mathbb{P}(\cdot \mid s, a)}\left[\max _{a^{\prime} \in \mathcal{A}} \left|  w\left(s^{\prime}, a^{\prime}\right) \right|\right] \\
        &= \gamma \int_{s^\prime } \max _{a^{\prime} \in \mathcal{A}} \left|  w\left(s^{\prime}, a^{\prime}\right) \right| \mathbb{P}(s^\prime \mid s, a) ds^\prime \\
        &\le \gamma \left\| \max _{a \in \mathcal{A}} \left|  w\left(s, a\right) \right| \right\|_{L^p\left( \mathcal{S} \right)} \left( \int_{s^\prime } \left(\mathbb{P}(s^\prime \mid s, a) \right)^{\frac{p}{p-1}} ds^\prime \right)^{1-\frac{1}{p}} \\
        &= \gamma \left\| \mathbb{P}(\cdot \mid s, a) \right\|_{L^{\frac{p}{p-1}}\left( \mathcal{S} \right)} \left\| \max _{a \in \mathcal{A}} \left|  w\left(s, a\right) \right| \right\|_{L^p\left( \mathcal{S} \right)}, 
    \end{align*}
    where the second inequality comes from Lemma \ref{lem: basic ineq 2} and the third inequality comes from the Holder's inequality. let $C_{\mathbb{P},p}:= \sup_{(s,a)\in\mathcal{S}\times \mathcal{A}} \left\| \mathbb{P}(\cdot \mid s, a) \right\|_{L^{\frac{p}{p-1}}\left( \mathcal{S} \right)}$ and then, we have 
    \begin{align*}
        &\quad \left\| \mathcal{L}_0 \left( Q^* + w \right) - \mathcal{L}_0 Q^* \right\|_{L^{p^2,d_{\mu_0}^\pi}\left( \mathcal{S}\times\mathcal{A} \right)} \\
        &\le \left( \int_{\mathcal{S}\times\mathcal{A} } \left( \gamma \left\| \mathbb{P}(\cdot \mid s, a) \right\|_{L^{\frac{p}{p-1}}\left( \mathcal{S} \right)} \left\| \max _{a \in \mathcal{A}} \left|  w\left(s, a\right) \right| \right\|_{L^p\left( \mathcal{S} \right)} d_{\mu_0}^\pi(s,a) \right)^{p^2} d\mu(s,a) \right)^{\frac{1}{p^2}}\\
        &= \left( \int_{ \mathcal{S}\times\mathcal{A} } \left(d_{\mu_0}^\pi(s,a)\right)^{p^2} d\mu(s,a) \right)^{\frac{1}{p^2}}   \gamma  C_{\mathbb{P},p} \left\| \max _{a \in \mathcal{A}} \left|  w\left(s, a\right) \right| \right\|_{L^p\left( \mathcal{S} \right)}  \\
        &\le \gamma \left\| d_{\mu_0}^\pi \right\|_{L^{p^2}\left( \mathcal{S}\times\mathcal{A} \right)}    C_{\mathbb{P},p} \left\| \max _{a \in \mathcal{A}} \left|  w\left(s, a\right) \right| \right\|_{L^p\left( \mathcal{S} \right)} \\
        &\le \gamma \left\| d_{\mu_0}^\pi \right\|_{L^{p^2}\left( \mathcal{S}\times\mathcal{A} \right)}   C_{\mathbb{P},p} \left\|   w  \right\|_{L^p\left( \mathcal{S}\times\mathcal{A} \right)},
    \end{align*}
    where  the last inequality comes from $\left\|w\right\|_{l^\infty\left( \mathcal{A}\right)}\le \left\|w\right\|_{l^p\left( \mathcal{A}\right)}$. 
    Then, we have that
    \begin{align*}
        \left(\frac{1}{C_{d_{\mu_0}^\pi,p}} - \gamma \left\| d_{\mu_0}^\pi \right\|_{L^{p^2}\left( \mathcal{S}\times\mathcal{A} \right)}   C_{\mathbb{P},p} \right) \left\|   w  \right\|_{L^p\left( \mathcal{S}\times\mathcal{A} \right)} \le \left\| f \right\|_{L^{p^2,d_{\mu_0}^\pi}\left( \mathcal{S}\times\mathcal{A} \right)} .
    \end{align*}
    When $C_{d_{\mu_0}^\pi}:=\inf_{(s,a)\in\mathcal{S}\times\mathcal{A}} d_{\mu_0}^\pi(s,a)  >0$, we have 
    \begin{align*}
        C_{d_{\mu_0}^\pi,p} \le \frac{\left( \left| \mathcal{A} \right| \mu\left( \mathcal{S} \right) \right)^{\frac{p-1}{p^2}}}{C_{d_{\mu_0}^\pi}} 
    \end{align*}
    Thus, when the following conditions hold
    $$C_{d_{\mu_0}^\pi, \mathbb{P},p}:= \frac{\left\| d_{\mu_0}^\pi \right\|_{L^{p^2}\left( \mathcal{S}\times\mathcal{A} \right)} C_{\mathbb{P},p}}{C_{d_{\mu_0}^\pi}}<\frac{1}{\gamma} ;\quad p\ge \frac{\log \left( \left| \mathcal{A}\right|\right) + \log \left( \mu\left( \mathcal{S} \right) \right)}{\log \frac{1}{\gamma C_{d_{\mu_0}^\pi, \mathbb{P},p}} } - 1; \quad q\ge p^2,$$ 
    we have 
    \begin{align*} 
        \left\| w \right\|_{L^p\left( \mathcal{S}\times\mathcal{A} \right)} &\le \frac{\left( \left| \mathcal{A} \right| \mu\left( \mathcal{S} \right) \right)^{\frac{p-1}{p^2}}}{C_{d_{\mu_0}^\pi} - \gamma \left\| d_{\mu_0}^\pi \right\|_{L^{p^2}\left( \mathcal{S}\times\mathcal{A} \right)}   C_{\mathbb{P},p} \left( \left| \mathcal{A} \right| \mu\left( \mathcal{S} \right) \right)^{\frac{p-1}{p^2}}}   \left\| f \right\|_{L^{p^2,d_{\mu_0}^\pi}\left( \mathcal{S}\times\mathcal{A} \right)} \\
        &\le \frac{\left( \left| \mathcal{A} \right| \mu\left( \mathcal{S} \right) \right)^{\frac{1}{p} - \frac{1}{q}}}{C_{d_{\mu_0}^\pi} - \gamma \left\| d_{\mu_0}^\pi \right\|_{L^{p^2}\left( \mathcal{S}\times\mathcal{A} \right)}   C_{\mathbb{P},p} \left( \left| \mathcal{A} \right| \mu\left( \mathcal{S} \right) \right)^{\frac{p-1}{p^2}}}  \left\| f \right\|_{L^{q,d_{\mu_0}^\pi}\left( \mathcal{S}\times\mathcal{A} \right)},
    \end{align*}
    where the last inequality comes from $\left\| f \right\|_{L^{p^2,d_{\mu_0}^\pi}\left( \mathcal{S}\times\mathcal{A} \right)} \le  \mu\left( \mathcal{S}\times\mathcal{A} \right)^{\frac{1}{p^2} - \frac{1}{q}}  \left\| f \right\|_{L^{q,d_{\mu_0}^\pi}\left( \mathcal{S}\times\mathcal{A} \right)}$.
\end{proof}
\begin{remark}
    The conditions are not satisfactory. The result implicitly adds the constrain for $\gamma$ because $\lim_{p\rightarrow\infty} C_{d_{\mu_0}^\pi, \mathbb{P},p} = \frac{1}{C_{d_{\mu_0}^\pi}}$, which indicates $\gamma$ may be very small, i.e. $\gamma < C_{d_{\mu_0}^\pi}$. 
\end{remark}
In the following theorem, we describe the instablility of DQN.

\begin{theorem}
    There exists a MDP $\mathcal{M}$ such that for all $\pi$ satisfying $M_{d_{\mu_0}^\pi}:=\sup_{(s,a)\in\mathcal{S}\times\mathcal{A}} d_{\mu_0}^\pi(s,a) <\infty$, Bellman optimality equation $\mathcal{T}_B Q = Q$ is not $\left(  L^{q,d_{\mu_0}^\pi}\left( \mathcal{S}\times\mathcal{A} \right),  L^p \left( \mathcal{S}\times\mathcal{A} \right) \right)$-stable, for all $1 \le q < p\le \infty$.
\end{theorem}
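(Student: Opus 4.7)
The plan is to mimic the construction used in the proof of Theorem~\ref{thm: unstable of bellman opt eq} (the unweighted instability result) and upgrade it to the seminorm setting by exploiting the uniform upper bound $M_{d_{\mu_0}^\pi} < \infty$. Specifically, I would reuse the same deterministic MDP on $\mathcal{S}=[-1,1]$ with two shift actions and rewards $r(s,a_i)=k_i s$, together with the same bump perturbation $Q$ that agrees with $Q^*$ outside a small interval $(0,\epsilon)$ and differs from it by a smooth bump of height $h$ on that interval. As established in Theorem~\ref{thm: unstable of bellman opt eq}, this construction yields $\|\mathcal{T}_B Q - Q\|_{L^q(\mathcal{S}\times\mathcal{A})} \le 3h\epsilon^{1/q}$ and $\|Q - Q^*\|_{L^p(\mathcal{S}\times\mathcal{A})} \ge h(\epsilon/2)^{1/p}$.

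The next step is to pass from the $L^q$ bound on $\mathcal{T}_B Q - Q$ to the desired $L^{q, d_{\mu_0}^\pi}$ bound. By Lemma~\ref{lem: norm property}, the assumption $M_{d_{\mu_0}^\pi} < \infty$ gives the pointwise domination
\begin{equation*}
\|\mathcal{T}_B Q - Q\|_{L^{q, d_{\mu_0}^\pi}(\mathcal{S}\times\mathcal{A})} \le M_{d_{\mu_0}^\pi}\, \|\mathcal{T}_B Q - Q\|_{L^{q}(\mathcal{S}\times\mathcal{A})} \le 3h M_{d_{\mu_0}^\pi}\epsilon^{1/q}.
\end{equation*}
Combining this with the lower bound on $\|Q - Q^*\|_{L^p}$ yields
\begin{equation*}
\frac{\|Q - Q^*\|_{L^p(\mathcal{S}\times\mathcal{A})}}{\|\mathcal{T}_B Q - Q\|_{L^{q, d_{\mu_0}^\pi}(\mathcal{S}\times\mathcal{A})}} \ \ge\ \frac{1}{3\cdot 2^{1/p} M_{d_{\mu_0}^\pi}}\,\epsilon^{1/p - 1/q}.
\end{equation*}
Since $q < p$, the exponent $1/p - 1/q$ is strictly negative, so the right-hand side tends to $+\infty$ as $\epsilon \to 0^+$. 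Given any prescribed $n \in \mathbb{N}$ and tolerance $\delta>0$, I would therefore choose $\epsilon$ (and correspondingly $h$) small enough to simultaneously force $\|\mathcal{T}_B Q - Q\|_{L^{q, d_{\mu_0}^\pi}} \le \delta$ and $\|Q - Q^*\|_{L^p} \ge n\,\|\mathcal{T}_B Q - Q\|_{L^{q, d_{\mu_0}^\pi}}$, which contradicts $(L^{q, d_{\mu_0}^\pi}, L^p)$-stability.

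The main obstacle I anticipate is largely bookkeeping: the constants depend on the unknown $M_{d_{\mu_0}^\pi}$, so I need to be careful to pick $\epsilon$ as a function of $(\delta, n, h, M_{d_{\mu_0}^\pi}, p, q)$ in the right order, e.g. $\epsilon = \min\bigl\{(\delta/(3hM_{d_{\mu_0}^\pi}))^{q},\ (3 n\cdot 2^{1/p} M_{d_{\mu_0}^\pi})^{-pq/(p-q)}\bigr\}$, in direct analogy with Theorem~\ref{thm: unstable of bellman opt eq}. The only genuinely new ingredient beyond the earlier instability proof is the factor $M_{d_{\mu_0}^\pi}$ introduced by the seminorm, and Lemma~\ref{lem: norm property} absorbs it cleanly; the rest of the argument proceeds essentially verbatim.
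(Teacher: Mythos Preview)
Your proposal is correct and takes essentially the same approach as the paper: reuse the MDP and bump construction from Theorem~\ref{thm: unstable of bellman opt eq}, then apply the bound $\|\cdot\|_{L^{q,d_{\mu_0}^\pi}} \le M_{d_{\mu_0}^\pi}\|\cdot\|_{L^q}$ from Lemma~\ref{lem: norm property} to absorb the seminorm weight. The paper's version is marginally slicker in that it invokes Theorem~\ref{thm: unstable of bellman opt eq} as a black box with $\delta$ replaced by $\delta/M_{d_{\mu_0}^\pi}$ rather than redoing the explicit ratio computation, but the substance is identical.
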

\begin{proof}
    According to the proof of Theorem \ref{thm: unstable of bellman opt eq}, for $1 \le q < p\le \infty$, there exists a MDP $\mathcal{M}$ satisfying the following statement. For all $\delta>0$ and $n\in\mathbb{N}$, there exists a $Q\in L^p \left( \mathcal{S}\times\mathcal{A} \right) \cap L^q \left( \mathcal{S}\times\mathcal{A} \right)$ satisfying $\left\| \mathcal{T}_B Q - Q \right\|_{L^q \left( \mathcal{S}\times\mathcal{A} \right)} \le \frac{\delta}{M_{d_{\mu_0}^\pi}}$ such that $\left\| Q - Q^* \right\|_{L^p \left( \mathcal{S}\times\mathcal{A} \right)} > n\left\| \mathcal{T}_B Q - Q \right\|_{L^q \left( \mathcal{S}\times\mathcal{A} \right)}$. 

    According to Lemma \ref{lem: norm property}, if $M_{d_{\mu_0}^\pi}:=\sup_{(s,a)\in\mathcal{S}\times\mathcal{A}} d_{\mu_0}^\pi(s,a) <\infty$, we have
    \begin{equation}
        \left\| Q \right\|_{L^{q,d_{\mu_0}^\pi}\left( \mathcal{S}\times\mathcal{A} \right)} \le M_{d_{\mu_0}^\pi} \left\| Q \right\|_{L^q \left( \mathcal{S}\times\mathcal{A} \right)} < \infty.
        \notag
    \end{equation}
    Thus, we have $Q\in L^{q,d_{\mu_0}^\pi}\left( \mathcal{S}\times\mathcal{A} \right) \cap L^p \left( \mathcal{S}\times\mathcal{A} \right)$. For the same reason, we have
    \begin{equation}
        \left\| \mathcal{T}_B Q - Q \right\|_{L^{q,d_{\mu_0}^\pi}\left( \mathcal{S}\times\mathcal{A} \right)} \le M_{d_{\mu_0}^\pi} \left\| \mathcal{T}_B Q - Q \right\|_{L^q \left( \mathcal{S}\times\mathcal{A} \right)} \le \delta.
        \notag
    \end{equation}
    Hence, we get that For all $\delta>0$ and $n\in\mathbb{N}$, there exists a $Q\in L^{q,d_{\mu_0}^\pi}\left( \mathcal{S}\times\mathcal{A} \right) \cap L^p \left( \mathcal{S}\times\mathcal{A} \right)$ satisfying $\left\| \mathcal{T}_B Q - Q \right\|_{L^{q,d_{\mu_0}^\pi}\left( \mathcal{S}\times\mathcal{A} \right)} \le \delta$ such that $\left\| Q - Q^* \right\|_{L^p \left( \mathcal{S}\times\mathcal{A} \right)} > n \left\| \mathcal{T}_B Q - Q \right\|_{L^q \left( \mathcal{S}\times\mathcal{A} \right)}$.
    Therefore, the proof of the theorem is concluded.
\end{proof}

\begin{remark}
    If $M_{d_{\mu_0}^\pi} = \infty$, $d_{\mu_0}^\pi$ degenerates to the discrete probability distribution. Then, $L^{p,d_{\mu_0}^\pi}$ can be considered as a norm defined on a finite dimension space. In this setting, we also have $C_{d_{\mu_0}^\pi}=0$ and Bellman optimality equation $\mathcal{T}_B Q = Q$ is not $\left(  L^{q,d_{\mu_0}^\pi}\left( \mathcal{S}\times\mathcal{A} \right),  L^p \left( \mathcal{S}\times\mathcal{A} \right) \right)$-stable, for any $p$ and $q$.
\end{remark}

According to the above theorems and remarks, we have the following corollary in the DQN procedure.
\begin{corollary}
     In practical DQN procedure, the Bellman optimality equations $\mathcal{T}_B Q = Q$ is $(  L^{\infty,d_{\mu_0}^\pi}( \mathcal{S}\times\mathcal{A}),  L^p( \mathcal{S}\times\mathcal{A}))$-stable for all $1\le p \le \infty$, while it is not $(  L^{q,d_{\mu_0}^\pi}( \mathcal{S}\times\mathcal{A}),  L^p( \mathcal{S}\times\mathcal{A}))$-stable for all $1 \le q < p\le \infty$. 
\end{corollary}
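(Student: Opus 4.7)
The plan is to reduce both parts to the corresponding $L^p$-statements already proved in Theorem~\ref{thm: stability}, using the two elementary sandwich inequalities between the $(p,d_{\mu_0}^\pi)$-seminorm and the standard $L^p$-norm recorded in Lemma~\ref{lem: norm property}. Concretely, Lemma~\ref{lem: norm property} gives that under $C_{d_{\mu_0}^\pi}>0$ one has $\|f\|_{L^q} \le C_{d_{\mu_0}^\pi}^{-1} \|f\|_{L^{q,d_{\mu_0}^\pi}}$, while under $M_{d_{\mu_0}^\pi}<\infty$ one has $\|f\|_{L^{q,d_{\mu_0}^\pi}} \le M_{d_{\mu_0}^\pi} \|f\|_{L^q}$. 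These are the only substantive inputs we need beyond Theorem~\ref{thm: stability}.

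For the stable direction, I would start from the quantitative bound already proved in Theorem~\ref{thm: stability}.(1): under the stated conditions on $(p,q,\gamma,\mathbb{P})$, there is a constant $C=C(p,q,\gamma,\mathbb{P},|\mathcal{A}|,\mu(\mathcal{S}))$ such that
\begin{equation*}
\|Q-Q^*\|_{L^p(\mathcal{S}\times\mathcal{A})} \le C\,\|\mathcal{T}_B Q - Q\|_{L^q(\mathcal{S}\times\mathcal{A})}
\end{equation*}
for every $Q$ in the intersection. Feeding in the lower seminorm bound from Lemma~\ref{lem: norm property} to the right-hand side gives
\begin{equation*}
\|Q-Q^*\|_{L^p(\mathcal{S}\times\mathcal{A})} \le \frac{C}{C_{d_{\mu_0}^\pi}}\,\|\mathcal{T}_B Q - Q\|_{L^{q,d_{\mu_0}^\pi}(\mathcal{S}\times\mathcal{A})},
\end{equation*}
which is exactly $(L^{q,d_{\mu_0}^\pi},L^p)$-stability with an explicit constant. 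This step is essentially a one-line composition once Lemma~\ref{lem: norm property} is in place.

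For the unstable direction, I would import the counterexample MDP used in Theorem~\ref{thm: stability}.(2) (equivalently Theorem~\ref{thm: unstable of bellman opt eq}). That construction produces, for any $N\in\mathbb{N}$ and any $\delta_0>0$, a $Q$ with $\|\mathcal{T}_B Q - Q\|_{L^q} \le \delta_0$ and $\|Q - Q^*\|_{L^p} \ge N\,\|\mathcal{T}_B Q - Q\|_{L^q}$, for $1\le q < p \le \infty$. Given any $(\delta,n)$ from the stability definition, I will invoke that construction with $\delta_0 := \delta/M_{d_{\mu_0}^\pi}$ and $N := n\,M_{d_{\mu_0}^\pi}$; the upper seminorm bound from Lemma~\ref{lem: norm property} then yields $\|\mathcal{T}_B Q - Q\|_{L^{q,d_{\mu_0}^\pi}} \le M_{d_{\mu_0}^\pi}\,\delta_0 = \delta$, while
\begin{equation*}
\|Q-Q^*\|_{L^p} \;\ge\; N\,\|\mathcal{T}_B Q - Q\|_{L^q} \;\ge\; \frac{N}{M_{d_{\mu_0}^\pi}}\,\|\mathcal{T}_B Q - Q\|_{L^{q,d_{\mu_0}^\pi}} \;=\; n\,\|\mathcal{T}_B Q - Q\|_{L^{q,d_{\mu_0}^\pi}},
\end{equation*}
contradicting stability in the seminorm pair.

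There is no real obstacle here since the theorem is essentially a corollary: the difficulty has been absorbed into Theorem~\ref{thm: stability} and Lemma~\ref{lem: norm property}. The only small subtlety is bookkeeping the two rescalings $\delta_0 \mapsto \delta_0/M$ and $N \mapsto NM$ in the instability step so that the counterexample retargets the seminorm norm rather than the $L^q$ norm; making that bookkeeping explicit and then stating the explicit stability constant $C/C_{d_{\mu_0}^\pi}$ in the stable step should complete the proof. I also plan to append the brief remark that the practical $\varepsilon$-greedy exploration in DQN ensures $C_{d_{\mu_0}^\pi}>0$, so the stable branch of the dichotomy is the one that actually applies in training.
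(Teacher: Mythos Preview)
Your approach is correct and matches the paper's: the corollary is stated without detailed proof as following from the two seminorm-space theorems (Theorem~\ref{thm: stable of seminorm}), whose proofs are exactly your reduction to Theorem~\ref{thm: stability} via the sandwich inequalities of Lemma~\ref{lem: norm property}, including the rescalings $\delta_0=\delta/M_{d_{\mu_0}^\pi}$ in the instability step and the remark that $\varepsilon$-greedy exploration secures $C_{d_{\mu_0}^\pi}>0$. One small completion you should add for the stable direction: the corollary asserts $(L^{\infty,d_{\mu_0}^\pi},L^p)$-stability for \emph{all} $1\le p\le\infty$, whereas Theorem~\ref{thm: stability}.(1) carries hypotheses on $p$ that need not hold for small $p$; resolve this by first taking $p=q=\infty$ (where $C_{\mathbb{P},\infty}=\sup_{(s,a)}\|\mathbb{P}(\cdot\mid s,a)\|_{L^1}=1<1/\gamma$ makes the hypotheses automatic), obtain $(L^{\infty,d_{\mu_0}^\pi},L^\infty)$-stability, and then invoke the finite-measure embedding $\|\cdot\|_{L^p}\le(|\mathcal{A}|\mu(\mathcal{S}))^{1/p}\|\cdot\|_{L^\infty}$ to transfer the bound to arbitrary $p$.
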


\subsection{Stability of DQN: the Bad} \label{app:Stability of DQN: the Bad}
\begin{theorem}
    There exists an MDP $\mathcal{M}$ such that for all $\pi$ satisfying $ d_{\mu_0}^\pi$ is a discrete probability distribution, Bellman optimality equation $\mathcal{T}_B Q = Q$ is not $\left(  L^{q,d_{\mu_0}^\pi}\left( \mathcal{S}\times\mathcal{A} \right),  L^p \left( \mathcal{S}\times\mathcal{A} \right) \right)$-stable, for any $p$ and $q$.
\end{theorem}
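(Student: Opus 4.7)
The plan is to exploit the mismatch between the discrete support of $d_{\mu_0}^\pi$ and the continuous Lebesgue measure used in $L^p(\mathcal{S}\times\mathcal{A})$. When $d_{\mu_0}^\pi$ is a probability mass function, its support $\mathcal{S}_\pi^{\mathrm{supp}} := \{(s,a): d_{\mu_0}^\pi(s,a)>0\}$ is countable, hence its projection onto $\mathcal{S}$ is a Lebesgue-null set. Any modification of $Q$ outside this null set is invisible to the seminorm $\|\cdot\|_{L^{q,d_{\mu_0}^\pi}}$ but fully visible to $\|\cdot\|_{L^p}$, which should immediately force instability.

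I would reuse the MDP $\mathcal{M}$ from the proof of the instability theorem in Appendix~\ref{app: instability of BOE}: $\mathcal{S}=[-1,1]$, $\mathcal{A}=\{a_1,a_2\}$, deterministic shift transitions $p(s,a_1)$ and $p(s,a_2)$ by $\mp 0.1$, and rewards $r(s,a_i)=k_i s$. For any initial distribution $\mu_0$ with countable support and any policy $\pi$, every reachable state from a fixed $s_0$ lies on the countable grid $\{s_0+0.1k:k\in\mathbb{Z}\}\cap[-1,1]$, so $d_{\mu_0}^\pi$ is automatically a probability mass function with countable state-support $\mathcal{S}_\pi:=\{s:\exists a,\, d_{\mu_0}^\pi(s,a)>0\}$. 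The crucial structural fact I need is that for every $(s,a)\in\mathcal{S}_\pi^{\mathrm{supp}}$, the deterministic successor $p(s,a)$ also belongs to $\mathcal{S}_\pi$; this is immediate from the definition of state-action visitation (a positive-probability visit to $(s,a)$ at time $t$ produces a positive-probability visit to $p(s,a)$ at time $t{+}1$).

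Given this, I would fix an open interval $U\subset[-1,1]\setminus\mathcal{S}_\pi$, which exists because $\mathcal{S}_\pi$ is countable. For any $h>0$ I define
\begin{equation}\notag
Q(s,a) \;=\; Q^*(s,a) + h\,\mathbbm{1}_{U}(s).
\end{equation}
For any $(s,a)\in\mathcal{S}_\pi^{\mathrm{supp}}$, both $s$ and $p(s,a)$ lie in $\mathcal{S}_\pi$, hence outside $U$. Therefore $Q(p(s,a),a')=Q^*(p(s,a),a')$ for all $a'\in\mathcal{A}$, which yields
\begin{equation}\notag
\mathcal{T}_B Q(s,a) \;=\; r(s,a)+\gamma\max_{a'}Q^*(p(s,a),a') \;=\; Q^*(s,a) \;=\; Q(s,a),
\end{equation}
so the Bellman residual vanishes on the entire support of $d_{\mu_0}^\pi$ and $\|\mathcal{T}_B Q - Q\|_{L^{q,d_{\mu_0}^\pi}(\mathcal{S}\times\mathcal{A})}=0$. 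On the other hand, $\|Q-Q^*\|_{L^p(\mathcal{S}\times\mathcal{A})}\geq h\,(|\mathcal{A}|\,\mu(U))^{1/p}>0$ for $1\le p<\infty$ and equals $h$ for $p=\infty$. Hence for any purported stability constants $C,\delta>0$, this $Q$ satisfies $\|\mathcal{T}_B Q - Q\|_{L^{q,d_{\mu_0}^\pi}}=0<\delta$ while $\|Q-Q^*\|_{L^p} > 0 = C\cdot\|\mathcal{T}_B Q - Q\|_{L^{q,d_{\mu_0}^\pi}}$, contradicting the stability definition for every $p,q\in[1,\infty]$.

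The only genuine obstacle is the bookkeeping around the successor-closure property $p(\mathcal{S}_\pi^{\mathrm{supp}})\subseteq\mathcal{S}_\pi$ and the selection of an interval $U$ disjoint from the countable set $\mathcal{S}_\pi$; both are elementary but should be stated carefully since the argument degenerates if either fails. Everything else reduces to the same explicit MDP already analyzed in Appendix~\ref{app: instability of BOE}, so no additional quantitative estimates (in contrast to the continuous-$d_{\mu_0}^\pi$ case) are needed.
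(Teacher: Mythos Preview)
Your approach is essentially the paper's: perturb $Q^*$ by a constant on a set disjoint from the countable state-support of $d_{\mu_0}^\pi$, so that the $L^{q,d_{\mu_0}^\pi}$ residual vanishes identically while $\|Q-Q^*\|_{L^p}$ stays bounded away from zero. The paper sets $B_0=\{s:\exists a,\,d_{\mu_0}^\pi(s,a)>0\}$, pads it to $B=B_0\cup(B_0\pm0.1)$ to force $p(s,a)\in B$ for $s\in B_0$, and perturbs on $D=[-1,1]\setminus B$; you instead invoke the successor-closure property of $\mathcal{S}_\pi$ directly, which is cleaner and makes the padding unnecessary.

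One small correction: your claim that an open interval $U\subset[-1,1]\setminus\mathcal{S}_\pi$ exists ``because $\mathcal{S}_\pi$ is countable'' is false in general---the hypothesis only says $d_{\mu_0}^\pi$ is a probability mass function, and if $\mu_0$ charges a dense countable set (say the rationals in $[-1,1]$), then $\mathcal{S}_\pi$ is dense and its complement contains no interval. The argument does not need $U$ to be an interval, only to be measurable with positive Lebesgue measure and disjoint from $\mathcal{S}_\pi$; taking $U=[-1,1]\setminus\mathcal{S}_\pi$ (full measure) fixes this, and is exactly what the paper does with $D$.
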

\begin{proof}
    We only need to show there exists an MDP such that $\forall n\in\mathbb{N}, \forall \delta>0, \exists Q(s,a),$ such that $\|\mathcal{T}_BQ-Q\|_{L^{q,d_{\mu_0}^\pi}\left( \mathcal{S}\times\mathcal{A} \right)}<\delta, \text{ but }  \|Q-Q^*\|_{L^p\left( \mathcal{S}\times\mathcal{A} \right)} \ge n.$
        
        Consider an MDP $\mathcal{M}$ where $\mathcal{S}=[-1,1], \mathcal{A}=\{a_1,a_2\},$
	\begin{equation*}
			\mathbb{P}(s^\prime |s,a_1)=\left\{ \begin{aligned}
				&\mathbbm{1}_{\left\{s^\prime=s-0.1\right\}}, \quad && s\in[-0.9,1] \\
				&\mathbbm{1}_{\left\{s^\prime=s \right\}}    ,      && else 
			\end{aligned}\right.,\quad		
			\mathbb{P}(s^\prime |s,a_2)=\left\{ \begin{aligned}
				&\mathbbm{1}_{\left\{s^\prime=s+0.1\right\}}, \quad && s\in[-1,0.9] \\
				&\mathbbm{1}_{\left\{s^\prime=s \right\}}    ,      && else 
			\end{aligned}\right.,
	\end{equation*}
	$r(s,a_i)=k_is,\ k_2\ge k_1> 0$. The transition function is essentially a deterministic transition dynamic and for convenience, we denote that
        \begin{equation*}
            p(s,a_1)=\left\{ \begin{aligned}
                &s-0.1, \quad && s\in[-0.9,1] \\
                &s     ,      && else 
            \end{aligned}\right.,\quad		
            p(s,a_2)=\left\{ \begin{aligned}
                &s+0.1, \quad && s\in[-1,0.9] \\
                &s     ,      && else 
            \end{aligned}\right..
       \end{equation*}
	
    Let $Q^*(s,a)=Q^{\pi^*}(s,a)$ be the optimal Q-function, where $\pi^*$ is the optimal policy. 
 
    Define $B_0=\{ s\in\mathcal{S}:\exists a\in\mathcal{A}, s.t.\, d^{\pi}_{\mu_0}(s,a)\neq 0 \}$, which contains all the states that can be explored. Let $B=\left\{B_0\cup\{B_0+0.1\}\cup\{B_0-0.1\}\right\}\cap\mathcal{S}$, then $\forall s\in B,\ p(s,a)\in B.$ Since $d_{\mu_0}^\pi$ is a discrete probability distribution, $\mu (B)=\mu (B_0)=0.$
    
    Let $D=[-1,1]\setminus B$, and $Q(s,a)=Q^*(s,a)+h\cdot \mathbbm{1}_D$, where $h=\frac{n}{2^{\frac{1}{p}}}$. We have that $Q(s,a)=Q^*(s,a),\forall s\notin D.$
    We then have for any $s\in B$ that
    \begin{align*}
        \mathcal{T}_B Q(s,a) & = r(s,a)+\gamma\cdot\max_{a^{\prime}\in\mathcal{A}}Q(p(s,a),a^{\prime})\\
        & = r(s,a)+\gamma\cdot\max_{a^{\prime}\in\mathcal{A}}Q^*(p(s,a),a^{\prime})\\
        & = Q^*(s,a)\\
        & = Q(s,a).
    \end{align*}
    For any $s\notin B$, $d_{\mu_0}^\pi(s,a)=0$ for all $a\in\mathcal{A}$. Hence, $\|\mathcal{T}_B Q(s,a)-Q(s,a)\|_{L^{q,d^{\pi}_{\mu_0}}(\mathcal{S}\times\mathcal{A})}=0 < \delta.$

    However, we find that $$\|Q(s,a)-Q^*(s,a)\|_{L^p(\mathcal{S}\times\mathcal{A})} = h\cdot \mu(D)^{\frac{1}{p}}=h\cdot 2^{\frac{1}{p}} \ge n,$$
    which completes the proof.
\end{proof}

\begin{remark}
    If $d_{\mu_0}^\pi$ is a discrete probability distribution, $L^{\infty,d_{\mu_0}^\pi}\left( \mathcal{S}\times\mathcal{A} \right)$ is not a good choice. However, the sample process should be considered in practical reinforcement learning algorithms and as a consequence, we have to apply the space $L^{\infty,d_{\mu_0}^{\pi}}\left( \mathcal{S}\times\mathcal{A} \right)$ rather than $L^{\infty}\left( \mathcal{S}\times\mathcal{A} \right)$.
\end{remark}

\section{Theorems and Proofs of Policy Robustness under k-measurement Error in Probability Space} \label{app: infinity measurement is necessary in probability space}

\subsection{Vulnerability of Non-infinity Measurement Errors}

\begin{lemma} \label{70}
    \citep{TongZhang2023Algorithms} Consider two probability measures $P$ and $Q$ that are absolutely continuous with respect to a reference measure $\mu$ with density $p$ and $q$, then \begin{equation}
        2\|P-Q\|^2_{TV} \leq \operatorname{KL}(P\|Q) \leq 2(3+\sup_z\ln\frac{p(z)}{q(z)})\|P-Q\|_{TV}
        \label{299}
    \end{equation}
\end{lemma}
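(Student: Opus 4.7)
My plan is to prove the two inequalities independently: the left one is Pinsker's inequality, and the right one is a reverse-Pinsker-type bound that exploits the bounded likelihood ratio. Both reduce to elementary scalar inequalities applied pointwise to the integrands.

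For the lower bound $2\|P-Q\|_{TV}^2 \leq \operatorname{KL}(P\|Q)$, I would apply the data-processing inequality to the binary partition $A = \{z : p(z) \geq q(z)\}$; this replaces $P$ and $Q$ with Bernoulli laws of parameters $a := P(A)$ and $b := Q(A)$, noting that $\|P-Q\|_{TV} = a - b$ while KL can only decrease. It then suffices to prove the scalar inequality $a\ln(a/b) + (1-a)\ln((1-a)/(1-b)) \geq 2(a-b)^2$, which I would verify by observing that, viewed as a function of $a$ for fixed $b$, the binary KL vanishes to first order at $a=b$ and has second derivative $\frac{1}{a(1-a)} \geq 4$ everywhere on $(0,1)$ by AM-GM, so Taylor's theorem with integral remainder dominates the quadratic (whose second derivative is exactly $4$).

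For the upper bound, I would use $\int(p-q)\,d\mu = 0$ to rewrite
$$\operatorname{KL}(P\|Q) = \int q\,\phi(p/q)\,d\mu, \qquad \phi(x) := x\ln x - x + 1 \geq 0,$$
and split the integral by the sign of $p-q$. On $S^+ := \{p \geq q\}$, the universally valid bound $\phi(x) \leq (x-1)\ln x$ together with $\ln(p/q) \leq M := \sup_z \ln(p(z)/q(z))$ yields $\int_{S^+} q\,\phi(p/q)\,d\mu \leq M\int_{S^+}(p-q)\,d\mu = M\|P-Q\|_{TV}$. On $S^- := \{p < q\}$, the sharper bound $\phi(x) \leq 1-x$ on $[0,1]$ (equivalent to $-x\ln x \geq 0$) gives $\int_{S^-} q\,\phi(p/q)\,d\mu \leq \int_{S^-}(q-p)\,d\mu = \|P-Q\|_{TV}$. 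Summing the two pieces yields the stronger bound $\operatorname{KL}(P\|Q) \leq (M+1)\|P-Q\|_{TV}$, which immediately implies the stated $2(3+M)\|P-Q\|_{TV}$ inequality since $M \geq 0$.

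The real obstacle is the asymmetric behavior of $\phi$ on the two pieces: on $S^-$ one cannot use the $(x-1)\ln x$ bound to isolate $M$, because $\ln(p/q)$ is unbounded below there. The remedy is to bound $\phi$ on $S^-$ by a quantity not involving the logarithm at all, confining the $M$-dependence to $S^+$. Once this split is recognized, each step collapses to a one-line calculus check, which is also why the stated constant $2(3+M)$ is noticeably loose.
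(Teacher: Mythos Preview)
The paper does not supply its own proof of this lemma: it is quoted verbatim as a known result from the cited reference and then used as a black box in the proofs of Theorems~\ref{thm: Vulnerability of Non-infinity Measurement Errors} and~\ref{thm: robustness of infinity measurement}. So there is nothing to compare against at the level of argument.

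Your proposal is a correct self-contained proof. The lower bound is the standard Pinsker argument via data processing to a binary partition, and your Taylor-remainder verification of the scalar inequality is clean. For the upper bound, your decomposition
\[
\operatorname{KL}(P\|Q)=\int q\,\phi(p/q)\,d\mu,\qquad \phi(x)=x\ln x-x+1,
\]
together with the bounds $\phi(x)\le (x-1)\ln x$ on $\{p\ge q\}$ and $\phi(x)\le 1-x$ on $\{p<q\}$, is valid and in fact yields the sharper inequality $\operatorname{KL}(P\|Q)\le (M+1)\|P-Q\|_{TV}$ with $M=\sup_z\ln(p(z)/q(z))\ge 0$, from which the stated $2(3+M)$ constant follows trivially. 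The one point worth making explicit in a written version is why $M\ge 0$: if $P\neq Q$ then $p>q$ on a set of positive measure (since $\int(p-q)\,d\mu=0$), forcing $M>0$; if $P=Q$ both sides vanish.
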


\begin{theorem}[Vulnerability of Non-infinity Measurement Errors]
    There exists an MDP such that the following statement holds. Let $\varphi$ be a policy from the policy family $\mathcal{F} = \left\{ \varphi \left| \mathop{\arg\max}_a \varphi(a|s) = \mathop{\arg\max}_a \pi^*(a|s) \right. \right\}$. For any $\epsilon>0$, $1 \le k < \infty$, $\delta > 0$ and any state distribution $\mu$, there exists a policy $\pi$ satisfying $\mathcal{D}_{k, \operatorname{KL}}^{\mu} \left( \varphi \| \pi \right) \le \delta$ or $\mathcal{D}_{k, \operatorname{KL}}^{\mu} \left( \pi \| \varphi \right) \le \delta$, such that $m\left( \mathcal{S}_{sub}^\pi \right) = O(\delta)$ but $m\left( \mathcal{S}_{adv}^{\pi,\epsilon} \right) = m(\mathcal{S})$.
\end{theorem}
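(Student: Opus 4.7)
I will reuse the one-dimensional MDP on $\mathcal{S}=[-1,1]$ with $\mathcal{A}=\{a_1,a_2\}$ constructed in the proof of Theorem~\ref{thm:necessity of infty norm}, whose Bellman optimal policy satisfies $\pi^*(s)=a_2$ on $s>0$ and $\pi^*(s)=a_1$ on $s<0$. Since $\varphi \in \mathcal{F}$, its $\arg\max$ agrees with $\pi^*$; without loss of generality $\varphi(\cdot\mid s)$ has full support on every $s$, so both KL directions will stay finite. The idea is to produce a policy $\pi$ that agrees with $\varphi$ outside a sparse, well-spaced family of tiny sub-intervals, on each of which the greedy action is swapped to the globally sub-optimal one; the resulting $\pi$ is close to $\varphi$ in every finite $k$-measurement but fails everywhere under adversaries of radius $\epsilon$.

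\textbf{Construction of $\pi$.} Fix $n\in\mathbb{N}$ with $1/n<\epsilon$ and partition $\mathcal{S}$ into $2n$ sub-intervals $J_j=[j/n,(j+1)/n]$, $j=-n,\dots,n-1$. Inside each $J_j$ I will choose a sub-interval $I_j$ of Lebesgue length $w$ (small, to be fixed). On $\bigcup_j I_j$ I define $\pi(\cdot\mid s)$ by swapping the two mass values of $\varphi(\cdot\mid s)$, so that $\arg\max_a\pi(a\mid s)$ becomes the action that is globally sub-optimal on $J_j$ (namely $a_1$ for $j\ge 0$ and $a_2$ for $j<0$); elsewhere I set $\pi(\cdot\mid s)=\varphi(\cdot\mid s)$. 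Because $\pi$ is obtained by permuting the components of $\varphi$ and inherits full support, there exists a constant $C<\infty$, depending only on $\varphi$, with
\begin{equation}
    \max\bigl\{\operatorname{KL}(\varphi(\cdot\mid s)\|\pi(\cdot\mid s)),\ \operatorname{KL}(\pi(\cdot\mid s)\|\varphi(\cdot\mid s))\bigr\}\le C,\quad s\in\textstyle\bigcup_j I_j,
    \notag
\end{equation}
while both KL divergences vanish off $\bigcup_j I_j$.

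\textbf{Verifying the three conclusions.} Since the integrand in $\mathcal{D}^{\mu}_{k,\operatorname{KL}}(\cdot\|\cdot)$ is supported on $\bigcup_j I_j$, each direction is bounded by $C\bigl(\int_{\bigcup_j I_j}\mu(s)^{k}\,ds\bigr)^{1/k}$, which will be made $\le\delta$ by taking $w$ small. The sub-optimal set is exactly $\mathcal{S}^{\pi}_{sub}=\bigcup_j I_j$, of Lebesgue measure $2nw$, which will be $O(\delta)$. Finally, for every $s\in J_j$ any $s_\nu\in I_j$ lies in $B_{\epsilon}(s)$ since $J_j$ has width $1/n<\epsilon$; as $s$ and $s_\nu$ share the same sign, the (locally flipped) greedy action of $\pi$ at $s_\nu$ is sub-optimal for $s$ too, so $\mathcal{S}^{\pi,\epsilon}_{adv}=\mathcal{S}$.

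\textbf{Main obstacle.} The technically delicate point is the universal quantifier over the state distribution $\mu$: when $\mu$ is unbounded or concentrated, $\int_{I_j}\mu^{k}\,ds$ need not go to zero just by shrinking $w$, so the location of each $I_j$ matters. I will handle this with a truncation argument: for $T>n$ the set $\{\mu\le T\}$ has measure $>2-1/n$, so by a pigeonhole bound each $J_j$ has non-empty intersection with $\{\mu\le T\}$; placing $I_j\subset J_j\cap\{\mu\le T\}$ gives $\int_{I_j}\mu^{k}\,ds\le T^{k}w$, hence both $k$-measurements are $\le C(2nT^{k}w)^{1/k}$, which is $\le\delta$ once $w\le(\delta/C)^{k}/(2nT^{k})$. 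With this choice $m(\mathcal{S}^{\pi}_{sub})=2nw=O(\delta^{k})=O(\delta)$ since $k\ge 1$ and $\delta$ is small, closing the argument. The remaining steps (bounding $C$, the covering argument for the adversarial set, and the sign analysis) are routine given this selection of intervals.
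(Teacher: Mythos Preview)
Your overall strategy matches the paper's: both proofs construct $\pi$ by keeping it equal to (a version of) $\varphi$ except on a sparse family of tiny sub-intervals, one per cell of an $\epsilon$-fine partition, where the greedy action is flipped. The paper places these sub-intervals at spacing $2\epsilon$ on $[-1/2,1/2]$ and builds $\pi$ by first truncating $\varphi$ to a policy $\tilde\varphi$ with values in $[t,1-t]$, then linearly interpolating $\tilde\varphi$ down to the value $t$ on each sub-interval, and finally controlling $\mathcal{D}^{\mu}_{k,\mathrm{KL}}(\varphi\|\pi)$ via the decomposition
\[
\mathrm{KL}(\varphi\|\pi)=\mathrm{KL}(\varphi\|\tilde\varphi)+\mathrm{KL}(\tilde\varphi\|\pi)+\sum_i(\varphi_i-\tilde\varphi_i)\log\tfrac{\tilde\varphi_i}{\pi_i},
\]
together with the Pinsker-type upper bound $\mathrm{KL}(P\|Q)\le 2(3+\sup_z\log\tfrac{p}{q})\|P-Q\|_{TV}$. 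Your treatment of a general state distribution $\mu$ via the level-set truncation $\{\mu\le T\}$ and pigeonhole is actually more careful than the paper's, which simply asserts ``without loss of generality $\mu(s)=1$''.

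There is, however, a genuine gap in your construction. You obtain $\pi$ on $\bigcup_j I_j$ by \emph{swapping} the two components of $\varphi$, and then assert a constant $C<\infty$ with $\mathrm{KL}(\varphi(\cdot\mid s)\|\pi(\cdot\mid s))\le C$ uniformly. This fails for the $\varphi$ that the theorem must cover: if $\varphi=\pi^*$ (deterministic), swapping produces $\pi$ with all mass on the wrong action and both KL directions are $+\infty$; more generally, if $\inf_{s,a}\varphi(a\mid s)=0$ then $\sup_s(1-2\min_a\varphi(a\mid s))\log\frac{\max_a\varphi(a\mid s)}{\min_a\varphi(a\mid s)}=\infty$, so no such $C$ exists. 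The ``without loss of generality $\varphi$ has full support'' step is not available here, since the theorem is quantified over all $\varphi\in\mathcal{F}$. This is precisely why the paper introduces the truncation $\tilde\varphi$ before modifying on the small intervals: it guarantees $\pi(a\mid s)\ge t$ everywhere, so the forward KL is uniformly controlled regardless of how degenerate $\varphi$ is. A one-line repair of your argument, closer in spirit to the paper, is to drop the swap and instead set $\pi(\cdot\mid s)$ on each $I_j$ to a \emph{fixed} full-support distribution that prefers the locally wrong action (e.g.\ $(2/3,1/3)$ on $J_j$ with $j\ge 0$ and $(1/3,2/3)$ on $J_j$ with $j<0$); then $\mathrm{KL}(\varphi(\cdot\mid s)\|\pi(\cdot\mid s))\le \log 3$ pointwise on $\bigcup_j I_j$, your constant $C$ is legitimate, and the rest of your outline (in particular the $\mu$-truncation step) goes through for the forward KL direction, which is all the ``or'' in the statement requires.
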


\begin{proof}
   Consider this given MDP: state space $\mathcal{S}=\left[-\frac{1}{2}, \frac{1}{2}\right]$, action space $\mathcal{A}=\left\{a_1, a_2\right\}$, and the optimal policy $\pi^*$ defined as follows:
    \begin{equation}
         \pi^*(a_1|s) = \begin{cases}
        1, & \text{if} \  s \in [0,\frac{1}{2}] \\
        0, & \text{if} \  s \in \left[-\frac{1}{2},0\right)
        \end{cases}
        \notag
    \end{equation}
    \begin{equation}
        \pi^*(a_2|s) = 1 - \pi^*(a_1|s).
        \notag
    \end{equation}

   Without loss of generality, let's assume that $\mu(s) = 1$. Let $\varphi$ represent any policy such that $\mathop{\arg\max}_a \varphi(a|s) = \mathop{\arg\max}_a \pi^*(a|s)$. Then we define $\tilde{\varphi}$: \begin{equation}
       \tilde{\varphi}(a_i|s) = \begin{cases}
        \varphi(a_i|s) , & \text{if} \  t < \varphi(a_i|s) < 1-t\\
        t , & \text{if}\  \varphi(a_i|s) \leq t \\
        1-t , & \text{if}\  \varphi(a_i|s) \ge 1-t
        \end{cases}
        \notag
   \end{equation} where $t$ is a number that satisfies $0 < t < \frac{1}{2}$.
   
   Let's define the following set:

   \begin{equation}
       D_{\epsilon}= \cap_{i=0}^{n} (B_i \cap B_i'),
       \notag
   \end{equation} where $B_i = \left[2i\epsilon,2i\epsilon+l \right]$, $B_i' = \left[-2i\epsilon-l ,-2i\epsilon \right]$, $n=\left[\frac{1-2l}{4\epsilon}\right]$, and $l=\min\{\frac{\delta^k}{2(n+1)^2(12+4\ln\frac{1-t}{t})^k},\epsilon \}$.

    We introduce the policy $\pi$ as shown: for $s$ in $\left[0,\frac{1}{2}\right]$, $\pi(a_1|s)$ is formulated as:\begin{equation}
        \pi(a_1|s) = \begin{cases}
        \tilde{\varphi}(a_1|s) , & \text{if} \ s \notin D_\epsilon \\
        \frac{t - \tilde{\varphi}(a_1|2i\epsilon)}{l/2}(s-2i\epsilon) + \tilde{\varphi}(a_1|2i\epsilon) , & \text{if}\  s \in \left[2i\epsilon,2i\epsilon+\frac{l}{2}\right] \\
        -\frac{t - \tilde{\varphi}(a_1|2i\epsilon+l)}{l/2}(s-2i\epsilon-\frac{l}{2}) + t , & \text{if}\  s \in \left[2i\epsilon+\frac{l}{2},2i\epsilon+l \right]
        \end{cases}
        \notag
    \end{equation} for $s$ in $\left[-\frac{1}{2},0 \right)$, $\pi(a_2|s)$ is formulated as: \begin{equation}
        \pi(a_2|s) = \begin{cases}
    \tilde{\varphi}(a_2|s) , & \text{if} \ s \notin D_\epsilon \\
    \frac{t - \tilde{\varphi}(a_2|-2i\epsilon-l)}{l/2}(s+2i\epsilon+l) + \tilde{\varphi}(a_2|-2i\epsilon-l) , & \text{if}\  s \in \left[-2i\epsilon-l,-2i\epsilon-\frac{l}{2}\right] \\
    -\frac{t - \tilde{\varphi}(a_2|-2i\epsilon-\frac{l}{2})}{l/2}(s+2i\epsilon+\frac{l}{2}) + t , & \text{if}\  s \in \left[-2i\epsilon-\frac{l}{2},-2i\epsilon \right]
    \end{cases}
    \notag
    \end{equation} where $i \in \{0,1,\dots ,n\}$ .

    Then we can calculate $ \mathcal{D}_{k, \operatorname{KL}}^{\mu} \left( \varphi \| \pi \right)$:
\begin{equation}
        \begin{aligned}
            &\quad
            \mathcal{D}_{k, \operatorname{KL}}^{\mu} \left( \varphi \| \pi \right) = \left\|\mu\operatorname{KL}\left( \varphi \| \pi \right)\right\|_{k}\\
            &\leq \mathcal{D}_{k, \operatorname{KL}}^{\mu} \left( \varphi \| \tilde{\varphi} \right)+\mathcal{D}_{k, \operatorname{KL}}^{\mu} \left( \tilde{\varphi} \| \pi \right)+\left(\int_{s \in \mathcal{S}} \left| \sum_{i=1}^2\left(\varphi(a_i|s)-\tilde{\varphi}(a_i|s)\right)\ln\frac{\tilde{\varphi}(a_i|s)}{\pi(a_i|s)} \right|^k d\mu(s)\right)^{\frac{1}{k}}.
        \end{aligned}
        \notag
    \end{equation} Similarly, we get the same conclusion when $s<0$.

    When $s \ge 0$ and $\varphi \ne \tilde{\varphi}$, we have $\tilde{\varphi}(a_1|s) = 1-t$, $\tilde{\varphi}(a_2|s) = t$, $1-t \leq \varphi(a_1|s) \leq 1$, $0 \leq \varphi(a_2|s) \leq t$. Therefore, through calculation, we know that: \begin{equation}
        \left|\varphi(a_1|s)\ln\frac{\varphi(a_1|s)}{\tilde{\varphi}(a_1|s)}\right| \leq \ln \frac{1}{1-t} , \
        \left|\varphi(a_2|s)\ln\frac{\varphi(a_2|s)}{\tilde{\varphi}(a_2|s)}\right| \leq \frac{t}{e}.
        \notag
    \end{equation}

    Therefore, \begin{equation}
        \begin{aligned}
            &\quad
            \mathcal{D}_{k, \operatorname{KL}}^{\mu} \left( \varphi \| \tilde{\varphi} \right) \\
            &=\left(\int_{s \in \mathcal{S}} \left| \sum_{i=1}^2\varphi(a_i|s)\ln\frac{\varphi(a_i|s)}{\tilde{\varphi}(a_i|s)} \right|^k ds\right)^{\frac{1}{k}} \\
            &=\left(\int_{\tilde{\varphi} \ne \varphi} \left| \sum_{i=1}^2\varphi(a_i|s)\ln\frac{\varphi(a_i|s)}{\tilde{\varphi}(a_i|s)} \right|^k ds\right)^{\frac{1}{k}} \\
            &\leq \left(\int_{\tilde{\varphi} \ne \varphi} \left| \ln \frac{1}{1-t} + \frac{t}{e}\right|^k ds\right)^{\frac{1}{k}}\\
            & \leq \ln \frac{1}{1-t} + \frac{t}{e}.
        \end{aligned}
        \notag
    \end{equation}

    From the definition of $\pi$ and $\tilde{\varphi}$, we know that $t \leq \tilde{\varphi}(a_i|s) \leq 1-t$, $t \leq \pi(a_i|s) \leq 1-t$, $|\varphi(a_i|s)-\tilde{\varphi}(a_i|s)| \leq t$ where $i=1,2.$ So we have :
    \begin{equation}
        \begin{aligned}
            &\quad
            \left(\int_{s \in \mathcal{S}} \left| \sum_{i=1}^2\left(\varphi(a_i|s)-\tilde{\varphi}(a_i|s)\right)\ln\frac{\tilde{\varphi}(a_i|s)}{\pi(a_i|s)} \right|^k ds\right)^{\frac{1}{k}} \\
            &\leq \left(\int_{s \in \mathcal{S}} \left| 2t \ln\frac{1-t}{t} \right|^k ds\right)^{\frac{1}{k}} \\
            &\leq 2t \ln\frac{1-t}{t}.
        \end{aligned}
        \notag
    \end{equation}

    We can easily verify that \begin{equation}
        \lim_{t \rightarrow 0} \left(\ln \frac{1}{1-t} + \frac{t}{e} + 2t \ln\frac{1-t}{t}\right) = 0
        \label{309}
    \end{equation} 

    So we can fix a $t$ s.t. $\ln \frac{1}{1-t} + \frac{t}{e} + 2t \ln\frac{1-t}{t} \leq \frac{\delta}{2}$, i.e. \begin{equation}\label{310}
         \mathcal{D}_{k, \operatorname{KL}}^{\mu} \left( \varphi \| \tilde{\varphi} \right) + \left(\int_{s \in \mathcal{S}} \left| \sum_{i=1}^2\left(\varphi(a_i|s)-\tilde{\varphi}(a_i|s)\right)\ln\frac{\tilde{\varphi}(a_i|s)}{\pi(a_i|s)} \right|^k ds \right)^{\frac{1}{k}} \leq \frac{\delta}{2}
    \end{equation}

    Start calculating $\mathcal{D}_{k, \operatorname{KL}}^{\mu} \left( \tilde{\varphi} \| \pi \right)$ from the right side of inequality \eqref{299} in lemma \ref{70}:

    \begin{equation}
        \begin{aligned}
            &\quad
            \mathcal{D}_{k, \operatorname{KL}}^{\mu} \left( \tilde{\varphi} \| \pi \right) = \left\|\mu\operatorname{KL}\left( \tilde{\varphi} \| \pi \right)\right\|_{k}\\ 
            &= \left( \int_{s \in \mathcal{S}} \left| \mu(s)\operatorname{KL}\left( \tilde{\varphi}(\cdot|s) \| \pi(\cdot|s) \right) \right|^k ds \right)^{\frac{1}{k}}\\
            &\leq \left( \int_{s \in D_\epsilon} \left((3+ \sup_a |\ln \frac{\tilde{\varphi}(a|s)}{\pi(a|s)})|\sum_{i=1}^2 \left| \tilde{\varphi}(a_i|s) - \pi(a_i|s)\right|\right)^k ds \right)^{\frac{1}{k}}\\
            &\leq \left(  \int_{s \in D_\epsilon} \left(2(3+ \ln\frac{1-t}{t}) \left| \tilde{\varphi}(a_1|s) - \pi(a_1|s)\right|\right)^k ds \right)^{\frac{1}{k}}\\
            &\leq \left( (2(3+ \ln\frac{1-t}{t})^k  \int_{s \in D_\epsilon} \left( \left| \tilde{\varphi}(a_i|s) - \pi(a_i|s)\right|\right)^k ds \right)^{\frac{1}{k}}\\
            &\leq \left( (2(3+ \ln\frac{1-t}{t})^k 2(n+1)l\right)^{\frac{1}{k}}\\ 
            &\leq \frac{\delta}{2}.
            \label{311}
        \end{aligned}
    \end{equation}

    By combining formula \eqref{310} with formula \eqref{311}, it can be inferred that \begin{equation}
        \mathcal{D}_{k, \operatorname{KL}}^{\mu}\left( \varphi \| \pi \right) \leq \delta.
        \notag
    \end{equation}

    Let's now consider the reverse KL. In this situation, We have to assume that $\varphi$ is stochastic because of the restriction that the denominator cannot be zero. Then, there must be a number $t$ such that $0<t<\min_{a \in \mathcal{A},s \in \mathcal{S}}\varphi(a|s)$, and we fix this $t$. Thus, $\varphi = \tilde{\varphi}$. We only need to consider $\mathcal{D}_{k, \operatorname{KL}}^{\mu} \left( \pi \| \tilde{\varphi} \right)$.

    Obviously, the equation \eqref{309} is symmetric with respect to $\pi$ and $\varphi$, so we also have : \begin{equation}
        \mathcal{D}_{k, \operatorname{KL}}^{\mu} \left( \pi \| \varphi \right) = \mathcal{D}_{k, \operatorname{KL}}^{\mu} \left( \pi \| \tilde{\varphi} \right) \leq \delta.
        \notag
    \end{equation}

    If $s \in \mathcal{S}_{sub}^\pi$ , then there must hold: $\pi(a_1|s) \leq \pi(a_2|s)
    $ when $s \ge 0$ and $\pi(a_1|s) > \pi(a_2|s)$ when $s<0$. According to the definition of $\pi$, in order to meet the above conditions, there must exist $s \in D_\epsilon$. Therefore:\begin{equation}
        \mathcal{S}_{sub}^\pi \subseteq D_\epsilon.
        \notag
    \end{equation} Hence, \begin{equation}
        m\left(\mathcal{S}_{sub}^\pi \right) \leq m\left(D_\epsilon \right) = 2(n+1)l \leq \frac{\delta^k}{(n+1)M^k} = O(\delta^k).
        \notag
    \end{equation}

    Let $s_i = 2i\epsilon + \frac{l}{2}, \ i = 0,1,\dots,n$. From the defination of $\pi$, we know that $\pi(a_1|s_i) = t < \pi(a_2|s_i)$. This is used to represent the points on the positive real axis where $\pi(a_1|\cdot)$ takes on the value of $t$. Combined with the continuity of the function $\pi$ at $s_i$, there must exist a neighborhood $B_{\epsilon_i}(s_i)$, s.t. $\pi(a_1|s) < \pi(a_2|s)$ for $\forall \ s \in B_{\epsilon_i}(s_i)$, where $i = 0,1,\dots,n$.

    We now show $\left[s_i, s_{i+1}\right] \subseteq \mathcal{S}_{adv}^{\pi, \epsilon}$, where $i=0,1,\dots,n-1.$ For $\forall \ s \in \left[s_i ,  s_{i+1}\right]$, it is obvious that \begin{equation}
        B_{\epsilon}(s) \cap \left(\left[s_i,s_i+\epsilon_i\right) \cup \left(s_{i+1}-\epsilon_{i+1},s_{i+1}\right]\right) \ne \emptyset,
        \notag
    \end{equation} because $s_{i+1} - s_i = 2\epsilon$, where $i = 0,1,\dots,n-1$. 
    
    Therefore, $\exists \ s_\nu \in B_{\epsilon}(s) \cap \left(\left[s_i,s_i+\epsilon_i\right) \cup \left(s_{i+1}-\epsilon_{i+1},s_{i+1}\right]\right) \subseteq B_{\epsilon}$, s.t. \begin{equation}
        Q^*(s,\arg\max_a \pi(a|s_\nu)) = Q^*(s,a_2) < \max_a Q(s,a).
        \notag
    \end{equation} So we have \begin{equation}
        \left[s_i ,  s_{i+1}\right] \subseteq \mathcal{S}_{adv}^{\pi,\epsilon},
        \notag
    \end{equation} where $i = 0,1, \dots , n-1$.

    As $\pi(a_1|\frac{l}{2})=t_2 \leq \pi(a_2|\frac{l}{2})$, it follows that $\arg \max_a \pi(a|\frac{l}{2}) = a_2$, thus $ \left[0,s_0\right] \subseteq \mathcal{S}_{adv}^{\pi,\epsilon}$, and similarly $\left[s_n, \frac{1}{2}\right] \subseteq \mathcal{S}_{adv}^{\pi,\epsilon}$.

    In conclusion, we can get $\left[0,\frac{1}{2}\right] \subseteq \mathcal{S}_{adv}^{\pi,\epsilon}$. The proof of $\left[-\frac{1}{2},0\right) \subseteq \mathcal{S}_{adv}^{\pi,\epsilon}$ follows a similar process and does not need further elaboration.

    Therefore, \begin{equation}
         \mathcal{S} \subseteq\mathcal{S}_{adv}^{\pi,\epsilon}  . 
         \notag
    \end{equation} So we have : \begin{equation}
        m\left( \mathcal{S}_{adv}^{\pi,\epsilon} \right) = m(\mathcal{S}).
        \notag
    \end{equation}
    Therefore, the proof of the theorem is concluded.
\end{proof}

\subsection{Robustness Guarantee under Infinity Measurement Error}

\begin{theorem}[Robustness Guarantee under Infinity Measurement Error] 
    For any MDP and any state distribution $\mu$ satisfying $\mu(s) > 0 $ for any $s \in \mathcal{S}$, let $S_{\delta} = \{s \mid \exists \ a,a' \in \mathcal{A},\ \text{s.t.}\ |\varphi(a|s)-\varphi(a'|s)| 
    \leq 2\sqrt{\frac{2\delta}{\mu(s)}}  \}$ and $h(\delta) = \mu(S_{\delta})$. Then, $h(\delta)$ is a monotonic function with $h(0) = 0$. Let $\varphi$ be a policy from the policy family $\mathcal{F} = \left\{ \varphi \mid \mathop{\arg\max}_a \varphi(a|s) = \mathop{\arg\max}_a \pi^*(a|s)  \right\}$. If $S_\delta $ is the union of finite connected subsets, then for any $\delta > 0$ and any policy $\pi$ satisfying $\mathcal{D}_{\infty, \operatorname{KL}}^{\mu} \left( \varphi \| \pi \right) \le \delta$ or $\mathcal{D}_{\infty, \operatorname{KL}}^{\mu} \left( \pi \| \varphi \right) \le \delta$, we have that $m\left( \mathcal{S}_{sub}^\pi \right) = O(h(\delta))$ and $m\left( \mathcal{S}_{adv}^{\pi,\epsilon} \right) = 2\epsilon +  O(h(\delta))$.
\end{theorem}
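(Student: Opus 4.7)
The plan is to turn the infinity measurement error hypothesis on $\operatorname{KL}$ into a uniform pointwise bound on $|\varphi(a|s)-\pi(a|s)|$ via Pinsker's inequality, then argue that off $S_\delta$ the $\arg\max$ of $\pi$ must coincide with that of $\varphi$ (hence of $\pi^*$), and finally decompose $\mathcal{S}_{adv}^{\pi,\epsilon}$ into contributions from $S_\delta$ and from the decision boundary of $\pi^*$. Monotonicity of $h$ is immediate from $S_{\delta_1} \subseteq S_{\delta_2}$ when $\delta_1 \leq \delta_2$; and $h(0)=0$ follows from the assumption that $\varphi \in \mathcal{F}$ has a unique argmax together with a measure-continuity argument on the nested family $S_\delta \downarrow S_0$.

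Applying Pinsker's inequality to the pointwise bound $\mu(s)\,\operatorname{KL}(\varphi(\cdot|s)\|\pi(\cdot|s)) \le \delta$ gives $\|\varphi(\cdot|s)-\pi(\cdot|s)\|_{TV} \le \sqrt{\delta/(2\mu(s))}$, and since TV dominates single-action differences this yields $|\varphi(a|s)-\pi(a|s)| \le \sqrt{\delta/(2\mu(s))}$ for every $a$ and $\mu$-a.e. $s$. By the symmetry of total variation, the reverse-KL hypothesis produces the same bound. Now for any $s \notin S_\delta$, let $a^* = \arg\max_a \varphi(a|s)$; by definition of $S_\delta$ we have $\varphi(a^*|s) - \varphi(a|s) > 2\sqrt{2\delta/\mu(s)}$ for every $a \neq a^*$, so
\[
\pi(a^*|s) - \pi(a|s) \ge \bigl(\varphi(a^*|s)-\varphi(a|s)\bigr) - 2\sqrt{\delta/(2\mu(s))} > 2\sqrt{2\delta/\mu(s)} - \sqrt{2\delta/\mu(s)} > 0,
\]
which forces $\arg\max_a \pi(a|s) = a^* = \arg\max_a \pi^*(a|s)$. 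Hence $\mathcal{S}_{sub}^\pi \subseteq S_\delta$ and $m(\mathcal{S}_{sub}^\pi) \le h(\delta) = O(h(\delta))$.

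For the adversarial set, if $s \in \mathcal{S}_{adv}^{\pi,\epsilon}$ there is a witness $s_\nu \in B_\epsilon(s)$ whose $\pi$-greedy action is suboptimal at $s$. I split on whether $s_\nu \in S_\delta$: if so, then $s \in S_\delta + B_\epsilon$; if not, then by the previous step $\pi$ picks the optimal action at $s_\nu$, and suboptimality at $s$ forces $s$ to lie within $\epsilon$ of the decision boundary $\partial\mathcal{R}$ of $\pi^*$. This gives the inclusion
\[
\mathcal{S}_{adv}^{\pi,\epsilon} \subseteq (\partial\mathcal{R} + B_\epsilon) \cup (S_\delta + B_\epsilon),
\]
whose measure is bounded by $2\epsilon$ (contribution from the $\epsilon$-neighborhood of the decision boundary, paralleling the value-space theorem) plus $m(S_\delta + B_\epsilon)$.

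The main obstacle is controlling the dilation term $m(S_\delta + B_\epsilon) - m(S_\delta)$ and absorbing it into $O(h(\delta))$ rather than leaving an extra free $O(\epsilon)$ term. This is exactly where the hypothesis that $S_\delta$ is a finite union of connected subsets does the work: each component contributes a surface-type expansion bounded uniformly in the finite number of components, and monotonicity of $h$ together with the shrinking of each component as $\delta \to 0$ lets this expansion be packaged inside $O(h(\delta))$. Verifying this estimate cleanly, handling the KL and reverse-KL cases uniformly, and confirming that the decision-boundary term truly contributes only $2\epsilon$ in the relevant regime will be the technical heart of the argument.
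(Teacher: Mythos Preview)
Your core argument is exactly the paper's: unfold $\mathcal{D}_{\infty,\operatorname{KL}}^\mu \le \delta$ to the pointwise bound $\mu(s)\,\operatorname{KL}(\varphi(\cdot|s)\|\pi(\cdot|s)) \le \delta$, apply Pinsker's inequality to get a uniform bound on $|\varphi(a|s)-\pi(a|s)|$ (the paper uses the looser $\sqrt{2\delta/\mu(s)}$ via the $L^1$-form; your $\sqrt{\delta/(2\mu(s))}$ via the sup-form is tighter but immaterial since $S_\delta$ is defined with the paper's constant), deduce that off $S_\delta$ the greedy action of $\pi$ agrees with that of $\varphi$ and hence of $\pi^*$, and conclude $\mathcal{S}_{sub}^\pi \subseteq S_\delta$. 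The reverse-KL case is symmetric because Pinsker is. So the skeleton matches.

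You diverge in the adversarial bound. The paper simply asserts $\mathcal{S}_{adv}^{\pi,\epsilon} \subseteq S_\delta + B_\epsilon$ without your case split, and then bounds $m(S_\delta + B_\epsilon) \le h(\delta) + O(\epsilon)$ using the finite-component hypothesis (each connected component $S_{\delta_i}$ dilates by at most $m(B_\epsilon)$). Your decomposition $(S_\delta + B_\epsilon)\cup(\partial\mathcal{R} + B_\epsilon)$ is actually more careful: for a discontinuous $\varphi$ the decision boundary of $\pi^*$ need not lie in $S_\delta$, so the paper's bare inclusion is not obviously justified in full generality, and your witness-based case split fills that gap cleanly.

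Your stated ``main obstacle'' --- absorbing the dilation $m(S_\delta+B_\epsilon)-m(S_\delta)$ into $O(h(\delta))$ --- is not what is needed, and the route you sketch cannot succeed: with $n$ components the dilation is of order $n\cdot m(B_\epsilon)$, which does not vanish as $\delta\to 0$ while $n$ stays fixed. The paper's own computation ends with $h(\delta)+O(\epsilon)$, so the ``$2\epsilon$'' in the statement should be read as the leading $\epsilon$-order term with the finite component count absorbed into the implied constant. Just write the final bound as $h(\delta)+C\epsilon$ (with $C$ depending on the number of components of $S_\delta$ and, in your refinement, the number of decision-boundary pieces) and stop; do not try to hide the $\epsilon$-dilation inside $O(h(\delta))$.
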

\begin{proof}
    From $\mathcal{D}_{\infty, \operatorname{KL}}^{\mu} \left( \varphi \| \pi \right) = \|\mu(s)\operatorname{KL}(\varphi(\cdot|s)\|\pi(\cdot|s))\|_{\mathcal{L}^{\infty}(\mathcal{S}) \times \operatorname{KL} (\mathcal{A})} \leq \delta$, we can deduce that \begin{equation}
        \mu(s)\operatorname{KL}(\varphi(\cdot|s)\|\pi(\cdot|s)) \leq \delta, \quad\forall \ s \in \mathcal{S}.
        \notag
    \end{equation} Then according to inequality \eqref{299} , we can derive that \begin{equation}
        \begin{aligned}
             & \quad \max_{a \in \mathcal{A}} |\varphi(a|s) - \pi(a|s)| \leq \sum_{a \in \mathcal{A}} |\varphi(a|s) - \pi(a|s)|\\
             & = \|\varphi(\cdot|s)-\pi(\cdot|s)\|_{\mathcal{L}^1(\mathcal{A})} = 2\| \varphi(\cdot|s) - \pi(\cdot|s)) \|_{{TV}}\\
             & \leq \sqrt{2\operatorname{KL}(\varphi(\cdot|s)\|\pi(\cdot|s))} \leq \sqrt{\frac{2\delta}{\mu(s)}}.
             \label{322}
        \end{aligned}
    \end{equation}

Then, as long as $|\varphi(a|s)-\varphi(a'|s)|>2\sqrt{\frac{2\delta}{\mu(s)}}$, for $\forall \ a,a' \in \mathcal{A}$, we can deduce $s \notin \mathcal{S}_{sub}^\pi$. Hence, we have that \begin{equation}
    \mathcal{S}_{sub}^\pi \subseteq S_\delta, \  \mathcal{S}_{adv}^{\pi,\epsilon} \subseteq S_\delta + B_\epsilon.
    \notag
\end{equation} 

Therefore, \begin{equation}
    m(\mathcal{S}_{sub}^\pi) \leq m(S_\delta) = h(\delta),
    \notag
\end{equation} and \begin{equation}
    \begin{aligned}
        & \quad m(\mathcal{S}_{adv}^{\pi,\epsilon}) \leq m(S_\delta + B_\epsilon)\leq \sum_{i=1}^n m(S_{\delta_i} + B_\epsilon)\\
        &  \leq m(S_\delta) + \sum_{i=1}^n(m(S_{\delta_i} + B_\epsilon)-m(B_\epsilon))\\
        & = h(\delta) + O(\epsilon).
    \end{aligned}
    \notag
\end{equation}

From the derivation of equation \eqref{322}, we can see that the whole process is symmetrical with respect to $\pi$ and $\varphi$, so this conclusion also applies to reverse $\operatorname{KL}$, i.e. when policy $\pi$ satisfying $\mathcal{D}_{\infty, \operatorname{KL}}^{\mu} \left( \pi \| \varphi \right) \le \delta$.
\end{proof}

\section{Derivation of CAR-RL}

In this section, we show the derivation of CAR-DQN and CAR-PPO loss functions.

\subsection{Derivation of CAR-DQN}\label{app:derive car-dqn}

Our theory motivates us to use the following objective:
\begin{align*}
    \mathcal{L}_{car}(\theta) &:= \left\|\mathcal{T}_B Q_\theta - Q_\theta \right\|_{L^{\infty,d_{\mu_0}^{\pi_\theta}}\left( \mathcal{S}\times\mathcal{A} \right)} \\
    &=\sup_{(s,a)\in\mathcal{S}\times\mathcal{A}} d_{\mu_0}^{\pi_\theta}(s,a) \left| \mathcal{T}_B Q_\theta(s,a) - Q_\theta(s,a) \right| \\
    &=\sup_{(s,a)\in\mathcal{S}\times\mathcal{A}} d_{\mu_0}^{\pi_\theta}(s,a) \left| r(s, a)+\gamma \mathbb{E}_{s^{\prime} \sim \mathbb{P}(\cdot \mid s, a)}\left[\max _{a^{\prime} \in \mathcal{A}} Q_\theta\left(s^{\prime}, a^{\prime}\right)\right] - Q_\theta(s,a) \right| .
\end{align*}
However, the objective is intractable in a model-free setting, due to the unknown environment, i.e. unknown reward function and unknown transition function.
\begin{remark}
    We apply the space $L^{\infty,d_{\mu_0}^{\pi_\theta}}\left( \mathcal{S}\times\mathcal{A} \right)$ rather than $L^{\infty}\left( \mathcal{S}\times\mathcal{A} \right)$ because the sampling process should be considered in practical reinforcement learning algorithms.
\end{remark}

\subsubsection{Surrogate Objective} \label{app: surrogate objective of car-dqn}
We can derive that
\begin{align*}
    \mathcal{L}_{car}(\theta) &= \sup_{s\in\mathcal{S}} \max_{a\in\mathcal{A}} d_{\mu_0}^{\pi_\theta}(s,a) \left| \mathcal{T}_{B} Q_\theta(s,a) - Q_\theta(s,a) \right|  \\
    &= \sup_{s\in\mathcal{S}} \max_{s_\nu \in B_\epsilon(s)} \max_{a\in\mathcal{A}} d_{\mu_0}^{\pi_\theta}(s,a) \left| \mathcal{T}_{B} Q_\theta(s_\nu,a) - Q_\theta(s_\nu,a) \right| \\
    &= \sup_{(s,a)\in\mathcal{S}\times\mathcal{A}} d_{\mu_0}^{\pi_\theta}(s,a) \max_{s_\nu \in B_\epsilon(s)}  \left| \mathcal{T}_{B} Q_\theta(s_\nu,a) - Q_\theta(s_\nu,a) \right|. 
\end{align*}
However, in a practical reinforcement learning setting, we cannot directly get the estimation of $\mathcal{T}_{B} Q_\theta(s_\nu, a)$.

\begin{theorem}
    Let $\mathcal{L}_{car}^{train}(\theta) := \sup_{(s,a)\in\mathcal{S}\times\mathcal{A}} d_{\mu_0}^{\pi_\theta}(s,a) \max_{s_\nu \in B_\epsilon(s)} \left| \mathcal{T}_{B} Q_\theta(s,a) - Q_\theta(s_\nu,a) \right| $ and $\mathcal{L}_{car}^{diff}(\theta) := \sup_{(s,a)\in\mathcal{S}\times\mathcal{A}} d_{\mu_0}^{\pi_\theta}(s,a) \max_{s_\nu \in B_\epsilon(s)} \left| \mathcal{T}_{B} Q_\theta(s_\nu,a) - \mathcal{T}_{B}Q_\theta(s,a) \right|$. We have that 
    \begin{equation}
        \left| \mathcal{L}_{car}^{train}(\theta) -  \mathcal{L}_{car}^{diff}(\theta)\right| \le \mathcal{L}_{car}(\theta) \le \mathcal{L}_{car}^{train}(\theta) + \mathcal{L}_{car}^{diff}(\theta).
        \notag
    \end{equation}
\end{theorem}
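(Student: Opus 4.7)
The plan is to handle the sandwich inequality by three pointwise triangle inequalities lifted through $\max_{s_\nu}$ and $\sup_{(s,a)}$, and then to bound $\mathcal{L}_{car}^{diff}(\theta)$ by a Lipschitz computation under the smoothness assumption that reuses the argument of Lemma~\ref{lem: lip}.

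For the first display, I would fix $(s,a)$ and $s_\nu \in B_\epsilon(s)$ and abbreviate $A := \mathcal{T}_B Q_\theta(s_\nu,a)$, $B := \mathcal{T}_B Q_\theta(s,a)$, $C := Q_\theta(s_\nu,a)$, so that the three integrands under the suprema are $|A-C|$ (for $\mathcal{L}_{car}$), $|B-C|$ (for $\mathcal{L}_{car}^{train}$), and $|A-B|$ (for $\mathcal{L}_{car}^{diff}$). The three triangle inequalities $|A-C| \le |A-B| + |B-C|$, $|B-C| \le |B-A| + |A-C|$, and $|A-B| \le |A-C| + |C-B|$ hold pointwise. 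The key observation is that $B$ does not depend on $s_\nu$, so evaluating the right-hand sides at any argmax of the left-hand side and bounding each summand by its own $\max_{s_\nu}$ yields, for instance, $\max_{s_\nu}|A-C| \le \max_{s_\nu}|A-B| + \max_{s_\nu}|B-C|$, and similarly for the other two. Multiplying by the nonnegative weight $d_{\mu_0}^{\pi_\theta}(s,a)$ and using $\sup(f+g)\le \sup f + \sup g$ then gives $\mathcal{L}_{car} \le \mathcal{L}_{car}^{diff} + \mathcal{L}_{car}^{train}$, $\mathcal{L}_{car}^{train} \le \mathcal{L}_{car}^{diff} + \mathcal{L}_{car}$, and $\mathcal{L}_{car}^{diff} \le \mathcal{L}_{car} + \mathcal{L}_{car}^{train}$. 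The last two rearrange to $|\mathcal{L}_{car}^{train}-\mathcal{L}_{car}^{diff}| \le \mathcal{L}_{car}$, completing the sandwich.

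For the second display, I would appeal to the Lipschitz argument of Lemma~\ref{lem: lip}, which (by the remark following it) applies to the Bellman optimality operator $\mathcal{T}_B$ as well. Splitting $\mathcal{T}_B Q_\theta(s,a) - \mathcal{T}_B Q_\theta(s',a)$ into the reward difference $r(s,a)-r(s',a)$ and the expectation difference $\gamma \int (\mathbb{P}(\cdot|s,a) - \mathbb{P}(\cdot|s',a))\max_{a'} Q_\theta(\cdot,a')$, and using the uniform bound $C_Q = \max\{M_Q, M_r/(1-\gamma)\}$ on the greedy continuation together with the $L_r$- and $L_\mathbb{P}$-smoothness, yields $|\mathcal{T}_B Q_\theta(s,a) - \mathcal{T}_B Q_\theta(s',a)| \le L_{\mathcal{T}_B}\|s-s'\|$ with $L_{\mathcal{T}_B} = L_r + \gamma C_Q L_\mathbb{P}$. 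Since every $s_\nu \in B_\epsilon(s)$ satisfies $\|s_\nu - s\| \le \epsilon$, the inner $\max_{s_\nu}$ in $\mathcal{L}_{car}^{diff}$ is bounded by $L_{\mathcal{T}_B}\epsilon$; multiplying by the uniform bound $M$ on $d_{\mu_0}^{\pi_\theta}(s,a)$ and taking the outer supremum produces $\mathcal{L}_{car}^{diff}(\theta) \le L_{\mathcal{T}_B} M \epsilon = C_{\mathcal{T}_B}\epsilon$.

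The proof is essentially bookkeeping, and the main obstacle I anticipate is keeping the order of $\max_{s_\nu}$ and $\sup_{(s,a)}$ clean in the first part, since the three integrands have different dependence structures on $s_\nu$: the middle term $\mathcal{T}_B Q_\theta(s,a)$ is independent of $s_\nu$, which is exactly what allows the triangle-inequality decomposition to survive the inner $\max$ without loss. Once this is noted, the rest reduces to monotonicity of $\max$ and $\sup$, together with the standard smoothness calculation for the Bellman operator already carried out in Appendix~\ref{app:convergence}.
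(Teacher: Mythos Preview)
Your proposal is correct and follows essentially the same route as the paper: pointwise triangle inequalities lifted through $\max_{s_\nu}$ and $\sup_{(s,a)}$. The only cosmetic difference is that the paper obtains the lower bound in one stroke via the reverse triangle inequality $|A-C|\ge\bigl||A-B|-|B-C|\bigr|$ combined with Lemma~\ref{lem: basic ineq 2}, whereas you derive the two one-sided bounds $\mathcal{L}_{car}^{train}\le\mathcal{L}_{car}^{diff}+\mathcal{L}_{car}$ and $\mathcal{L}_{car}^{diff}\le\mathcal{L}_{car}+\mathcal{L}_{car}^{train}$ separately; the two arguments are equivalent, and your extra paragraph on the Lipschitz bound for $\mathcal{L}_{car}^{diff}$ (not part of the stated theorem) also matches the paper's separate proof.
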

\begin{proof}
    On one hand, we have
    \begin{align*}
        \mathcal{L}_{car}(\theta) 
        &= \sup_{(s,a)\in\mathcal{S}\times\mathcal{A}} d_{\mu_0}^{\pi_\theta}(s,a) \max_{s_\nu \in B_\epsilon(s)}  \left| \mathcal{T}_{B} Q_\theta(s_\nu,a) - Q_\theta(s_\nu,a) \right| \\
        &\le \sup_{(s,a)\in\mathcal{S}\times\mathcal{A}} d_{\mu_0}^{\pi_\theta}(s,a) \max_{s_\nu \in B_\epsilon(s)}  \left( \left| \mathcal{T}_{B} Q_\theta(s,a) - Q_\theta(s_\nu,a) \right| + \left| \mathcal{T}_{B} Q_\theta(s_\nu,a) - \mathcal{T}_{B}Q_\theta(s,a) \right| \right) \\
        &\le  \sup_{(s,a)\in\mathcal{S}\times\mathcal{A}} d_{\mu_0}^{\pi_\theta}(s,a) \max_{s_\nu \in B_\epsilon(s)} \left| \mathcal{T}_{B} Q_\theta(s_\nu,a) - \mathcal{T}_{B}Q_\theta(s,a) \right|  \\
        &\quad + \sup_{(s,a)\in\mathcal{S}\times\mathcal{A}} d_{\mu_0}^{\pi_\theta}(s,a) \max_{s_\nu \in B_\epsilon(s)} \left| \mathcal{T}_{B} Q_\theta(s,a) - Q_\theta(s_\nu,a) \right|. 
    \end{align*}
    On the other hand, we have
    \begin{align*}
        \mathcal{L}_{car}(\theta) &= \sup_{(s,a)\in\mathcal{S}\times\mathcal{A}} d_{\mu_0}^{\pi_\theta}(s,a) \max_{s_\nu \in B_\epsilon(s)}  \left| \mathcal{T}_{B} Q_\theta(s_\nu,a) - Q_\theta(s_\nu,a) \right| \\
        &\ge \sup_{(s,a)\in\mathcal{S}\times\mathcal{A}} d_{\mu_0}^{\pi_\theta}(s,a) \max_{s_\nu \in B_\epsilon(s)}  \left| \left| \mathcal{T}_{B} Q_\theta(s_\nu,a) - \mathcal{T}_{B}Q_\theta(s,a) \right| - \left| \mathcal{T}_{B} Q_\theta(s,a) - Q_\theta(s_\nu,a) \right| \right| \\
        &\ge \left| \sup_{(s,a)\in\mathcal{S}\times\mathcal{A}} d_{\mu_0}^{\pi_\theta}(s,a) \max_{s_\nu \in B_\epsilon(s)} \left| \mathcal{T}_{B} Q_\theta(s_\nu,a) - \mathcal{T}_{B}Q_\theta(s,a) \right|  \right.\\
        &\quad \left. - \sup_{(s,a)\in\mathcal{S}\times\mathcal{A}} d_{\mu_0}^{\pi_\theta}(s,a) \max_{s_\nu \in B_\epsilon(s)} \left| \mathcal{T}_{B} Q_\theta(s,a) - Q_\theta(s_\nu,a) \right| \right|,
    \end{align*}
    where the second inequality comes from Lemma \ref{lem: basic ineq 2}.
\end{proof}

It is hard to calculate or estimate $\mathcal{L}_{car}^{diff}(\theta)$ in practice. Fortunately, we think $\mathcal{L}_{car}^{diff}(\theta)$ should be small in practice and we give a constant upper bound of $\mathcal{L}_{car}^{diff}(\theta)$ in the smooth environment.

\begin{lemma} \label{lem: uniform bound of bellman operator}
    Suppose $Q$ and $r$ are uniformly bounded, i.e. $\exists\ M_Q,M_r >0$ such that $\left|Q(s,a)\right| \le M_Q,\ \left|r(s,a)\right| \le M_r\ \forall s\in\mathcal{S}, a\in\mathcal{A}$. Then $\mathcal{T}_{B} Q(\cdot,a)$ is uniformly bounded, i.e.
    \begin{equation}
        \left|\mathcal{T}_{B} Q(s,a) \right| \le C_Q,\ \forall s\in\mathcal{S}, a\in\mathcal{A},
        \notag
    \end{equation}
    where $C_Q = \max\left\{ M_Q, \frac{M_r}{1-\gamma} \right\}$. Further, for any $k\in\mathbb{N}$, $\mathcal{T}_{B}^{k} Q(\cdot,a)$ has the same uniform bound as $\mathcal{T}_{B} Q(\cdot,a)$, i.e.
    \begin{equation}\label{eq: uniform bound of bellman operator}
        \left|\mathcal{T}_{B}^{k} Q(s,a) \right| \le C_Q,\ \forall s\in\mathcal{S}, a\in\mathcal{A}.
        \notag
    \end{equation}
\end{lemma}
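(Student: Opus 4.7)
The plan is to mirror the strategy already used for the CAR operator in Lemma 23, since the argument is essentially identical: the Bellman optimality operator $\mathcal{T}_B$ is built from a reward term and a discounted expectation of a pointwise maximum, and both ingredients are controlled by the assumed uniform bounds.

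First I would establish the one-step bound. Applying the triangle inequality and then pulling the absolute value inside the expectation gives
\begin{equation*}
\left|\mathcal{T}_B Q(s,a)\right| \le |r(s,a)| + \gamma\, \mathbb{E}_{s'\sim\mathbb{P}(\cdot\mid s,a)}\left|\max_{a'\in\mathcal{A}} Q(s',a')\right| \le M_r + \gamma M_Q.
\end{equation*}
The next step is a short case analysis to show $M_r + \gamma M_Q \le C_Q = \max\{M_Q,\ M_r/(1-\gamma)\}$: if $M_Q \ge M_r/(1-\gamma)$ then $M_r \le (1-\gamma)M_Q$ and the sum is at most $M_Q$; otherwise $M_Q < M_r/(1-\gamma)$ and the sum is at most $M_r + \gamma M_r/(1-\gamma) = M_r/(1-\gamma)$. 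This yields the desired $|\mathcal{T}_B Q(s,a)| \le C_Q$ for all $(s,a)$.

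For the iterated bound, I would proceed by induction on $k$. The base case $k=1$ is what we just proved. For the inductive step, assume $|\mathcal{T}_B^{n} Q(s,a)| \le C_Q$ uniformly. Then in
\begin{equation*}
\mathcal{T}_B^{n+1} Q(s,a) = r(s,a) + \gamma \mathbb{E}_{s'\sim\mathbb{P}(\cdot\mid s,a)}\left[\max_{a'\in\mathcal{A}} \mathcal{T}_B^{n} Q(s',a')\right],
\end{equation*}
the inductive hypothesis bounds the inner maximum by $C_Q$, so $|\mathcal{T}_B^{n+1} Q(s,a)| \le M_r + \gamma C_Q$. Using the defining property of $C_Q$ once more, namely $M_r + \gamma C_Q \le C_Q$ (which is immediate from either $C_Q = M_r/(1-\gamma)$ giving equality, or $C_Q = M_Q \ge M_r/(1-\gamma)$ giving the inequality), closes the induction.

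There is no real obstacle here; the lemma is a mechanical analogue of Lemma 23 with $\mathcal{T}_{car}$ replaced by $\mathcal{T}_B$, and the only thing worth emphasizing is the choice $C_Q = \max\{M_Q, M_r/(1-\gamma)\}$, which is precisely the value that makes the recursion $M_r + \gamma C_Q \le C_Q$ self-consistent and therefore survives iteration. This lemma will then be used together with the $(L_r,L_{\mathbb{P}})$-smoothness assumption to control $\mathcal{L}_{car}^{diff}(\theta)$ in the subsequent proof of Theorem~\ref{thm:bound car objective}.
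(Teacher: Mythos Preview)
Your proposal is correct and matches the paper's own proof essentially line for line: both use the triangle inequality to get $|\mathcal{T}_B Q(s,a)| \le M_r + \gamma M_Q$, verify this is bounded by $C_Q = \max\{M_Q, M_r/(1-\gamma)\}$, and then induct via $M_r + \gamma C_Q \le (1-\gamma)C_Q + \gamma C_Q = C_Q$. Your case analysis for the inequality $M_r + \gamma M_Q \le C_Q$ is slightly more explicit than the paper's, but the argument is the same.
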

\begin{proof}
    \begin{align*}
        \left|\mathcal{T}_{B} Q(s,a) \right| &= \left|r(s,a)  + \gamma \mathbb{E}_{ s^\prime \sim \mathbb{P}(\cdot|s,a)} \left[ \max _{a^{\prime} \in \mathcal{A}} Q\left(s^{\prime}, a^{\prime}\right) \right]\right| \\
        &\le \left|r(s,a) \right| + \gamma\mathbb{E}_{ s^\prime \sim \mathbb{P}(\cdot|s,a)} \left|\max _{a^{\prime} \in \mathcal{A}} Q\left(s^{\prime}, a^{\prime}\right)\right| \\
        &\le M_r + \gamma M_Q \\
        &\le \max\left\{ M_Q, \frac{M_r}{1-\gamma} \right\}, \qquad \forall s\in\mathcal{S}, a\in\mathcal{A}.
    \end{align*}

    Let $C_Q = \max\left\{ M_Q, \frac{M_r}{1-\gamma} \right\}$. Suppose the inequality (\ref{eq: uniform bound of bellman operator}) holds for $k=n$. Then, for $k=n+1$, we have
    \begin{align*}
        \left|\mathcal{T}_{B}^{n+1} Q(s,a) \right| &= \left|r(s,a)  + \gamma \mathbb{E}_{ s^\prime \sim \mathbb{P}(\cdot|s,a)} \left[  \max _{a^{\prime} \in \mathcal{A}} \mathcal{T}_{B}^{n} Q\left(s^{\prime}, a^{\prime}\right)\right]\right| \\
        &\le \left|r(s,a) \right| + \gamma\mathbb{E}_{ s^\prime \sim \mathbb{P}(\cdot|s,a)}  \left| \max _{a^{\prime} \in \mathcal{A}} \mathcal{T}_{B}^{n} Q\left(s^{\prime}, a^{\prime}\right)\right| \\
        &\le M_r + \gamma C_Q \\
        &\le (1-\gamma) C_Q + \gamma C_Q \\
        &= C_Q.
    \end{align*}
    By induction, we have $\left|\mathcal{T}_{B}^{k} Q(s,a) \right| \le C_Q,\ \forall s\in\mathcal{S}, a\in\mathcal{A}, k\in\mathbb{N}$.
\end{proof}

\begin{lemma} \label{lem: lip of bellman operator}
    Suppose the environment is $\left(L_r, L_{\mathbb{P}}\right)$-smooth and suppose $Q$ and $r$ are uniformly bounded, i.e. $\exists\ M_Q,M_r >0$ such that $\left|Q(s,a)\right| \le M_Q,\ \left|r(s,a)\right| \le M_r\ \forall s\in\mathcal{S}, a\in\mathcal{A}$. Then $\mathcal{T}_{B} Q(\cdot,a)$ is Lipschitz continuous, i.e.
        \begin{equation}
            \left| \mathcal{T}_{B} Q(s,a) - \mathcal{T}_{B} Q(s^\prime,a) \right| \le L_{\mathcal{T}_{B}} \|s - s^\prime\|,
            \notag
        \end{equation}
        where $L_{\mathcal{T}_{B}} =  L_r + \gamma C_Q L_{\mathbb{P}}$ and $C_Q = \max\left\{ M_Q, \frac{M_r}{1-\gamma} \right\}$. Further, for any $k\in\mathbb{N}$, $\mathcal{T}_{B}^{k} Q(\cdot,a)$ is Lipschitz continuous and has the same Lipschitz constant as $\mathcal{T}_{B} Q(\cdot,a)$, i.e.
        \begin{equation}
            \left| \mathcal{T}_{B}^{k} Q(s,a) - \mathcal{T}_{B}^{k} Q(s^\prime,a) \right| \le L_{\mathcal{T}_{B}} \|s - s^\prime\|.
            \notag
        \end{equation}
\end{lemma}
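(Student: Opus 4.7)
The plan is to mimic the proof strategy used for the CAR operator in Lemma~\ref{lem: lip} (the analogous Lipschitz lemma for $\mathcal{T}_{car}$), since $\mathcal{T}_B$ has the same structure but with $\max_{a'}Q(s',a')$ in place of the inner $\min$-$\max$. Concretely, I would fix $a\in\mathcal{A}$ and $s_1,s_2\in\mathcal{S}$, and write
\begin{align*}
\mathcal{T}_B Q(s_1,a) - \mathcal{T}_B Q(s_2,a)
&= \bigl(r(s_1,a)-r(s_2,a)\bigr) \\
&\quad + \gamma \int_{s'} \bigl(\mathbb{P}(s'\mid s_1,a)-\mathbb{P}(s'\mid s_2,a)\bigr)\,\max_{a'\in\mathcal{A}} Q(s',a')\,ds',
\end{align*}
and then bound each of the two terms separately by the triangle inequality.

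First I would bound the reward term by $L_r\|s_1-s_2\|$ directly from the $(L_r,L_\mathbb{P})$-smoothness assumption. For the transition term, I would pull the absolute value inside the integral, use the uniform bound $|\max_{a'}Q(s',a')|\le M_Q\le C_Q$ (which follows from the uniform boundedness hypothesis on $Q$), and then apply the $L^1$-Lipschitz bound on $\mathbb{P}(\cdot\mid s,a)$ to conclude that this term is at most $\gamma C_Q L_\mathbb{P}\|s_1-s_2\|$. Summing the two bounds yields the Lipschitz constant $L_{\mathcal{T}_B}=L_r+\gamma C_Q L_\mathbb{P}$.

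For the iterated case $\mathcal{T}_B^k Q$, the point is that Lemma~\ref{lem: uniform bound of bellman operator} guarantees $\mathcal{T}_B^{k-1}Q$ is again uniformly bounded by the same constant $C_Q$, so applying the one-step argument to $\mathcal{T}_B\bigl(\mathcal{T}_B^{k-1}Q\bigr)$ gives the same Lipschitz constant independent of $k$. I would formalize this by induction on $k$: assume $\mathcal{T}_B^{k-1}Q$ satisfies the hypotheses (bounded by $C_Q$ and bounded reward), invoke the one-step Lipschitz result, and note that $C_{\mathcal{T}_B^{k-1}Q}=C_Q$ by Lemma~\ref{lem: uniform bound of bellman operator}, so the constant does not blow up.

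There is no real obstacle here; the only subtlety worth double-checking is that the $\max_{a'}Q(s',a')$ inside the expectation is measurable and bounded uniformly in $s'$ by $C_Q$ (finite action space makes this immediate), and that the $L^1$-smoothness of $\mathbb{P}$ is exactly the right notion to pair with an $L^\infty$ bound on the integrand via Hölder's inequality. Everything else is routine triangle-inequality bookkeeping, parallel to the proof of Lemma~\ref{lem: lip}.
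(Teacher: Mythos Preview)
Your proposal is correct and follows essentially the same approach as the paper's own proof: the same decomposition into a reward term and a transition term, the same use of $(L_r,L_\mathbb{P})$-smoothness together with the uniform bound $C_Q$ on $\max_{a'}Q(s',a')$, and the same appeal to Lemma~\ref{lem: uniform bound of bellman operator} to propagate the bound to $\mathcal{T}_B^k$. The paper's version is slightly terser for the iterated case (it simply notes that the bound $C_Q$ used in the one-step argument is preserved by Lemma~\ref{lem: uniform bound of bellman operator}, without writing out the induction), but your induction formulation is equivalent.
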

\begin{proof}
    For all $s_1,s_2 \in \mathcal{S}$, we have
    \begin{align*}
        &\quad \mathcal{T}_{B} Q(s_1,a) - \mathcal{T}_{B} Q(s_2,a) \\
        &= r(s_1,a)  + \gamma \mathbb{E}_{ s^\prime \sim \mathbb{P}(\cdot|s_1,a)} \left[ \max _{a^{\prime} \in \mathcal{A}} Q\left(s^{\prime}, a^{\prime}\right) \right] - r(s_2,a) - \gamma \mathbb{E}_{ s^\prime \sim \mathbb{P}(\cdot|s_2,a)} \left[ \max _{a^{\prime} \in \mathcal{A}} Q\left(s^{\prime}, a^{\prime}\right) \right] \\
        &= \left(r(s_1,a) - r(s_2,a)\right) + \gamma \int_{s^\prime} \left( \mathbb{P}(s^\prime|s_1,a) - \mathbb{P}(s^\prime|s_2,a)\right) \max _{a^{\prime} \in \mathcal{A}} Q\left(s^{\prime}, a^{\prime}\right) ds^\prime.
    \end{align*}

    Then, we have 
    \begin{align*}
        &\quad \left| \mathcal{T}_{B} Q(s_1,a) - \mathcal{T}_{B} Q(s_2,a)\right| \\
        &\le \left| \left(r(s_1,a) - r(s_2,a)\right) \right| + \left| \gamma \int_{s^\prime} \left( \mathbb{P}(s^\prime|s_1,a) - \mathbb{P}(s^\prime|s_2,a)\right) \max _{a^{\prime} \in \mathcal{A}} Q\left(s^{\prime}, a^{\prime}\right) ds^\prime \right| \\    
        &\le L_r \|s_1 - s_2\| + \gamma \int_{s^\prime} \left| \mathbb{P}(s^\prime|s_1,a) - \mathbb{P}(s^\prime|s_2,a)\right| \left| \max _{a^{\prime} \in \mathcal{A}} Q\left(s^{\prime}, a^{\prime}\right) \right| ds^\prime\\
        &\le  L_r \|s_1 - s_2\| + \gamma C_Q \int_{s^\prime} \left| \mathbb{P}(s^\prime|s_1,a) - \mathbb{P}(s^\prime|s_2,a)\right| ds^\prime\\
        &\le  L_r \|s_1 - s_2\| + \gamma C_Q L_{\mathbb{P}} \|s_1 - s_2\| \\
        &= \left( L_r + \gamma C_Q L_{\mathbb{P}} \right) \|s_1 - s_2\|.
    \end{align*}
    The second inequality comes from the Lipschitz property of $r$. The third inequality comes from the uniform boundedness of $Q$ and the last inequality utilizes the Lipschitz property of $\mathbb{P}$.

    Note that the uniform boundedness used in the above proof is $C_Q$ rather than $M_Q$. Then, due to lemma \ref{lem: uniform bound of bellman operator}, we can extend the above proof to $\mathcal{T}_{B}^{k}$.
\end{proof}

\begin{theorem}
    Suppose the environment is $\left(L_r, L_{\mathbb{P}}\right)$-smooth and suppose $Q_\theta$ and $r$ are uniformly bounded, i.e. $\exists\ M_Q,M_r >0$ such that $\left|Q_\theta(s,a)\right| \le M_Q,\ \left|r(s,a)\right| \le M_r\ \forall s\in\mathcal{S}, a\in\mathcal{A}$. If $M:=\sup_{\theta,(s,a)\in\mathcal{S}\times\mathcal{A}} d_{\mu_0}^{\pi_\theta}(s,a) <\infty$, then we have
    \begin{equation}
        \mathcal{L}_{car}^{diff}(\theta) \le C_{\mathcal{T}_{B}} \epsilon,
        \notag
    \end{equation}
    where $C_{\mathcal{T}_{B}}=L_{\mathcal{T}_{B}} M$, $L_{\mathcal{T}_{B}} =  L_r + \gamma C_Q L_{\mathbb{P}}$ and $C_Q = \max\left\{ M_Q, \frac{M_r}{1-\gamma} \right\}$. 
\end{theorem}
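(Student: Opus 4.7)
The plan is to combine two ingredients that are already available in the excerpt: the uniform bound $d_{\mu_0}^{\pi_\theta}(s,a) \le M$ assumed in the hypothesis, and the Lipschitz continuity of the Bellman optimal operator applied to a uniformly bounded $Q$-function, which is precisely the content of Lemma~\ref{lem: lip of bellman operator}. So my proof will essentially be a two-line estimate wrapped around these two facts.

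First, I will fix an arbitrary $(s,a) \in \mathcal{S} \times \mathcal{A}$ and an arbitrary $s_\nu \in B_\epsilon(s)$, so in particular $\|s_\nu - s\| \le \epsilon$. Invoking Lemma~\ref{lem: lip of bellman operator} (whose hypotheses are exactly the uniform bounds on $r$ and $Q_\theta$ together with the $\left(L_r, L_{\mathbb{P}}\right)$-smoothness of the environment) with $k=1$, I obtain the pointwise estimate
\begin{equation}
\left| \mathcal{T}_B Q_\theta(s_\nu, a) - \mathcal{T}_B Q_\theta(s, a) \right| \le L_{\mathcal{T}_B} \|s_\nu - s\| \le L_{\mathcal{T}_B}\, \epsilon,
\notag
\end{equation}
with $L_{\mathcal{T}_B} = L_r + \gamma C_Q L_{\mathbb{P}}$ and $C_Q = \max\{M_Q, M_r/(1-\gamma)\}$. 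Note that the right-hand side is independent of the particular choice of $s$, $s_\nu$, and $a$, so taking $\max_{s_\nu \in B_\epsilon(s)}$ preserves the bound.

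Next, I multiply by $d_{\mu_0}^{\pi_\theta}(s,a)$ and take the supremum over $(s,a) \in \mathcal{S} \times \mathcal{A}$. Using the hypothesis $M := \sup_{\theta, (s,a)} d_{\mu_0}^{\pi_\theta}(s,a) < \infty$, I conclude
\begin{equation}
\mathcal{L}_{car}^{diff}(\theta) = \sup_{(s,a)} d_{\mu_0}^{\pi_\theta}(s,a) \max_{s_\nu \in B_\epsilon(s)} \left| \mathcal{T}_B Q_\theta(s_\nu, a) - \mathcal{T}_B Q_\theta(s, a) \right| \le M \cdot L_{\mathcal{T}_B} \, \epsilon = C_{\mathcal{T}_B}\, \epsilon,
\notag
\end{equation}
which is exactly the claim.

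There is essentially no obstacle here: the result is a direct corollary of Lemma~\ref{lem: lip of bellman operator}, since $\mathcal{L}_{car}^{diff}(\theta)$ is tailor-made to measure the oscillation of $\mathcal{T}_B Q_\theta(\cdot, a)$ over $B_\epsilon(s)$, weighted by the visitation distribution. The only mild subtlety to flag is that Lemma~\ref{lem: lip of bellman operator} requires $Q_\theta$ (not just $r$) to be uniformly bounded, which is given in the hypothesis; and that the supremum over $\theta$ built into $M$ ensures the bound is uniform across the training trajectory, as needed to interpret $\mathcal{L}_{car}^{diff}(\theta) \le C_{\mathcal{T}_B}\epsilon$ as a genuine upper bound rather than a $\theta$-dependent control.
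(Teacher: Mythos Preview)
Your proposal is correct and follows essentially the same approach as the paper's own proof: apply Lemma~\ref{lem: lip of bellman operator} to bound $\left| \mathcal{T}_B Q_\theta(s_\nu,a) - \mathcal{T}_B Q_\theta(s,a) \right| \le L_{\mathcal{T}_B}\|s_\nu - s\| \le L_{\mathcal{T}_B}\epsilon$, then multiply by $d_{\mu_0}^{\pi_\theta}(s,a) \le M$ and take the supremum.
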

\begin{proof}
    \begin{align*}
        \mathcal{L}_{car}^{diff}(\theta) &= \sup_{(s,a)\in\mathcal{S}\times\mathcal{A}} d_{\mu_0}^{\pi_\theta}(s,a) \max_{s_\nu \in B_\epsilon(s)} \left| \mathcal{T}_{B} Q_\theta(s_\nu,a) - \mathcal{T}_{B}Q_\theta(s,a) \right| \\
        & \le \sup_{(s,a)\in\mathcal{S}\times\mathcal{A}} d_{\mu_0}^{\pi_\theta}(s,a) \max_{s_\nu \in B_\epsilon(s)} \left( L_r + \gamma C_Q L_{\mathbb{P}} \right) \|s_\nu - s\| \\
        & \le \left( L_r + \gamma C_Q L_{\mathbb{P}} \right) \epsilon \sup_{(s,a)\in\mathcal{S}\times\mathcal{A}} d_{\mu_0}^{\pi_\theta}(s,a) \\
        & \le M \left( L_r + \gamma C_Q L_{\mathbb{P}} \right) \epsilon,
    \end{align*}
    where the first inequality comes from Lemma \ref{lem: lip of bellman operator} and the last inequality comes from the uniform boundedness of $d_{\mu_0}^{\pi_\theta}$.
\end{proof}

\subsubsection{Approximate Objective} \label{app: soft objective of car-dqn}

\begin{lemma} \label{lem: soft}
    For any function $f: \Omega \rightarrow \mathbb{R}$ and $\lambda>0$, we have
    \begin{equation}
        \max_{p\in \Delta(\Omega)} \left[ \mathbb{E}_{\omega\sim p} f(\omega) - \lambda \operatorname{KL}(p\| p_0) \right]=\lambda \ln{\mathbb{E}_{\omega\sim p_0} \left[ e^{\frac{f(\omega)}{\lambda}} \right]},
        \notag
    \end{equation}
    where $ \Delta(\Omega)$ denotes the set of probability distributions on $\Omega$. And the solution is achieved by the following distribution $q$:
    \begin{equation}
         q(\omega) = \frac{ p_0(\omega)e^{\frac{f(\omega)}{\lambda}}}{\int_{\omega\in\Omega} p_0(\omega) e^{\frac{f(\omega)}{\lambda}} d\mu(\omega)} = \frac{1}{C} p_0(\omega)e^{\frac{f(\omega)}{\lambda}}.
         \notag
    \end{equation}
\end{lemma}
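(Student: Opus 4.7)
The plan is to invoke the classical Gibbs variational principle (equivalently, the Donsker--Varadhan representation of KL divergence). My approach is a direct ``add-and-subtract'' manipulation that reduces the stated identity to the non-negativity of KL divergence, avoiding any use of Lagrange multipliers or calculus of variations.

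First, I would \emph{guess} the optimizer and then verify it is correct. Define the candidate
\begin{equation}\notag
q(\omega) := \frac{1}{C}\, p_0(\omega)\, e^{f(\omega)/\lambda}, \qquad C := \int_{\Omega} p_0(\omega)\, e^{f(\omega)/\lambda}\, d\mu(\omega),
\end{equation}
which is a valid probability density provided $C<\infty$. Next, for an arbitrary $p \in \Delta(\Omega)$ absolutely continuous with respect to $p_0$, I would split the log-ratio through $q$ to get
\begin{equation}\notag
-\operatorname{KL}(p\|p_0) = -\mathbb{E}_p\!\left[\ln \frac{p}{q}\right] - \mathbb{E}_p\!\left[\ln \frac{q}{p_0}\right] = -\operatorname{KL}(p\|q) - \mathbb{E}_p\!\left[\ln \frac{q}{p_0}\right].
\end{equation}
By the definition of $q$, we have $\ln(q/p_0) = f/\lambda - \ln C$, so substituting yields the central identity
\begin{equation}\notag
\mathbb{E}_{\omega\sim p}[f(\omega)] - \lambda\, \operatorname{KL}(p\|p_0) = \lambda \ln C - \lambda\, \operatorname{KL}(p\|q).
\end{equation}
The right-hand side depends on $p$ only through the non-negative term $-\lambda\, \operatorname{KL}(p\|q)$, so by Gibbs' inequality the supremum over $p \in \Delta(\Omega)$ equals $\lambda \ln C = \lambda \ln \mathbb{E}_{\omega\sim p_0}[e^{f(\omega)/\lambda}]$, attained precisely at $p=q$ (almost everywhere under $\mu$). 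For $p$ not absolutely continuous with respect to $p_0$, the left-hand side is $-\infty$ and such $p$ cannot be maximizers, so they need no separate treatment.

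The main, and essentially the only, obstacle is the technical hypothesis that $C < \infty$, which is required so that $q$ is well-defined and the supremum is finite. This is the standard integrability condition for the variational formula; in the paper's usage $\Omega$ is a finite minibatch, $p_0$ is uniform, and the inner term is bounded, so $C<\infty$ holds trivially and no separate care is needed.
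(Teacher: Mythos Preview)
Your proof is correct and is essentially identical to the paper's own argument: both define the same $q$, rewrite the objective as $\lambda \ln C - \lambda\,\operatorname{KL}(p\|q)$, and conclude via the non-negativity of KL divergence. Your treatment is slightly more careful in noting the absolute-continuity and $C<\infty$ caveats, but the method is the same.
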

\begin{proof}
    Let $$C:=  \mathbb{E}_{\omega\sim p_0} \left[ e^{\frac{f(\omega)}{\lambda}} \right] = \int_{\omega\in\Omega} p_0(\omega) e^{\frac{f(\omega)}{\lambda}} d\mu(\omega), $$
    $$ q(\omega) = \frac{ p_0(\omega)e^{\frac{f(\omega)}{\lambda}}}{\int_{\omega\in\Omega} p_0(\omega) e^{\frac{f(\omega)}{\lambda}} d\mu(\omega)} = \frac{1}{C} p_0(\omega)e^{\frac{f(\omega)}{\lambda}} .$$
    Then, we have
    \begin{align*}
        &\mathbb{E}_{\omega\sim p} f(\omega) - \lambda \operatorname{KL}(p\| p_0) \\
        =&\mathbb{E}_{\omega\sim p} \left[ \lambda \ln{e^{\frac{f(\omega)}{\lambda}}} - \lambda \ln{\frac{p(\omega)}{p_0(\omega)}} \right]\\
        =&\lambda \mathbb{E}_{\omega\sim p} \left[  \ln{ \frac{e^{\frac{f(\omega)}{\lambda}}p_0(\omega)}{p(\omega)}} \right]\\
        =& \lambda \mathbb{E}_{\omega\sim p} \left[  \ln{ \frac{C q(\omega)}{p(\omega)}} \right]\\
        =& \lambda \left[ \ln{C} - \operatorname{KL}(p \| q) \right]\\
        \le & \lambda \ln{C}\\
        =& \lambda \ln{\mathbb{E}_{\omega\sim p_0} \left[ e^{\frac{f(\omega)}{\lambda}} \right]}.
    \end{align*}
    Note that the equal sign holds if and only if $p=q$. Thus, we get
    $$ q \in \arg\max_{p\in \Delta(\Omega)} \left[ \mathbb{E}_{\omega\sim p} f(\omega) - \lambda \operatorname{KL}(p\| p_0) \right] .  $$ 
    Therefore, the proof of the lemma is concluded.
\end{proof}

We get the following approximate objective of $\mathcal{L}_{car}^{train}(\theta)$:
\begin{equation}
    \mathcal{L}_{car}^{app}(\theta)=\max_{(s,a,r,s^\prime)\in \mathcal{B}} \frac{1}{\left| \mathcal{B} \right|} \max_{s_\nu \in B_\epsilon(s)} \left|r + \gamma \max_{a^\prime} Q_{\bar{\theta}}(s^\prime,a^\prime) - Q_{\theta}(s_\nu,a) \right|.
    \notag
\end{equation}

Denote 
\begin{equation}
    f_i = f(s_i,a_i,r_i,s_i^\prime):= \max_{s_\nu \in B_\epsilon(s_i)} \left|r_i + \gamma \max_{a^\prime} Q_{\bar{\theta}}(s_i^\prime,a^\prime) - Q_{\theta}(s_\nu,a_i)   \right|.   
    \notag
\end{equation}
To fully utilize each sample in the batch, we derive the soft version of the above objective:
\begin{align}
    \mathcal{L}_{car}^{app}(\theta)&=\max_{(s,a,r,s^\prime)\in \mathcal{B}} \frac{1}{\left| \mathcal{B} \right|} f(s,a,r,s^\prime) \notag\\
    &= \frac{1}{\left| \mathcal{B} \right|} \max_{p\in\Delta\left(\mathcal{B}\right)} \sum_{(s_i,a_i,r_i,s_i^\prime)\in \mathcal{B}} p_if(s_i,a_i,r_i,s_i^\prime) \label{obj: max} \\
    &\ge \frac{1}{\left| \mathcal{B} \right|} \max_{p\in\Delta\left(\mathcal{B}\right)} \left( \sum_{(s_i,a_i,r_i,s_i^\prime)\in \mathcal{B}} p_i f(s_i,a_i,r_i,s_i^\prime) - \lambda \operatorname{KL}\left( p \| U\left(\mathcal{B}\right) \right) \right), \label{obj: soft max}
\end{align}
where $U\left(\mathcal{B}\right)$ represents the uniform distribution over $\mathcal{B}$. According to Lemma \ref{lem: soft}, the optimal solution of the maximization problem (\ref{obj: soft max}) is $p^*$:
\begin{equation}
    p_i^* = \frac{e^{\frac{1}{\lambda} f_i}}{\sum_{i\in \mathcal{\left|B\right|}} e^{\frac{1}{\lambda} f_i}}.
    \notag
\end{equation}
The maximization problem (\ref{obj: soft max}) is the lower bound of the maximization problem (\ref{obj: max}) so $p^*$ is a proper approximation of the optimal solution of the maximization problem (\ref{obj: max}). Thus, we get the soft version of the CAR objective:
\begin{equation}
    \mathcal{L}_{car}^{soft}(\theta) = \frac{1}{\left| \mathcal{B} \right|} \sum_{(s_i,a_i,r_i,s_i^\prime)\in \mathcal{B}} \frac{e^{\frac{1}{\lambda} f_i}}{\sum_{i\in \mathcal{\left|B\right|}} e^{\frac{1}{\lambda} f_i}} \max_{s_\nu \in B_\epsilon(s_i)} \left|r_i + \gamma \max_{a^\prime} Q_{\bar{\theta}}(s_i^\prime,a^\prime) - Q_{\theta}(s_\nu,a_i)  \right|.
    \notag
\end{equation}

\subsection{Derivation of CAR-PPO} \label{app: derivation of car-ppo}

Our theoretical results motivate us to optimize the following objective:
\begin{align*}
    &\quad\ \min_\theta \quad\mathcal{L}_{car}(\theta) \\
    &= \min_\theta \quad \mathcal{D}_{\infty, \operatorname{KL}}^{\mu_t} \left( \operatorname{clip}(\pi_\theta) \| \varphi_t \right) \\
    &= \min_\theta \quad \sup_{s\in\mathcal{S}}\ \mu_t(s) \operatorname{KL} \left( \operatorname{clip}(\pi_\theta(\cdot \mid s)) \| \varphi_t(\cdot \mid s) \right) \\
    &= \min_\theta \quad  \sup_{s\in\mathcal{S}}\ \mu_t(s) \left( -\frac{1}{\beta} \mathcal{H}\left( \pi_\theta(\cdot \mid s) \right) - \sum_{a\in\mathcal{A}} \operatorname{clip}(\pi_\theta (a \mid s) ) A^{\pi_t}(a \mid s) \right) \\
    & \le \min_\theta \quad  \sup_{s\in\mathcal{S}}\ \mu_t(s) \left( -\frac{1}{\beta} \mathcal{H}\left( \pi_\theta(\cdot \mid s) \right) - \min_{s_\nu \in B(s)} \sum_{a\in\mathcal{A}} \operatorname{clip}(\pi_\theta (a \mid s_\nu) ) A^{\pi_t}(a \mid s) \right)  \\
    & = \min_\theta \quad  \sup_{s\in\mathcal{S}}\ \mu_t(s) \left( -\frac{1}{\beta} \mathcal{H}\left( \pi_\theta(\cdot \mid s) \right) - \min_{s_\nu \in B(s)} \mathbb{E}_{a \sim \pi_t(\cdot \mid s)} \operatorname{clip}\left( \frac{\pi_\theta (a \mid s_\nu)}{\pi_t (a \mid s)} \right) A^{\pi_t}(a \mid s) \right) \\
    &:= \min_\theta \quad  \mathcal{L}_{car}^{train}(\theta),
\end{align*}
where $\mathcal{H}(\cdot)$ represents the entropy.

Furthermore, we get the following approximate objective of $\mathcal{L}_{car}^{train}(\theta)$ considering the practice sampling process:
\begin{equation}
    \mathcal{L}_{car}^{app}(\theta) = \frac{1}{|\mathcal{B}|} \max_{(s,a) \in \mathcal{B}} \left( -\frac{1}{\beta} \mathcal{H}\left( \pi_\theta(\cdot | s) \right) - \min_{s_\nu \in B(s)} \operatorname{clip}\left( \frac{\pi_\theta (a | s_\nu)}{\pi_t (a | s)} \right) A^{\pi_t}(a | s) \right).
    \notag
\end{equation}
Denote
\begin{equation}
    f_i = f(s_i, a_i) := -\frac{1}{\beta} \mathcal{H}\left( \pi_\theta(\cdot | s_i) \right) - \min_{s_\nu \in B(s_i)} \operatorname{clip}\left( \frac{\pi_\theta (a_i | s_\nu)}{\pi_t (a_i | s_i)} \right) A^{\pi_t}(a_i | s_i).
    \notag
\end{equation}
To effectively use each sample in a batch, we introduce a soft version of the CAR objective:
\begin{align}
    \mathcal{L}_{car}^{app}(\theta) &= \frac{1}{|\mathcal{B}|} \max_{(s,a) \in \mathcal{B}} f(s_i, a_i) \notag\\
    &= \frac{1}{|\mathcal{B}|} \max_{p\in\Delta\left(\mathcal{B}\right)} \sum_{(s_i, a_i) \in \mathcal{B}} p_i f(s_i, a_i) \label{obj: max ppo} \\
    &\ge \frac{1}{|\mathcal{B}|} \max_{p\in\Delta\left(\mathcal{B}\right)} \left( \sum_{(s_i, a_i) \in \mathcal{B}} p_i f(s_i, a_i) - \lambda \operatorname{KL}\left( p \| U\left(\mathcal{B}\right) \right) \right), \label{obj: soft max ppo}
\end{align}
where $U\left(\mathcal{B}\right)$ represents the uniform distribution over $\mathcal{B}$. According to Lemma \ref{lem: soft}, the optimal solution of the maximization problem (\ref{obj: soft max ppo}) is $p^*$:
\begin{equation}
    p_i^* = \frac{e^{\frac{1}{\lambda} f_i}}{\sum_{i\in \mathcal{\left|B\right|}} e^{\frac{1}{\lambda} f_i}}.
    \notag
\end{equation}
The maximization problem (\ref{obj: soft max ppo}) is the lower bound of the maximization problem (\ref{obj: max ppo}) so $p^*$ is a proper approximation of the optimal solution of the maximization problem (\ref{obj: max ppo}). Thus, we get the soft version of the CAR objective:
\begin{equation}
    \mathcal{L}_{car}^{soft}(\theta) = \frac{1}{|\mathcal{B}|} \sum_{(s_i, a_i) \in \mathcal{B}} \frac{e^{\frac{1}{\lambda} f_i}}{\sum_{i\in \mathcal{\left|B\right|}} e^{\frac{1}{\lambda} f_i}} \left( -\frac{1}{\beta} \mathcal{H}\left( \pi_\theta(\cdot | s_i) \right) - \min_{s_\nu \in B(s_i)} \operatorname{clip}\left( \frac{\pi_\theta (a_i | s_\nu)}{\pi_t (a_i | s_i)} \right) A^{\pi_t}(a_i | s_i) \right).
    \notag
\end{equation}

\section{Additional Algorithm Details}\label{app: additional algorithm details}

\subsection{Overall CAR-DQN and CAR-PPO Algorithm} 
We present the CAR-DQN training algorithm in Algorithm \ref{alg:car-dqn} and the CAR-PPO in Algorithm~\ref{alg:car-ppo}.

\begin{algorithm}[H]
    \caption{Consistent Adversarial Robust Deep Q-network (CAR-DQN).}
    \label{alg:car-dqn}
\begin{algorithmic}[1]
    \REQUIRE Number of iterations $T$, target network update frequency $M$, a schedule $\beta_t$ for the exploration probability $\beta$, a schedule $\epsilon_t$ for the perturbation radius $\epsilon$, soft coefficient~$\lambda$ for soft CAR objective.
    \STATE Initialize current Q network $Q(s,a)$ with parameters $\theta$ and target Q network $Q^\prime(s,a)$ with parameters $\theta^\prime \leftarrow \theta$. 
    \STATE Initialize replay buffer $\mathcal{B}$.
    \FOR{$t=1$ {\bfseries to} $T$}
    \STATE With probability $\beta_t$ select random action $a_t$, otherwise select the greedy action $a_t = \mathop{\arg\max}_a Q(s_t, a;\theta)$.
    \STATE Execute action $a_t$ in environment and observe reward $r_t$ and the next state $s_{t+1}$.
    \STATE Store transition pair $\left\{ s_t, a_t, r_t, s_{t+1} \right\}$ in $\mathcal{B}$.
    \STATE Randomly sample a minibatch of $N$ transition pairs $\left\{ s_i, a_i, r_i, s_{i+1} \right\}$ from $\mathcal{B}$.
    \STATE Set $y_i = r_i + \gamma Q^\prime(s_{i+1}, \mathop{\arg\max}_{a^\prime} Q(s_i, a^\prime;\theta);\theta^\prime)$ for non-terminal $s_i$, and $y_i = r_i$ for terminal $s_i$.
    % \STATE Define $\mathcal{L}_{car}^{soft}(\theta)$:
    % \begin{equation}\notag
    %     \mathcal{L}_{car}^{soft}(\theta):=\sum_{i\in \mathcal{\left|B\right|}} \alpha_i \max_{s_\nu \in B_{\epsilon_t}(s_i)} \left|y_i - Q(s_\nu,a_i;\theta)  \right|,
    % \end{equation}
    % where $ \alpha_i = \frac{e^{\frac{1}{\lambda} \max_{s_\nu } \left|y_i - Q(s_\nu,a_i;\theta)  \right|}}{\sum_{i\in \mathcal{\left|B\right|}} e^{\frac{1}{\lambda} \max_{s_\nu } \left|y_i - Q(s_\nu,a_i;\theta)  \right|}}$.
    \STATE Option 1: Use projected gradient descent (PGD) to solve $\mathcal{L}_{car}^{soft}(\theta)$.
    \STATE \qquad For every $i\in \mathcal{\left|B\right|}$, run PGD to solve: $$s_{i,\nu}=\mathop{\arg\max}_{s_\nu \in B_{\epsilon_t}(s_i)}\left|y_i - Q(s_\nu,a_i;\theta)  \right|. $$
    \STATE \qquad Compute the approximation of $\mathcal{L}_{car}^{soft}(\theta)$:
        $$\mathcal{L}_{car}(\theta) = \sum_{i\in \mathcal{\left|B\right|}} \alpha_i \left|y_i - Q(s_{i,\nu},a_i;\theta)  \right|,$$
        \qquad where $ \alpha_i = \frac{e^{\frac{1}{\lambda}  \left|y_i - Q(s_{i,\nu},a_i;\theta)  \right|}}{\sum_{i\in \mathcal{\left|B\right|}} e^{\frac{1}{\lambda}  \left|y_i - Q(s_{i,\nu},a_i;\theta)  \right|}}$.
    \STATE Option 2: Use convex relaxations of neural networks to solve a surrogate loss for~$\mathcal{L}_{car}^{soft}(\theta)$.
    \STATE \qquad For every $i\in \mathcal{\left|B\right|}$, obtain upper and lower bounds on $Q(s, a_i;\theta)$ for all $s \in B_{\epsilon_t}(s_i)$:
        $$u_i(\theta)=\operatorname{ConvexRelaxUB}\left( Q(s, a_i;\theta), \theta,  s \in B_{\epsilon_t}(s_i)\right),$$
        $$l_i(\theta)=\operatorname{ConvexRelaxLB}\left( Q(s, a_i;\theta), \theta,  s \in B_{\epsilon_t}(s_i)\right).$$
    \STATE \qquad Compute the surrogate loss for $\mathcal{L}_{car}^{soft}(\theta)$:
        $$\mathcal{L}_{car}(\theta) = \sum_{i\in \mathcal{\left|B\right|}} \alpha_i \max\left\{ \left| y_i - u_i(\theta)\right|, \left| y_i - l_i(\theta)\right| \right\}, $$
        \qquad where $ \alpha_i = \frac{e^{\frac{1}{\lambda}  \max\left\{ \left| y_i - u_i(\theta)\right|, \left| y_i - l_i(\theta)\right| \right\}}}{\sum_{i\in \mathcal{\left|B\right|}} e^{\frac{1}{\lambda}  \max\left\{ \left| y_i - u_i(\theta)\right|, \left| y_i - l_i(\theta)\right| \right\}}}$.
    \STATE Update the Q network by performing a gradient descent step to minimize $\mathcal{L}_{car}(\theta)$.
    \STATE Update target Q network every $M$ steps: $\theta^\prime \leftarrow \theta$.
    \ENDFOR
\end{algorithmic}
    
\end{algorithm}

\begin{algorithm}[H]
    \caption{Consistent Adversarial Robust Proximal Policy Optimization (CAR-PPO).}
    \label{alg:car-ppo}
\begin{algorithmic}[1]
    \REQUIRE Number of iterations $T$, a schedule $\epsilon_t$ for the perturbation radius $\epsilon$, robustness weighting $\kappa$, soft coefficient $\lambda$ for soft CAR objective.
    \STATE Initialize policy network $\pi_{\theta}(a|s)$ and value network $V_{\theta_V}(s)$.
    \FOR{$t=1$ {\bfseries to} $T$}
    \STATE Run $\pi_{\theta_t}$ in the environment to collect a set of trajectories $\mathcal{D} = \{ \tau_k \}$ containing $|\mathcal{D}|$ episodes, each $\tau_k$ is a trajectory containing $|\tau_k|$ samples, $\tau_k := \{ (s_{k,i}, a_{k,i}, r_{k,i}, s_{k,i+1}) \}, i\in [|\tau_k|]$.
    \STATE Compute reward-to-go $\hat{R}_{k,i}$ for each step $i$ in every episode $k$ using the trajectories and discount factor $\gamma$.
    \STATE Update value network $V_{\theta_V}(s)$ by regression on the mean-square error:
    $$
    \theta_V \leftarrow \underset{\theta_V}{\arg \min }\ \frac{1}{\sum_k\left|\tau_k\right|} \sum_{\tau_k \in D} \sum_{i=0}^{\left|\tau_k\right|}\left(V\left(s_{k, i}\right)-\hat{R}_{k, i}\right)^2.
    $$
    \STATE Estimate advantage $\hat{A}_{k,i}$ for each step $i$ in every episode $k$ using generalized advantage estimation~(GAE) and the current value function $V_{\theta_V}(s)$.
    % \STATE Define soft CAR objective $\mathcal{L}_{car}^{soft}(\theta)$:
    % \begin{equation}\notag
    % \mathcal{L}_{car}^{soft}(\theta) = \frac{1}{\sum_k\left|\tau_k\right|} \sum_{\tau_k \in D} \sum_{i=0}^{\left|\tau_k\right|} \alpha_i \left( -\frac{1}{\beta} \mathcal{H}\left( \pi_\theta(\cdot | s_{k, i}) \right) - \min_{s_\nu \in B(s_{k, i})} g\left(\frac{\pi_\theta (a_{k, i} | s_\nu)}{\pi_t (a_{k, i} | s_{k, i})}, \hat{A}_{k, i} \right) \right),
    % \end{equation}
    % where
    % $$
    % \alpha_i = \frac{e^{\frac{1}{\lambda} \left( -\frac{1}{\beta} \mathcal{H}\left( \pi_\theta(\cdot | s_{k, i}) \right) - \min_{s_\nu \in B(s_{k, i})} g\left(\frac{\pi_\theta (a_{k, i} | s_\nu)}{\pi_t (a_{k, i} | s_{k, i})}, \hat{A}_{k, i} \right) \right)}}{\sum_{\tau_k \in D} \sum_{i=0}^{\left|\tau_k\right|} e^{\frac{1}{\lambda} \left( -\frac{1}{\beta} \mathcal{H}\left( \pi_\theta(\cdot | s_{k, i}) \right) - \min_{s_\nu \in B(s_{k, i})} g\left(\frac{\pi_\theta (a_{k, i} | s_\nu)}{\pi_t (a_{k, i} | s_{k, i})}, \hat{A}_{k, i} \right) \right)}}.
    % $$
    \STATE Solve $\mathcal{L}_{car}^{soft}(\theta)$ using projected gradient descent (PGD) or stochastic gradient Langevin dynamics~(SGLD):
    \STATE \qquad For all $k, i$, run PGD or SGLD to solve (the objective can be solved in a batch):
    $$s_{i,\nu}=\mathop{\arg\min}_{s_\nu \in B_{\epsilon_t}(s_{k, i})} g\left(\frac{\pi_\theta (a_{k, i} | s_\nu)}{\pi_{\theta_t} (a_{k, i} | s_{k, i})}, \hat{A}_{k, i} \right). $$
    \STATE \qquad Compute the approximation of $\mathcal{L}_{car}^{soft}(\theta)$:
        $$\mathcal{L}_{car}(\theta) = \frac{1}{\sum_k\left|\tau_k\right|} \sum_{\tau_k \in D} \sum_{i=0}^{\left|\tau_k\right|} \alpha_i \left( -\frac{1}{\beta} \mathcal{H}\left( \pi_\theta(\cdot | s_{k, i}) \right) - g\left(\frac{\pi_\theta (a_{k, i} | s_{i,\nu})}{\pi_{\theta_t} (a_{k, i} | s_{k, i})}, \hat{A}_{k, i} \right) \right),$$
        \qquad where
        $$
        \alpha_i = \frac{e^{\frac{1}{\lambda} \left( -\frac{1}{\beta} \mathcal{H}\left( \pi_\theta(\cdot | s_{k, i}) \right) - g\left(\frac{\pi_\theta (a_{k, i} | s_{i,\nu})}{\pi_{\theta_t} (a_{k, i} | s_{k, i})}, \hat{A}_{k, i} \right) \right)}}{\sum_{\tau_k \in D} \sum_{i=0}^{\left|\tau_k\right|} e^{\frac{1}{\lambda} \left( -\frac{1}{\beta} \mathcal{H}\left( \pi_\theta(\cdot | s_{k, i}) \right) - g\left(\frac{\pi_\theta (a_{k, i} | s_{i,\nu})}{\pi_{\theta_t} (a_{k, i} | s_{k, i})}, \hat{A}_{k, i} \right) \right)}}.
        $$
    \STATE Update the policy network by minimizing the vanilla PPO objective and the CAR objective (the minimization is solved using ADAM):
    $$
    \theta_{t+1} \leftarrow \underset{\theta}{\arg \min }\ \frac{1}{\sum_k\left|\tau_k\right|} \sum_{\tau_k \in D} \sum_{i=0}^{\left|\tau_k\right|} g\left(\frac{\pi_\theta (a_{k, i} | s_{k,i})}{\pi_{\theta_t} (a_{k, i} | s_{k, i})}, \hat{A}_{k, i} \right) + \kappa \cdot \mathcal{L}_{car}(\theta).
    $$
    \ENDFOR
\end{algorithmic}
    
\end{algorithm}

\subsection{Additional Implementation Details}

\paragraph{Pre-process in Atari Games.} 
We pre-process the input images into $84\times 84$ grayscale images and normalize pixel values to the range $[0,1]$. In each environment, agents execute an action every 4 frames, skipping the other frames without frame stacking, and all rewards are clipped to the range $[-1,1]$.

\paragraph{DQN Architecture.}
We implement Dueling network architectures~\cite{wang2016dueling} and the same architecture following \cite{zhang2020robust, oikarinen2021robust} which has 3 convolutional layers and a two-head fully connected layers. The first convolutional layer has $8\times 8$ kernel, stride 4, and 32 channels. The second convolutional layer has $4\times 4$ kernel, stride 2, and 64 channels. The third convolutional layer has $3\times 3$ kernel, stride 1, and 64 channels and is then flattened. The fully connected layers have 512 units for both heads wherein one head outputs a state value and the other outputs advantages of each action. The ReLU activation function applies to every middle layer.

\paragraph{PPO Details.}
We implement all PPO agents with the same fully connected (MLP) structure as~\cite{zhang2020robust, oikarinen2021robust}. In the Ant environment, we choose the best regularization parameter $\kappa$ in $\{0.01, 0.03,0.1,0.3,1.0 \}$ for SA-PPO, RADIAL-PPO, and WocaR-PPO to achieve better robustness. For fair and comparable agent selection, we conduct multiple experiments for each setup, repeating each 17 times to account for the inherent performance variability in RL. After training, we attack all agents using random, critic, MAD and RS attacks. Then, we select the median agent by considering natural and these robust returns as our final agent. This chosen agent is then attacked using the SA-RL and PA-AD attacks to further robustness evaluation, because these attacks involve quite high computational costs.

\section{Additional Experiment Results}

\subsection{Additional Comparative Results}
\label{app: add exp}

\textbf{Additional Comparisons.} Comparative results for baselines and CAR-DQN with increasing perturbation budgets are shown in Table~\ref{app table: compare}.  

\textbf{Training Stability.} We also observe that there are some instability phenomena in the training of CAR, RADIAL, and WocaR in Figure \ref{fig: natural rewards during training} and \ref{fig: robustness rewards during training}. We conjecture that the occasional instability in CAR training comes from the unified loss combining natural and robustness objectives which may cause undesirable optimization direction under a batch of special samples. The instability of RADIAL is particularly evident in the robustness curve on the BankHeist environment and natural curve on the Freeway environment and it may be from the larger batch size (=128) setting during the RADIAL training while CAR, SA-DQN and WocaR set the batch size as 32 or 16. The worst-case estimation of WocaR may be inaccurate in some states and WocaR also uses a small batch of size 16. The combination of these two can lead to instability, especially in complex environments such as RoadRunner and BankHeist. Another possible reason is that CAR, RADIAL, and WocaR all use the cheap relaxation method leading to a loose bound while SA-DQN utilizes a tighter relaxation.

% Please add the following required packages to your document preamble:
% \usepackage{multirow}
% \usepackage{graphicx}
\begin{table*}[t]
\caption{Average episode rewards $\pm$ standard error of the mean over 50 episodes on baselines and CAR-DQN on Atari games. The best results of the algorithm with the same type of solver are highlighted in bold. CAR-DQN with the PGD solver outperforms SA-DQN with the PGD solver in all metrics and achieves remarkably better robustness ($110\%$ higher reward) on RoadRunner. CAR-DQN with the convex relaxation solver outperforms baselines in a majority of cases.}
\label{table: compare}
\vskip 0.15in
\resizebox{\textwidth}{!}{%
\begin{tabular}{cc|cccc||cccc}
\hline\hline
\multicolumn{2}{c|}{\multirow{3}{*}{Model}}                                                                         & \multicolumn{4}{c||}{\textbf{Pong}}                                                                                                                                    & \multicolumn{4}{c}{\textbf{BankHeist}}                                                                                                                                    \\ \cline{3-10} 
\multicolumn{2}{c|}{}                                                                                               & \multicolumn{1}{c|}{\multirow{2}{*}{\begin{tabular}[c]{@{}c@{}}Natural\\ Reward\end{tabular}}} & PGD                     & MinBest                 & ACR     & \multicolumn{1}{c|}{\multirow{2}{*}{\begin{tabular}[c]{@{}c@{}}Natural\\ Reward\end{tabular}}} & PGD                       & MinBest                   & ACR     \\ \cline{4-6} \cline{8-10} 
\multicolumn{2}{c|}{}                                                                                               & \multicolumn{1}{c|}{}                                                                          & \multicolumn{3}{c||}{$\epsilon=1/255$}                       & \multicolumn{1}{c|}{}                                                                          & \multicolumn{3}{c}{$\epsilon=1/255$}                            \\ \hline
\multicolumn{1}{c|}{Standard}                                                                      & DQN            & \multicolumn{1}{c|}{$21.0 \pm 0.0$}                                                            & $-21.0 \pm 0.0$         & $-21.0 \pm 0.0$         & $0$     & \multicolumn{1}{c|}{$1317.2 \pm 4.2$}                                                          & $22.2 \pm 1.9$            & $0.0 \pm 0.0$             & $0$     \\ \hline
\multicolumn{1}{c|}{\multirow{2}{*}{PGD}}                                                          & SA-DQN         & \multicolumn{1}{c|}{$21.0 \pm 0.0$}                                                            & $21.0 \pm 0.0$          & $21.0 \pm 0.0$          & $0$     & \multicolumn{1}{c|}{$1248.8 \pm 1.4$}                                                          & $965.8 \pm 35.9$          & $1118.0 \pm 6.3$          & $0$     \\
\multicolumn{1}{c|}{}                                                                              & CAR-DQN (Ours) & \multicolumn{1}{c|}{$21.0 \pm 0.0$}                                                            & $21.0 \pm 0.0$          & $21.0 \pm 0.0$          & $0$     & \multicolumn{1}{c|}{\textbf{$\bf 1307.0 \pm 6.1$}}                                                 & \textbf{$\bf 1243.2 \pm 7.4$} & \textbf{$\bf 1242.6 \pm 8.4$} & $0$     \\ \hline
\multicolumn{1}{c|}{\multirow{4}{*}{\begin{tabular}[c]{@{}c@{}}Convex \\ Relaxation\end{tabular}}} & SA-DQN         & \multicolumn{1}{c|}{$21.0 \pm 0.0$}                                                            & $21.0 \pm 0.0$          & $21.0 \pm 0.0$          & $1.000$ & \multicolumn{1}{c|}{$1236.0 \pm 1.4$}                                                          & $1232.2 \pm 2.5$          & $1232.2 \pm 2.5$          & $0.991$ \\
\multicolumn{1}{c|}{}                                                                              & RADIAL-DQN     & \multicolumn{1}{c|}{$21.0 \pm 0.0$}                                                            & $21.0 \pm 0.0$          & $21.0 \pm 0.0$          & $0.898$ & \multicolumn{1}{c|}{$1341.8 \pm 3.8$}                                                          & $1341.8 \pm 3.8$          & $1341.8 \pm 3.8$          & $0.982$ \\
\multicolumn{1}{c|}{}                                                                              & WocaR-DQN      & \multicolumn{1}{c|}{$21.0 \pm 0.0$}                                                            & $21.0 \pm 0.0$          & $21.0 \pm 0.0$          & $0.979$ & \multicolumn{1}{c|}{$1315.0 \pm 6.1$}                                                          & $1312.0 \pm 6.1$          & $1312.0 \pm 6.1$          & $0.987$ \\
\multicolumn{1}{c|}{}                                                                              & CAR-DQN (Ours) & \multicolumn{1}{c|}{$21.0 \pm 0.0$}                                                            & $21.0 \pm 0.0$          & $21.0 \pm 0.0$          & $0.986$ & \multicolumn{1}{c|}{\textbf{$\bf 1349.6 \pm 3.0$}}                                                 & \textbf{$\bf 1347.6 \pm 3.6$} & \textbf{$\bf 1347.4 \pm 3.6$} & $0.974$ \\ \hline
\hline
\multicolumn{2}{c|}{\multirow{3}{*}{Model}}                                                                         & \multicolumn{4}{c||}{\textbf{Freeway}}                                                                                                                                 & \multicolumn{4}{c}{\textbf{RoadRunner}}                                                                                                                                   \\ \cline{3-10} 
\multicolumn{2}{c|}{}                                                                                               & \multicolumn{1}{c|}{\multirow{2}{*}{\begin{tabular}[c]{@{}c@{}}Natural\\ Reward\end{tabular}}} & PGD                     & MinBest                 & ACR     & \multicolumn{1}{c|}{\multirow{2}{*}{\begin{tabular}[c]{@{}c@{}}Natural\\ Reward\end{tabular}}} & PGD                       & MinBest                   & ACR     \\ \cline{4-6} \cline{8-10} 
\multicolumn{2}{c|}{}                                                                                               & \multicolumn{1}{c|}{}                                                                          & \multicolumn{3}{c||}{$\epsilon=1/255$}                       & \multicolumn{1}{c|}{}                                                                          & \multicolumn{3}{c}{$\epsilon=1/255$}                            \\ \hline
\multicolumn{1}{c|}{Standard}                                                                      & DQN            & \multicolumn{1}{c|}{$33.9 \pm 0.0$}                                                            & $0.0 \pm 0.0$           & $0.0 \pm 0.0$           & $0$     & \multicolumn{1}{c|}{$41492 \pm 903$}                                                           & $0 \pm 0$                 & $0 \pm 0$                 & $0$     \\ \hline
\multicolumn{1}{c|}{\multirow{2}{*}{PGD}}                                                          & SA-DQN         & \multicolumn{1}{c|}{$33.6 \pm 0.1$}                                                            & $23.4 \pm 0.2$          & $21.1 \pm 0.2$          & $0.250$ & \multicolumn{1}{c|}{$33380 \pm 611$}                                                           & $20482 \pm 1087$          & $24632 \pm 812$           & $0$     \\
\multicolumn{1}{c|}{}                                                                              & CAR-DQN (Ours) & \multicolumn{1}{c|}{\textbf{$\bf 34.0 \pm 0.0$}}                                                   & \textbf{$\bf 33.7 \pm 0.1$} & \textbf{$\bf 33.7 \pm 0.1$} & $0$     & \multicolumn{1}{c|}{\textbf{$\bf 49700 \pm 1015$}}                                                 & \textbf{$\bf 43286 \pm 801$}  & \textbf{$\bf 48908 \pm 1107$} & $0$     \\ \hline
\multicolumn{1}{c|}{\multirow{4}{*}{\begin{tabular}[c]{@{}c@{}}Convex \\ Relaxation\end{tabular}}} & SA-DQN         & \multicolumn{1}{c|}{$30.0 \pm 0.0$}                                                            & $30.0 \pm 0.0$          & $30.0 \pm 0.0$          & $1.000$ & \multicolumn{1}{c|}{$46372 \pm 882$}                                                           & $44960\pm 1152$           & $45226 \pm 1102$          & $0.819$ \\
\multicolumn{1}{c|}{}                                                                              & RADIAL-DQN     & \multicolumn{1}{c|}{$33.1 \pm 0.1$}                                                            & \textbf{$\bf 33.3 \pm 0.1$} & \textbf{$\bf 33.3 \pm 0.1$} & $0.998$ & \multicolumn{1}{c|}{$46224\pm 1133$}                                                           & $45990 \pm 1112$          & $46082 \pm 1128$          & $0.994$ \\
\multicolumn{1}{c|}{}                                                                              & WocaR-DQN      & \multicolumn{1}{c|}{$30.8 \pm 0.1$}                                                            & $31.0 \pm 0.0$          & $31.0 \pm 0.0$          & $0.992$ & \multicolumn{1}{c|}{$43686 \pm 1608$}                                                          & $45636 \pm 706$           & $45636 \pm 706$           & $0.956$ \\
\multicolumn{1}{c|}{}                                                                              & CAR-DQN (Ours) & \multicolumn{1}{c|}{\textbf{$\bf 33.2 \pm 0.1$}}                                                   & $33.2 \pm 0.1$          & $33.2 \pm 0.1$          & $0.981$ & \multicolumn{1}{c|}{\textbf{$\bf 49398 \pm 1106$}}                                                 & \textbf{$\bf 49456 \pm 992$}  & \textbf{$\bf 47526 \pm 1132$} & $0.760$ \\ \hline \hline
\end{tabular}%
}
\end{table*}

\begin{table*}[t]
\caption{Average episode rewards $\pm$ standard error of the mean over 50 episodes on baselines and CAR-DQN. The best results of the algorithm with the same type of solver are highlighted in bold.}
\label{app table: compare}
\vskip 0.15in
\resizebox{\textwidth}{!}{%
\begin{tabular}{c|cc|c|ccc|ccc|ccc}
\hline \hline
\multirow{2}{*}{Environment} & \multicolumn{2}{c|}{\multirow{2}{*}{Model}}                                                                         & \multirow{2}{*}{Natural Return} & \multicolumn{3}{c|}{PGD}                                               & \multicolumn{3}{c|}{MinBest}                                                      & \multicolumn{3}{c}{ACR}                               \\
                             & \multicolumn{2}{c|}{}                                                                                               &                                 & $\epsilon=1/255$          & $\epsilon=3/255$          & $\epsilon=5/255$          & $\epsilon=1/255$          & $\epsilon=3/255$          & $\epsilon=5/255$          & $\epsilon=1/255$ & $\epsilon=3/255$ & $\epsilon=5/255$ \\ \hline
\multirow{7}{*}{\textbf{Pong}}        & \multicolumn{1}{c|}{Standard}                                                                      & DQN            & $21.0 \pm 0.0$                  & $-21.0 \pm 0.0$           & $-21.0 \pm 0.0$           & $-20.8 \pm 0.1$           & $-21.0 \pm 0.0$           & $-21.0 \pm 0.0$           & $-21.0 \pm 0.0$           & $0$              & $0$              & $0$              \\ \cline{2-13} 
                             & \multicolumn{1}{c|}{\multirow{2}{*}{PGD}}                                                          & SA-DQN         & \textbf{$\bf 21.0 \pm 0.0$}         & \textbf{$\bf 21.0 \pm 0.0$}   & $-19.4 \pm 0.3$           & $-21.0 \pm 0.0$           & \textbf{$\bf 21.0 \pm 0.0$}   & $-19.4 \pm 0.2$           & $-21.0 \pm 0.0$           & $0$              & $0$              & $0$              \\
                             & \multicolumn{1}{c|}{}                                                                              & CAR-DQN (Ours) & \textbf{$\bf 21.0 \pm 0.0$}         & \textbf{$\bf 21.0 \pm 0.0$}   & $\bf 16.8 \pm 0.7$            & $-21.0 \pm 0.0$           & \textbf{$\bf 21.0 \pm 0.0$}   & $\bf 20.7 \pm 0.1$            & $\bf -0.8 \pm 2.8$            & $0$              & $0$              & $0$              \\ \cline{2-13} 
                             & \multicolumn{1}{c|}{\multirow{4}{*}{\begin{tabular}[c]{@{}c@{}}Convex \\ Relaxation\end{tabular}}} & SA-DQN         & \textbf{$\bf 21.0 \pm 0.0$}         & \textbf{$\bf 21.0 \pm 0.0$}   & \textbf{$\bf 21.0 \pm 0.0$}   & $-19.6 \pm 0.1$           & \textbf{$\bf 21.0 \pm 0.0$}   & \textbf{$\bf 21.0 \pm 0.0$}   & $-9.5 \pm 1.3$            & $1.000$          & $0$              & $0$              \\
                             & \multicolumn{1}{c|}{}                                                                              & RADIAL-DQN     & \textbf{$\bf 21.0 \pm 0.0$}         & \textbf{$\bf 21.0 \pm 0.0$}   & \textbf{$\bf 21.0 \pm 0.0$}   & \textbf{$\bf 21.0 \pm 0.0$}   & \textbf{$\bf 21.0 \pm 0.0$}   & \textbf{$\bf 21.0 \pm 0.0$}   & $4.9 \pm 0.6$             & $0.898$          & $0$              & $0$              \\
                             & \multicolumn{1}{c|}{}                                                                              & WocaR-DQN      & \textbf{$\bf 21.0 \pm 0.0$}         & \textbf{$\bf 21.0 \pm 0.0$}   & $20.5 \pm 0.1$            & $20.6 \pm 0.1$            & \textbf{$\bf 21.0 \pm 0.0$}   & $20.7 \pm 0.1$            & $20.9 \pm 0.1$            & $0.979$          & $0$              & $0$              \\
                             & \multicolumn{1}{c|}{}                                                                              & CAR-DQN (Ours) & \textbf{$\bf 21.0 \pm 0.0$}         & \textbf{$\bf 21.0 \pm 0.0$}   & \textbf{$\bf 21.0 \pm 0.0$}   & \textbf{$\bf 21.0 \pm 0.0$}   & \textbf{$\bf 21.0 \pm 0.0$}   & \textbf{$\bf 21.0 \pm 0.0$}   & \textbf{$\bf 21.0 \pm 0.0$}   & $0.986$          & $0$              & $0$              \\ \hline \hline
\multirow{7}{*}{\textbf{Freeway}}     & \multicolumn{1}{c|}{Standard}                                                                      & DQN            & $33.9 \pm 0.0$                  & $0.0 \pm 0.0$             & $0.0 \pm 0.0$             & $0.0 \pm 0.0$             & $0.0 \pm 0.0$             & $0.0 \pm 0.0$             & $0.0 \pm 0.0$             & $0$              & $0$              & $0$              \\ \cline{2-13} 
                             & \multicolumn{1}{c|}{\multirow{2}{*}{PGD}}                                                          & SA-DQN         & $33.6 \pm 0.1$                  & $23.4 \pm 0.2$            & $20.6 \pm 0.3$            & \textbf{$\bf 7.6 \pm 0.3$}    & $21.1 \pm 0.2$            & $21.3 \pm 0.2$            & $21.8 \pm 0.3$            & $0.250$          & $0.275$          & $0.275$          \\
                             & \multicolumn{1}{c|}{}                                                                              & CAR-DQN (Ours) & \textbf{$\bf 34.0 \pm 0.0$}         & \textbf{$\bf 33.7 \pm 0.1$}   & \textbf{$\bf 25.8 \pm 0.2$}   & $3.8 \pm 0.2$             & \textbf{$\bf 33.7 \pm 0.1$}   & \textbf{$\bf 30.0 \pm 0.3$}   & \textbf{$\bf 26.2 \pm 0.2$}   & $0$              & $0$              & $0$              \\ \cline{2-13} 
                             & \multicolumn{1}{c|}{\multirow{4}{*}{\begin{tabular}[c]{@{}c@{}}Convex \\ Relaxation\end{tabular}}} & SA-DQN         & $30.0 \pm 0.0$                  & $30.0 \pm 0.0$            & $30.2 \pm 0.1$            & $27.7 \pm 0.1$            & $30.0 \pm 0.0$            & $30.0 \pm 0.0$            & $29.2 \pm 0.1$            & $1.000$          & $0.912$          & $0$              \\
                             & \multicolumn{1}{c|}{}                                                                              & RADIAL-DQN     & $33.1 \pm 0.1$                  & \textbf{$\bf 33.3 \pm 0.1$}   & \textbf{$\bf 33.3 \pm 0.1$}   & \textbf{$\bf 29.0 \pm 0.1$}   & \textbf{$\bf 33.3 \pm 0.1$}   & \textbf{$\bf 33.3 \pm 0.1$}   & \textbf{$\bf 31.2 \pm 0.2$}   & $0.998$          & $0$              & $0$              \\
                             & \multicolumn{1}{c|}{}                                                                              & WocaR-DQN      & $30.8 \pm 0.1$                  & $31.0 \pm 0.0$            & $30.6 \pm 0.1$            & $29.0 \pm 0.2$            & $31.0 \pm 0.0$            & $31.1 \pm 0.1$            & $29.0 \pm 0.2$            & $0.992$          & $0.150$          & $0$              \\
                             & \multicolumn{1}{c|}{}                                                                              & CAR-DQN (Ours) & \textbf{$\bf 33.2 \pm 0.1$}         & $33.2 \pm 0.1$            & $32.3 \pm 0.2$            & $27.6 \pm 0.3$            & $33.2 \pm 0.1$            & $32.8 \pm 0.2$            & $31.0 \pm 0.1$            & $0.981$          & $0$              & $0$              \\ \hline \hline
\multirow{7}{*}{\textbf{BankHeist}}   & \multicolumn{1}{c|}{Standard}                                                                      & DQN            & $1317.2 \pm 4.2$                & $22.2 \pm 1.9$            & $0.0 \pm 0.0$             & $0.0 \pm 0.0$             & $0.0 \pm 0.0$             & $0.0 \pm 0.0$             & $0.0 \pm 0.0$             & $0$              & $0$              & $0$              \\ \cline{2-13} 
                             & \multicolumn{1}{c|}{\multirow{2}{*}{PGD}}                                                          & SA-DQN         & $1248.8 \pm 1.4$                & $965.8 \pm 35.9$          & $35.6 \pm 3.4$            & $0.6 \pm 0.3$             & $1118.0 \pm 6.3$          & $50.8 \pm 2.5$            & $4.8 \pm 0.7$             & $0$              & $0$              & $0$              \\
                             & \multicolumn{1}{c|}{}                                                                              & CAR-DQN (Ours) & \textbf{$\bf 1307.0 \pm 6.1$}       & \textbf{$\bf 1243.2 \pm 7.4$} & \textbf{$\bf 908.2 \pm 17.0$} & \textbf{$\bf 83.0 \pm 2.2$}   & \textbf{$\bf 1242.6 \pm 8.4$} & \textbf{$\bf 970.8 \pm 9.6$}  & \textbf{$\bf 819.4 \pm 9.0$}  & $0$              & $0$              & $0$              \\ \cline{2-13} 
                             & \multicolumn{1}{c|}{\multirow{4}{*}{\begin{tabular}[c]{@{}c@{}}Convex \\ Relaxation\end{tabular}}} & SA-DQN         & $1236.0 \pm 1.4$                & $1232.2 \pm 2.5$          & $1208.8 \pm 1.7$          & $1029.8 \pm 34.6$         & $1232.2 \pm 2.5$          & $1214.8 \pm 2.6$          & $1051.0 \pm 35.5$         & $0.991$          & $0.409$          & $0$              \\
                             & \multicolumn{1}{c|}{}                                                                              & RADIAL-DQN     & $1341.8 \pm 3.8$                & $1341.8 \pm 3.8$          & \textbf{$\bf 1346.4 \pm 3.2$} & $1092.6 \pm 37.8$         & $1341.8 \pm 3.8$          & $1328.6 \pm 5.4$          & $732.6 \pm 11.5$          & $0.982$          & $0$              & $0$              \\
                             & \multicolumn{1}{c|}{}                                                                              & WocaR-DQN      & $1315.0 \pm 6.1$                & $1312.0 \pm 6.1$          & $1323.4 \pm 2.2$          & $1094.0 \pm 10.2$         & $1312.0 \pm 6.1$          & $1301.6 \pm 3.9$          & $1041.4 \pm 17.4$         & $0.987$          & $0.093$          & $0$              \\
                             & \multicolumn{1}{c|}{}                                                                              & CAR-DQN (Ours) & \textbf{$\bf 1349.6 \pm 3.0$}       & \textbf{$\bf 1347.6 \pm 3.6$} & $1332.0 \pm 7.3$          & \textbf{$\bf 1191.0 \pm 9.0$} & \textbf{$\bf 1347.4 \pm 3.6$} & \textbf{$\bf 1338.0 \pm 2.9$} & \textbf{$\bf 1233.6 \pm 5.0$} & $0.974$          & $0$              & $0$              \\ \hline \hline
\multirow{7}{*}{\textbf{RoadRunner}}  & \multicolumn{1}{c|}{Standard}                                                                      & DQN            & $41492 \pm 903$                 & $0 \pm 0$                 & $0 \pm 0$                 & $0 \pm 0$                 & $0 \pm 0$                 & $0 \pm 0$                 & $0 \pm 0$                 & $0$              & $0$              & $0$              \\ \cline{2-13} 
                             & \multicolumn{1}{c|}{\multirow{2}{*}{PGD}}                                                          & SA-DQN         & $33380 \pm 611$                 & $20482 \pm 1087$          & $0 \pm 0$                 & $0 \pm 0$                 & $24632 \pm 812$           & $614 \pm 72$              & $214 \pm 26$              & $0$              & $0$              & $0$              \\
                             & \multicolumn{1}{c|}{}                                                                              & CAR-DQN (Ours) & \textbf{$\bf 49700 \pm 1015$}       & \textbf{$\bf 43286 \pm 801$}  & \textbf{$\bf 25740 \pm 1468$} & \textbf{$\bf 2574 \pm 261$}   & \textbf{$\bf 48908 \pm 1107$} & \textbf{$\bf 35882 \pm 904$}  & \textbf{$\bf 23218 \pm 698$}  & $0$              & $0$              & $0$              \\ \cline{2-13} 
                             & \multicolumn{1}{c|}{\multirow{4}{*}{\begin{tabular}[c]{@{}c@{}}Convex \\ Relaxation\end{tabular}}} & SA-DQN         & $46372 \pm 882$                 & $44960\pm 1152$           & $20910 \pm 827$           & $3074 \pm 179$            & $45226 \pm 1102$          & $25548 \pm 737$           & $12324 \pm 529$           & $0.819$          & $0$              & $0$              \\
                             & \multicolumn{1}{c|}{}                                                                              & RADIAL-DQN     & $46224\pm 1133$                 & $45990 \pm 1112$          & \textbf{$\bf 42162 \pm 1147$} & \textbf{$\bf 23248 \pm 499$}  & $46082 \pm 1128$          & \textbf{$\bf 42036 \pm 1048$} & \textbf{$\bf 25434 \pm 756$}  & $0.994$          & $0$              & $0$              \\
                             & \multicolumn{1}{c|}{}                                                                              & WocaR-DQN      & $43686 \pm 1608$                & $45636 \pm 706$           & $19386 \pm 721$           & $6538 \pm 464$            & $45636 \pm 706$           & $21068 \pm 1026$          & $15050 \pm 683$           & $0.956$          & $0$              & $0$              \\
                             & \multicolumn{1}{c|}{}                                                                              & CAR-DQN (Ours) & \textbf{$\bf 49398 \pm 1106$}       & \textbf{$\bf 49456 \pm 992$}  & $28588 \pm 1575$          & $15592 \pm 885$           & \textbf{$\bf 47526 \pm 1132$} & $32878 \pm 1898$          & $21102 \pm 1427$          & $0.760$          & $0$              & $0$              \\ \hline \hline
\end{tabular}%
}
\end{table*}

\begin{figure}[t]
    \centering
    \includegraphics[width=0.23\textwidth]{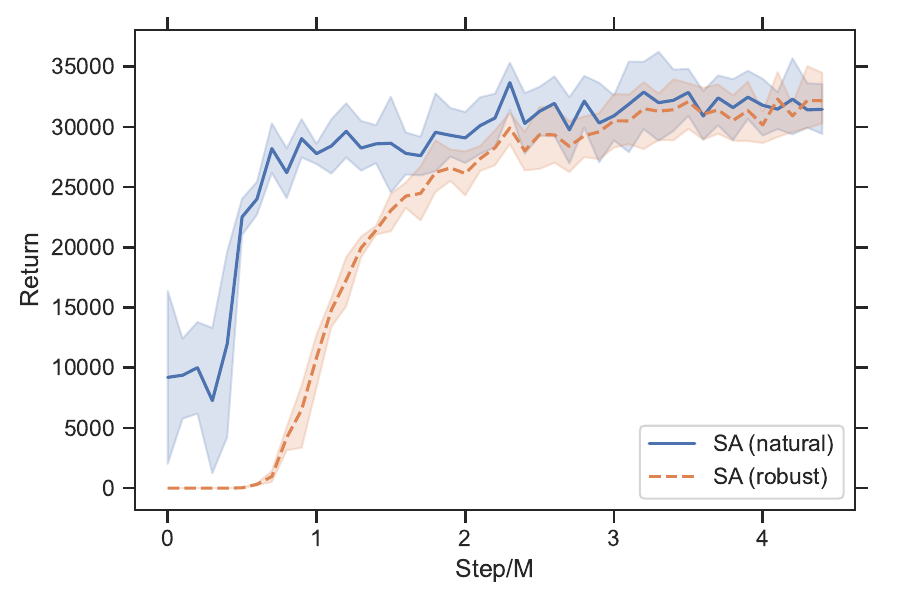}
    \includegraphics[width=0.23\textwidth]{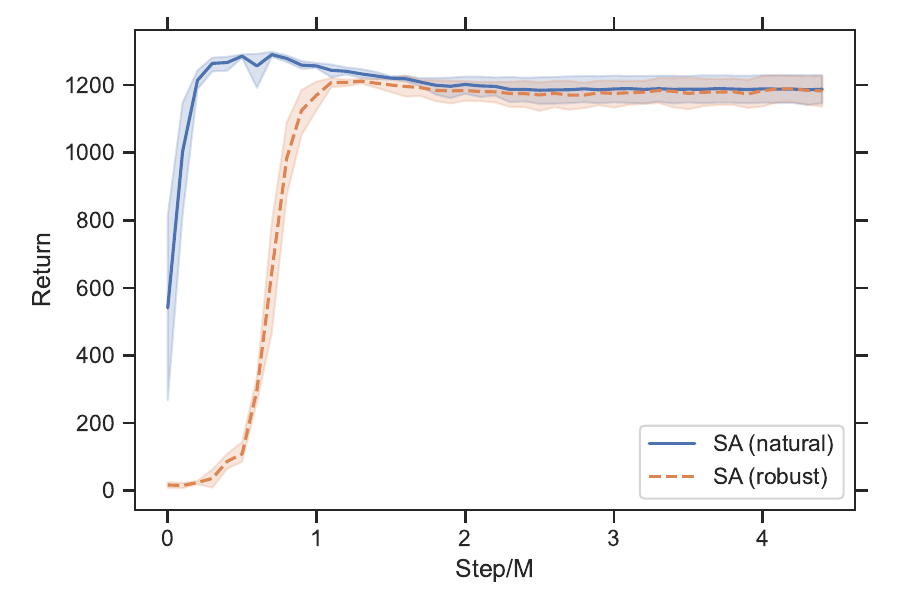}
    \includegraphics[width=0.23\textwidth]{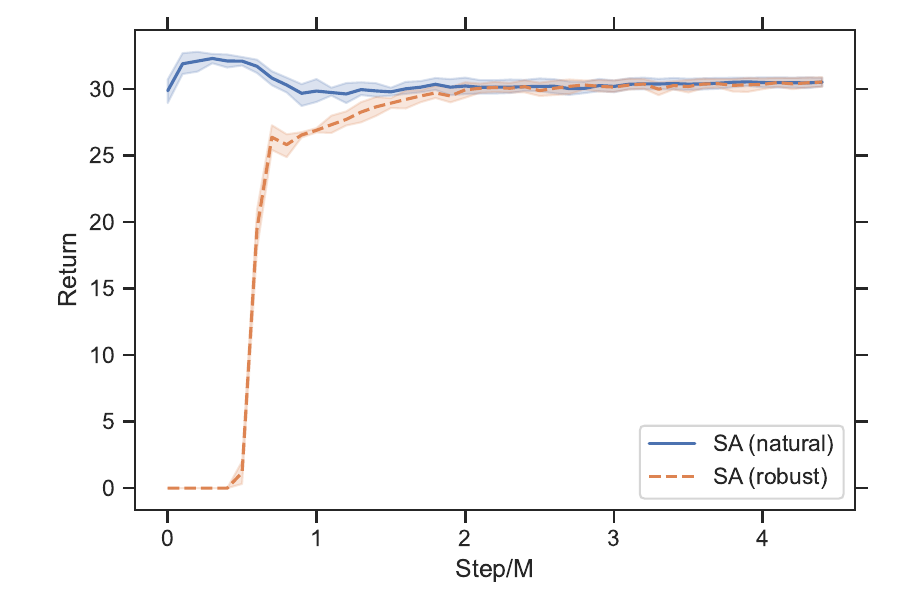}
    \includegraphics[width=0.23\textwidth]{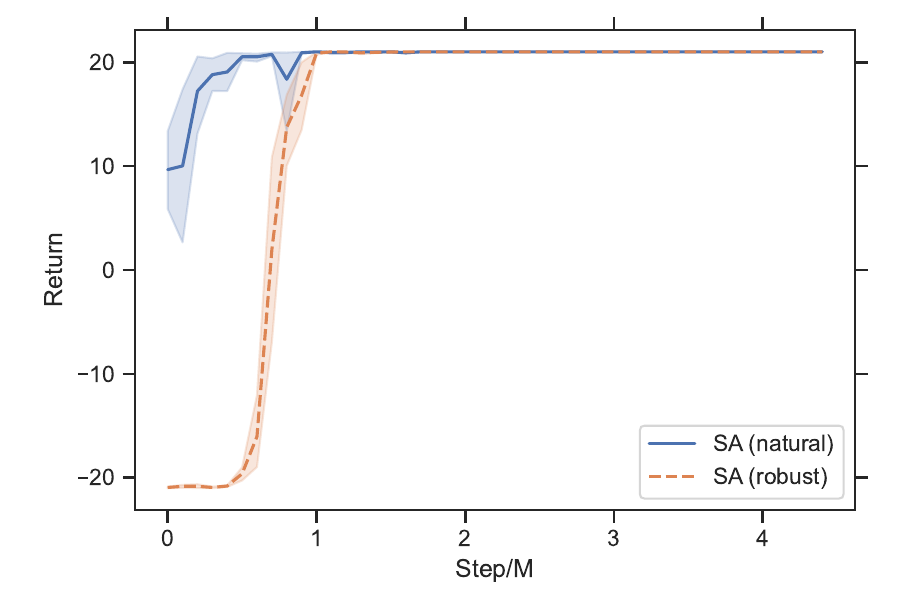}
    \caption{Natural and robustness performance exhibited
by SA-DQN agents during the training process on 4 Atari games.}
    \label{app fig:sa NRreturn}
\end{figure}

\begin{figure}[t]
    \centering
\includegraphics[width=0.23\textwidth]{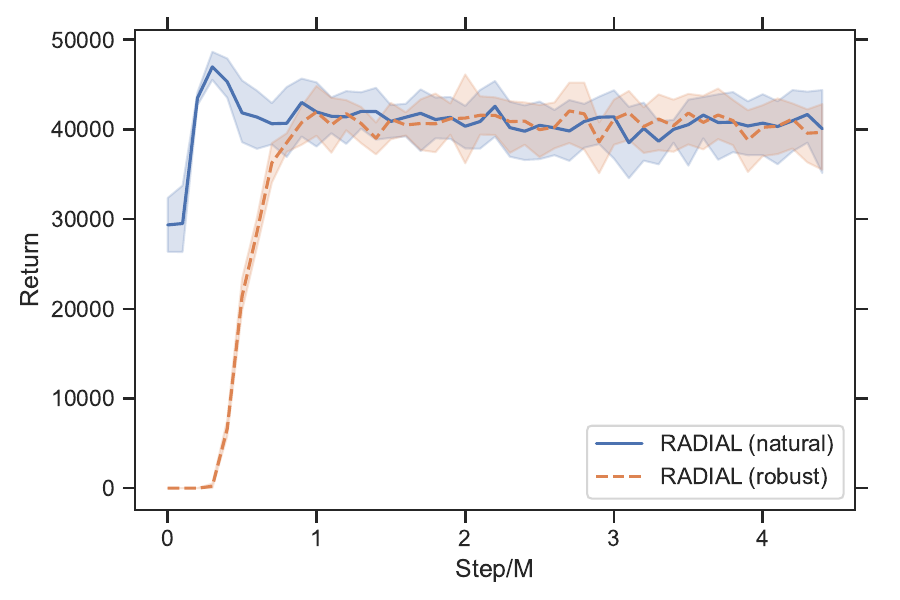}
\includegraphics[width=0.23\textwidth]{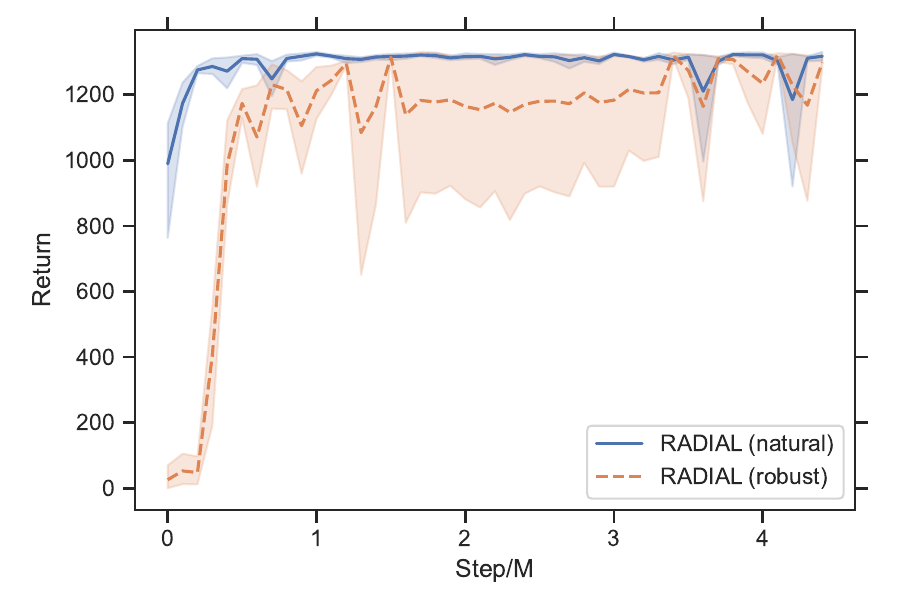}
\includegraphics[width=0.23\textwidth]{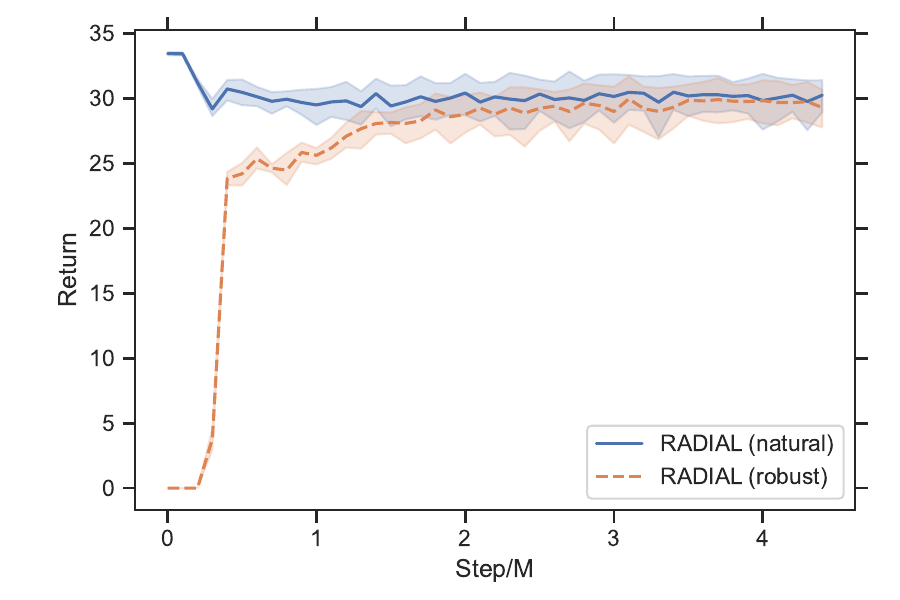}
\includegraphics[width=0.23\textwidth]{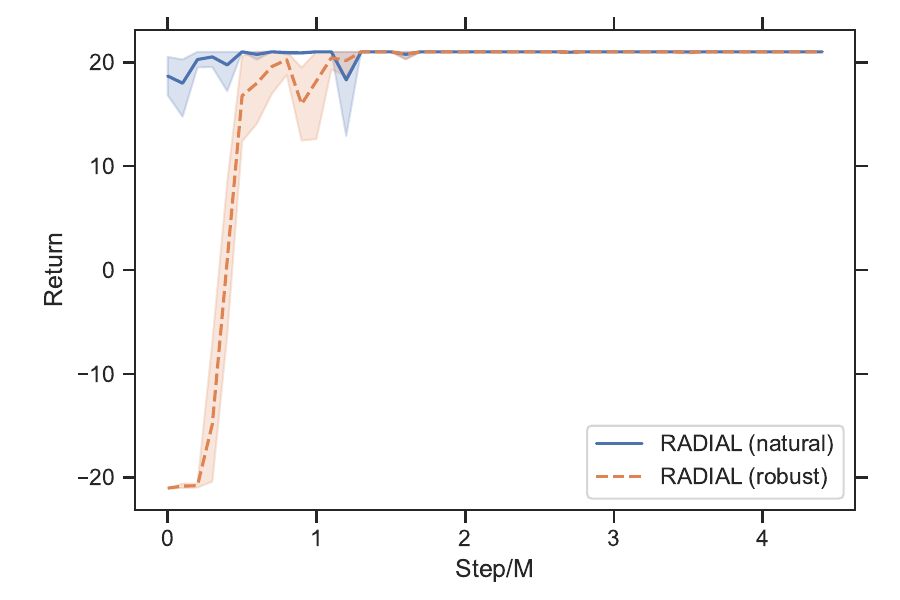}
    \caption{Natural and robustness performance exhibited
by RADIAL-DQN agents during the training process on 4 Atari games.}
    \label{app fig:ra NRreturn}
\end{figure}

\begin{figure}[t]
    \centering
\includegraphics[width=0.23\textwidth]{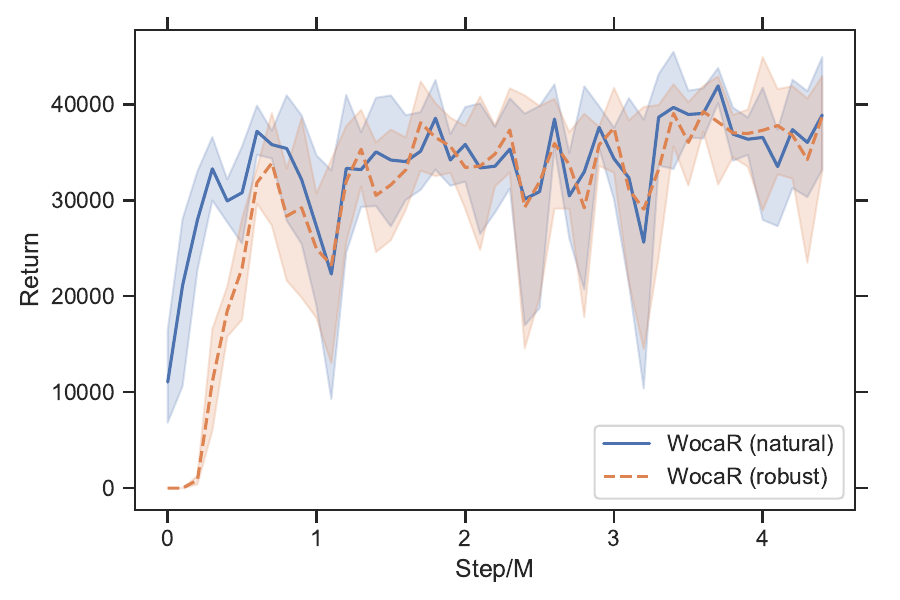}
\includegraphics[width=0.23\textwidth]{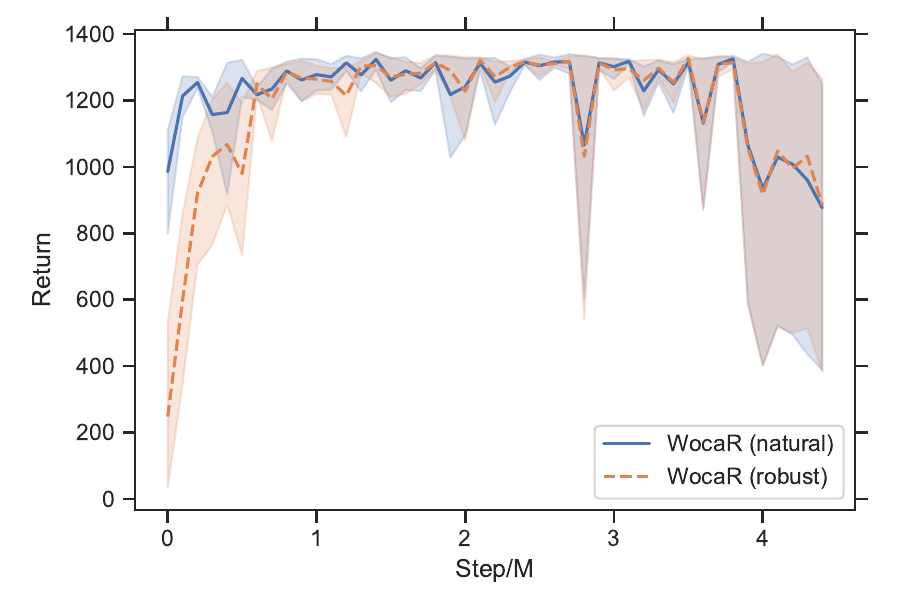}
\includegraphics[width=0.23\textwidth]{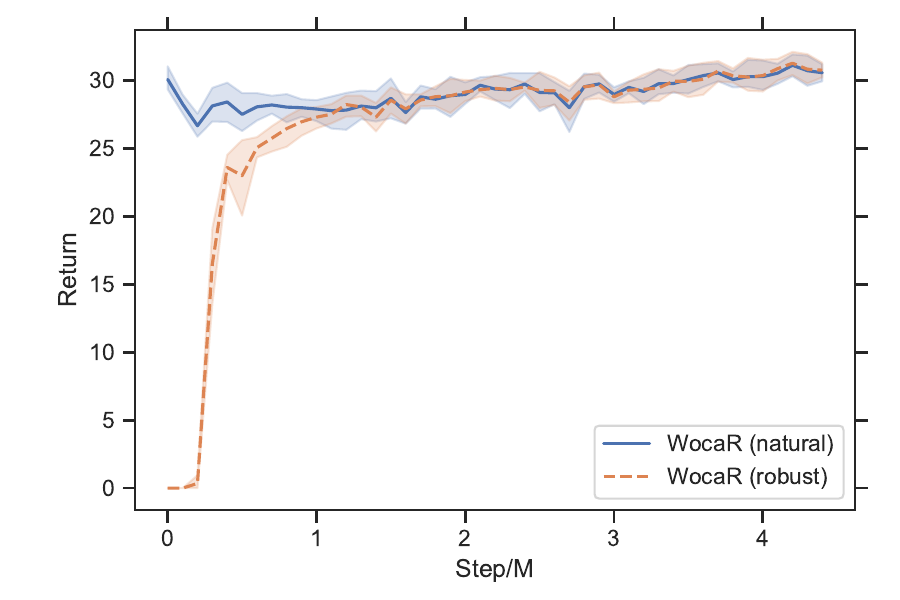}
\includegraphics[width=0.23\textwidth]{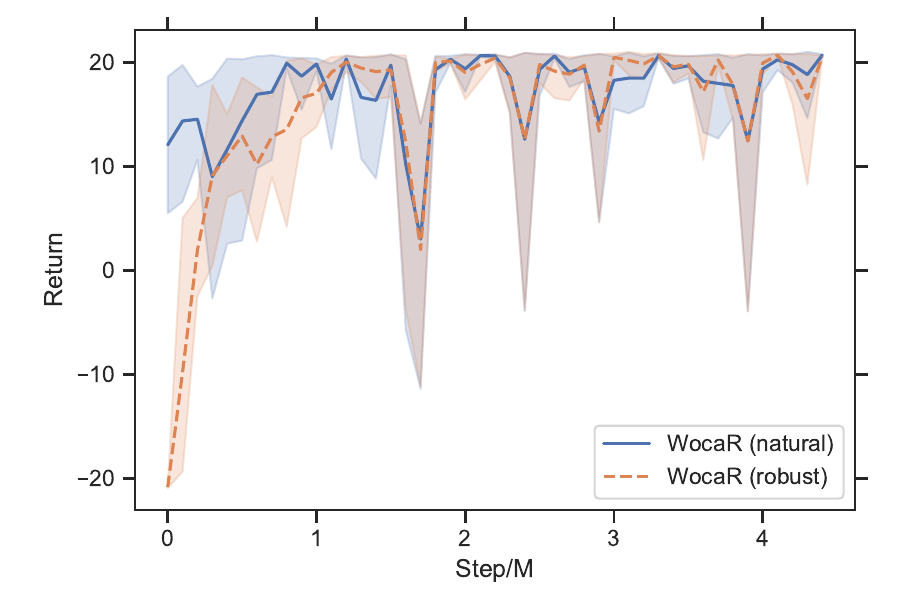}
    \caption{Natural and robustness performance exhibited
by WocaR-DQN agents during the training process on 4 Atari games.}
    \label{app fig:wocar NRreturn}
\end{figure}

% Please add the following required packages to your document preamble:
% \usepackage{multirow}
% \usepackage{graphicx}
\begin{table}[t]
\caption{Performance of PPO with different $k$-measurement errors.}
\label{app table: k measurement}
\vskip 0.15in
\resizebox{\columnwidth}{!}{%
\begin{tabular}{c|c|c|ccccccc}
\hline
\multirow{2}{*}{\textbf{Env}}                                                            & \multirow{2}{*}{\textbf{$k$-measurement}} & \multirow{2}{*}{\textbf{\begin{tabular}[c]{@{}c@{}}Natural\\ Reward\end{tabular}}} & \multicolumn{7}{c}{\textbf{Attack Reward}}                                                                           \\
                                                                                         &                                           &                                                                                    & \textbf{Random} & \textbf{Critic} & \textbf{MAD}  & \textbf{RS}   & \textbf{SA-RL} & \textbf{PA-AD} & \textbf{Worst} \\ \hline
\multirow{3}{*}{\begin{tabular}[c]{@{}c@{}}Hopper\\ ($\epsilon$=0.075)\end{tabular}}     & $k=1$                                     & 3081                                                                               & 2923            & 2035            & 1763          & 756           & 79             & 823            & 79             \\
                                                                                         & $k=2$                                     & 3483                                                                               & 3461            & 1506            & 2472          & 573           & 375            & 402            & 375            \\
                                                                                         & $k=\infty$                                & \textbf{3711}                                                                      & \textbf{3702}   & \textbf{3692}   & \textbf{3473} & \textbf{1652} & \textbf{2430}  & \textbf{2640}  & \textbf{1652}  \\ \hline
\multirow{3}{*}{\begin{tabular}[c]{@{}c@{}}Walker2d\\ ($\epsilon$=0.05)\end{tabular}}    & $k=1$                                     & 4622                                                                               & 4628            & 4584            & 4507          & 1062          & 719            & 336            & 336            \\
                                                                                         & $k=2$                                     & 4738                                                                               & 4651            & 4620            & 4121          & 923           & 1571           & 424            & 424            \\
                                                                                         & $k=\infty$                                & \textbf{4755}                                                                      & \textbf{4848}   & \textbf{5044}   & \textbf{4637} & \textbf{4379} & \textbf{4307}  & \textbf{4303}  & \textbf{4303}  \\ \hline
\multirow{3}{*}{\begin{tabular}[c]{@{}c@{}}Halfcheetah\\ ($\epsilon$=0.15)\end{tabular}} & $k=1$                                     & 5048                                                                               & 4463            & 3281            & 918           & 1049          & -213           & -69            & -213           \\
                                                                                         & $k=2$                                     & 4370                                                                               & 3857            & 3295            & 956           & 441           & -160           & -192           & -192           \\
                                                                                         & $k=\infty$                                & \textbf{5053}                                                                      & \textbf{5058}   & \textbf{5065}   & \textbf{5051} & \textbf{5140} & \textbf{4860}  & \textbf{4942}  & \textbf{4860}  \\ \hline
\multirow{3}{*}{\begin{tabular}[c]{@{}c@{}}Ant\\ ($\epsilon$=0.15)\end{tabular}}         & $k=1$                                     & 5381                                                                               & \textbf{5329}   & 4696            & 1768          & 1097          & -1398          & -3107          & -3107          \\
                                                                                         & $k=2$                                     & \textbf{5485}                                                                      & 5036            & 4705            & 1199          & 1001          & -1470          & -1108          & -1470          \\
                                                                                         & $k=\infty$                                & 5029                                                                               & 5006            & \textbf{4786}   & \textbf{4549} & \textbf{3553} & \textbf{3099}  & \textbf{3911}  & \textbf{3099} \\ \hline
\end{tabular}%
}
\end{table}

\begin{figure}[t]
    \centering
\includegraphics[width=0.65\columnwidth]{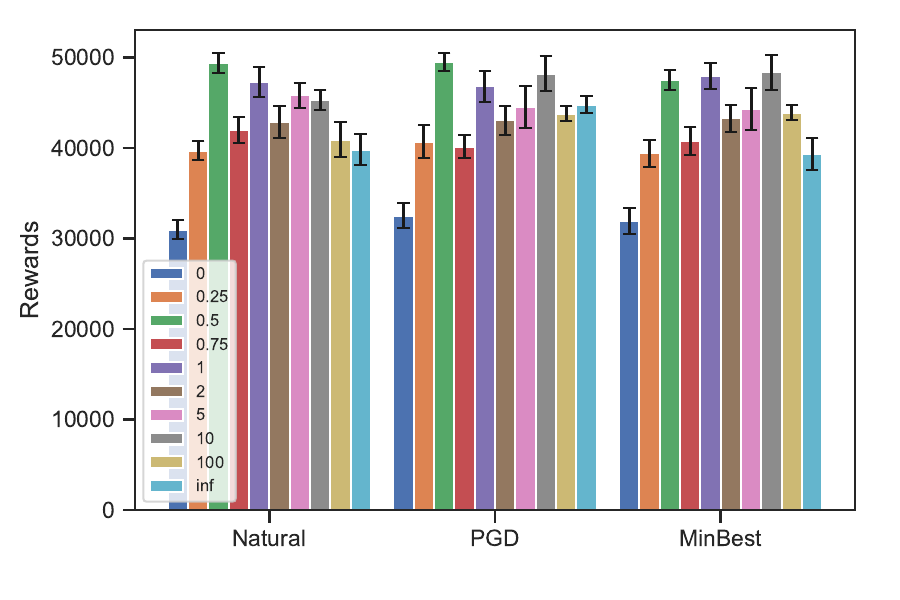}
    \vskip -0.1in
    \caption{Natural, PGD attack and MinBest attack rewards of CAR-DQN with different soft coefficients on RoadRunner.}
    \label{app fig:soft roadrunner}
\end{figure}

\begin{figure}[t]
    \centering
    \begin{minipage}{0.4\textwidth}
        \centering
        \setlength{\parindent}{0.5em}
        \quad \scriptsize{RoadRunner}
    \end{minipage}
    \begin{minipage}{0.4\textwidth}
        \centering
        \setlength{\parindent}{1em}
        \quad \scriptsize{BankHeist}
    \end{minipage}
    \\
    \rotatebox{90}{\tiny{\qquad \qquad \quad Robust}}
    \begin{subfigure}
        \centering
        \includegraphics[width=0.4\textwidth]{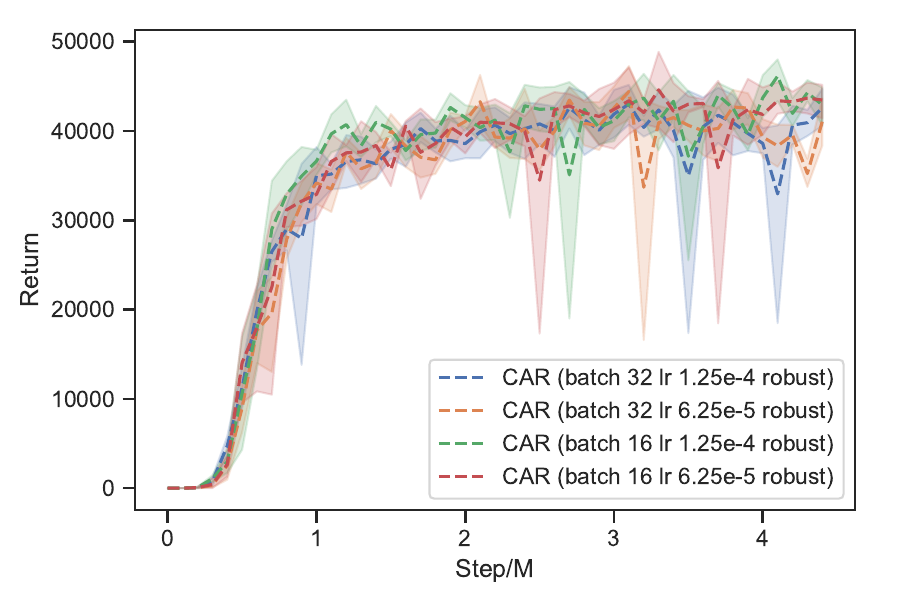}        
    \end{subfigure}
    \begin{subfigure}
        \centering
        \includegraphics[width=0.4\textwidth]{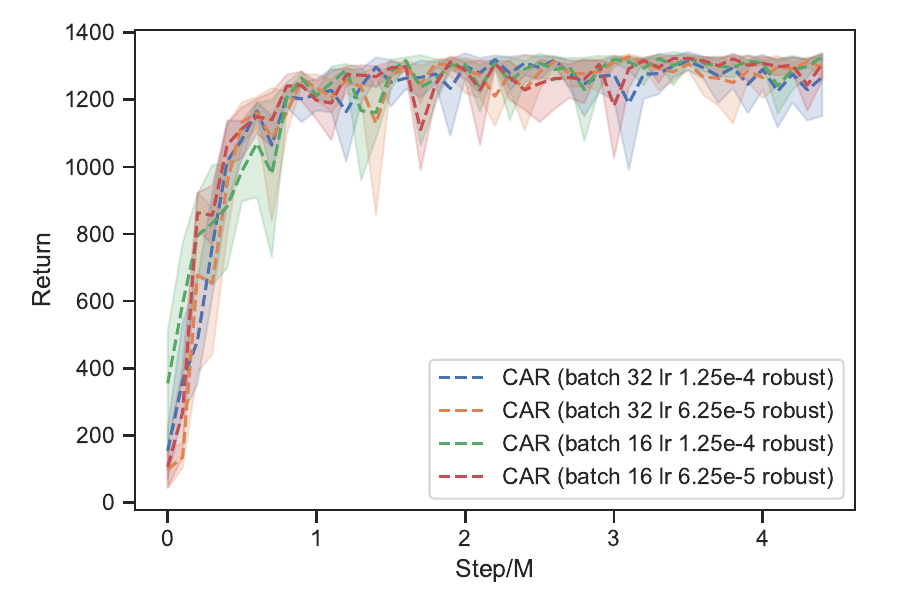}
    \end{subfigure}
    \\
    \rotatebox{90}{\tiny{\qquad \qquad \quad Natural}}
    \begin{subfigure}
        \centering
        \includegraphics[width=0.4\textwidth]{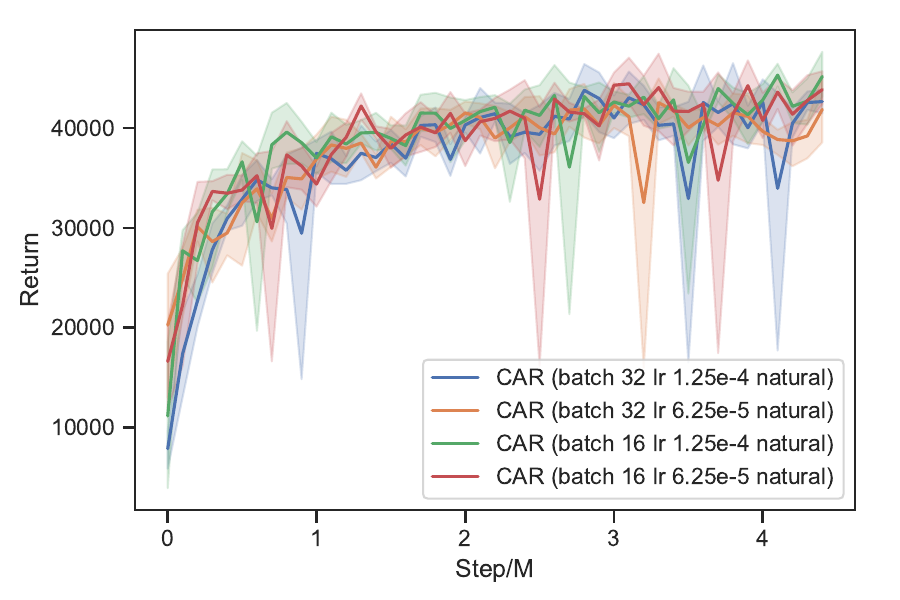}
    \end{subfigure}
    \begin{subfigure}
        \centering
        \includegraphics[width=0.4\textwidth]{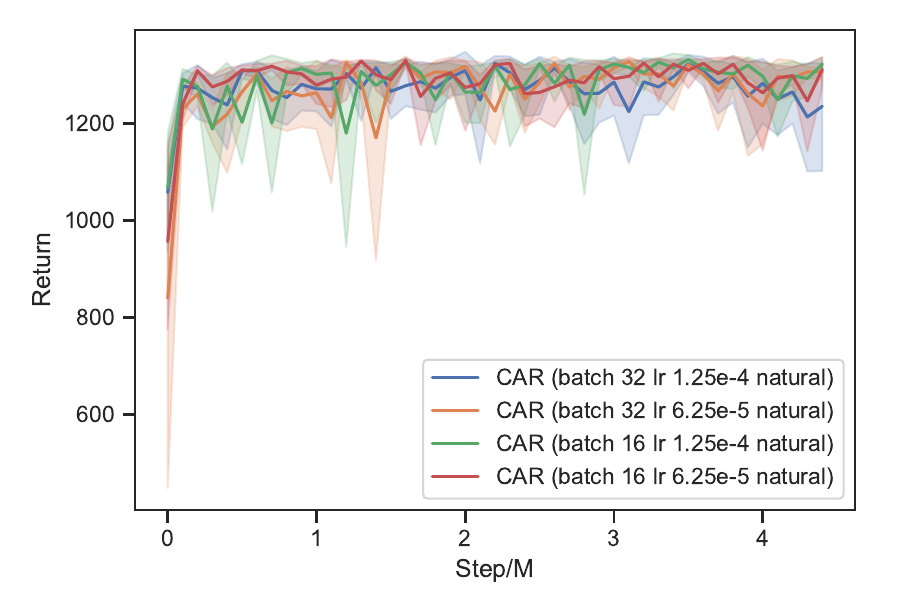}
    \end{subfigure}
    \caption{Episode rewards of CAR-DQN with different batch sizes and learning rates during training on RoadRunner and BankHeist with and without PGD attack.}
    \label{fig:batch and lr}
\end{figure}

\textbf{Insensitivity of Learning Rate and Batch Size.} 
We compare the performance of CAR-DQN with different small batch size $(16, 32)$ and learning rate $(1.25\times 10^{-4}, 6.25\times 10^{-5})$ which are respectively used by~\citet{zhang2020robust, liang2022efficient}. As shown in Figure \ref{fig:batch and lr}, we can see CAR-DQN is insensitive to these parameters.

\subsection{Comparisons and Analysis between RADIAL-DQN and CAR-DQN with Increasing Perturbation Radius} \label{app: exp with increasing perturbation radius}

The core at RADIAL-DQN is a heuristic robust regularization that minimizes the overlap between bounds of perturbed Q values of the current action and others:
$$
\mathcal{L}_{radial}\left(\theta\right)=\mathbb{E}_{\left(s, a, s^{\prime}, r\right)}\left[\sum_y Q_{\text {diff}}(s, y) \cdot {Ovl}(s, y, \epsilon)\right],
$$
where
$
Q_{\text {diff}}(s, y)=\max (0, Q(s, a)-Q(s, y)), \ O v l(s, y, \epsilon)=\max (0, \bar{Q}(s, y, \epsilon)-\underline{Q}(s, a, \epsilon)+\eta)
$ and $\eta=c\cdot Q_{\text {diff}}(s, y), c=0.5$. $Q_{\text {diff}}$ is treated as a constant for the optimization. 
We consider RADIAL-DQN could perform better than CAR-DQN with increasing perturbation radius since $\mathcal{L}_{radial}\left(\theta\right)$ is a stronger regularization to enhance robustness while compromising natural rewards. The stronger robust constraint is mainly reflected in two aspects:
\begin{itemize}
    \item The loose bounds. RADIAL-DQN uses the cheap but loose convex relaxation method (IBP) to estimate $\bar{Q}(s, y, \epsilon)$ and $\underline{Q}(s, a, \epsilon)$.
    \item The positive margin $\eta$.  
\end{itemize}
They both result in ${Ovl}(s, y, \epsilon)$ a stronger constraint for representing the overlap of perturbed Q values.
However, RADIAL-DQN has the following weaknesses:
\begin{itemize}
    \item $\mathcal{L}_{radial}\left(\theta\right)$ will harm natural rewards. As shown in Figure~\ref{fig: natural rewards during training}, the natural rewards curve of RADIAL-DQN on RoadRunner distinctly tends to decrease, especially around 0.5 million steps. In contrast, the natural curves of our CAR-DQN showcase more stable upward trends in all environments. Besides, as shown in Table~\ref{table: car dqn trained with larger radius}, RADIAL-DQN training with a larger radius attains lower natural rewards which also restricts robustness according to our theory, while CAR-DQN keeps a better and consistent natural and robust performance.
    \item Heuristic implementation lacks theoretical guarantees and introduces sensitive hyperparameter $c$. We conduct additional experiments on RoadRunner with different $c$ and observe the sensitivity of RADIAL-DQN to the choice of $c$, as shown in Table~\ref{app table: sensitive radial}. Larger $c$ could cause poor performance because the robustness constraint is too strict and thus the policy degrades to some simple policy with lower rewards. Smaller $c$ may result in much weaker robustness. By contrast, our CAR-DQN is developed based on the theory of optimal robust policy and stability. Although we also introduce a hyperparameter $\lambda$, our ablation studies in Figure~\ref{fig:soft roadrunner} show that our algorithm is insensitive to the soft coefficient $\lambda$. 

    \begin{table}[ht]
    \caption{Performance of RADIAL-DQN sensitive to different positive margins $c\cdot Q_{\text {diff}}(s, y)$.}
    \label{app table: sensitive radial}
    % \vskip 0.15in
    \resizebox{\textwidth}{!}{%
    \begin{tabular}{c|c|ccc|ccc}
    \hline
    \multirow{2}{*}{Model} & \multirow{2}{*}{Natural Return} & \multicolumn{3}{c|}{PGD}                                & \multicolumn{3}{c}{MinBest}                            \\ 
                           &                                 & $\epsilon=1/255$ & $\epsilon=3/255$  & $\epsilon=5/255$ & $\epsilon=1/255$ & $\epsilon=3/255$  & $\epsilon=5/255$ \\ \hline
    RADIAL-DQN ($c=0.25$)  & $14678 \pm 329$                 & $ 14836\pm 314$  & $ 13670 \pm 466$  & $ 13512 \pm 617$ & $14712 \pm 309$  & $14804 \pm 457$   & $13226 \pm 351$  \\ \hline
    RADIAL-DQN ($c=0.5$)   & $\bf 46224 \pm 1133$                & $\bf 45990 \pm 1112$ & $\bf  42162 \pm 1147$ & $\bf  23248 \pm 499$ & $\bf 46082 \pm 1128$ & $\bf  42036 \pm 1048$ & $ \bf 25434 \pm 756$ \\ \hline
    RADIAL-DQN ($c=0.75$)  & $3992 \pm 482$                  & $3992 \pm 482$   & $3992 \pm 482$    & $3992 \pm 482$   & $3992 \pm 482$   & $3992 \pm 482$    & $3992 \pm 482$   \\ \hline
    \end{tabular}%
    }
    \end{table}
    \item Depending on the currently learned optimal action. $\mathcal{L}_{radial}\left(\theta\right)$ essentially takes the currently learned action as a robust label which may produce a wrong direction for robustness training if the learned action is not optimal. In contrast, our CAR-DQN seeks optimal robust policies with theoretical guarantees and does not utilize the learned action for robustness training, simultaneously improving natural and robust performance.
\end{itemize}

The main motivation of CAR-DQN based on our theory is to improve natural and robust performance concurrently which makes sense in real-world scenarios where strong adversarial attacks are relatively rare. 
Our training loss can guarantee robustness under the attack with the same perturbation radius as the training. We also think it is a very significant problem whether and how we can design an algorithm training with little epsilon and theoretically guarantee robustness for larger epsilon. However, this is beyond the scope of our paper and we will consider this problem in subsequent work. 
% To the best of our knowledge, there is no directly related work. We would appreciate it if you could provide us with some work on this! 

Moreover, as shown in Table~\ref{app table: compare}, CAR-DQN also achieves the top performance in larger perturbation radiuses on Pong and BankHeist and matches the RADIAL-DQN on Freeway. To show the superiority of CAR-DQN further, we also train CAR-DQN with a perturbation radius of 3/255 and 5/255 on RoadRunner for 4.5 million steps (see Table~\ref{table: car dqn trained with larger radius}).
\begin{table}[ht]
\caption{Performance of CAR-DQN and RADIAL-DQN trained with different perturbation radiuses on the RoadRunner environment. The best results of the algorithm with the same training radius are highlighted in bold.}
\label{table: car dqn trained with larger radius}
% \vskip 0.15in
\resizebox{\textwidth}{!}{%
\begin{tabular}{c|c|ccc|ccc}
\hline \hline
\multirow{2}{*}{Model}        & \multirow{2}{*}{Natural Return} & \multicolumn{3}{c|}{PGD}                                  & \multicolumn{3}{c}{MinBest}                               \\ 
                              &                                 & $\epsilon=1/255$  & $\epsilon=3/255$  & $\epsilon=5/255$  & $\epsilon=1/255$   & $\epsilon=3/255$  & $\epsilon=5/255$  \\ \hline
RADIAL-DQN ($\epsilon=1/255$) & $46224 \pm 1133$                & $45990 \pm 1112$  & $\bf  42162 \pm 1147$ & $\bf  23248 \pm 499$  & $46082 \pm 1128$   & $ \bf 42036 \pm 1048$ & $\bf  25434 \pm 756$  \\ \hline
CAR-DQN ($\epsilon=1/255$)    & $\bf 49398 \pm 1106$               & $\bf  49456 \pm 992$  & $28588 \pm 1575$  & $15592 \pm 885$   & $\bf  47526 \pm 1132$  & $32878 \pm 1898$  & $21102 \pm 1427$  \\ \hline \hline
RADIAL-DQN ($\epsilon=3/255$) & $34656 \pm 1104$                & $35094 \pm 1277$  & $ 35082 \pm 948 $ & $\bf  32770 \pm 1062$ & $ 35096 \pm 1277 $ & $ 34374 \pm 996$  & $\bf  27926 \pm 881$  \\ \hline
CAR-DQN ($\epsilon=3/255$)    & $\bf 47348 \pm 1305$                & $\bf 46284 \pm 1114 $ & $\bf  43578 \pm 1315$ & $ 27060 \pm 1117$ & $\bf  46286 \pm 1122 $ & $\bf  42602 \pm 1336$ & $ 24862 \pm 1195$ \\ \hline \hline
RADIAL-DQN ($\epsilon=5/255$) & $35160 \pm 1157$                & $36158 \pm 1104$  & $36732 \pm 1076$  & $34826 \pm 913$   & $36158 \pm 1104$   & $36732 \pm 1076$  & $34592 \pm 913$   \\ \hline
CAR-DQN ($\epsilon=5/255$)    & $\bf 42545 \pm 2028$                & $\bf 43230 \pm 1468 $ & $\bf  37845 \pm 2344$ & $\bf  39235 \pm 1519$ & $\bf  43645 \pm 1531 $ & $\bf  37535 \pm 2112$ & $\bf  38150 \pm 1316$ \\ \hline \hline
\end{tabular}%
}
\end{table}
We can see that CAR-DQN still attains superior natural and robust performance training with larger attack radiuses while RADIAL-DQN markedly degrades its natural performance due to the too-strong robustness constraint. CAR-DQN always has a higher robust return on the training radius than RADIAL-DQN.

\vskip 0.2in
\bibliography{car_rl}

\end{document}